\def\Eqref#1{Eq.~\eqref{#1}}
\def\1{\bm{1}}
\def\rd{{\textnormal{d}}}
\def\evx{{x}}
\DeclareMathAlphabet{\mathsfit}{\encodingdefault}{\sfdefault}{m}{sl}
\SetMathAlphabet{\mathsfit}{bold}{\encodingdefault}{\sfdefault}{bx}{n}
\def\gO{{\mathcal{O}}}
\def\gP{{\mathcal{P}}}
\def\sR{{\mathbb{R}}}
\newcommand{\pref}{p_{\rm{ref}}}
\newcommand{\Xref}{X^\leftarrow}
\newcommand{\Xrou}{\bar{X}^\leftarrow}
\newcommand{\qone}{q^{(1)}}
\newcommand{\qk}{q^{(k)}}
\newcommand{\qj}{q^{(j)}}
\newcommand{\qstarone}{\hat{q}^{(1)}}
\newcommand{\qstark}{\hat{q}^{(k)}}
\newcommand{\qstarj}{\hat{q}^{(j)}}
\newcommand{\qstarnext}{\hat{q}^{(k+1)}}
\newcommand{\qstar}{\hat{q}_{\mathrm{opt}}}
\newcommand{\Lipdp}{L_p}
\newcommand{\LipTV}{L_{\TV}}
\newcommand{\pmREF}{\mathbb{P}_\mathrm{ref}}
\newcommand{\fopt}{\hat{f}_\mathrm{opt}} 
\newcommand{\gk}{{g}^{(k)}}
\newcommand{\rstark}{r_*^{(k)}}
\newcommand{\rstarknext}{r_*^{(k+1)}}
\newcommand{\gbark}{\bar{g}^{(k)}}
\newcommand{\gbarkprev}{\bar{g}^{(k-1)}}
\newcommand{\gbarj}{\bar{g}^{(j)}}
\newcommand{\TV}{\mathrm{TV}}
\newcommand{\tnext}{(l+1)h}
\newcommand{\tprev}{lh}
\newcommand{\stepsize}{h}
\newcommand{\dFdq}{\frac{\delta F}{\delta q}}
\newcommand{\dLdq}{\frac{\delta L}{\delta q}}
\newcommand{\E}{\mathbb{E}}
\newcommand{\R}{\mathbb{R}}
\newcommand{\KL}{D_{\mathrm{KL}}}
\newcommand{\Var}{\mathrm{Var}}
\DeclareMathOperator*{\argmin}{arg\,min}
\newcommand{\revisedStart}{\color{black}}
\newcommand{\revisedEnd}{\color{black}}
\theoremstyle{plain}
\newtheorem{lem}{Lemma}
\newtheorem{thm}{Theorem}
\newtheorem*{thm*}{Theorem}
\newtheorem{prop}{Proposition}
\newtheorem{example}{Example}
\newtheorem{assumption}{Assumption}
\newtheorem{cor}{Corollary}
\theoremstyle{definition}
\newtheorem*{rem*}{Remark}
\title{Direct Distributional Optimization for Provable Alignment of Diffusion Models}
\author{Ryotaro Kawata$^{1,2,*}$, Kazusato Oko$^{2,3,\dagger}$, Atsushi Nitanda$^{4,5,\ddagger}$, Taiji Suzuki$^{1,2,\S}$}
\begin{document}

\maketitle

\begingroup  
\vspace{-2em} 
$^1$Department of Mathematical Informatics, University of Tokyo, Japan \\
$^2$Center for Advanced Intelligence Project, RIKEN, Japan \\
$^3$Department of ECCS, UC Berkeley \\
$^4$CFAR and IHPC, Agency for Science, Technology and Research (A$\star$STAR), Singapore \\
$^5$College of Computing and Data Science, Nanyang Technological University, Singapore \\
$^*$\texttt{\href{mailto:kawata-ryotaro725@g.ecc.u-tokyo.ac.jp}{kawata-ryotaro725@g.ecc.u-tokyo.ac.jp}} \\
$^\dagger$\texttt{\href{mailto:oko@berkeley.edu}{oko@berkeley.edu}}\\
$^\ddagger$\texttt{\href{mailto:atsushi_nitanda@cfar.a-star.edu.sg}{atsushi\_nitanda@cfar.a-star.edu.sg}} \\
$^\S$\texttt{\href{mailto:taiji@mist.u-tokyo.ac.jp}{taiji@mist.u-tokyo.ac.jp}}\\
\vspace{1em}
\par
\endgroup

\begin{abstract}
We introduce a novel alignment method for diffusion models from distribution optimization perspectives while providing rigorous convergence guarantees.
We first formulate the problem as a generic regularized loss minimization over probability distributions and directly optimize the distribution using the Dual Averaging method.
Next, we enable sampling from the learned distribution by approximating its score function via Doob's $h$-transform technique.
The proposed framework is supported by rigorous convergence guarantees and an end-to-end bound on the sampling error, which imply that when the original distribution's score is known accurately, the complexity of sampling from shifted distributions is independent of isoperimetric conditions.
This framework is broadly applicable to general distribution optimization problems, including alignment tasks in Reinforcement Learning with Human Feedback (RLHF), Direct Preference Optimization (DPO), and Kahneman-Tversky Optimization (KTO). We empirically validate its performance on synthetic and image datasets using the DPO objective.

\end{abstract}

\section{Introduction}
Diffusion models~\citep{Sohl-Dickstein2015thermodynamics,ho2020DDPM,song2021scorebased} have recently emerged as powerful tools for learning complex distributions and performing efficient sampling. Within the framework of foundation models, a common approach involves pre-training on large-scale datasets, followed by adapting the model to downstream tasks or aligning it with human preferences~\citep{ouyang2022training}. This alignment is typically formalized as nonlinear distribution optimization, with a regularization term that encourages proximity to the pre-trained distribution. Examples of such alignment methods include Reinforcement Learning with Human Feedback (RLHF)~\citep{ziegler2020LLMft}, Direct Preference Optimization (DPO)~\citep{rafailov2023DPO,Wallace2024DiffusionDPO}, and Kahneman-Tversky Optimization (KTO)~\citep{ethayarajh2024KTO,li2024diffusionKTO}.

\revisedStart
Specifically, there methods solve the minimization problems of a regularized functional $F(q)+\beta\KL(q\|\pref)$ over $q$ in the probability space $\mathcal{P}$, where  $\pref$ is the probability density corresponding to the pretrained diffusion model. 
However, this type of distributional optimization problem over the density $q$ is challenging because the output density governed by the reference model cannot be evaluated and neither is the aligned model $q$. 
They are accessible only through samples generated from their corresponding generative models.  
Existing distributional optimization methods such as {\it mean-field Langevin dynamics}~\citep{mei2018mf} and {\it particle dual averaging}~\citep{NEURIPS2021_a34e1ddb} resolved this problem by adapting a Langevin type sampling procedure to calculate a functional derivative of the objective without explicitly evaluating the densities. 
However, the distributions $q$ and $\pref$ are highly complex and multimodal from which it is extremely hard to generate data by a standard MCMC type methods including the Langevin dynamics. 
This difficulty can be mathematically characterized by isoperimetric conditions, such as logarithmic Sobolev inequality (LSI) \citep{bakry2014analysis}, that has usually exponential dependency on the data dimension $d$ for multimodal data yielding the curse of dimensionality. 
Unfortunately, the existing distribution optimization methods mentioned above are sensitive to the LSI constant, so they suffer from severely slow convergence, failing to align diffusion models.

That is to say, alignment of diffusion models has two challenges: (i) inaccessibility of the output densities and (ii) muitimodality of the densities. This naturally leads to a fundamental question:
\revisedEnd

\begin{center}
{\it Can we develop an alignment algorithm for diffusion models from distribution optimization perspectives, while ensuring rigorous convergence guarantees without isoperimetric conditions?}
\end{center}

We address this question by  
developing a diffusion-model based distribution optimization method and providing rigorous convergence and sampling error guarantees, and demonstrate its applicability to several tasks involved with diffusion model alignment.
\revisedStart
Our method represents the aligned model by a diffusion model that can be described by merely adding a correction term to the score function of the original reference model. During optimizing the model, we don't rely on any MCMC sampler but only use samples generated by the original reference model (and the aligned diffusion model). This characteristics is helpful to resolve the issue of isoperimetric condition.  
\revisedEnd

\begin{figure}[htbp]
    \centering
    \vspace{-3mm}
    \includegraphics[width=0.6\linewidth]{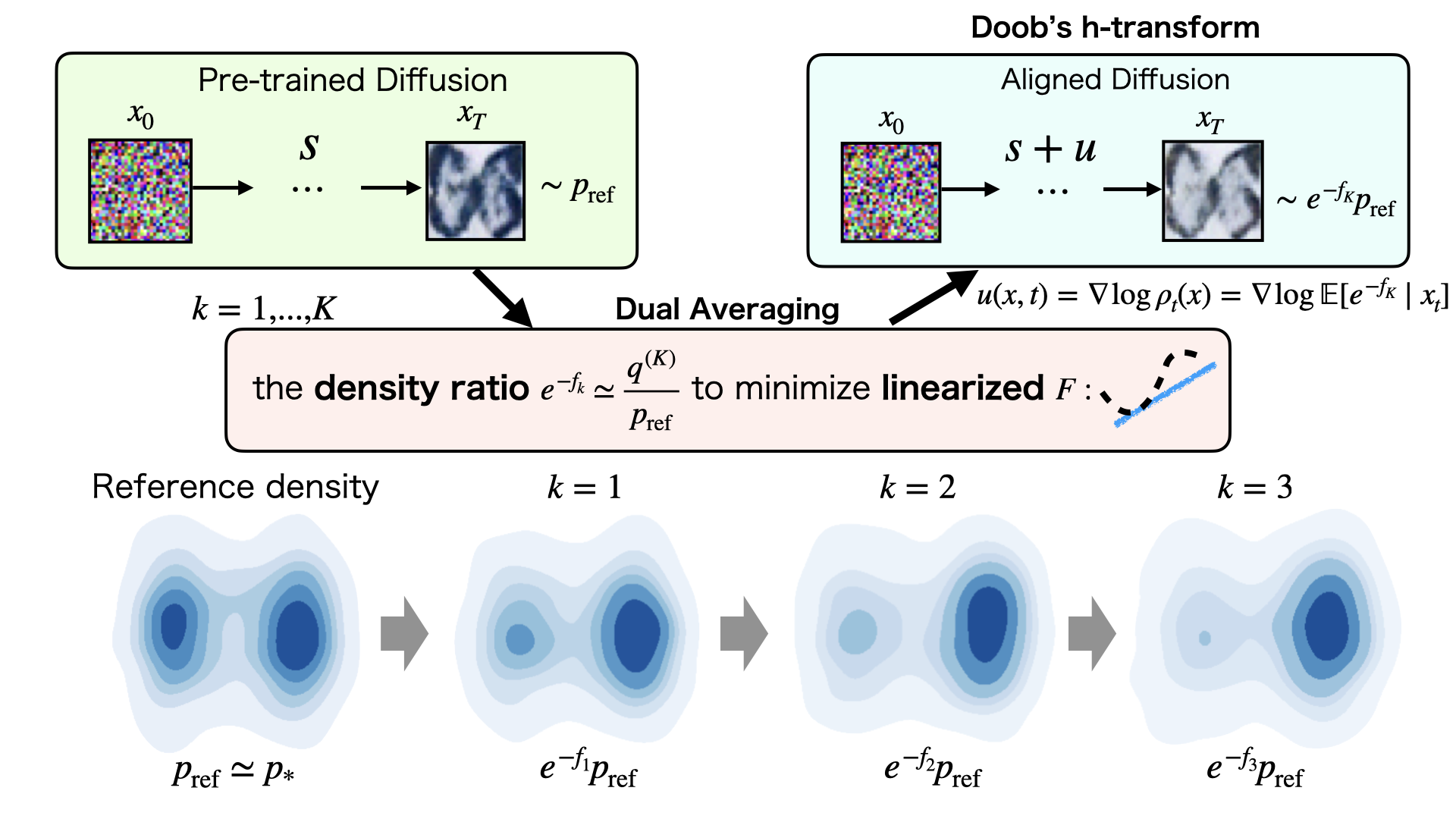}
    \vspace{-3mm}
    \caption{Overview of the proposed method integrating Dual Averaging and Doob’s h-transform.}
    \label{fig:proposed_method}
    \vspace{-2mm}
\end{figure}

    \vspace{-2mm}
\subsection{Our Contributions}
    \vspace{-2mm}
To tackle the two challenges mentioned above: absence of (i) sampling guarantee and (ii) isoperimetry, we propose a general framework that integrates dual averaging (DA) method~\citep{Nesterov2009} and diffusion model~\mbox{\citep{Sohl-Dickstein2015thermodynamics,ho2020DDPM,song2021scorebased}}. 
The DA method is an iterative algorithm that constructs the Gibbs distribution converging to the optimal distribution (i.e., alignment). 
A key advantage of this DA scheme is its ability to bypass isoperimetric conditions with the help of high sampling efficiency of the reference diffusion model, enabling isoperimetry-free sampling from an aligned distribution. 
Specifically, an aligned diffusion process that approximately generates the Gibbs distribution obtained by DA method can be constructed through the Doob's $h$-transform~\citep{Rogers2000Doob} and density ratio estimation with respect to a reference distribution using neural networks (see Figure \ref{fig:proposed_method} for illustration).

We summarize our contribution below.
\begin{itemize}[topsep=0mm,leftmargin = 8mm] 
    \item We establish a model alignment method to align diffusion models, with convergence guarantees for both convex and nonconvex objectives, based on distribution optimization theory and Doob's $h$-transform. 
    Notable distinctions from other mean-field optimization methods are that our method works without isoperimetry conditions such as LSI
    and allows for sampling from multimodal distributions. 
    \item We also analyze the sampling error due to the approximation of the drift estimators with neural networks and the discretization of the process, and evaluate how these errors affect the final sampling accuracy. 
    \item Our general framework encompasses several major alignment problems such as RLHF, DPO, and KTO, establishing a provable alignment method for these scenarios. We demonstrate the applicability of our framework to these settings and empirically validate its performance on both synthetic and image datasets, aiming at data augmentation for a specific mode of distribution, using DPO objective.
\end{itemize}
We also emphasize that our method has the potential to be applied to general distribution optimization problems beyond alignment tasks such as density ratio estimation under the covariate shift setting~\citep{sugiyama2008direct,tsuboi2009direct} and climate change tracking~\citep{ling2024diffusion}.

\subsection{Related work.}

\vspace{-1mm}
\textbf{Mean-field optimization.}\quad
PDA method \citep{NEURIPS2021_a34e1ddb,nishikawa2022twolayer}, an extension of DA method~\citep{Nesterov2009} to the distribution optimization setting, was the first method that proves the quantitative convergence for minimizing entropy regularized convex functional. Subsequently, P-SDCA method~\citep{oko2022particle}, inspired by the SDCA method~\citep{shalev2013stochastic}, achieved the linear convergence rate. Mean-field Langevin dynamics~\citep{mei2018mf} is the most standard particle-based distribution optimization method, derived as the mean-field limit of the noisy gradient descent, and its convergence rate has been well studied by \cite{mei2018mf,hu2021mean,nitanda22mfld,chizat2022meanfield,suzuki2023mfld,nitanda2024improved}. Additionally, several mean-field optimization methods such mean-field Fisher-Rao gradient flow~\citep{liu2023polyak} and entropic fictitious play~\citep{chen2023entropic,pmlr-v202-nitanda23a} have been proposed with provable convergence guarantees. We note that the convergence rates of these methods were established under isoperimetric conditions such as log-Sobolev and Poincar\'e inequalities, which ensure concentration of the probability mass.
IKLPD method \citep{pmlr-v238-yao24a} shares similarities with our method, as it employs the normalizing flow to solve intermediate subproblems in the distributional optimization procedure, and its convergence does not depend on isoperimetric conditions. However, the applicability of IKLPD to alignment tasks remains uncertain since handling the proximity to the reference distribution is non-trivial.


\vspace{-1mm}
\textbf{Fine-tuning of diffusion models.}\quad
Recently, alignment of diffusion models have been investigated, inspired by LLM fine-tune methods, such as RLHF~\citep{ziegler2020LLMft}, DPO~\citep{rafailov2023DPO}, and KTO~\citep{ethayarajh2024KTO}. Applying them to diffusion models entails additional difficulty since the output density $\pref$ of the diffusion model is not available, and hence several techniques have been developed to circumvent the explicit calculation of $\pref$. 
For instance, \cite{fan2023reward,black2024reward,clark2024reward} invented the maximization algorithm of the reward in each diffusion time step. \cite{uehara2024reward} used Doob's $h$-transform to compute the correction term that can be automatically derived from the density ratio between the generated and the reference distributions. Instead of optimizing original DPO and KTO objectives, \cite{Wallace2024DiffusionDPO} considered the evidence lower bound (ELBO) and \cite{li2024diffusionKTO} defined a new objective function to replicate KTO.
\revisedStart
\cite{marion2024implicit} also studied fine-tuning of diffusion models as distributional optimization within the RLHF framework and conducted convergence analysis for the one-dimensional Gaussian distribution.\revisedEnd








\vspace{-1mm}
\section{Problem Setting}
\vspace{-2mm}

\textbf{Distributional Optimization.}\quad
Let $\gP$ be the space of probability density functions with respect to the Lebesgue measure on $(\mathbb{R}^d,\mathcal{B}(\mathbb{R}^d))$.  
Let $F:\mathcal{P} \to \mathbb{R}$ be a functional and $\pref \in \gP$ be the reference density. 
In this work, we consider the regularized loss minimization problem over $\mathcal{P}$:
\begin{equation}
    \underset{q \in \mathcal{P}}{\min}\left\lbrace L(q) \coloneq F(q) + \beta \KL(q \| \pref) \right\rbrace,\label{eq:setup-minimization}
\end{equation}
where $\KL(q \|\pref) \coloneq \E_q [\log \frac{q}{\pref}]$ is the Kullback-Leibler divergence, and $\beta > 0$ is a regularization coefficient.
We assume that $F$ is differentiable. That is, the functional $F$ has \textit{first order variation} $\dFdq:\mathcal{P}\times \mathbb{R}^d\ni (q,x)\to \dFdq(q,x)\in\mathbb{R}$ such that for all $q,q' \in \mathcal{P}$,
\begin{equation}
    \left.\frac{\mathrm{d}F(q+\epsilon(q'-q))}{\mathrm{d}\epsilon}\right|_{\epsilon=0} = \int \dFdq(q,x)(q'-q)(x)\mathrm{d}x.
\end{equation}
In the following we assume that there exists a unique minimizer $\qstar := \argmin_{q \in \gP} L(q)$. 

\textbf{Diffusion Models.}\quad
$\pref$ is the output density of a pre-trained diffusion model~\citep{Sohl-Dickstein2015thermodynamics,song2019generatiive,ho2020DDPM,song2021scorebased,vahdat2021latent}, while
$p_* \in \mathcal{P}$ is the target distribution of pre-training.
A ``noising'' process $\lbrace \bar{X}_t \rbrace_{t\geq 0}$ denotes the Ornstein-Uhlenbeck (OU) process from $p_*(x)$. 
The law of $\bar{X}_t$ can be written as $p_t(x) = \int \mathcal{N}(m_t \bar{X}_0,\sigma_t) \mathrm{d}p_*(\bar{X}_0)$ with $m_t = e^{-t}, \sigma_t^2 =1 - e^{-2t}$.
Then, the reverse process $\{\bar{X}_t^\leftarrow\}_{0\leq t \leq T}$ ($T\geq 0$) can be defined as
\begin{align}
    \bar{X}_t^\leftarrow \sim p_t,\ \mathrm{d}\bar{X}_t^\leftarrow = \{\bar{X}_t^\leftarrow + 2 \nabla \log p_{T-t}(\bar{X}_t^\leftarrow)\}\mathrm{d}t + \sqrt{2} \mathrm{d}B_t.
\end{align}
Then it holds that $\mathrm{Law}(\bar{X}_t^\leftarrow)=\mathrm{Law}(\bar{X}_{T-t})$, which enables us to sample from $p_*$.
In practice, we approximate the score $\nabla \log p_{T-t}(\bar{X}_t^\leftarrow)$ function by a score network 
$s:\mathbb{R}^{d+1}\to \mathbb{R}^d:(x,t) \mapsto s(x,t)$.
In addition, we initialize $\Xref_0 \sim \mathcal{N}(0,I_d) \simeq p_T$ and the process is time-discretized.
The random variable generated by the following dynamics with step size $h$ is denoted by $\{\Xref_t\}_{0\leq t \leq T}$.
\begin{equation}
\Xref_0 \sim \mathcal{N}(0,I_d),\ 
        \mathrm{d}\Xref_t = \lbrace  \Xref_{\tprev}+ 2s(\Xref_{\tprev},\tprev) \rbrace \mathrm{d}t
        + \sqrt{2}\mathrm{d}B_t, \ t \in [\tprev, \tnext], \; l=1,...,L=T/\stepsize.
\end{equation}
In the same way, we define $q_t$ as the density of the diffusion process corresponding to $q_*$, and $\{{\bar{X}}_t\}_{t\geq 0}$ and $\{{\bar{X}^\leftarrow}_t\}_{0\leq t \leq T}$ as the corresponding noising and the backward process.


Now we have two challenges in this problem (\ref{eq:setup-minimization}):
\begin{itemize}[topsep=0mm,itemsep=-1mm,leftmargin = 12mm] 
\item[(A).]\textbf{Multimodality of $\pref$ and $\qstar$.} 
Our goal is to obtain samples from $\qstar$.
Now we tackle with the case that $\pref$ and $\qstar$ have high \textit{multimodality}: $\pref$ and $\qstar$ have multiple modes or maxima, in other words, the potentials $-\log\pref$ and $-\log\qstar$ are extremely far from concave. Probability distributions of real-world data like images often have such complex structures, which implies that LSI is significantly weak.
\item[(B).]\textbf{Inaccesibility to the density $\pref$.} 
We cannot directly calculate the density $\pref$ because diffusion models only have the score network. We only have information of $\pref$ as samples from $\pref$.
\end{itemize}

\textbf{Inapplicability of Mean-Field Langevin Dynamics.}\quad
Mean-field Langevin dynamics (MFLD)~\citep{mei2018mf} is the most standard particle-based optimization method tailored to solve the special case of the problem~\eqref{eq:setup-minimization} where $F$ is convex functional and $\pref$ is a strongly log-concave distribution such as Gaussian distribution. The convergence rate of MFLD~\citep{nitanda22mfld,chizat2022meanfield} has been established under the condition where the proximal distribution: $p_q \propto \pref \exp{\left(-\beta^{-1}\dFdq(q,\cdot)\right)}$ associated with MFLD iteration $q$ satisfies logarithmic Sobolev inequality (LSI), which says sufficient concentration of $p_q$. Typically, LSI for $p_q$ should rely on the isoperimetry of the reference distribution $\pref$ with Holley-Strook argument~\citep{holley1987logarithmic} since $\dFdq(q,\cdot)$ is not expected to encourage it in general. Therefore, the replacement of $\pref$ to a pre-trained distribution, which is highly complex and has multi-modality, leads to failure or weak satisfaction of LSI. As a result, the convergence of MFLD is significantly slowed down.
Additionally, MFLD is practically implemented so that finite particles approximately follow the ideal dynamics of MFLD, and it is not intended for resampling from the final distribution represented by these particles. In short, the lack of (i) a sampling guarantee and (ii) a isoperimetric condition limits the applicability of MFLD in our problem setting. 

\vspace{-2mm}
To address these challenges, we employ the following strategy, which will be detailed in Section~\ref{section:algorithm-introduction}.
\begin{itemize}[topsep=0mm,itemsep=-1mm,leftmargin = 12mm] 
\item[(A).]\textbf{Modifications of diffusion models for sampling from a multimodal distribution.}\quad
First, we obtain samples from $\pref$ using a diffusion model which works for a broad range of probability distributions that have complex structures such as multimodality. 
\revisedStart
Then, we reconstruct the diffusion model to sample from 
$\qstar$ by simply adding a correction term.
\revisedEnd

\item[(B).]\textbf{A distributional optimization algorithm that does not require the density $\pref$.}\quad
Second, following the dual averaging (DA) method in the distribution optimization setting, we constract a sequence of distributions $\qstark$ converging to the optimum, which guides an aligned diffusion process in combination with the density ratio estimation using neural networks with Doob's $h$-transform. 
We only have to calculate the density ratio between $\qstark$ and $\pref$ and the samples from $\pref$ to run our algorithm.
\end{itemize}

\vspace{-3mm}
\section{Applications}
\vspace{-3mm}

Our framework of the distributional optimization for the pre-trained diffusion models
includes important fine-tuning methods for diffusion models.
The first term $F(q)$ in (\ref{eq:setup-minimization}) represents \textit{the human preference}, acting as feedback from the outputs of $\pref$. Note that $F(q)$ can be dependent of $\pref$. The second term $\beta \KL(q\|\pref)$ in (\ref{eq:setup-minimization}) provides regularization to prevent $q$ from collapsing.

\begin{example}[Reinforcement Learning]
    Our study includes the case $F$ is limited to be a linear functional:
    \vspace{-2mm}
    \begin{equation}
        \underset{q \in \mathcal{P}}{\min}\left\lbrace \E_q [-r(x)] + \beta \KL(q \| \pref) \right\rbrace,
    \end{equation}
    \vspace{-1mm}
    where $r(x)$ is a reward function. In this case, the optimal distribution is obtained as
    $\textstyle    q_*(x) \propto \exp \left(\frac{r(x)}{\beta} \right) \pref(x)$.
    This type of the problems has been studied as the Reinforcement Learning~\citep{fan2023reward,black2024reward,clark2024reward,uehara2024reward,marion2024implicit}. 
\end{example}

\vspace{-1mm}
The following two examples have not been directly solved via diffusion models:

\begin{example}[DPO]
    Direct Preference Optimization (DPO) \citep{rafailov2023DPO} is an effective approach for learning from human preference for not only language models but also diffusion models.
    Our algorithm can directly minimize the DPO objective,
    while \cite{Wallace2024DiffusionDPO} tried applying DPO to diffusion models via minimization of an upper bound of the original objective.
    In DPO, humans decide which sample is more preferred given two samples from $\pref$. Let $x_w$ and $x_l$ be ``winning" and ``losing" samples from $\pref$. $x_w \succ x_l$ denote the event that $x_w$ is preferred to $x_l$.
    The DPO objective can be written as
    \vspace{-1mm}
    \begin{equation}\textstyle
        L_{\mathrm{DPO}}(q) \coloneq - \E_{x_w\sim p_{\mathrm{ref}}}\E_{x_l\sim p_{\mathrm{ref}}}
        \left[ 
             \log \sigma \left(\gamma \log \frac{q(x_w)}{\pref(x_w)}- \gamma \log \frac{q(x_l)}{\pref(x_l)}\right)\mathbbm{1}_{x_w \succ x_l}(x_w,x_l)
        \right],
    \end{equation}
    where $\E_{x\sim p}$ denotes expectation with respect to $x$ whose probability density is $p \in \mathcal{P}$, $\sigma$ is a sigmoid function, $\mathbbm{1}_{x \succ y}(x,y)$ is one if $x \succ y$ and is zero otherwise.
    Precisely, the functional derivative of $L_\mathrm{DPO}(q)$ is calculated as
    \begin{align}\textstyle 
    \frac{\delta L_\mathrm{DPO}}{\delta q}(q,x) 
    =& \textstyle 
    -\gamma\E_{x_l\sim p_{\mathrm{ref}}}
    \left[
        \left(
            1- \sigma
            \left(
                -\gamma f(x) + \gamma f(x_l)
            \right)
        \right)
        \frac{\int e^{-f} dp_\mathrm{ref}}{e^{-f(x)}}
        \mathbbm{1}_{x \succ x_l}(x,x_l)
    \right]\\
    &\textstyle  +     \gamma\E_{x_w\sim p_{\mathrm{ref}}}
    \left[
        \left(
            1- \sigma
            \left(
                - \gamma f(x_w) 
                + \gamma f(x)
            \right)
        \right)
        \frac{\int e^{-f} dp_\mathrm{ref}}{e^{-f(x)}}
        \mathbbm{1}_{x_w \succ x}(x_w,x)
    \right],\label{eq-main-DPO-derivative}
    \end{align}
    where $q = e^{-f}\pref / \int e^{-f}\mathrm{d}\pref$. See Appendix~\ref{section:appendix-functional-derivative} for the derivation.
    Therefore, we only need samples from $\pref$ and the log-density ratio or the potential $f$ to calculate the functional derivatives.
\end{example}
\begin{example}[KTO]

    Our algorithm can also minimize $L_\mathrm{KTO}$ directly.
    Assume that the whole data space $\mathbb{R}^d$ is split into a desirable domain $\mathcal{D}_\mathrm{D}$ and an undesirable domain $\mathcal{D}_\mathrm{U}$.
    The objective of the original KTO \citep{ethayarajh2024KTO} is formulated as
    \begin{align}
\textstyle        L_\mathrm{KTO}(q)=
        &\textstyle \E_{x\sim \pref}
        \left[ 
        \gamma_D \left(1 - \sigma \left(\kappa\log \frac{q}{\pref} - \KL(q\|\pref)\right)
        \right)\mathbbm{1}_{\lbrace x \in \mathcal{D}_\mathrm{D}\rbrace}
        \right. \\
        &\textstyle \quad \left.+
        \gamma_U \left(1 - \sigma \left(\KL(q\|\pref)-\kappa\log \frac{q}{\pref} \right)\right)\mathbbm{1}_{\lbrace x \in \mathcal{D}_\mathrm{U}\rbrace}
        \right],
    \end{align}
    where $\gamma_D, \; \gamma_U, \; \kappa$ are hyper parameters, and $\sigma$ is a sigmoid function. 
    \cite{li2024diffusionKTO} defined objectives compatible with diffusion models based on KTO, but our algorithm can directly minimize $L_\mathrm{KTO}$.
    Like the DPO objective, we only have to calculate samples from $\pref$ and the potential $f$ of the density ratio ($f$ is defined as $q = e^{-f}{\pref} / \int e^{-f}d{\pref}$) to calculate the functional derivatives. Please refer to Appendix~\ref{section:appendix-functional-derivative} for the concrete formulation of $\frac{\delta L_\mathrm{KTO}}{\delta q}$.
\end{example}

\vspace{-2mm}
\section{The Nonlinear Distributional Optimization Algorithm}
\label{section:algorithm-introduction}
\vspace{-1mm}

Now we make a concrete introduction of our proposed approach.
Our goal of the distribution optimization (\ref{eq:setup-minimization}) is to train a neural network that approximates $\fopt = \log \frac{\qstar}{p_{\rm ref}} + (\text{const.})$.
To achieve this, we utilize the \textit{Dual Averaging} (DA) algorithm \citep{Nesterov2009,NEURIPS2021_a34e1ddb,nishikawa2022twolayer}, and we iteratively construct a tentative local potential $f_k$ by approximating the update the DA algorithm.
After we obtain $f_K$, we estimate the diffusion model that generates the desired output  (approximately) following $q_*$, through \emph{Doob's h-transform technique}. 

\textbf{Phase 1: Dual Averaging.}\quad
Let $f_1$ be a randomly initialized potential. First, we initialize $\qone \propto \exp(-f_1)\pref$, where $f_1$ is a randomly initialized neural network.
Then, the distribution $q^{(k)}$ is updated recursively by pulling back the weighted sum of gradients from the dual space to the primal space.
There are two options of DA methods.
For a given hyper-parameter $\beta' > 0$, the update of Option 1 is given as 
\begin{flalign}
\text{\bf (Opt. 1)} &  & \qstarnext &    
\textstyle = \argmin\limits_{q \in \mathcal{P}}   
\Big\lbrace\frac{2}{k(k+1)} \sum\limits_{j=1}^k   j  \left( \E_q \left[ \dFdq(\qj) \right] + \beta \KL(q \| \pref) \right)  +  \frac{2 \beta'}{k} \KL(q \| \pref) \Big\rbrace  &  \\
 &  & &  =: \exp{(-\gbark)}\pref,   &   \label{eq:DAAlg1}
\end{flalign}
where $\gbark(x) = \sum_{j=1}^k w_j^{(k)} \dFdq(\qj,x),~ w_j^{(k)} = \frac{j}{\beta k(k+1)/2+ \beta'(k+1)}~~(j=1,\dots,k)$. \revisedStart 
By Lemma~\ref{lem-da-nonconvex-nitanda} in Appendix~\ref{section:da-nonconvex-proof}, $\gbark$ can be explicitly determined.
\revisedEnd
We train a neural network $f_{k+1}$ to approximate $\gbark$\footnote{
For DPO and KTO, it suffices to obtain the neural network $\gbark$ to minimize $\E_{\pref}[(f - \gbark)^2]$ where the expectation with respect to $\pref$ is simulated by generating data from $\pref$, while obtaining $\gbark$ for general settings requires Doob’s h-transform similar to Phase 2. Please also refer to Section~\ref{sec:numerical-experiments}.
} and define the next step as $q^{(k+1)} \propto \exp{(-f_{k+1})}\pref$. Similarly, the update of Option 2 is given as
\begin{flalign}
\text{\bf (Opt. 2)} &  \!\!\!& \qstarnext &    
\textstyle \!=\! \argmin\limits_{q \in \mathcal{P}}   
\Big\lbrace\frac{2}{k(k+1)} \sum\limits_{j=1}^k   j  \left( \E_q \left[ \dFdq(\qj) 
 - \beta \gbarj \right] \right) \! + \! \frac{2 \beta'}{k} \KL(q \| \pref) \Big\rbrace.   & \label{eq:DAAlg2}
\end{flalign}
Here, we again express as $\qstarnext(x) \propto \exp(-  \gbark(x)) \pref(x)$ where 
$\gbark(x) = \sum_{j=1}^k w_j^{(k)} ( \dFdq(\qj,x) -\beta \gbarj(x))$ with $w_j^{(k)} = \frac{j}{\beta' (k+1)}$. 
Then, $q^{(k+1)}$ is obtained in the same manner as Option 1. 
This phase of DA update is summarized in Algorithm~\ref{alg:DA-train-f-mainpart}. \revisedStart For the more detailed algorithm in Option 1, please refer to Algorithm~\ref{alg:DA-train-f}.\revisedEnd
\vspace{-2mm}
\begin{algorithm}[htbp]
\label{alg:DA-train-f-mainpart}
\caption{Dual Averaging (DA)}
\begin{algorithmic}
    \REQUIRE{
            $s$: pre-trained score,
            $f_1$: initialized neural networks
        }\\
    \ENSURE{
            $f_{K}$: a trained potential.
    }
    \STATE Set  $q^{(0)}=\pref$ and $q^{(1)} \propto \exp(- f_1)\pref$
    \FOR{$k = 1,...,K-1$}
        \STATE Obtain $\gbark$ via the DA algorithm with Option 1 (Eq.~\eqref{eq:DAAlg1}) or Option 2 (Eq.~\eqref{eq:DAAlg2}) using the recurrence formula (\ref{eq-appendix-experiment-recurrence}), where $\qstarnext \propto \exp(-\gbark)\pref$ is the ideal update. 
        \STATE Train a neural network $f_{k+1}$ to approximate $\gbark$, and set $q^{(k+1)} \propto \exp{(-f_{k+1})}\pref$.  
    \ENDFOR
\end{algorithmic}
\end{algorithm}

\vspace{-2mm}
\textbf{Phase 2: Sampling with Doob's h-transform.}\quad
After we obtain the solution $f_K$, we want to sample from $q_K \propto \exp(-f_K)p_{\rm ref}$, which approximates the optimal solution of (\ref{eq:setup-minimization}).
When sampling from $q_K$, it is necessary to obtain the score function related to this distribution. However, constructing the score function of $q_*$ only from the score function of $p_{\rm ref}$ and $f_K$ requires a particular technique.
Specifically, we apply Doob's h-transform~\citep{Rogers2000Doob,chopin2023doob,uehara2024reward,heng2024schrodingerbridge}.
By introducing the correction term $u_*\colon \R^{d+1}\to\R^d$ defined by 
\begin{equation}
u_*(y,t)= \nabla \log \E[\exp(-f_*(\bar{X}_T^\leftarrow))\mid \bar{X}_t^\leftarrow =y], 
\end{equation}
the score function of $q_*$ at $(x,t)$ is written as $
 \nabla \log q_{t}(y)  =  \nabla \log p_{t}(X^\leftarrow_t,T-t)+u_*(y,t)$, where $q_{t}$ is the law of the backward process at time $t$ whose output distribution is the optimal solution $q_*$.
 We provide the derivation in Lemma~\ref{lem:H-transform} and refer readers to \cite{Rogers2000Doob,chopin2023doob} for more details and a formal
treatment of Doob’s h-transform.
By approximating $\log p_{t}(X^\leftarrow_t,T-t)$ by the score network $s(x,t)$ and the correction term $u_*(x,t)$ by $u(x,t)$ and discretizing the dynamics, 
we obtain the following update
\begin{equation}
    Y^\leftarrow_0 \sim \mathcal{N}(0,I_d),\
    \mathrm{d}Y^\leftarrow_t = \lbrace Y^\leftarrow_t+ 2(s(Y^\leftarrow_{lh},lh) + u(Y^\leftarrow_{lh}, lh)) \rbrace \mathrm{d}t
    + \sqrt{2}\mathrm{d}B_t,\ t\in [lh,(l+1)h],
    \label{eq:h-transformed-backward}
\end{equation}
where $u(x,t)$ can be computed as $u(x,t) = \nabla_x \log \E[\exp(-f_K(X_T^{\leftarrow}))|X_t^{\leftarrow} = x]$,
which can be estimated by running the reference diffusion model $(X_t^{\leftarrow})$.   
The practical treatment for this is discussed in Appendix~\ref{section:appendix-error-diffusion}. 
\revisedStart For experimental information, please have a look at Section~\ref{sec:numerical-experiments} and Algorithm~\ref{alg:doob-sampling} in Appendix~\ref{sec:Appendix-Experimennt}.\revisedEnd

\vspace{-2mm}
\section{Theoretical Analysis \label{section_theoretical}}
\vspace{-2mm}
In this section, we give theoretical justification of our proposed algorithm.
More concretely, we show the rate of convergence of the (inexact) DA method and give an approximation error bound on the diffusion model based on the h-transform. 

\vspace{-2mm}
\subsection{Convergence rate of the DA method}
\vspace{-1mm}

We give the convergence rate of the DA algorithm in the two settings: when $F$ is (I) 
 convex and (II) non-convex, respectively.  

\noindent \textbf{(I): Convex objective $F$.}
First, we show the rate when $F$ is convex. 
We basically follow the proof technique of \cite{NEURIPS2021_a34e1ddb,nishikawa2022twolayer}.
In the analysis, we put the following assumption on $F$. 
\begin{assumption}\label{ass:ConvexF}
The loss function $F$ satisfies the following conditions:     
\begin{enumerate}[topsep=0mm,itemsep=-1mm,leftmargin = 6mm]
    \item[(i)] $\dFdq$ is bounded: There exists $B_F > 0$ such that $\|\dFdq(q)\|_\infty \leq B_F$ for any $q \in \mathcal{P}$,
    \item[(ii)] $\dFdq$ is Lipshitz continuous with respect to the TV distance: There exists $L_{\mathrm{TV}}  > 0$ such that $\|\dFdq(q) - \dFdq(q')\|_\infty \leq L_\mathrm{TV} \TV(q,q')$ for any $q,q' \in \mathcal{P}$.
    \item[(iii)] $F$ is convex: $F(q) \geq F(q') + \int \dFdq(q')\mathrm{d}(q-q')$ for any $q,q' \in \mathcal{P}$,
\end{enumerate}
\end{assumption}

Then, we can show that the (inexact) DA algorithm achieves the following convergence rate that yields $\mathcal{O}(K^{-1})$ convergence of the objective.
\begin{thm}[\revisedStart Convergence of the objective in Option 1 \revisedEnd]
    \label{thm-da-nishikawa}
    Suppose that $\beta' \geq \beta$ and we train the potential $f_{k+1}$ so that it is sufficiently close to $\gbark$ as 
        $\TV(\qstark,\qk) \leq \epsilon_\mathrm{TV}$ for all $k$. 
    Then, under Assumption \ref{ass:ConvexF}, Option 1 satisfies the following convergence guarantee: 
  \begin{equation}
    \frac{2}{K(K+1)}\sum_{k=1}^{K}k \left[ L(\qstark) - L(\qstar) \right]
\leq 
2L_{\mathrm{TV}}\epsilon_{\mathrm{TV}}+\left(\frac{2B_F}{K(K+1)}  + \frac{2 \beta' \KL(q_*\|\pref) + 2B_F^2 \beta^{-1}}{K} \right).
  \end{equation}
\end{thm}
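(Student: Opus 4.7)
The plan is to adapt the distributional FTRL / dual-averaging regret analysis of \cite{NEURIPS2021_a34e1ddb,nishikawa2022twolayer}, with the inexact iterates handled by a Lipschitz perturbation step. First, convexity of $F$ gives
\begin{equation*}
L(\qstark)-L(\qstar) \le \int \tfrac{\delta F}{\delta q}(\qstark)\,\mathrm{d}(\qstark-\qstar) + \beta\bigl[\KL(\qstark\|\pref)-\KL(\qstar\|\pref)\bigr].
\end{equation*}
Using the TV-Lipschitz assumption together with $\TV(\qstark,\qk)\le\epsilon_{\TV}$ and $\|\qstark-\qstar\|_1\le 2$, I would replace $\tfrac{\delta F}{\delta q}(\qstark)$ by $g_k:=\tfrac{\delta F}{\delta q}(\qk)$, incurring at most $2L_{\TV}\epsilon_{\TV}$ per step. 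Summed with weight $k$ and divided by $K(K+1)/2$, this becomes exactly the first term in the stated bound.

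What remains is to bound the weighted linear regret $\sum_{k=1}^K k\bigl[h_k(\qstark)-h_k(\qstar)\bigr]$ with $h_k(q):=\int g_k\,\mathrm{d}q+\beta\KL(q\|\pref)$. Rearranging Option 1 shows that $\qstarknext$ is the unique minimizer of $R_k(q):=\sum_{j=1}^k j\int g_j\,\mathrm{d}q + W_k\KL(q\|\pref)$ with $W_k=\beta k(k+1)/2+\beta'(k+1)$; equivalently $\qstarknext \propto \exp(-\gbark)\pref$ is an explicit Gibbs density, and direct computation yields the three-point identity $R_k(q)-R_k(\qstarknext)=W_k\KL(q\|\qstarknext)$ for every $q\in\gP$. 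This plays the role of ``strong convexity of the regularizer.'' A standard telescoping of the differences $R_k(\qstarknext)-R_{k-1}(\qstark)$, compared against $R_k(\qstar)$, collapses the weighted regret to $\beta'(K+1)\KL(\qstar\|\pref)$ plus the stability sum plus a $k=1$ boundary term of size $O(B_F)$. Notably, the $\beta k(k+1)/2\,\KL$ portion of $W_K$ cancels with $\beta\sum_k k\,\KL(\qstar\|\pref)$ on the left, which is exactly why only $\beta'$ appears in the final bound.

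The main obstacle is controlling the per-step stability $k\int g_k\,\mathrm{d}(\qstark-\qstarknext)$. Combining the optimality $R_k(\qstarknext)\le R_k(\qstark)$ with the three-point identity at level $k-1$ gives
\begin{equation*}
W_{k-1}\KL(\qstarknext\|\qstark)\le k\!\int\! g_k\,\mathrm{d}(\qstark-\qstarknext) + (W_k-W_{k-1})\bigl[\KL(\qstark\|\pref)-\KL(\qstarknext\|\pref)\bigr],
\end{equation*}
and Pinsker's inequality $2\TV^2\le\KL$ together with $\|g_k\|_\infty\le B_F$ yields $k\int g_k\,\mathrm{d}(\qstark-\qstarknext)\lesssim k^2 B_F^2/W_{k-1}$. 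The hypothesis $\beta'\ge\beta$ is used here to ensure $W_{k-1}\gtrsim\beta k^2$ so that $\sum_{k=1}^K k^2B_F^2/W_{k-1}=O(KB_F^2/\beta)$. Dividing through by $K(K+1)/2$ then produces the remaining $2\beta'\KL(\qstar\|\pref)/K$, $2B_F^2/(\beta K)$, and $2B_F/(K(K+1))$ terms, completing the bound.
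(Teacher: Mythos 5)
Your plan reproduces the paper's proof in Appendix~\ref{sec:AppendixConvProof} almost step for step: convexity to linearize, the TV-Lipschitz bound with $\TV(\qstark,\qk)\le\epsilon_{\TV}$ to replace $\tfrac{\delta F}{\delta q}(\qstark)$ by $g_k$, the DA subproblem value function (your $R_k$ is the paper's $-V_{k+1}$), a telescoping sum, and Pinsker plus $\|g_k\|_\infty\le B_F$ for stability. The only real difference is cosmetic: you invoke the exact entropic three-point identity $R_k(q)-R_k(\qstarnext)=W_k\,\KL(q\|\qstarnext)$, whereas the paper's Lemma~\ref{lem:da-convex} records only its one-sided $L^1$ (Pinsker-type) consequence; your version is slightly cleaner but buys nothing in the final rate.

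Your stability step, however, is stated imprecisely. The display
\begin{equation*}
W_{k-1}\KL(\qstarnext\|\qstark)\le k\!\int\! g_k\,\mathrm{d}(\qstark-\qstarnext) + (W_k-W_{k-1})\bigl[\KL(\qstark\|\pref)-\KL(\qstarnext\|\pref)\bigr]
\end{equation*}
is correct, but the bracketed term has no fixed sign, so it does not by itself give $k\int g_k\,\mathrm{d}(\qstark-\qstarnext)\lesssim k^2B_F^2/W_{k-1}$. What the telescoping actually needs, and what the paper's Lemma~\ref{lem:da-convex-vkprev-vk} proves, is the unconditional curvature--stability bound
\begin{equation*}
W_{k-1}\,\KL(\qstarnext\|\qstark) + k\!\int\! g_k\,\mathrm{d}(\qstarnext-\qstark)\ \ge\ -\frac{k^2B_F^2}{2W_{k-1}},
\end{equation*}
which follows purely from $\bigl|k\int g_k\,\mathrm{d}(\qstarnext-\qstark)\bigr|\le 2kB_F\,\TV(\qstarnext,\qstark)\le kB_F\sqrt{2\KL(\qstarnext\|\qstark)}$ and completing the square, making no use of the optimality inequality or of the sign of the $(W_k-W_{k-1})$ increment. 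Feeding this into your telescoping, and observing that $2W_{k-1}=k\bigl(\beta(k-1)+2\beta'\bigr)\ge\beta k(k+1)$ under $\beta'\ge\beta$ so the stability sum is $O(KB_F^2/\beta)$, recovers the stated bound with the paper's constants.
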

See Appendix~\ref{sec:AppendixConvProof} for the proof.
From this theorem, we see that the DA algorithm with Option 1 achieves $\gO(1/K)$ convergence. The assumption (ii) in Assumption \ref{ass:ConvexF} is required to bound an expectation of $\gbark$ in the bound.  
The assumption (iii) is required to bound the difference between the exact update $\qstark$ and the inexact update $q^{(k)}$. If the update is exact, we don't need this assumption. 

\noindent \textbf{(II): Non-convex objective $F$.}
We also give a convergence for a non-convex loss $F$.  
Here, we put the following assumption. 
\begin{assumption}\label{ass:NonconvexF}
The loss function $F$ satisfies the following conditions:     
\begin{enumerate}[topsep=0mm,itemsep=-1mm,leftmargin = 6mm]
    \item[(i)] $\dFdq$ is bounded: There exists $B_F > 0$ such that $\|\dFdq(q)\|_\infty \leq B_F$ for any $q \in \gP$,
    \item[(ii)] $\dFdq$ is Lipshitz continuous with respect to the TV distance: $\|\dFdq(q) - \dFdq(q')\|_\infty \leq L_{\mathrm{TV}} \TV(q,q')$ for any $q,q' \in \gP$, 
    \item[(iii)] $F$ is lower bounded.
\end{enumerate}
\end{assumption}
Assumptions (i) and (ii) are the same as the convex case (Assumption \ref{ass:ConvexF}), and lower boundedness (iii) is weaker than convexity in Assumption \ref{ass:ConvexF}.
Assumption (ii) induces the following type of smoothness commonly observed in standard optimization: (ii)' There exists $S_F \geq 0$ such that $F(q) \leq F(q') + \int \dFdq(q')\mathrm{d}(q-q') + \frac{S_F}{2}\KL(q\|q')$.
When the inner-loop error is ignored, it is possible to prove convergence using only the smoothness assumption (ii)' instead of assumption (ii). For details, please refer to Appendix~\ref{sec:AppendixNonconvexConv}.

Under this assumption, we can show the following convergence guarantee with respect to the sequence $(\qstark)_{k=1}^K$ even in a non-convex setting.  
\begin{thm}\label{thm:NonconvexConv}
Suppose that $\beta' > \beta + \LipTV$ and $\TV(\qstark,\qk) \leq \epsilon_\mathrm{TV}$ for all $k$, then under Assumption \ref{ass:NonconvexF}, both Option 1 and 2 yield the following convergence: 
    \begin{itemize}[topsep=0mm,itemsep=-1mm,leftmargin = 6mm]
        \item[(i)] $\lim_{k \to \infty} \KL(\qstarnext\|\qstark) = 0$.
        \item[(ii)] For all $K$, it holds that
        \begin{equation}
            \underset{k=1,...,K}{\min}\left\{c_k \KL(\qstarnext \|\qstark \right) \}
            \leq 
            \frac{(\tilde{L}_1(\qstarone - L(\qstar)) + (\LipTV +B_F) K \epsilon_\mathrm{TV} }{K\beta'} =: \Psi_K,
        \end{equation}
        where $c_k = \frac{\beta k + 2 \beta' }{2}$ for Option 1 and $c_k =1$ for Option 2. 
    \end{itemize}
\end{thm}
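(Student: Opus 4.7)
The plan is to derive a one-step approximate descent inequality for $L$ along the exact iterates $(\qstark)$, sum it over $k$, and absorb the inexactness $\epsilon_\mathrm{TV}$ using Assumption~\ref{ass:NonconvexF}(i)--(ii). First, I will convert the TV-Lipschitz condition (ii) into the KL-smoothness bound
\begin{equation*}
F(\qstarnext) \leq F(\qstark) + \int \tfrac{\delta F}{\delta q}(\qstark)\,\mathrm{d}(\qstarnext - \qstark) + \tfrac{L_\mathrm{TV}}{2}\KL(\qstarnext\|\qstark),
\end{equation*}
by writing $F(\qstarnext)-F(\qstark)$ through the integral representation of the first-order variation along the segment from $\qstark$ to $\qstarnext$, bounding $\|\tfrac{\delta F}{\delta q}(q_t)-\tfrac{\delta F}{\delta q}(\qstark)\|_\infty \leq L_\mathrm{TV} t\,\TV(\qstarnext,\qstark)$, and applying Pinsker's inequality. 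I will then combine this with the Bregman identity $\KL(\qstarnext\|\pref) - \KL(\qstark\|\pref) = \KL(\qstarnext\|\qstark) - \int \gbarkprev\,\mathrm{d}(\qstarnext-\qstark)$, which uses $\qstark \propto \pref\,e^{-\gbarkprev}$, to arrive at an inequality for $L(\qstarnext)-L(\qstark)$ containing only one linear functional of $\qstarnext-\qstark$ and one $\KL(\qstarnext\|\qstark)$ term.

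Next, I will exploit DA optimality. Both Options express $\qstarnext$ as the unique minimiser of a KL-regularised linear functional $G_k$, which yields the exact three-point identity $G_k(q) - G_k(\qstarnext) = \lambda_k \KL(q\|\qstarnext)$ with $\lambda_k$ equal to the KL coefficient in $G_k$ (namely $\beta + 2\beta'/k$ for Option~1 and $2\beta'/k$ for Option~2). Evaluating this at $q=\qstark$ and subtracting the analogous identity at index $k-1$ (which encodes that $\qstark$ minimises $G_{k-1}$) should let me rewrite $\int[\tfrac{\delta F}{\delta q}(\qstark) - \beta\gbarkprev]\,\mathrm{d}(\qstarnext - \qstark)$ as a combination of $\KL(\qstarnext\|\qstark)$, the non-negative backward divergence $\KL(\qstark\|\qstarnext)$, and a pair that telescopes as $\KL(\qstark\|\pref)/k - \KL(\qstarnext\|\pref)/k$. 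Dropping the backward divergence (once verified that its coefficient is non-positive) and collecting terms will leave an inequality of the form $L(\qstarnext) - L(\qstark) \leq -c_k\KL(\qstarnext\|\qstark) + (\text{telescoping in }\KL(\cdot\|\pref))$; the required positivity of $c_k$ is exactly what $\beta' > \beta + L_\mathrm{TV}$ delivers after the $\beta + L_\mathrm{TV}/2$ coming from the smoothness step and the intrinsic $\beta\KL$ in $L$ are absorbed. Option~1's accumulation of $\beta\KL$-mass explains the growing $c_k = (\beta k + 2\beta')/2$, while Option~2's rebalancing via $\gbarj$ in (\ref{eq:DAAlg2}) cancels that accumulation, leaving $c_k = 1$.

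To handle the inexactness, I use $\TV(\qstark,\qk) \leq \epsilon_\mathrm{TV}$ together with Assumption~\ref{ass:NonconvexF}(i)--(ii) to bound the $L^\infty$-shift of $\tfrac{\delta F}{\delta q}$ by $L_\mathrm{TV}\epsilon_\mathrm{TV}$ and the resulting perturbation of any linear functional tested against $\qstarnext-\qstark$ by $(L_\mathrm{TV}+B_F)\epsilon_\mathrm{TV}$ per step. Summing the descent inequality from $k=1$ to $K$, the $L$-values telescope on the left and the residual $\KL(\cdot\|\pref)$ differences telescope on the right; lower-boundedness (iii) yields $L(\qstarone) - L(\hat{q}^{(K+1)}) \leq L(\qstarone) - L(\qstar)$, hence
\begin{equation*}
\sum_{k=1}^K c_k\KL(\qstarnext\|\qstark) \leq \frac{\tilde L_1\bigl(L(\qstarone)-L(\qstar)\bigr) + K(L_\mathrm{TV}+B_F)\epsilon_\mathrm{TV}}{\beta'}.
\end{equation*}
Dividing by $K$ and using $\min\leq \mathrm{avg}$ gives (ii). For (i), in the exact regime $\epsilon_\mathrm{TV}=0$ the left-hand sum is finite, and since $c_k$ is bounded below by a positive constant this forces $\KL(\qstarnext\|\qstark)\to 0$.

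The hardest step will be the algebraic reorganisation in the second paragraph, especially for Option~1, where the recurrence for $\gbark$ is more involved: I must express $\tfrac{\delta F}{\delta q}(\qstark) - \beta\gbarkprev$ as a clean combination of $\gbark - \gbarkprev$ and $\gbark$, check that the backward divergence $\KL(\qstark\|\qstarnext)$ appears with a non-positive coefficient so it may be discarded, confirm that the $\KL(\cdot\|\pref)$ remainder telescopes exactly, and verify that once the smoothness contribution $\beta + L_\mathrm{TV}/2$ has been absorbed the surviving coefficient of $\KL(\qstarnext\|\qstark)$ is precisely $-c_k$ as stated.
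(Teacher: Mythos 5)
Your high-level plan is the same as the paper's: convert TV-Lipschitz to KL-smoothness via Pinsker, apply the exact three-point identity for the KL-regularised linear subproblem (Lemma~\ref{lem-da-nonconvex-nitanda}), combine into a one-step approximate descent inequality, absorb the inexactness with $(L_{\mathrm{TV}}+B_F)\epsilon_{\TV}$, and telescope using lower-boundedness. The $\lambda_k$ values you identify for the subproblem are also correct.

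There is, however, a directional error in which KL term you keep and which you drop, and it means the proposal as written would not recover the claimed $c_k$. After combining (a) the smoothness bound, (b) the Bregman identity for $\KL(\cdot\|\pref)$, and (c) the DA optimality relation evaluated at $\qstark$ and $\qstarnext$, the right-hand side carries two negative KL contributions, one in each orientation. The contributions from (a)+(b), namely $\tfrac{S_F}{2}\KL(\qstarnext\|\qstark) + \beta\KL(\qstarnext\|\qstark)$, attach \emph{only} to the orientation $\KL(\qstarnext\|\qstark)$. The DA optimality step (Lemmas~\ref{lem-da-nonconvex-1} and~\ref{lem-da-nonconvex-1-dash}) supplies $-\KL(\qstarnext\|\qstark)$ with a coefficient that must absorb that $(S_F/2+\beta)$ surplus, \emph{and} a separate $-\KL(\qstark\|\qstarnext)$ with coefficient $\tfrac{\beta(k+1)+2\beta'(1+1/k)}{2}$ (Option~1) or $\tfrac{\beta'(k+1)}{k}$ (Option~2) that is untouched by smoothness. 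The paper drops the $\KL(\qstarnext\|\qstark)$ term — whose net coefficient is $\leq 0$ precisely when $\beta' > \beta+\LipTV$ — and keeps the clean $\KL(\qstark\|\qstarnext)$ term, which lower-bounds to $c_k$ as stated. You propose the opposite: drop $\KL(\qstark\|\qstarnext)$ and keep $\KL(\qstarnext\|\qstark)$. After subtracting the smoothness surplus, your surviving coefficient is $\tfrac{\beta(k-1)+2\beta'-S_F}{2}$ (Option~1) or $\beta'-\beta-\tfrac{S_F}{2}$ (Option~2), which is nonnegative and yields a valid descent inequality, but is strictly smaller than the claimed $c_k$ by a $\tfrac{\beta+S_F}{2}$ offset and, for Option~2, is a different constant than $1$ normalised by $\beta'$. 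You should flip the roles: retain the $\KL(\qstark\|\qstarnext)$ term. (Note that the theorem's displayed statement writes $\KL(\qstarnext\|\qstark)$, but the appendix proposition the paper actually proves and invokes is for $\KL(\qstark\|\qstarnext)$; the main-text orientation appears to be a typo, so do not try to match it.)

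Two smaller remarks. Your assertion that (i) follows is only valid in the exact regime $\epsilon_{\TV}=0$ (or with summable per-step errors); a fixed $\epsilon_{\TV}>0$ leaves a non-vanishing floor in $\Psi_K$, and the paper in fact only establishes (ii) in the appendix. Also, when you run the telescoping for Option~1, be consistent about normalisation: the subproblem can be written with KL coefficient $\beta+2\beta'/k$ (your $\lambda_k$, as in Eq.~\eqref{eq:DAAlg1}) or, unscaled as in the appendix, with coefficient $\tfrac{\beta k(k-1)}{2}+\beta' k$; the three-point identity gives the same answer, but the exact telescoping of the $\tfrac{\beta'}{k}\KL(\cdot\|\pref)$ remainder and the monotonicity $\tilde L_{k+1}(\qstarnext)\leq\tilde L_k(\qstarnext)$ must be tracked carefully under whichever scaling you adopt.
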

See Appendix~\ref{sec:AppendixNonconvexConv} for the proof.
The proof utilizes an analogous argument with \cite{LIU2023nonconvexDA} that analyzed convergence of a DA method in a finite dimensional non-convex optimization problem. However, we need to reconstruct a proof because we should work on the probability measure space, which is not a vector space, and carefully make use of the property of the KL-divergence. 
We see that $\Psi_K = \gO(1/K)$ if $\epsilon_\mathrm{TV}$ is sufficiently small as $\epsilon_\mathrm{TV} = \gO(1/K)$, and thus the discrepancy between $\qstarnext$ and $\qstark$ converges. 
Especially, the convergence of $\KL(\qstarnext\|\qstark)$ yields the convergence of the ``dual variable'' for Option 2 as in the following corollary. 
For that purpose, we define 
$\tilde{L}_k(q) \coloneq L(q) + \frac{\beta'}{k}\KL(q\|\pref)$ 
(see its similarity to the inner objective of the DA update \eqref{eq:DAAlg1} and \eqref{eq:DAAlg2}), 
and define $\psi_{q}\left( g \right) = \log\left(\E_q[\exp( - g + \E_q[g])] \right)$ which is a ``moment generating function'' of a dual variable $g$. 
\begin{cor}[\revisedStart Convergence in Option 2 \revisedEnd] \label{cor:DualConv}
Under the same condition as Theorem \ref{thm:NonconvexConv}, we have that 
$$
\textstyle \min\limits_{1 \leq k \leq K} \psi_{\qstark}\left(\frac{k}{\beta' (k+1)}\frac{\delta \tilde{L}_k}{\delta q}(\qstark)  \right) 
\leq  \Psi_K.
$$
\end{cor}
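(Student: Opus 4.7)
The plan is to convert $\psi_{\qstark}(\cdot)$ into a KL divergence between $\qstark$ and $\qstarnext$ via an elementary identity, realize $\qstarnext$ as an exponential tilt of $\qstark$ by a scalar multiple of $\frac{\delta \tilde{L}_k}{\delta q}(\qstark)$ through the Option 2 recursion, and then invoke Theorem~\ref{thm:NonconvexConv} to control the resulting KL divergence. The central identity I would use is $\psi_q(g) = \E_q[g] + \log\E_q[e^{-g}] = \KL(q\,\|\,q_g)$ where $q_g(x) \propto q(x)\exp(-g(x))$, which follows by a short direct calculation from the definitions.

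The tilting step proceeds as follows. From Option 2, the exact iterates satisfy $\qstark \propto \pref\exp(-\gbarkprev)$ and $\qstarnext \propto \pref\exp(-\gbark)$, so $\log(\qstarnext/\qstark) = -(\gbark - \gbarkprev)$ up to an additive constant. The telescoping relation $w_j^{(k)} = \frac{k}{k+1}w_j^{(k-1)}$ for $j \leq k-1$ yields the one-step recurrence $\gbark = \frac{k}{k+1}\gbarkprev + w_k^{(k)}(\dFdq(\qstark) - \beta\gbark)$, which I would solve for $\gbark - \gbarkprev$ and obtain an expression proportional to $\dFdq(\qstark) - (\beta + \beta'/k)\gbarkprev$. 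Substituting $\log(\qstark/\pref) = -\gbarkprev$ (up to a constant) into $\frac{\delta \tilde{L}_k}{\delta q}(\qstark) = \dFdq(\qstark) + (\beta + \beta'/k)\log(\qstark/\pref)$, this linear combination coincides with a scalar multiple of $\frac{\delta \tilde{L}_k}{\delta q}(\qstark)$. Hence $\qstarnext$ is obtained by exponentially tilting $\qstark$ by a multiple of $\frac{\delta \tilde{L}_k}{\delta q}(\qstark)$, and applying the $\psi$-$\KL$ identity converts $\psi_{\qstark}\bigl(\frac{k}{\beta'(k+1)}\frac{\delta \tilde{L}_k}{\delta q}(\qstark)\bigr)$ into a KL divergence between $\qstark$ and $\qstarnext$.

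The main obstacle I anticipate is the direction of the KL divergence: the $\psi$-$\KL$ identity naturally produces $\KL(\qstark\,\|\,\qstarnext)$, while Theorem~\ref{thm:NonconvexConv} is stated for $\KL(\qstarnext\,\|\,\qstark)$. To bridge this gap, I plan to revisit the Bregman-telescoping argument inside the proof of Theorem~\ref{thm:NonconvexConv}, which rests on the three-point identity for KL and on the smoothness consequence (ii)' of Assumption~\ref{ass:NonconvexF}, and verify that the same estimate controls $\KL(\qstark\,\|\,\qstarnext)$. Once this is in hand, a small amount of bookkeeping of constants of order $\beta$ and $\beta'$ (absorbable into $\Psi_K$ since $\beta' > \beta + \LipTV$) completes the proof and yields the stated $\min_{1\leq k\leq K}\psi_{\qstark}(\cdot)\leq \Psi_K$.
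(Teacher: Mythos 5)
Your proposal is correct and takes essentially the same route as the paper: you express $\qstarnext$ as the exponential tilt of $\qstark$ by $\gbark - \gbarkprev$, identify this exponent (up to an additive constant, which drops under $\psi_{\qstark}$) with $\frac{k}{\beta'(k+1)}\frac{\delta\tilde{L}_k}{\delta q}(\qstark)$, and convert via $\psi_q(g) = \KL(q\,\|\,q_g)$, which is precisely the paper's computation. The anticipated obstacle resolves immediately in your favor: the descent estimate in Lemma~\ref{lem:LdiffInduction} and the summed bound proved in Appendix~\ref{section:da-nonconvex-proof} actually control $\KL(\qstark\|\qstarnext)$, exactly the direction the $\psi$--$\KL$ identity produces (the order written in the statement of Theorem~\ref{thm:NonconvexConv} does not match what its proof delivers). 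Once you observe this, the corollary follows at once because the minimum over $k$ is at most the average $\frac{1}{K}\sum_{k=1}^K \KL(\qstark\|\qstarnext) \leq \Psi_K$, and no further bookkeeping of $\beta$ or $\beta'$ is required.
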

\vspace{-0.3cm}
Roughly speaking, this corollary indicates that 
the variance of the gradient $\frac{\delta \tilde{L}_k}{\delta q}(\qstark)$ converges as $\mathrm{Var}_{\qstark}\left(\frac{\delta \tilde{L}_k}{\delta q}(\qstark)\right) = \gO(1/K)$ because we may approximate $\psi_{q}(g) \simeq \frac{1}{2}\mathrm{Var}_q(g)$ when $|g- \E_q[g]|$ is sufficiently small.
Therefore, we have a convergence guarantee of the dual variable (gradient) $\frac{\delta \tilde{L}_k}{\delta q}(\qstark,x) \to 0 \; \text{(up to a constant w.r.t. $x$)}$ even in a non-convex setting, which justifies usage of our method for general objective functions.  

\vspace{-1mm}
\subsection{Discretization error of Doob's h-transform}\label{subsection:doob}
\vspace{-1mm}

We now provide the sampling error analysis after obtaining the approximate solution $f_K$.
\revisedStart
In the analysis of Dual Averaging, $\pref$ was assumed to be known, and the goal was to obtain the optimal solution of (\ref{eq:setup-minimization}) that corresponds to $\qstar = \exp(-\fopt)\pref$. From here, considering that $\pref$ estimates $p_*$, we shift our focus to sample from the optimal alignment $q_* \propto \rho_* p_*$, \revisedEnd while the dynamics we implement involves several approximations:    
(i) time discretization approximation, (ii) approximation of the score $\nabla_x \log p_t$ by $s(x,T-t)$, (iii) approximation of $\rho^*$ by $\rho = \exp(-f_K)$, (iv) approximation of $u_*$ by $u$.  

To evaluate how such approximation affects the final quality of our generative model, 
we will give a bound of the TV-distance between $q_*$ and $\hat{q} = \rho \pref$ by putting all those errors together. 
To do so, we put the following assumption.  
\begin{assumption}\label{assumption:TVBoundMainText}
\begin{enumerate}[topsep=0mm,itemsep=-1mm,leftmargin = 6mm]
\item 
    $\nabla \log p_t$ is $L_p$-smooth at every time $t$ and it has finite second moment $\mathbb{E}[\|\bar{X}_t\|^2_2] \leq \mathsf{m} < \infty$ for all $t\in \R_+$ and $x\in \R^d$. 
\item  $\nabla \log \rho_*$ is $L_\rho$-smooth and bounded as $C_\rho^{-1}\leq \rho_* \leq C_\rho$ for a constant $C_\rho$.
\item   The score estimation error is bounded by 
\revisedStart
$\E_{\bar{X}_{\cdot}^{\leftarrow}}[\|s(\bar{X}_{t}^{\leftarrow},t) \!\! - \!\! \nabla \log p_{T-t}(\bar{X}_{t}^{\leftarrow})\|^2]\!\leq \!\varepsilon$ 
\revisedEnd
at each time $t$. 
\item 
$\E_{\bar{X}_{\cdot}^{\leftarrow}}[\|u_*(\bar{X}_t^{\leftarrow},t) - u(\bar{X}_{lh}^{\leftarrow},lh)\|^2] \leq \varepsilon_{\rho,l}^2$~~for any $1 \leq l \leq T/h$
and $t \in [lh,(l+1)h)$.
\end{enumerate}
\end{assumption}
This assumption is rather standard, for example, \cite{chen2023sampling} employed these conditions except the last condition on $u$ and $u_*$.
The fourth assumption in Assumption~\ref{assumption:TVBoundMainText} is imposed to mathematically formulate the situation: $\qstar/\pref\simeq q_*/p_*$.  
Then, we obtain the following error bound: 
\begin{thm}\label{thm:Diffusion-1}
Suppose that Assumption \ref{assumption:TVBoundMainText} is satisfied. Then, we have the following bound on the distribution $\hat{q}$ of $Y^\leftarrow_T$: 
\begin{equation}
\textstyle
\TV(q_*,\hat{q})^2
\lesssim  T \varepsilon^2 + \sum_{l=1}^{T/h} h \varepsilon_{\rho,l}^2 + T (L_pC_\rho^2+L_\rho)^2(dh + \mathsf{m} h^2 )+ \exp(-2 T)\KL(q_* \| N(0,I)).
\end{equation}
\end{thm}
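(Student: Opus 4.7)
The plan is to bound $\TV(q_*,\hat q)$ by comparing path measures on $[0,T]$ and applying Pinsker's inequality. Let $\mathbb{Q}^{\leftarrow}$ denote the law of the idealized continuous-time reverse SDE with drift $y + 2\nabla\log q_{T-t}(y)$ initialized at $q_T$ (whose terminal marginal is $q_*$), and let $\hat{\mathbb{Q}}$ denote the law of the implemented Euler scheme \eqref{eq:h-transformed-backward} initialized at $\mathcal{N}(0,I)$. By the data-processing inequality and Pinsker's inequality, $\TV(q_*,\hat q)^2 \lesssim \KL(\mathbb{Q}^{\leftarrow}\,\|\,\hat{\mathbb{Q}})$. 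Girsanov's theorem then splits this KL into an initialization term $\KL(q_T\,\|\,\mathcal{N}(0,I))$ plus a drift-discrepancy integral
\[
\sum_{l}\int_{lh}^{(l+1)h}\mathbb{E}\!\left[\|\nabla\log q_{T-t}(Y^\leftarrow_t) - s(Y^\leftarrow_{lh},lh) - u(Y^\leftarrow_{lh},lh)\|^2\right]\mathrm{d}t.
\]

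For the initialization term, exponential ergodicity of the OU process contracts KL at rate $e^{-2T}$, yielding the $e^{-2T}\KL(q_*\,\|\,\mathcal{N}(0,I))$ contribution. For the drift integrand, I would invoke the h-transform identity $\nabla\log q_{T-t}(y) = \nabla\log p_{T-t}(y) + u_*(y,t)$ from Lemma~\ref{lem:H-transform} and split the integrand into three pieces: (a) a discretization residual $\nabla\log q_{T-t}(Y^\leftarrow_t) - \nabla\log q_{T-lh}(Y^\leftarrow_{lh})$; (b) a score-approximation error $\nabla\log p_{T-lh}(Y^\leftarrow_{lh}) - s(Y^\leftarrow_{lh},lh)$; (c) an h-transform correction error $u_*(Y^\leftarrow_{lh},lh) - u(Y^\leftarrow_{lh},lh)$. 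Pieces (b) and (c) integrate against Assumption~\ref{assumption:TVBoundMainText}(3)--(4) to yield the $T\varepsilon^2$ and $\sum_l h\varepsilon_{\rho,l}^2$ terms. For (a), standard one-step moment estimates under the implemented dynamics give $\mathbb{E}[\|Y^\leftarrow_t - Y^\leftarrow_{lh}\|^2] \lesssim dh + \mathsf{m}\,h^2$ on $[lh,(l+1)h]$, so the residual contributes $T\,\mathrm{Lip}(\nabla\log q_t)^2(dh + \mathsf{m}\,h^2)$ after summation over $l$.

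The main obstacle, and the step that forces the particular constant $L_pC_\rho^2 + L_\rho$ to appear, is the quantitative smoothness bound $\mathrm{Lip}(\nabla\log q_t) \lesssim L_p C_\rho^2 + L_\rho$. Writing $q_t(y) \propto \rho_t(y)\,p_t(y)$ with the h-transform kernel $\rho_t(y) = \mathbb{E}[\rho_*(\bar{X}^\leftarrow_T)\mid \bar{X}^\leftarrow_t = y]$, the bounds $C_\rho^{-1}\leq \rho_* \leq C_\rho$ propagate to $C_\rho^{-1}\leq \rho_t \leq C_\rho$ since $\rho_t$ is a conditional expectation, and the $L_\rho$-smoothness of $\nabla\log\rho_*$ is preserved (in fact improved) by the OU semigroup acting on $\rho_*$. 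Differentiating $\log q_t = \log p_t + \log \rho_t$ twice then yields $\nabla^2\log q_t = \nabla^2\log p_t + \nabla^2\rho_t/\rho_t - (\nabla\rho_t)(\nabla\rho_t)^\top/\rho_t^2$, where the divisions by $\rho_t \geq C_\rho^{-1}$ cost at most a factor $C_\rho^2$ on top of the $L_p$ contribution, while the $\rho_*$-side contributes an additive $L_\rho$. I expect this propagation of boundedness and smoothness of $\rho_*$ through the OU semigroup to be the principal technical burden; once the Lipschitz estimate is in hand, combining the four contributions via Pinsker routinely assembles into the stated bound, along the lines of standard diffusion-model discretization analyses such as \cite{chen2023sampling}.
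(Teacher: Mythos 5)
Your high-level skeleton matches the paper: bound $\TV(q_*,\hat q)^2$ via a Girsanov-plus-Pinsker argument (which the paper delegates to \cite{chen2023sampling}), absorb the three drift-discrepancy sources into the $T\varepsilon^2$, $\sum_l h\varepsilon_{\rho,l}^2$ and discretization terms, and treat the smoothness of $\nabla\log q_t$ as the one new ingredient. You correctly flag that Lipschitz bound as the crux. However, your sketch of that crux is where the argument breaks.

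You claim that ``the $L_\rho$-smoothness of $\nabla\log\rho_*$ is preserved (in fact improved) by the OU semigroup acting on $\rho_*$.'' This is not correct as stated, for two reasons. First, $\rho_t(y)=\E[\rho_*(\bar X_T^\leftarrow)\mid \bar X_t^\leftarrow = y]$ is \emph{not} the image of $\rho_*$ under a Gaussian convolution semigroup: unwinding the conditioning, $\rho_t(y)\propto \int \rho_*(z)\,p_*(z)\,\mathcal N(y;m_{T-t}z,\sigma_{T-t}^2)\,\mathrm dz \big/ \int p_*(z)\,\mathcal N(y;m_{T-t}z,\sigma_{T-t}^2)\,\mathrm dz$, a posterior average that entangles $p_*$. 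Second, even for a genuine Gaussian convolution $e^{g}\mapsto \E_Z[e^{g(\cdot+Z)}]$, one has $\nabla^2\log\E_Z[e^{g(\cdot+Z)}]=\E_{\mu}[\nabla^2 g]+\mathrm{Cov}_{\mu}(\nabla g)$ for the appropriate tilted law $\mu$; the covariance term is positive semidefinite, so only the \emph{lower} Hessian bound (log-concavity type) is preserved, not the upper one you need. The covariance term can be large, and controlling it is exactly what forces the paper's Lemma~\ref{lem:Smoothness} into a two-regime argument: a small-$\sigma_t$ case where the tilted posterior is strongly log-concave and a Poincar\'e inequality controls the covariance, and a large-$\sigma_t$ case that rewrites $\nabla^2\log q_t=-\sigma_t^{-2}I+\tfrac{m_t^2}{\sigma_t^4}\Var_{q_{0|t}}(y)$ and then invokes the density-ratio comparison $q_{0|t}/p_{0|t}\leq C_\rho^2$ (Lemma~\ref{lem:Bounded-Discrepancy}) to transfer the variance bound from $p$ to $q$. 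None of that machinery appears in your sketch, and it is where the $C_\rho^2$ actually enters.

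There is also an internal inconsistency in your arithmetic. You decompose $\nabla^2\log q_t=\nabla^2\log p_t+\nabla^2\log\rho_t$ and claim the second term is bounded by $L_\rho$ via semigroup preservation, while simultaneously asserting that ``divisions by $\rho_t\geq C_\rho^{-1}$ cost at most a factor $C_\rho^2$ on top of the $L_p$ contribution.'' But $\nabla^2\log p_t$ contains no $\rho_t$, so under your own reading of the decomposition the bound would be $L_p+L_\rho$, not $L_pC_\rho^2+L_\rho$. The $C_\rho^2$ cannot be conjured from the algebra of $\log q_t=\log p_t+\log\rho_t$ alone; it arises from comparing the posterior variances $\Var_{q_{0|t}}$ and $\Var_{p_{0|t}}$ via the $C_\rho^2$-bounded Radon--Nikodym derivative. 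In short: your Girsanov scaffolding is fine and aligned with the paper, but the smoothness lemma needs the genuinely different argument of Lemma~\ref{lem:Smoothness}, not the convolution-preservation heuristic you propose.
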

The proof is given in Appendix \ref{sec:AppendixCorrectionTermBound}. 
It basically follows \cite{chen2023sampling,chen2023improved}, but we have derived the smoothness of $\nabla \log(q_{*,t})$ from that of $\nabla \log(p_{t})$. 
In this bound, we did not give any evaluation on $\varepsilon_{\rho,l}^2$, however, this error term can be bounded as follows with additional conditions. 
\begin{assumption}\label{ass:BoundingHMainText}
    (i) $\nabla_x s(\cdot,\cdot)$ is $H_s$-Lipschitz continuous in a sense that $\|\nabla_x s(x,t) - \nabla_y s(y,t)\|_{\mathrm{op}} \leq H_s \|x- y\|$
for any $x,y \in \sR^d$ and $0 \leq t \leq T$ and $\E[\| s(\tilde{X}_{kh}^\leftarrow,kh)\|^2] \leq Q^2$ for any $k$, 
    (ii) There exists $R > 0$ such that $\sup_{t,x}\{\|\nabla_x^2 \log p_t(x)\|_{\mathrm{op}},\|\nabla_x^2 \log s(x,t)\|_{\mathrm{op}}\} \leq R$.
\end{assumption}
\begin{thm}\label{thm:Diffusion-2}
Suppose that Assumptions \ref{assumption:TVBoundMainText} and \ref{ass:BoundingHMainText} hold 
and $\|\rho_* - \rho\|_\infty \leq \varepsilon'$ and $\|\rho\|_\infty \leq C_\rho$. 
We also assume $\nabla \rho^*$ is bounded and Lipschitz continuous. 
Then, for any choice of $0 \leq h \leq \delta \leq 1/(1 + 2R)$, we have that 
\begin{align}
\textstyle \varepsilon_{\rho,l}^2 
\lesssim & \textstyle 
C_\rho^3 \left\{ \Xi_{\delta,\varepsilon}
+  R_\varphi^2 \left(\varepsilon^2+ \Lipdp^2 d(\delta + \mathsf{m} \delta^2) \right) + 
[ L_\varphi^2 (\mathsf{m} + Q^2 + d h)  
+ R_\varphi^2  (1 + 2R)^2] h^2\right\} \\
& \textstyle +   \min\{T-lh,1/(2+2R)\}^{-1} \varepsilon'^2,
\end{align}
where 
$
\Xi_{\delta,\varepsilon} := C_\rho^2 (1+2R)^2 \delta  
+  C_\rho^2 \frac{1 + \delta R_\varphi^2}{\delta} [\varepsilon^2+ \Lipdp^2 d(h + \mathsf{m} h^2)], 
$ and 
$R_\varphi$ and $L_\varphi$ are constants introduced in Lemma \ref{lemm:phiYboundLip}. 
\end{thm}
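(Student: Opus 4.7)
The plan is to bound $\|u_*(\bar{X}_t^{\leftarrow},t) - u(\bar{X}_{lh}^{\leftarrow},lh)\|^2$ by inserting two hybrid Doob-type log-potentials and applying the triangle inequality, so each resulting term isolates exactly one source of error. Let $\varphi^{\mathrm{true}}_{\rho}(y,s) = \E[\rho(\bar{X}_T^\leftarrow)\mid \bar{X}_s^\leftarrow = y]$ and $\varphi^{\mathrm{app}}_{\rho}(y,s) = \E[\rho(X_T^\leftarrow)\mid X_s^\leftarrow = y]$, and define $u_1 = \nabla\log\varphi^{\mathrm{true}}_{\rho_*}$ ($= u_*$), $u_2 = \nabla\log\varphi^{\mathrm{true}}_{\rho}$, and $u_3 = \nabla\log\varphi^{\mathrm{app}}_{\rho}$ ($= u$). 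Decomposing as $u_*(\bar{X}_t^\leftarrow,t) - u(\bar{X}_{lh}^\leftarrow,lh) = (u_1-u_2)(\bar{X}_t^\leftarrow,t) + (u_2-u_3)(\bar{X}_t^\leftarrow,t) + [u_3(\bar{X}_t^\leftarrow,t) - u_3(\bar{X}_{lh}^\leftarrow,lh)]$ cleanly separates the three errors: (a) the perturbation $\rho_*\to\rho$ with sup-norm $\varepsilon'$, (b) the swap of the true reverse OU for the discretized approximate dynamics driven by $s$, contributing $\varepsilon$ and terms in $h$, and (c) the space–time displacement from $(\bar{X}_t^\leftarrow,t)$ to $(\bar{X}_{lh}^\leftarrow,lh)$, contributing a Lipschitz/step-size term.

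For piece (c) I would use Lemma~\ref{lemm:phiYboundLip} to obtain $L_\varphi$-Lipschitzness of $u_3$ in space (and control of its time derivative via the Kolmogorov backward equation, producing the $(1+2R)$ factor) together with the standard displacement estimate $\E\|\bar{X}_t^\leftarrow - \bar{X}_{lh}^\leftarrow\|^2 \lesssim (\mathsf{m} + Q^2 + dh)h^2$ from Assumption~\ref{ass:BoundingHMainText}(i), which yields the $[L_\varphi^2(\mathsf{m}+Q^2+dh) + R_\varphi^2(1+2R)^2]h^2$ contribution. For piece (b) I would couple the two reverse processes on a short window of length $\delta$ via a Girsanov-type argument, so that the integrated squared drift discrepancy is bounded by $\varepsilon^2 + \Lipdp^2 d(\delta + \mathsf{m}\delta^2)$; the condition $\delta \leq 1/(1+2R)$ keeps the one-step conditional distributions comparable, and passing this trajectory bound through the $R_\varphi$ bound on $\nabla\log\varphi$ gives the $R_\varphi^2(\varepsilon^2 + \Lipdp^2 d(\delta+\mathsf{m}\delta^2))$ term. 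The residual factor $\Xi_{\delta,\varepsilon}$ arises when the gradient must be formed by divided differences across a window of width $\delta$: the $C_\rho^2(1+2R)^2\delta$ summand is the ``memory'' cost of the window, while the $(1+\delta R_\varphi^2)/\delta$ factor comes from converting an integrated drift error into a pointwise gradient bound via a heat-kernel-type smoothing of width $\sqrt{\delta}$, which is why the theorem asks one to optimize over $\delta \in [h, 1/(1+2R)]$.

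Piece (a) is where I expect the main difficulty, since it is the term whose prefactor blows up near $t = T$. The task is to bound $\nabla_y \log(\varphi^{\mathrm{true}}_{\rho_*}/\varphi^{\mathrm{true}}_{\rho})(y,s)$ given only $\|\rho_*-\rho\|_\infty \leq \varepsilon'$ together with the smoothness of $\rho_*$. The denominators are kept away from zero by $C_\rho^{-1} \leq \rho_*, \rho \leq C_\rho$, so the problem reduces to estimating $\nabla_y \E[(\rho_* - \rho)(\bar{X}_T^\leftarrow)\mid \bar{X}_s^\leftarrow = y]$, a Bismut--Elworthy--Li-style gradient estimate for a conditional expectation along the reverse OU. The smoothing scale is the time-to-terminal $T - lh$ together with the curvature-driven saturation at $1/(2+2R)$, which forces the $L^2$ prefactor $\min\{T-lh,1/(2+2R)\}^{-1}$ on $\varepsilon'^2$; the Lipschitz and boundedness hypotheses on $\nabla\rho_*$ are what let us transfer this gradient onto the difference $\rho_* - \rho$. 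Combining the three pieces via triangle inequality and taking expectations along the true reverse process (Assumption~\ref{assumption:TVBoundMainText}) yields the stated bound after a single application of $(a+b+c)^2 \leq 3(a^2 + b^2 + c^2)$ to distribute the numerical constants absorbed in $\lesssim$.
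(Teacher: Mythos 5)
Your plan tracks the paper's proof quite closely at the conceptual level: you correctly identify the three error sources (replacing $\rho_*$ by $\rho$; replacing the true reverse OU and continuous-time score by the discretized, $s$-driven dynamics; and the space–time displacement from $(\bar{X}_t^\leftarrow, t)$ to $(\bar{X}_{lh}^\leftarrow, lh)$), you correctly locate Lemma~\ref{lemm:phiYboundLip} and Lemma~\ref{lem:hhdashDiff} as the key ingredients, and you correctly attribute the $\min\{T-lh,1/(2+2R)\}^{-1}\varepsilon'^2$ term to a Bismut--Elworthy--Li gradient estimate applied to $\nabla_y\E[(\rho_*-\rho)(\cdot)\mid\cdot]$ with denominator controlled by $C_\rho$. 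There are two departures from what the paper actually does, one cosmetic and one substantive.

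The cosmetic difference is in where the quotient rule for $\nabla\log$ is applied. You insert two hybrid $\nabla\log$-potentials and write $u_*-u$ as a three-term telescope at the level of $\nabla\log\varphi$. The paper instead applies the identity $\nabla\log f - \nabla\log g = \frac{\nabla f - \nabla g}{f} + \frac{\nabla g\,(f-g)}{fg}$ once, at the very top (this is where the overall $C_\rho^2$ and $R_\varphi^2 C_\rho^2$ prefactors come from), and thereafter works with the \emph{numerators} $\nabla\rho_{*,t}$ and $\nabla\rho_t$ rather than $\nabla\log\rho_t$. This is a real simplification: conditional expectations are linear in the terminal function $\rho$, so differencing numerators is clean, whereas differencing $\nabla\log$ potentials as you propose would force you to reapply the quotient-rule expansion inside each of your three pieces, at the cost of more bookkeeping but no new ideas. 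Either organization arrives at the same bound.

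The substantive issue is in your piece (b). You propose ``coupling the two reverse processes on a short window of length $\delta$ via a Girsanov-type argument, so that the integrated squared drift discrepancy is bounded by $\varepsilon^2 + \Lipdp^2 d(\delta + \mathsf{m}\delta^2)$.'' Girsanov gives you a change of measure between the path laws of the true and approximate reverse processes, i.e., a KL/TV-type distributional bound, but what the BEL step actually needs is a \emph{pathwise} $L^2$ bound $\E\|\bar{X}_\tau^\leftarrow - X_\tau^\leftarrow\|^2$ under a \emph{synchronous} coupling (both processes driven by the same Brownian motion). The paper obtains this directly by writing the ODE for $\|\bar{X}_\tau^\leftarrow - X_\tau^\leftarrow\|^2$ and applying Grönwall, then feeds it into the $L_\varphi$-Lipschitz bound on $\varphi_{Y,t}$ from Lemma~\ref{lemm:phiYboundLip}. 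A Girsanov change of measure does not produce a pathwise trajectory distance, so as written your piece (b) would not yield the $R_\varphi^2\bigl(\varepsilon^2+\Lipdp^2 d(\delta+\mathsf{m}\delta^2)\bigr)$ term; you would either need to convert the measure-change into a Wasserstein/coupling bound (extra work) or simply replace Girsanov by the synchronous coupling + Grönwall argument, which is what the paper does. Once that substitution is made, your description of $\Xi_{\delta,\varepsilon}$ (the $1/\delta$ from integrating the BEL Malliavin weight over a window of length $\delta$) and your pieces (a) and (c) match the paper's mechanism, modulo the quotient-rule reorganization noted above.
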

The proof is given in Appendix~\ref{sec:proofOfCorrection}.
We applied the so-called {\it Bismut-Elworthy-Li} integration-by-parts formula \citep{MR755001,ELWORTHY1994252} to obtain the discretization error. 
By substituting $\delta \leftarrow \sqrt{h}$ to balance the terms related to $h$ and $\delta$, we obtain a simplified upper bound of $\sum_{l=1}^{T/h} h \varepsilon_{\rho,l}^2$ as 
$
\sum_{l=1}^{T/h} h \varepsilon_{\rho,l}^2 \lesssim 
\left( 1 + \frac{1}{\sqrt{h}}\right) \varepsilon^2 T + T \sqrt{h} + ( T + \log(1/h)) \varepsilon'^2. 
$
These results can be seen as $h$-Transform extension of the approximation error analysis given in \cite{chen2023sampling}. 
However, the approximation error corresponding to $u$ has worse dependency on $h$. This is because the computation of $u$ uses the discretized process $\pref$ and is affected by its error while the ordinary diffusion model does not require sampling to obtain the score. 

\vspace{-2mm}
\section{Numerical Experiments}\label{sec:numerical-experiments}
\vspace{-2mm}

We conducted numerical experiments to confirm the effectiveness of minimizing nonlinear objectives. We used Option 1 and $\beta$ was set to be 0.04.
We also compared the DPO objective~\citep{rafailov2023DPO} we minimized with the upper-bound~\citep{Wallace2024DiffusionDPO} using a toy case, Gaussian Mixture Model (GMM).
For detailed experimental setting, please refer to Appendix~\ref{section:appendix-experiments}.

\textbf{Alignment for Gaussian Mixture Model.}\quad
We aligned a score-based diffusion model to sample from a 2-dimensional GMM using the DPO objective. The reference density was defined as $\frac{1}{2}\left(\mathcal{N}(\mu_1,\Sigma) + \mathcal{N}(\mu_2,\Sigma)\right)$, $\mu_1 = [-2.5, 0], \; \mu_2 = [2.5, 0], \; \Sigma = [[1, 0],[0,5]]$.
The target point was $\mu_w := \mu_2$.
The preference of points $x_w$ and $x_l$ was determined by the Euclidean distance $d(\cdot, \mu_w) $ from $\mu_w$. $x_w \succ x_l$ if and only if $d(x_w,\mu_w) < d(x_l,\mu_w)$. \revisedStart We describe the implementation details below:


\textit{Dual Averaging.}\quad
We set the hyperparameter $\beta'$ in [0.04, 0.2], which controls the learning speed illustrated at the middle in Figure~\ref{fig:da-summary}.
In $k$th DA iteration ($k=1,...,K$), using empirical samples $\lbrace x_i = (x_{1,i},x_{2,i})\rbrace_{i=1}^{1000}$ from $\pref$ and the previous potential $f_{k-1}$ ($q^{(k-1)} \propto e^{-f_{k-1}}\pref$) implemented by neural networks, we prepare $\lbrace (x_i, \dFdq(q^{(k-1)},x_i))\rbrace_{i=1}^{1000}$ to construct $f_{k} \simeq \bar{g}^{(k-1)}$ with the recurrence formula (\ref{eq-appendix-experiment-recurrence}) in Appendix~\ref{section:appendix-experiments}.
Please note that, to calculate $\dFdq$ for DPO, we only need the $f_k $ and empirical samples from $\pref$ to calculate the expectation $\E_{x\sim \pref}[\cdot]$ as described in Eq. (\ref{eq-main-DPO-derivative}).
The convergence of the true DPO loss is depicted in Figure~\ref{fig:da-summary}, compared to Diffusion-DPO~\citep{Wallace2024DiffusionDPO}. The true DPO loss, upperbound, and metric loss (mean of the Euclidean distance of the particles from $\mu_w$) attained by Diffusion-DPO and our method, with optimization time and GPU memory consumption (in Phase I for ours), are summarized in Table \ref{table:comparison}.
Diffusion-DPO optimizes the approximate upperbound instead of the true loss, and therefore it failed to completely control the true loss.
Compared to Diffusion-DPO, our algorithm allowed us to optimize the loss to a smaller value within an acceptable computational budget. 
\revisedEnd

\textit{Doob's h-transform.}\quad
We constructed the aligned model by calculating the correction terms. The conditional expectations were calculated using naive Monte Carlo method with 30000 samples.
The number of the diffusion steps was 50. The histograms of aligned samples are shown at the right of Figure~\ref{fig:da-summary}. 
\revisedStart
In the simplest phase 2 that we present (Algorithm~\ref{alg:doob-sampling}), we used significant computational resources, with $\mathcal{O}(L^2)$ time complexity. To estimate the correction term $u$ for each sample simultaneously by Monte Carlo with $N$ samples, $\mathcal{O}(N)$ memory space is required. However, this Doob's h-transform technique itself has been used in image generation~\citep{uehara2024RLHF,uehara2024reward}, Bayesian samping~\citep{heng2024schrodingerbridge}, and filtering~\citep{chopin2023doob}. In particular, computational resources can be saved by approximating the correction term using a neural ODE~\citep{uehara2024reward}. 
\revisedEnd
\begin{figure}[htbp]
    \centering
    \includegraphics[width=0.99\linewidth]{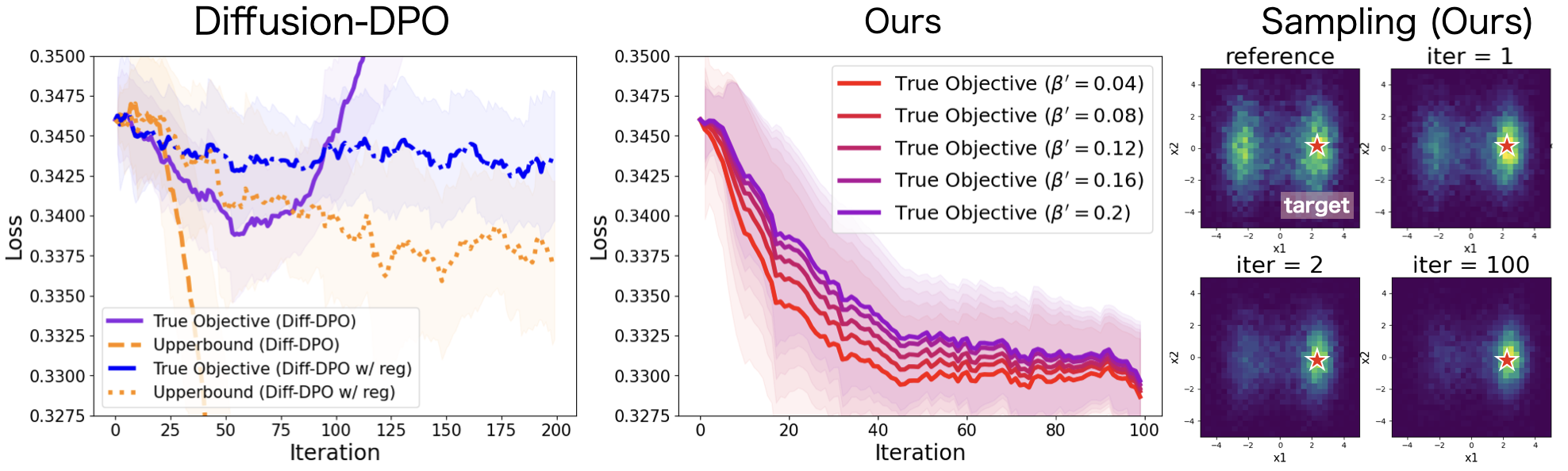}
    \vspace{-2mm}
    \caption{\revisedStart
    \textbf{Left and Middle.} The smoothed loss during optimization for Gaussian Mixture Model in Diffusion-DPO with/without regularization (left) and ours (middle).
        ``True Objective'': the true DPO loss~\citep{rafailov2023DPO} whose target point was $\mu_w = [2.5, 0]$.
        ``Upperbound'': An approximate upperbound of ``Objective'' optimized by Diffusion-DPO~\citep{Wallace2024DiffusionDPO}.
        \textbf{Right.} Aligned samples by Doob's h-transform. ``Ref.'' represents the empirical density of $\pref$.
        \revisedEnd}
    \vspace{-3mm}
    \label{fig:da-summary}
\end{figure}
\vspace{-2mm}
\begin{table}[ht]
\vspace{-3mm}
    \centering
    \caption{The quantitative comparison between Diffusion-DPO and our method}
\scalebox{0.85}{
    \begin{tabular}{lcccc}
        \toprule
        & Ref. & \begin{tabular}{c}Diffusion-DPO\\(50 iter., w/o reg.)\end{tabular} & \begin{tabular}{c}Diffusion-DPO\\(200 iter., w/ reg.)\end{tabular} & \begin{tabular}{c}\textbf{Ours}\\($\beta'=0.04$)\end{tabular} \\
        \midrule
        True DPO loss             & 0.346 & 0.340 & 0.343 & \textbf{0.328} \\
        (Approx.) Upperbound      & -     & 0.311 & 0.337 & -              \\
        Metric loss               & 3.256 & 3.011 & 3.303 & \textbf{2.022} \\
        Opt. time (s)             & -     & 585   & 2280  & 1320 (Phase I)    \\
        GPU memory (\%)           & -     & 6.54  & 6.54  & 8.75 (Phase I)   \\
        \bottomrule
    \end{tabular}\label{table:comparison}
    }
\end{table}

\vspace{-1mm}
\textbf{Image Generation Alignment based on a Target Color.}\quad
We aligned the image generation of the basic pre-trained model available Diffusion Models Course (source: \cite{huggingfacecourse}) to simulate specific generations for data augmentation. The pre-trained model generates RGB-colored $32\times 32$ pixel images of butterflies. 
\revisedStart
In this experiment, the model was aligned based on a target color, which is (R,G,B)= (0.9, 0.9, 0.9) while (1,1,1) corresponds to white,
using the DPO objective. 
Please refer to Figure~\ref{fig:BT-compare} for an illustration of the preference indicated by DPO.
\revisedEnd
We leveraged 1024 images generated by the pre-trained model to train $f_k$ in each DA iteration. 
We see that the (regularized) objective went down during DA and the generated images reflected the target density more, on the right side of Figure~\ref{fig:BT-and-CT}. Please refer to Figure~\ref{fig:BT-summary} in Appendix for the convergence results.

\textbf{Tilt Correction for Image Generation.}\quad
The goal of this experiment is data augmentation of images facing the same direction, given only a dataset of rotated images. We used 10000 images of Head CT in Medical MNIST~\citep{MedicalMNISTClassification} and augmented it by rotating images up to 40000. Data augmentation of brain images is useful for tumor analysis and anomaly detection~\citep{shen2024MRIdiffusion,fontanella2024BrainDiffusionGenerationAnomaly}. In each DA iteration, 6400 images were generated to train $f_k$.
The results are in Figure~\ref{fig:BT-and-CT}. For the convergence results, please refer to Figure~\ref{fig:CT-summary} in Appendix.
From the results, we see that the (regularized) objective decreased during DA and the generated images became more preferred.
\begin{figure}[h]
    \centering
    \vspace{-3mm}
    \includegraphics[width=0.5\linewidth]{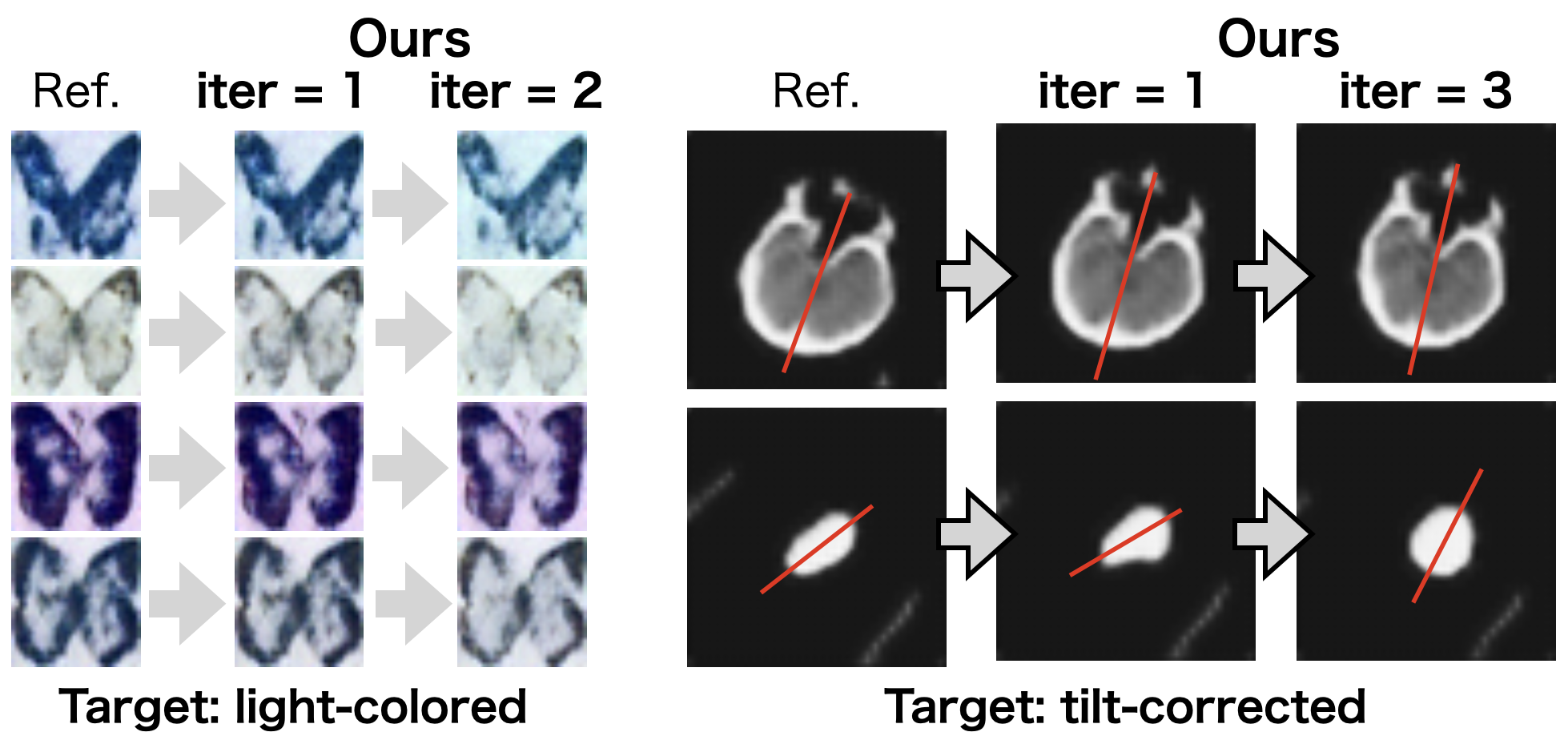}
    \vspace{-2mm}
    \caption{\revisedStart
    \textbf{Left.} Examples of aligned image generation. Our goal was to generate light-colored butterflies. ``iter=2'': ours with $k=2$ DA iterations, ``iter=2'': ours with $k=1$ DA iteration. ``Ref.'': samples from $\pref$. \textbf{Right.} Tilt-corrected Head CT image generation. ``iter=3'': ours with $k=3$ DA iterations, ``iter=1'': ours with $k=1$ DA iteration. ``Ref,'': samples from $\pref$.
    \revisedEnd
    }
    \label{fig:BT-and-CT}
\end{figure}

\vspace{-4mm}
\section{Conclusion}
\vspace{-2mm}

We proved that the distributional optimization can be solved even if $\pref$ and $\qstar$ are mutilmodal and we only have access to the score, not the density.
This setting includes important fine-tuning methodologies for diffusion models: Reinforcement Learning, DPO, and KTO.
Our algorithms are based on the DA algorithm and Doob's h-transform technique and it can solve them more directly than previous works.
They are guaranteed to be useful for general objective functions, the dual variable converges even in a non-convex setting.
While our framework has potential applications in general distributional optimization problems, such as density ratio estimation under covariate shift and climate change tracking, further exploration of these applications is left for future work.


\section*{Acknowledgments}
RK was partially supported by JST CREST (JPMJCR2015).
KO is partially supported by JST ACT-X (JP-MJAX23C4).
TS was partially supported by JSPS KAKENHI (24K02905, 20H00576) and JST CREST (JPMJCR2115).
This research is supported by the National Research Foundation, Singapore, Infocomm Media Development Authority under its Trust Tech Funding Initiative, and the Ministry of Digital Development and Information under the AI Visiting Professorship Programme (award number AIVP-2024-004). Any opinions, findings and conclusions or recommendations expressed in this material are those of the author(s) and do not reflect the views of National Research Foundation, Singapore, Infocomm Media Development Authority, and the Ministry of Digital Development and Information. 

\bibliography{iclr2025_conference}
\bibliographystyle{iclr2025_conference}

\appendix

\clearpage

\section*{Notations}

\setlength{\LTpre}{0pt}
\setlength{\LTpost}{0pt}
\begin{longtable}{|>{\raggedright\arraybackslash}p{3cm}|>{\raggedright\arraybackslash}p{9cm}|}
\caption{List of commonly used symbols}
\\
\hline
Symbol & Description \\ \hline

$d$ & dimension of the sample space $\mathbb{R}^d$. \\ & \\ 
$\mathcal{B}(\mathbb{R}^d)$ & Borel set on $\mathbb{R}^d$. \\ & \\ 
$\mathcal{P}$ & The space of probability density functions to the Lebesgue measure on $(\mathbb{R}^d, \mathcal{B}(\mathbb{R}^d))$. \\  & \\ 

$F:\mathbb{R}^d:\mathcal{P}\to \mathbb{R}$ & A functional.  \\ & \\ 

$\dFdq(q,x)$ & The functional derivative of $F$. We often abbreviate this as $\dFdq(q)$. \\ & \\ 

$\KL(q\|p)$ & Kullback-Leibler divergence between $q$ and $p$.  \\ & \\ 
$\mathrm{TV}(q,p)$ & Total Variation between $q$ and $p$.  \\ & \\ 

$L(q)$ & The regularized loss $F(q) + \beta\KL(q\|\pref)$.  \\ & \\ 
$\beta$ & The regularization coefficient of $L(q)$.  \\ & \\ 

$p_*\in\gP$ & The target distribution of pre-training. \\ & \\ 
$\pref \in \gP$ & The output distribution at the final denoising step of the pre-trained model. \\ & \\ 
$q_* = \rho_* p_* \propto \exp(-f_*) p_* \in \gP$ & The target distribution of alignment. \\ & \\ 
$\qstar = \hat{\rho}_{\mathrm{opt}}\pref \propto \exp(-\hat{f}_\mathrm{opt})\pref\in \gP$ & The unique minimizer of $L(q)$, that corresponds to $q_*$ when $\pref=p_*$. 
We assume that $\rho_* \propto \exp(-f_*)$ can be estimated by minimizing $L(q)$, i.e. $\hat{\rho}_\mathrm{opt} \simeq \rho_*$.
\\ & \\ 

$\qstark \propto \exp(-\gbark)\pref$ & The optimal solution of the subproblem solved iteratively during Dual Averaging.  \\ & \\

$\qk \propto \exp(-f_{k})\pref$ & The implemented density that estimates $\qstark$. \\ & \\

$\beta'$ & The hyperparameter of Dual Averaging that controls the optimization speed. \\ & \\ 

$u_*$ & The optimal correction term to sample from $q_*$. \\ & \\

$u$ & The implemented correction term using the estimated output density ratio $\rho \simeq \hat{\rho}_\mathrm{opt}$. \\ & \\

$\mathcal{N}(\mu,\sigma)$ & Gaussian with mean = $\mu$ and variance = $\sigma^2$ \\ & \\ 

$T > 0$ & The diffusion time. \\ & \\ 
$\lbrace B_t \rbrace_{t\in[0,T]}$ & Brownian motion on $\mathbb{R}^d$. \\ & \\ 

$\lbrace\bar{X}_t\rbrace_{t\in[0,T]}$ & The Ornstein-Uhlenbeck (OU) process starts from $p_*$. \\ & \\ 
$\lbrace\bar{X}^\leftarrow_t\rbrace_{t\in[0,T]}$ & The reversed Ornstein-Uhlenbeck (ROU) process of $\lbrace\bar{X}_t\rbrace_{t\in[0,T]}$, initialized at $p_T$. \\ & \\ 
$\lbrace X^\leftarrow_t\rbrace_{t\in[0,T]}$ & The time-discretized, and score-estimated ROU process of $\lbrace\bar{X}_t\rbrace_{t\in[0,T]}$, initialized as $X^\leftarrow_0 \sim \mathcal{N}(0,I_d)$. \\ & \\ 
$\lbrace\bar{Y}_t\rbrace_{t\in[0,T]}$ & The Ornstein-Uhlenbeck (OU) process starts from $q_*$. \\ & \\ 
$\lbrace\bar{Y}^\leftarrow_t\rbrace_{t\in[0,T]}$ & The reversed Ornstein-Uhlenbeck (ROU) process starts from $q_T$. \\ & \\ 
$\lbrace Y^\leftarrow_t\rbrace_{t\in[0,T]}$ & The time-discretized dynamics using $s$ and $u$. \\ & \\ 

$p_t$ & The law of $\bar{X}_t$, and the law of $\bar{X}_{T-t}^\leftarrow$. \\ & \\ 
$q_t$ & The law of the OU process $\bar{Y}_t$ corresponding to $q_*=\rho_*p_*$. \\ & \\ 

$h > 0$ & The discretized time step of $\lbrace X^\leftarrow_t\rbrace_{t\in[0,T]}$. \\ & \\  
$L$ & The number of steps of $\lbrace X^\leftarrow_t\rbrace_{t\in[0,T]}$.  \\ & \\ 

$s:\mathbb{R}^d\times\mathbb{R}\to\mathbb{R}^d$ & A score network to implement $\lbrace X^\leftarrow_t\rbrace_{t\in[0,T]}$ \\ & \\

$B_F$ & A constant that bounds $\dFdq$. \\ & \\

$\LipTV$ & Lipschitz constant of $\dFdq$. \\ & \\

$S_F$ & Smoothness constant of the weaker smoothness of $F$. \\ & \\

$\nabla_x f(x)$ & Gradient of $f$ with respect to $x$ \\ & \\ 
$\nabla_x^2 f(x)$ & Hessian matrix of $f$ \\ & \\ 
$\|\cdot\|$ & Euclid norm \\ & \\
$\|\cdot\|_\infty$ & $L^\infty$ norm \\ & \\
$\|\cdot\|_{\mathrm{op}}$ & The operator norm \\ & \\

$\E[\cdot|X]$ & the conditional expectation given a randam variable $X$. \\ & \\

$\E[\cdot|X=x]$ & the conditional expectation given a randam variable $X$, evaluated at $X=x$.\\ & \\

$\mathcal{O}$ & Big-O notation \\ & \\
$\lesssim$, $\gtrsim$ & inequalities ignoring constants. \\ & \\


\hline

\end{longtable}

\newpage

\revisedStart
\section{Convergence Analysis of Dual Averaging}
\revisedEnd

\textbf{Overview for convex loss}

In Section~\ref{sec:AppendixConvProof}, we are interested in the convergence of Option 1 when the objective functional $F$ is convex in a distributional sense:
\begin{equation}
    F(q) \geq F(q') + \int \dFdq(q')\mathrm{d}(q-q'), \quad \text{for any $q,q'\in\mathcal{P}$}.
\end{equation}
Then, the regularized objective $L(q) = F(q) + \beta \KL(q\|\pref)$ becomes strongly convex with $\beta>0$.
Formally, we assume that
\begin{enumerate}[topsep=0mm,itemsep=-1mm,leftmargin = 6mm]
    \item[(i)] $\dFdq$ is bounded: There exists $B_F > 0$ such that $\|\dFdq(q)\|_\infty \leq B_F$ for any $q \in \mathcal{P}$,
    \item[(ii)] $\dFdq$ is Lipshitz continuous with respect to the TV distance: There exists $L_{\mathrm{TV}}  > 0$ such that $\|\dFdq(q) - \dFdq(q')\|_\infty \leq L_\mathrm{TV} \TV(q,q')$ for any $q,q' \in \mathcal{P}$.
    \item[(iii)] $F$ is convex: $F(q) \geq F(q') + \int \dFdq(q')\mathrm{d}(q-q')$ for any $q,q' \in \mathcal{P}$,
\end{enumerate}
In the update of Option 1, we iteratively use the distributional proximal operator. We minimize
\begin{equation}
\mathrm{E}_{q}\left[\sum_{j=1}^{k}\frac{2j}{k (k+1)} g^{(j)}\right] + \beta \KL(q\|\pref)  
+ \beta' \frac{2}{k}\KL(q\|\pref) \label{eq:appendixA-summary-min}
\end{equation}
by the minimizer $\qstark$, where  $\gk$ denotes $\dFdq(\qk)$ with $\qk \simeq \qstark$ ($\qk$ is constructed by neural networks and samples from $\pref$, while we do not directly compute $\qk$.). Intuitively, with $k \to \infty$, we hope that $q$ is almost converged around the minimizer $\qstar$, i.e. $q \simeq \qstar$. Then, from the differentiability of $F$, each $\E_q[g^{(j)}]$ in the first term 
would be a linear functional that well approximates $F(q^{(j)})$, with suficiently large $j$. In addition, $\beta' \frac{2}{k}\KL(q\|\pref)$ vanishes, so the equation (\ref{eq:appendixA-summary-min}) is roughly written as
\begin{align}
    &\sum_{j=1}^{k}\frac{2j}{k (k+1)} \left[ \mathrm{E}_{q}\left[g^{(j)}\right] + \beta \KL(q\|\pref) \right] 
+ \beta' \frac{2}{k}\KL(q\|\pref) \\
\simeq &\sum_{j=1}^{k}\frac{2j}{k (k+1)} (F(\qj) +\beta \KL(q\|\pref))
+ \beta' \frac{2}{k}\KL(q\|\pref)\\
\simeq &\sum_{j=1}^{k}\frac{2j}{k (k+1)} (F(\qstarj) +\beta \KL(\qstarj\|\pref)).
\end{align}
This is the weighted average of the regularized losses $L(q) = F(q) + \beta \KL(q\|\pref)$.
From this concept, we will show that the convergence represented as
\begin{equation}
    \left[ \text{Weighted average of } L(\qstark) \right] = \left[ \text{Weighted average of } L(\qstar) \right]+ \mathcal{O}\left(\frac{1}{k}\right).
\end{equation}

\textbf{Overview for nonconvex loss}

In Section~\ref{section:da-nonconvex-proof}, we are interested in the case that $F$ is not necessarily convex (mainly) in Option 2. 
Alternatively, we use the assumption that $F$ is smooth in terms of KL-divergence:
\begin{itemize}
\item[(ii)'] (The weaker smoothness derived from (ii)). There exists $S_F \geq 0$ such that 
\begin{equation}
    F(q) \leq F(q') + \int \dFdq(q')\mathrm{d}(q-q') + \frac{S_F}{2}\KL(q\|q') \quad \text{for any}\quad  q,q' \in \gP, \label{eq:appendix-da-nonconvex-overview-smoothness}
\end{equation}
\end{itemize}
This is induced by Lipschitz continuity of $\dFdq$ ((ii) of Assumption~\ref{ass:ConvexF} and ~\ref{ass:NonconvexF}) and Pinsker's inequality. When the inner-loop error is ignored, it is possible to prove convergence using only the smoothness (\ref{eq:appendix-da-nonconvex-overview-smoothness}) instead of Lipschitz continuity of $\dFdq$.
Please note that KL-divergence would be the ``quadratic term" in the ordinary definition of  smoothness.
Following the theoretical analysis of standard nonconvex optimization, our goal is to show that the ``derivative" of the (regularized) objective $L(q)$ goes to zero in the form of functional derivative:
\begin{equation}
    \frac{\delta L}{\delta q}(\qstark,x) \to 0 \quad \text{(up to constant w.r.t. $x$.)}
\end{equation}
Please note that we can ignore $\frac{\delta L}{\delta q}(\qstark,x)$ if it is a constant because of the definition of the functional derivatives. 

At the first step, both in Option 1 and 2, the regularized objective $L(q)$ (roughly) monotonically decrease during the Dual Averaging. Ignoring some terms, we want to show that
\begin{equation}
    L(\qstarnext) - L(\qstark) \lesssim - \KL(\qstark\|\qstarnext) < 0.
\end{equation}
Next, by taking a telescoping sum, we obtain that
\begin{equation}
    \frac{1}{K}\sum_{k=1}^K (\text{weight})_k \KL(\qstark\|\qstarnext) = \mathcal{O}\left(\frac{1}{K}\right).
\end{equation}
From the above equation, it is immediately shown that
\begin{equation}
    \min_{k=1,...,K} (\text{weight})_k \KL(\qstark\|\qstarnext) = \mathcal{O}\left(\frac{1}{K}\right),
\end{equation}
where $(\text{weight})_k \simeq k$ in the Option 1, while $(\text{weight})_k \simeq 1$ in the Option 2.
Therefore, when we choose the Option 2, it holds that
\begin{equation}
    \min_{k=1,...,K} \KL(\qstark\|\qstarnext) = \mathcal{O}\left(\frac{1}{K}\right)
\end{equation}
Finally, we get the conclusion $\frac{\delta L(\qstark)}{\delta q} \to 0\; \text{(up to constant w.r.t. $x$)}$ because $\KL(\qstark\|\qstarnext)$ is the approximate ``moment generation function" of $\frac{\delta L(\qstark)}{\delta q}$; when $k \to \infty$,
\begin{equation}
    \min_{k=1,..,K}\left(\text{``Variance" of }\frac{\delta L(\qstark)}{\delta q} \right) \simeq \min_{k=1,..,K} \KL(\qstark\|\qstarnext) = \mathcal{O}\left(\frac{1}{K}\right).
\end{equation}
This result aligns with the result for standard, non-distributinal Dual Averaging~\citep{LIU2023nonconvexDA}, $\min_{k=1,...,K}\|\text{(gradient of the objective)}_k\|^2 = \mathcal{O}(1/K)$ because the variance is the second moment.

\subsection{When $F$ is convex\label{section:da-convex-proof}}\label{sec:AppendixConvProof}

We will show that, when $F$ is convex in the distributional sense, the convergence of the Option 1 can be written as 
\begin{equation}
    \left[ \text{Weighted average of } L(\qstark) \right] - \left[ \text{Weighted average of } L(\qstar) \right] = \mathcal{O}\left(\frac{1}{K}\right).
\end{equation}
We require $\beta$ to be positive to make the regularized objective $F(q) + \beta\KL(q\|\pref)$ strongly convex.
In addition, one of the necessary conditions $\beta' \geq \beta$ implies that the ``learning rate" should be controlled to converge.
\begin{thm*}[restated - Theorem~\ref{thm-da-nishikawa}]
Assume that $F$ is convex, $|\dFdq|\leq B_F$, $\dFdq$ is $\LipTV$-Lipshitz with respect to $q$ in TV distance, $\TV(\qstark,\qk)\leq \epsilon_\TV$, and $\beta' \geq \beta$. Then,
    \begin{align}
    &\frac{2}{K(K+1)}\sum_{k=1}^{K}k \left[ F(\qstark) + \beta \KL(\qstark\|\pref) - F(\qstar) - \beta \KL(\qstar\|\pref)\right]\\
    \leq&\frac{2}{K(K+1)}\sum_{k=1}^{K}k \left[ 2 \LipTV\epsilon_\TV +\E_{\qstark}[g^{(k)}] - \E_{\qstar}[g^{(k)}] + \beta (\KL(\qstark\|\pref) - \KL(\qstar\|\pref))\right]\\
    \leq&2\LipTV\epsilon_{\mathrm{FD}}+\frac{2}{K(K+1)}\left[B_F + \beta(K+3)\KL(\qstar\|\pref) + \E_{\qstar}[g^{(1)}] + \frac{B_F^2 K}{\beta}\right].
  \end{align}
\end{thm*}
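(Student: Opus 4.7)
The plan is to prove the two inequalities in sequence. For the first inequality, I would combine distributional convexity of $F$ with the TV-Lipschitz estimate on the functional derivative. Convexity (iii) gives $F(\qstark) - F(\qstar) \leq \int \dFdq(\qstark)(x)(\qstark - \qstar)(x)\,\mathrm{d}x$, while assumption (ii) together with $\TV(\qstark,\qk)\leq\epsilon_{\mathrm{TV}}$ yields $\|\dFdq(\qstark)-g^{(k)}\|_\infty \leq \LipTV\epsilon_{\mathrm{TV}}$. Since the signed measure $\qstark-\qstar$ has total variation at most $2$, replacing $\dFdq(\qstark)$ by $g^{(k)}$ in the inner product costs at most $2\LipTV\epsilon_{\mathrm{TV}}$; adding $\beta[\KL(\qstark\|\pref)-\KL(\qstar\|\pref)]$ to both sides gives the first claimed inequality.

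For the second inequality I adapt the dual averaging regret argument of Nesterov / Nitanda et al. Let $\tilde{B}_k := \beta k(k+1)/2 + \beta'(k+1)$ and $M_k(q):=\sum_{j=1}^k j\,\E_q[g^{(j)}] + \tilde{B}_k\KL(q\|\pref)$; by construction $\qstarnext=\argmin_q M_k(q)$. The first-order optimality condition at $\qstarnext$ combined with the three-point identity for KL yields the key equality $M_k(q) = M_k(\qstarnext) + \tilde{B}_k\KL(q\|\qstarnext)$, reflecting the $\tilde{B}_k$-strong convexity of $M_k$ with respect to KL. Setting $\Psi_k := M_k(\qstarnext)$ and using $M_{k-1}(\qstark)=\Psi_{k-1}$ (since $\qstark$ minimizes $M_{k-1}$), a direct calculation gives the telescoping recursion
\begin{equation}
\Psi_k-\Psi_{k-1}=k\,\E_{\qstark}[g^{(k)}]+(\beta k+\beta')\KL(\qstark\|\pref)-\tilde{B}_k\KL(\qstark\|\qstarnext).
\end{equation}

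Summing this from $k=1$ to $K$ and invoking $\Psi_K\leq M_K(\qstar)=\sum_k k\,\E_{\qstar}[g^{(k)}] + \tilde{B}_K\KL(\qstar\|\pref)$, I would rearrange to isolate $\sum_k k\bigl[\E_{\qstark}[g^{(k)}] + \beta\KL(\qstark\|\pref) - \E_{\qstar}[g^{(k)}] - \beta\KL(\qstar\|\pref)\bigr]$ on the left. The coefficient mismatch $\tilde{B}_K - \beta K(K+1)/2 = \beta'(K+1)$ gets absorbed into the $\beta(K+3)\KL(\qstar\|\pref)$ term via $\beta'\geq\beta$, while the boundary contributions from $-\Psi_0$ and the isolated $k=1$ iterate account for the $B_F$ and $\E_{\qstar}[g^{(1)}]$ constants on the right-hand side.

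The main obstacle is controlling the residual regret $\sum_k \tilde{B}_k\KL(\qstark\|\qstarnext)$, which ends up on the unfavourable side of the inequality after telescoping. The route I would take is a Fenchel-Young / Bregman-descent estimate: combining the $\tilde{B}_{k-1}$-strong convexity of $M_{k-1}$ (applied at $\qstarnext$ versus $\qstark$) with $\|g^{(k)}\|_\infty\leq B_F$ yields a per-step bound of the form $\tilde{B}_k\KL(\qstark\|\qstarnext) \lesssim k^2 B_F^2/\tilde{B}_{k-1} \lesssim B_F^2/\beta$, so that the sum over $k\leq K$ contributes at most $O(KB_F^2/\beta)$, producing the final $B_F^2 K/\beta$ term. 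Dividing everything by $K(K+1)/2$ then delivers the stated $\gO(1/K)$ convergence rate matching the bound in the theorem.
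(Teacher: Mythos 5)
Your first inequality matches the paper's proof exactly: linearize via convexity of $F$, bound $\|\dFdq(\qstark)-g^{(k)}\|_\infty\le\LipTV\epsilon_\TV$ from the TV-Lipschitz condition and $\TV(\qstark,\qk)\le\epsilon_\TV$, and pay a factor $2$ because $\|\qstark-\qstar\|_{L^1}\le 2$. For the second inequality, your telescoping identity $\Psi_k-\Psi_{k-1}=k\E_{\qstark}[g^{(k)}]+(\beta k+\beta')\KL(\qstark\|\pref)-\tilde{B}_k\KL(\qstark\|\qstarnext)$, with $\tilde{B}_k:=\beta k(k+1)/2+\beta'(k+1)$, is a correct consequence of the Gibbs three-point identity --- which is Lemma~\ref{lem-da-nonconvex-nitanda}(ii) of the paper, there invoked only for the nonconvex analysis --- and you are right that after summation the residual $\sum_k\tilde{B}_k\KL(\qstark\|\qstarnext)$ sits on the unfavourable side.

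The gap is in your claimed control of that residual. Strong convexity of $M_{k-1}$ applied at $\qstarnext$ versus $\qstark$ gives $\tilde{B}_{k-1}\KL(\qstarnext\|\qstark)$, the \emph{reversed} KL, and combining the two three-point identities with $M_k-M_{k-1}=k\,\E_\cdot[g^{(k)}]+(\beta k+\beta')\KL(\cdot\|\pref)$ only yields
\begin{equation}
\tilde{B}_{k-1}\KL(\qstarnext\|\qstark)+\tilde{B}_k\KL(\qstark\|\qstarnext)=k\bigl(\E_{\qstark}[g^{(k)}]-\E_{\qstarnext}[g^{(k)}]\bigr)+(\beta k+\beta')\bigl[\KL(\qstark\|\pref)-\KL(\qstarnext\|\pref)\bigr],
\end{equation}
whose right side is not controlled by $\|g^{(k)}\|_\infty\le B_F$ alone: the $\KL(\cdot\|\pref)$ cross-term needs a further three-point decomposition plus the a priori bound $\|\gbark\|_\infty\le B_F/\beta$ before a Pinsker/AM--GM balancing closes the step, and none of this is in your sketch. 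The paper's Lemma~\ref{lem:da-convex-vkprev-vk} avoids the issue by writing $V_{k+1}^*=V_k(\qstarnext)-k\E_{\qstark}[g^{(k)}]+k(\E_{\qstark}[g^{(k)}]-\E_{\qstarnext}[g^{(k)}])-(\beta k+\beta')\KL(\qstarnext\|\pref)$, invoking Lemma~\ref{lem:da-convex} to get $V_k(\qstarnext)\le V_k^*-\tfrac{\tilde{B}_{k-1}}{2}\|\rstarknext-\rstark\|_{L^1(\pref)}^2$, and then balancing that quadratic against $k B_F\|\rstarknext-\rstark\|_{L^1(\pref)}$ by AM--GM, so no $\KL(\cdot\|\pref)$ cross-term ever appears. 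Alternatively, your exact recursion can be salvaged directly by the log-moment-generating-function (Hoeffding) estimate $\KL(\qstark\|\qstarnext)\le\tfrac12\|\gbark-\gbarkprev\|_\infty^2$ together with $\|\gbark-\gbarkprev\|_\infty\lesssim B_F/(\beta k)$; either route delivers the $\gO(B_F^2/\beta)$ per-step residual you assert, but as written the argument skips a nontrivial step.
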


First, we want to show that the $\KL$ regularization term plays a role of the ``quadratic" regularization, which makes the regularized objective $F(q) + \beta\KL(q\|\pref)$ strongly convex.
We observe that the sum of the linear (convex and concave) objective and the $\KL$ regularization term
\begin{equation}
    \tilde{F}(q) = \mathrm{E}_q[r(x)] + \beta \KL(q\|p)
\end{equation}
is strongly convex in the distributional sense. In particular, if $q$ is the minimizer of $\tilde{F}$, then
\begin{equation}
    \tilde{F}(q) + \frac{\beta}{2}\mathrm{TV}(q,q')^2 \leq \tilde{F}(q') \text{  for all  } q' \in \gP.
\end{equation}
This result is useful to prove Lemma~\ref{lem:da-convex-vkprev-vk}, which controls the regularized and linearized objective $\E_q[\dFdq] + \beta \KL(q\|\pref)$:

\begin{lem}[\cite{NEURIPS2021_a34e1ddb}]
    \label{lem:da-convex}
  $\tilde{F}:\mathcal{P}\to \mathbb{R}$, $\tilde{F}(q) = \mathrm{E}_q[r(x)] + \beta \KL(q\|p)$.
  Assume that 
  $r$ is bounded. We put $r_1:=\frac{dq}{dp}, \; r_2' := \frac{dq'}{dp}$,
  \begin{equation}
    \tilde{F}(q) \leq \tilde{F}(q') + \mathrm{E}_q[(r_1-r_2)(r(x) + \beta \log r_1)] - \frac{\beta}{2}\|r_1-r_2\|^2_{L^1(p)}.
  \end{equation}
  Especially, if $q \propto \exp (-\frac{r}{\beta})p$, 
  \begin{equation}
    \tilde{F}(q) \leq \tilde{F}(q') - \frac{\beta}{2}\|r_1-r_2\|^2_{L^1(p)}.
  \end{equation}
\end{lem}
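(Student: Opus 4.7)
The plan is to exhibit strong convexity of $\tilde F$ at the level of the Radon-Nikodym densities $r_1,r_2$ with respect to $p$, by isolating a linear functional in $q-q'$ from a strictly positive remainder contributed by the KL regularizer. First I would write
\begin{equation}
\tilde{F}(q')-\tilde{F}(q) = \int (q'-q)\,r\,dx + \beta\bigl[\KL(q'\|p)-\KL(q\|p)\bigr]
\end{equation}
and then invoke the three-point (Bregman / Pythagorean) identity for relative entropy,
\begin{equation}
\KL(q'\|p) - \KL(q\|p) = \int (q'-q)\log\tfrac{q}{p}\,dx + \KL(q'\|q),
\end{equation}
which one checks by expanding and adding/subtracting $\int q'\log(q/p)\,dx$. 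Substituting produces
\begin{equation}
\tilde{F}(q')-\tilde{F}(q) = \int (q'-q)\bigl[r+\beta\log\tfrac{q}{p}\bigr]dx + \beta\,\KL(q'\|q).
\end{equation}

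Next I would invoke Pinsker's inequality in the form $\KL(q'\|q) \geq \tfrac{1}{2}\bigl(\int|q-q'|\,dx\bigr)^{2} = \tfrac{1}{2}\|r_{1}-r_{2}\|_{L^{1}(p)}^{2}$, using that $|q-q'|=|r_1-r_2|\,p$. Rearranging the previous display and noting that $\int(q-q')f\,dx=\mathrm{E}_p[(r_1-r_2)f]$ (so the expectation appearing in the statement should be read with respect to $p$) recovers the first inequality of the lemma.

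For the sharpened bound under the Gibbs choice $q\propto e^{-r/\beta}p$, the potential $r+\beta\log(q/p)$ collapses to the constant $-\beta\log Z$ with $Z=\int e^{-r/\beta}\,dp$. Since $\int(q-q')\,dx=0$, the linear term in the first inequality vanishes identically, leaving the quadratic defect $-\tfrac{\beta}{2}\|r_1-r_2\|_{L^1(p)}^{2}$ alone. The only real bookkeeping concern is tracking the constant factor between $\TV(q,q')$, $\int|q-q'|\,dx$, and $\|r_1-r_2\|_{L^1(p)}$ when invoking Pinsker; the rest is routine algebra.
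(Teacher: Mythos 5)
Your proof is correct and is essentially the argument the paper intends: the paper omits the proof but explicitly flags Pinsker's inequality as the key ingredient, which is exactly how you close the gap after the three-point KL identity isolates $\beta\,\KL(q'\|q)$. Your observation that the expectation in the first displayed inequality must be taken with respect to $p$ rather than $q$ is also right, since $\int (q-q')\bigl[r+\beta\log\tfrac{q}{p}\bigr]\,dx = \mathrm{E}_p[(r_1-r_2)(r+\beta\log r_1)]$; the $\mathrm{E}_q$ in the lemma statement is a typo carried over from the source.
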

\begin{proof}
Omitted. See \cite{NEURIPS2021_a34e1ddb}. We use Pinsker's inequality
$\mathrm{TV}(q,q')^2 \lesssim \KL(q\|q')$.
\end{proof}
Given that $F(q) + \beta\KL(q\|\pref)$ is strongly convex, let us prove Theorem~\ref{thm-da-nishikawa}:
\begin{proof}
  The main parts of the proof follow existing papers~\citep{NEURIPS2021_a34e1ddb, nishikawa2022twolayer}. The key difference lies in that we are not using Langevin sampler in the inner loop, which slightly changes how the error of $\qk$ is handled. 

  We begin by linearizing the weighted sum of the losses.
  $\gk$ denotes $\dFdq(\qk)$, and $\qstar$ denotes the minimizer of $F$.
  From the convexity of $F$,
  \begin{align}
    &\frac{2}{K(K+1)}\sum_{k=1}^{K}k \left[ F(\qstark) + \beta \KL(\qstark\|\pref) - F(\qstar) - \beta \KL(\qstar\|\pref)\right]\\
    \leq& \frac{2}{K(K+1)}\sum_{k=1}^{K}k \left[ \int \frac{\delta F}{\delta q}(\qstark) d(\qstark - \qstar)  + \beta (\KL(\qstark\|\pref) - \KL(\qstar\|\pref))\right]\\
    \leq& \frac{2}{K(K+1)}\sum_{k=1}^{K}k \left[ \left| \int \frac{\delta F}{\delta q}(\qstark) - \frac{\delta F}{\delta q}(\qk) d(\qstark - \qstar)\right|+\E_{\qstark}[g^{(k)}] - \E_{\qstar}[g^{(k)}]\right. \\
    &\left.+ \beta (\KL(\qstark\|\pref) - \KL (\qstar\|\pref))\right].
  \end{align}
  Here, we bound the estimation error of $\gk = \dFdq(\qk)$ using the Lipschitz continuity of $\dFdq$ with respect to $q$:
  \begin{align}
    &\left| \int \frac{\delta F}{\delta q}(\qstark) - \gk d(\qstark - \qstar)\right| \\
    \leq& \TV(\qstark, \qstar) \sup_x \left|\frac{\delta F}{\delta q}(\qstark,x) - \gk(x) \right|\\
    \leq& 2L_\mathrm{q}\epsilon_\TV.
  \end{align}
  Then we obtain
  \begin{align}
      &\frac{2}{K(K+1)}\sum_{k=1}^{K}k \left[ F(\qstark) + \beta \KL(\qstark\|\pref) - F(\qstar) - \beta \KL(\qstar\|\pref)\right]\\
      \leq& \frac{2}{K(K+1)}\sum_{k=1}^{K}k \left[2L_\mathrm{q}\epsilon_\TV +\E_{\qstark}[g^{(k)}] - \E_{\qstar}[g^{(k)}]\right. \\
      &\left.+ \beta (\KL(\qstark \|\pref) - \KL (\qstar\|\pref))\right].\label{eq:appendix-da-ineq-weighted-sum}
  \end{align}
  This implies that it is sufficient to bound the weighted sum of the (regularized) linearized objectives $\E_{\qstark}[g^{(k)}] + \beta (\KL(\qstark \|\pref), \; k=1,...,K$.
  
  In each update of Option 1, $\qstarnext$ is obtained by maximizing 
  \begin{equation}
    V_{k+1}(q) = - \mathrm{E}_{q}\left[\sum_{j=1}^{k}j g^{(j)}\right] - \frac{\beta k (k+1)}{2}\KL(q\|\pref)  
    - \beta' (k+1)\KL(q\|\pref). 
  \end{equation}
  We also define
  \begin{equation}
    r_{*}^{(k+1)} = \frac{\mathrm{d} \qstarnext}{\mathrm{d}\pref}, \quad V_{k+1}^*= V_{k+1}(\qstarnext).
   \end{equation}

  Then, we can show that $V^*_k$ has the following recursive relation.
  Lemma~\ref{lem:da-convex-vkprev-vk} approximately implies
  \begin{equation}
      k \E_{\qstark}[g^{(k)}] + k \beta \KL(\qstarnext\|\pref)) \lesssim V_{k}^* - V_{k+1}^* + \mathcal{O}(1),
  \end{equation}
  then, summing from $k=1$ to $K$, we roughly obtain,
  \begin{equation}
      \frac{2}{K(K+1)}\left[ \sum_{k=1}^K k(\E_{\qstark}[g^{(k)}] + \beta \KL(\qstark\|\pref)) + V_{K+1}^* \right]\lesssim \mathcal{O}\left(\frac{1}{K}\right).\label{eq:appendix-da-lem2-implication}
  \end{equation}
  So, as long as we prove Lemma~\ref{lem:da-convex-vkprev-vk}, we can connect this inequality (\ref{eq:appendix-da-lem2-implication}) to the inequality (\ref{eq:appendix-da-ineq-weighted-sum}) because $V_{K+1}^* \gtrsim -\sum_{k=1}^K k(\E_{\qstar}[g^{(k)}] + \beta \KL(\qstar\|\pref))$:
   
  \begin{lem}
    \label{lem:da-convex-vkprev-vk}
      For $k\geq 1$,
      \begin{equation}
        V_{k+1}^* \leq V_{k}^* - k\E_{\qstark}[g^{(k)}] - (\beta k+\beta')\KL(\qstarnext\|\pref) + \frac{B_F^2k}{\beta (k-1) + 2 \beta'}.\label{eq:da-convex-lem-2}
      \end{equation}
  \end{lem}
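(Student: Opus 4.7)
The plan is to follow the standard dual-averaging analysis of \cite{NEURIPS2021_a34e1ddb,nishikawa2022twolayer}, adapted to our setting where both the linear part (a cumulative sum of dual gradients $g^{(j)}$) and the regularization coefficient grow with $k$. The starting point is the elementary identity
\begin{equation}
V_{k+1}(q)-V_{k}(q) \;=\; -k\,\E_q[g^{(k)}] \;-\; (\beta k + \beta')\KL(q\|\pref),
\end{equation}
which follows by direct subtraction of the two definitions of $V_k$ and $V_{k+1}$ (using $\tfrac{\beta k(k+1)}{2}-\tfrac{\beta(k-1)k}{2}=\beta k$ and $\beta'(k+1)-\beta'k=\beta'$). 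Evaluating at $q=\qstarnext$ gives
\begin{equation}
V_{k+1}^{*} \;=\; V_k(\qstarnext) \;-\; k\,\E_{\qstarnext}[g^{(k)}] \;-\; (\beta k + \beta')\KL(\qstarnext\|\pref),
\end{equation}
so it only remains to bound $V_k(\qstarnext)$ by $V_k^{*}=V_k(\qstark)$ and to transfer the expectation of $g^{(k)}$ from $\qstarnext$ to $\qstark$.

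For the first reduction, note that $-V_k$ has the form $\E_q[\,\sum_{j<k} j g^{(j)}\,]+C_k\KL(q\|\pref)$ with total KL coefficient $C_k:=\tfrac{k[\beta(k-1)+2\beta']}{2}$, and that $\qstark$ is exactly its minimizer of the proximal form $\qstark\propto\exp(-C_k^{-1}\!\sum_{j<k} j g^{(j)})\pref$. Applying the strong-convexity consequence of Lemma~\ref{lem:da-convex} (with $\beta$ there replaced by $C_k$) therefore yields
\begin{equation}
V_k(\qstarnext) \;\leq\; V_k(\qstark) \;-\; \tfrac{C_k}{2}\,\|r_{\qstark}-r_{\qstarnext}\|_{L^1(\pref)}^{2},
\end{equation}
where $r_{q}:=dq/d\pref$. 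For the transfer of expectations, the $L^\infty$ bound on $g^{(k)}$ (Assumption~\ref{ass:ConvexF}(i)) combined with the identity $\|r_{\qstark}-r_{\qstarnext}\|_{L^1(\pref)}=\int|d\qstark-d\qstarnext|$ gives
\begin{equation}
k\bigl|\E_{\qstark}[g^{(k)}]-\E_{\qstarnext}[g^{(k)}]\bigr|
\;\leq\; kB_F\,\|r_{\qstark}-r_{\qstarnext}\|_{L^1(\pref)}.
\end{equation}

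Inserting these two estimates into the expression for $V_{k+1}^{*}$ produces, in the variable $x:=\|r_{\qstark}-r_{\qstarnext}\|_{L^1(\pref)}\geq 0$, the scalar quantity $-\tfrac{C_k}{2}x^2+kB_F x$, whose maximum over $x\geq 0$ is $\tfrac{(kB_F)^2}{2C_k}=\tfrac{kB_F^2}{\beta(k-1)+2\beta'}$. Substituting the maximum yields exactly inequality~\eqref{eq:da-convex-lem-2}. The only delicate step is identifying the correct strong-convexity coefficient $C_k$ from the cumulative regularization in $V_k$; once this is done, the rest is book-keeping and a one-dimensional completion-of-squares. A minor edge case worth checking is $k=1$, where $-V_1=\beta'\KL(\cdot\|\pref)$, $C_1=\beta'$, $\qstarone=\pref$, and the bound reduces to $V_2^{*}\leq -\E_{\pref}[g^{(1)}]-(\beta+\beta')\KL(\qstartwo\|\pref)+\tfrac{B_F^2}{2\beta'}$, which is consistent with the stated formula.
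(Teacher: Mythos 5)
Your proposal is correct and follows essentially the same route as the paper: rewrite $V_{k+1}^{*}=V_k(\qstarnext)-k\E_{\qstarnext}[g^{(k)}]-(\beta k+\beta')\KL(\qstarnext\|\pref)$, apply the strong-convexity consequence of Lemma~\ref{lem:da-convex} with cumulative KL coefficient $C_k=\tfrac{\beta k(k-1)+2\beta' k}{2}$ to bound $V_k(\qstarnext)$ by $V_k^{*}$, transfer the expectation of $g^{(k)}$ via the $L^\infty$--$L^1$ pairing, and complete the square in $\|r_{\qstark}-r_{\qstarnext}\|_{L^1(\pref)}$. The paper phrases the last step as the AM--GM inequality rather than an explicit maximization, but the computations are identical.
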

  For the proof of Lemma~\ref{lem:da-convex-vkprev-vk}, please refer to Section~\ref{sec:da-convex-lem}, in which we use Lemma~\ref{lem:da-convex}.


  From here, we will rigorously explain the result derived from Lemma~\ref{lem:da-convex-vkprev-vk}.
  When $k=1$, because $\|\gk\|_\infty\leq B_F$,
  \begin{equation}
    V_1^* \leq B_F - \beta \KL(\hat{q}^{(1)}\|\pref).
  \end{equation}
   By taking a telescoping sum of (\ref{eq:da-convex-lem-2}),
  \begin{align}
    V_{K+1}^* \leq&B_F - \sum_{k=1}^{K} k \left(\E_{\qstark}[g^{(k)}] + \beta \KL(\qstark\|\pref)\right) - [\beta (K+1)+ \beta']\KL(\hat{q}^{(k+2)}\|\pref)\\
    &+ \sum_{k=1}^{K}\frac{B_F^2k}{\beta(k-1) + 2\beta'}\\
    \leq&B_F - \sum_{k=1}^{K} k \left(\E_{\qstark}[g^{(k)}] + \beta \KL(\qstark\|\pref)\right) + \frac{B_F^2 K}{\beta},\label{eq:da-convex-vk-upperbound}
  \end{align}
  where we used $\beta' \geq \beta$ in the last inequality. 
  On the other hand, we see that
  \begin{align}
    V_{K+1}^* \geq& - \E_{\qstar}\left[\sum_{k=1}^{K}kg^{(k)}\right] - \frac{\beta K(K+1) + 2\beta' (K+1)}{2}\KL(\qstar\|\pref)\\
    =& - \E_{\qstar}\left[\sum_{k=1}^{K}k (g^{(k)} + \beta \KL(\qstar\|\pref))\right] - \beta'(K+1)\KL(\qstar\|\pref)  \label{eq:da-convex-vk-lowerbound}
  \end{align}
  because $V_{K+2}^*$ is the maximal value. Combining the upper bound (Eq.~\ref{eq:da-convex-vk-upperbound}) and the lower bound (Eq.~\ref{eq:da-convex-vk-lowerbound}),
  \begin{align}
    &\sum_{k=1}^{K}k\left(\E_{\qstark}[g^{(k)}] - \E_{\qstar}[g^{(k)}] + \beta (\KL(\qstark\|\pref)-\KL(\qstar\|\pref))\right)\\
    \leq& B_F + \beta' (K+1)\KL(\qstar\|\pref) + \frac{B_F^2 K}{\beta}.
  \end{align}
  Finally we get the convergence rate:
  \begin{align}
    &\frac{2}{K(K+1)}\sum_{k=1}^{K}k \left[ F(\qstark) + \beta \KL(\qstark\|\pref) - F(\qstar) - \beta \KL(\qstar\|\pref)\right]\\
    \leq&\frac{2}{K(K+1)}\sum_{k=1}^{K}k \left[ 2 \LipTV\epsilon_\TV +\E_{\qstark}[g^{(k)}] - \E_{\qstar}[g^{(k)}] + \beta (\KL(\qstark\|\pref) - \KL(\qstar\|\pref))\right]\\
    \leq&2\LipTV\epsilon_{\mathrm{FD}}+\frac{2}{K(K+1)}\left[B_F + \beta'(K+1)\KL(\qstar\|\pref)  + \frac{ B_F^2 K}{\beta}\right].
  \end{align}
  This concludes the assertion. 
\end{proof}

\subsubsection{Proof of Lemma~\ref{lem:da-convex-vkprev-vk}\label{sec:da-convex-lem}}\label{sec:AppendixNonconvexConv}
   \begin{proof}
      We can calculate the relation between $V_{k+1}^* = V_{k+1}(\qstarnext)$ and $V_{k}(\qstarnext)$ as
      \begin{align}
        V_{k+1}^* =& - \E_{\qstarnext}\left[\sum_{j=1}^{k}j g^{(j)}\right] - \frac{(k+1)( \beta k + 2 \beta') }{2}\KL(\qstarnext \|\pref)\\
        =& - \E_{\qstarnext}\left[\sum_{j=1}^{k-1}j g^{(j)}\right] - \frac{\beta k (k-1)}{2}\KL(\qstarnext \|\pref) 
        - \beta' k \KL(\qstarnext\|\pref)
        \\
        &- k\E_{\qstarnext}[g^{(k)}]  - (\beta k + \beta') \KL(\qstarnext \|\pref) \\
        = & V_{k}(\qstarnext) - k \E_{\qstark}[g^{(k)}]+ k(\E_{\qstark}[g^{(k)}]-\E_{\qstarnext}[g^{(k)}]) - (\beta k+\beta')\KL(\qstarnext\|\pref),\label{eq:da-convex-vkp1-and-vk-0}
      \end{align}
where we used the definitions of $V_k$ and $V_{k+1}$.
Next, we upper bound the RHS of \Eqref{eq:da-convex-vkp1-and-vk-0} using Lemma~\ref{lem:da-convex} about the convexity of $\KL$:
      \begin{align}
        &(\text{RHS of \Eqref{eq:da-convex-vkp1-and-vk-0}})\\
        \leq& V_{k}^* - \frac{\beta k (k-1) + 2 \beta' k }{4}\|\rstarknext - \rstark\|_{L^1(\pref)}^2
        -k \E_{\qstark}[g^{(k)}]+ k(\E_{\qstark}[g^{(k)}]-\E_{\qstarnext}[g^{(k)}])\nonumber \\
        &- ( \beta k+\beta')\KL(\qstarnext\pref)\quad (\because \text{the second equation in Lemma~\ref{lem:da-convex} and the optimality of $V_k^*$})\\
        \leq& V_{k}^* -  \frac{\beta k (k-1) + 2 \beta' k }{4}\|\rstarknext - \rstark\|_{L^1(\pref)}^2  -k \E_{\qstark}[g^{(k)}]+ k\left|\E_{\qstark}[g^{(k)}]-\E_{\qstarnext}[g^{(k)}]\right|\nonumber  \\
        &- \beta(k+1)\KL(\qstarnext\|\pref)\\
        \leq& V_{k}^* - \frac{\beta k (k-1) + 2 \beta' k }{4}\|\rstarknext - \rstark\|_{L^1(\pref)}^2 -k \E_{\qstark}[g^{(k)}]+ B_F k\|\rstarknext - \rstark\|_{L^1(\pref)}\nonumber \\
        &- \beta(k+1)\KL(p_*^{(t+1)}\|q)\; \quad (\because \|g^{(k)}\|_\infty \leq B_F) \\
        \leq& V_{k}^* - k\E_{\qstark}[g^{(k)}] - \beta(k+1)\KL(\qstarnext\|\pref) + \frac{B_F^2k}{\beta (k-1) + 2\beta' }, 
      \end{align}
      where we used the arithmetic-geometric mean inequality in the last inequality. 
      This concludes the proof. 
  \end{proof}

\subsection{Convergence proof for non-convex loss $F$\label{section:da-nonconvex-proof}}


Here, we give the proof of Theorem \ref{thm:NonconvexConv} and Corollary \ref{cor:DualConv}. In this section, we always assume Assumption \ref{ass:NonconvexF} holds. We are mainly interested in the property that $F$ is smooth with respect to KL-divergence instead of convexity:
\begin{enumerate}
\item[(ii)'] (A weaker version of (ii)). There exists $S_F \geq 0$ such that 
\begin{equation}
    F(q) \leq F(q') + \int \dFdq(q')\mathrm{d}(q-q') + \frac{S_F}{2}\KL(q\|q') \quad \text{for any}\quad  q,q' \in \gP. \label{eq:appendix-da-nonconvex-weak-smoothness}
\end{equation}
\end{enumerate}
This property can be derived from (ii) in Assumption~\ref{ass:NonconvexF} that is Lipschitz continuity of $\dFdq$:
\begin{lem}
    Assume that $\dFdq$ is Lipshitz continuous with respect to the TV distance: There exists $L_{\mathrm{TV}}  > 0$ such that $\|\dFdq(q) - \dFdq(q')\|_\infty \leq L_\mathrm{TV} \TV(q,q')$ for any $q,q' \in \mathcal{P}$. 
    Then, 
    \begin{equation}
        F(q) \leq F(q') + \int \dFdq(q')\mathrm{d}(q-q') + \frac{\LipTV}{2}\mathrm{TV}(q,q')^2 \quad \text{for any}\quad  q,q' \in \gP.
    \end{equation}
    From Pinsker's inequality,
    \begin{equation}
        F(q) \leq F(q') + \int \dFdq(q')\mathrm{d}(q-q') + \LipTV \KL(q\|q') \quad \text{for any}\quad  q,q' \in \gP.
    \end{equation}
\end{lem}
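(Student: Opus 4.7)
The plan is to adapt the standard ``descent lemma'' from finite-dimensional smooth optimization to the distributional setting via the affine interpolation $q_t := (1-t)q' + tq$ for $t \in [0,1]$, which remains in $\gP$ by convexity of the probability-density space. First, I would differentiate $t \mapsto F(q_t)$ using the definition of first-order variation in the direction $q - q'$, yielding
\begin{equation}
\frac{\mathrm{d}}{\mathrm{d} t} F(q_t) = \int \dFdq(q_t, x)\,(q - q')(x)\,\mathrm{d} x.
\end{equation}
Integrating from $0$ to $1$ and subtracting $\int \dFdq(q')\,\mathrm{d}(q-q')$ from both sides gives the key identity
\begin{equation}
F(q) - F(q') - \int \dFdq(q')\,\mathrm{d}(q-q') = \int_0^1 \int \bigl[\dFdq(q_t) - \dFdq(q')\bigr](q - q')\,\mathrm{d} x\,\mathrm{d} t.
\end{equation}

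Next, I would bound the inner integral via the $L^\infty$--$L^1$ duality, obtaining
$\bigl|\int [\dFdq(q_t) - \dFdq(q')](q-q')\,\mathrm{d} x\bigr| \leq \|\dFdq(q_t) - \dFdq(q')\|_\infty \cdot \|q - q'\|_1$.
Applying the TV-Lipschitz hypothesis together with the elementary identity $\mathrm{TV}(q_t, q') = t\,\mathrm{TV}(q, q')$ --- immediate from $q_t - q' = t(q - q')$ --- and then computing $\int_0^1 t\,\mathrm{d} t = 1/2$ would produce the quadratic TV bound, with the factor $\LipTV/2$ emerging after tracking the chosen normalization of the TV distance against $\|\cdot\|_1$. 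The KL version then follows instantly from Pinsker's inequality $\mathrm{TV}(q, q')^2 \leq \tfrac{1}{2}\KL(q \| q')$, the remaining multiplicative factor being absorbed into the final constant.

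The proof is a direct mean-value-type calculation, so there is no deep obstacle. The only step requiring a bit of care is justifying the interpolation argument itself --- namely, differentiability of $t \mapsto F(q_t)$ on the closed interval $[0,1]$ and the exchange of differentiation with integration in $x$. Since the paper's standing notion of first-order variation is defined as the directional derivative along $q + \epsilon(q' - q)$ at $\epsilon = 0$, reparametrizing around each base point $q_t$ and reinvoking the definition suffices, provided $\dFdq(q_t, \cdot)$ is integrable against $|q - q'|$; this is automatic under the boundedness assumption $\|\dFdq(\cdot)\|_\infty \leq B_F$ that is already in force. Beyond this minor bookkeeping and the matching of TV-normalization conventions, no further subtlety arises.
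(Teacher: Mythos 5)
Your proposal is correct and follows essentially the same route as the paper: interpolate along $q_t = q' + t(q - q')$, apply the fundamental theorem of calculus to $t \mapsto F(q_t)$, bound the integrand via the TV-Lipschitz hypothesis together with $\TV(q_t, q') = t\,\TV(q,q')$, integrate $\int_0^1 t\,\mathrm{d}t = 1/2$, and close with Pinsker. The care you flag about TV normalization against $\|\cdot\|_1$ and the absorbed Pinsker constant is exactly the only bookkeeping the paper's proof also leaves implicit.
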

\begin{proof}
    The proof is almost identical to the proof of $\LipTV$-smoothness commonly discussed in the context of standard optimization.
    For $q,q'\in\gP$, we define $q_t = q' + t (q-q') \in \gP$. Then,
    \begin{align}
        &F(q) - F(q') - \int \dFdq(q')\mathrm{d}(q-q')\\
        = & \int_{t=0}^{t=1}  \int \dFdq(q_t) \mathrm{d}(q-q') \mathrm{d}t
        - \int \dFdq(q')\mathrm{d}(q-q')\\
        = & \int_{t=0}^{t=1}  \int \left[ \dFdq(q_t) - \dFdq(q') \right] \mathrm{d}(q-q') \mathrm{d}t\\
        \leq &  \int_{t=0}^{t=1} t \LipTV \mathrm{TV}(q,q')^2 \mathrm{d}t \quad (\text{$\LipTV$-Lipshitz continuity of $\dFdq$})\\
        = & \frac{\LipTV}{2}\mathrm{TV}(q,q')^2,
    \end{align}
    while we used the fundamental theorem of calculus in the first equation and the assumption in the inequality.
\end{proof}

We emphasize that when the inner-loop error is ignored, it is possible to prove convergence using only the smoothness (\ref{eq:appendix-da-nonconvex-weak-smoothness}) instead of (ii) in Assumption~\ref{ass:NonconvexF}. 
Therefore, we will use the notation $S_F$ for the parts that can be derived using Assumption (ii)' instead of Assumption (ii).

Under the assumptions, our goal is to show that the functional derivative of $L(q)=F(q)+\beta\KL(q\|\pref)$ goes to a constant:
\begin{equation}
    \frac{\delta L}{\delta q} \to 0 \quad (\text{up to constant w.r.t. $x$})
\end{equation}

We prepare the following Lemma about the convexity of $\KL$:
\begin{lem}
    \label{lem-da-nonconvex-nitanda}
    The following equations hold:
    \begin{itemize}
        \item[\textup{(i)}]
        $\KL(q\|p) = \KL(q'\|p) + \int \frac{\delta}{\delta q}\KL(q'\|p) \mathrm{d}(q-q') + \KL(q\|q')$,
        \item[\textup{(ii)}]
        Let $r: \mathbb{R}^d \to \mathbb{R}$ be a smooth function.
        For $\tilde{F}(q)=\E_q[r(x)] + \KL(q\|p), \; q' = \exp(-r)q$, it holds that
        \begin{equation}
            \tilde{F}(q)=\tilde{F}(q') + \KL(q\|q').
        \end{equation}
        $\tilde{F}(q)$ is uniquely minimized at $q=q'$.
    \end{itemize}
\end{lem}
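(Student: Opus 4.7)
The plan is to establish (i) by a direct computation using the explicit form of the first variation of the KL divergence, and then deduce (ii) as an immediate corollary by specializing $q'$ to the Gibbs measure associated with the potential $r$.

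For (i), I would start by recording that the first variation of $q\mapsto \KL(q\|p)$ at $q'$ is $\frac{\delta}{\delta q}\KL(q'\|p)(x) = \log(q'(x)/p(x)) + 1$, which follows from differentiating $\int q\log(q/p)\,\mathrm{d}x$ in a perturbation $q+\epsilon(q-q')$ and setting $\epsilon=0$. Since $q-q'$ is a signed measure of total mass zero, the additive constant $+1$ contributes nothing to the integral, so the middle term on the right-hand side of (i) reduces to $\int \log(q'/p)\,\mathrm{d}(q-q')$. Expanding all three terms explicitly,
\begin{align*}
\KL(q'\|p) + \int \log(q'/p)\,\mathrm{d}(q-q') + \KL(q\|q')
&= \int q\log(q'/p)\,\mathrm{d}x + \int q\log(q/q')\,\mathrm{d}x \\
&= \int q\log(q/p)\,\mathrm{d}x = \KL(q\|p),
\end{align*}
which is the desired identity.

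For (ii), I would read the statement as $q' \propto \exp(-r)\,p$, i.e.\ $q'(x) = \exp(-r(x))p(x)/Z$ with $Z = \int \exp(-r)\,\mathrm{d}p$, which is the correct Gibbs form (the literal expression $q'=\exp(-r)q$ in the statement appears to be a minor typo, since a normalization with respect to $p$ is needed to make $\tilde F$'s unique minimizer $q'$ itself). Taking logarithms, $r = -\log(q'/p) - \log Z$. Substituting into $\tilde F(q) = \E_q[r] + \KL(q\|p)$ and combining with $\int q\log(q/p)\,\mathrm{d}x$,
\begin{equation*}
\tilde F(q) = \int q\log(q/q')\,\mathrm{d}x - \log Z = \KL(q\|q') - \log Z.
\end{equation*}
In particular $\tilde F(q') = -\log Z$, so $\tilde F(q) = \tilde F(q') + \KL(q\|q')$. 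Uniqueness of the minimizer then follows because $\KL(q\|q')\geq 0$ with equality iff $q=q'$ (a.e.\ w.r.t.\ $p$, which covers the support of both $q$ and $q'$).

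The only nontrivial point I foresee is handling the additive constants cleanly: the $+1$ in the first variation and the $-\log Z$ normalization in (ii) are each inert when paired against a zero-mass signed measure or collected into $\tilde F(q')$, but it is worth stating this explicitly so the reader sees why (ii) really is the pointwise identity $\tilde F(q)-\tilde F(q')=\KL(q\|q')$ and not merely an inequality. Once that bookkeeping is in place, both parts are essentially one-line algebraic manipulations from the definition of $\KL$ and its first variation.
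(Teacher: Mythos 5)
The paper offers no proof of this lemma, stating only ``We omit its proof because it is straightforward,'' so there is no authorial argument to compare against; your direct computation is exactly the expected one and is correct. Your observation that the displayed definition $q'=\exp(-r)q$ must be read as $q'\propto\exp(-r)\,p$ (the Gibbs density normalized against $p$) is also right — that is the reading under which $q'$ is the unique minimizer of $\tilde F$, and it is the form actually used later (e.g.\ in the identity $r_k(q)-r_k(\qstark)=\beta' k\,\KL(q\|\qstark)$ in the proof of Lemma~\ref{lem-da-nonconvex-1}).
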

We omit its proof because it is straight forward. 
Lemma~\ref{lem-da-nonconvex-nitanda} implies that $\KL$ also plays a important role as a ``quadratic" penalty term of the proximal operator whose output is 
\begin{equation}
    \underset{q\in \gP}{\mathrm{argmin}} \left\lbrace H(q) + \KL(q\|\pref) \right\rbrace, \text{  for any functional $H$}.
\end{equation}
This property allows for the use of standard nonconvex convergence analysis of Dual Averaging based on the proximal operator~\citep{LIU2023nonconvexDA}.

Roughly speaking, the intermediate goal is to show that $L(q)$ monotonically decreases in each $k$th iteration:
\begin{equation}
    L(\qstarnext) - L(\qstark) \lesssim 0.
\end{equation}
By the weaker smoothness of $F$ (Eq.(\ref{eq:appendix-da-nonconvex-weak-smoothness})), the left hand side is approximately bounded as
\begin{align}
    L(\qstarnext) - L(\qstark) \lesssim& \int \dFdq(\qstark)\mathrm{d}(\qstarnext - \qstark) + \KL(\qstarnext\|\qstark)\\
    \lesssim& \int \dLdq(\qstark)\mathrm{d}(\qstarnext - \qstark)+ \KL(\qstarnext\|\qstark),\label{eq-da-nonconvex-motivation-lem-1}
\end{align}
ignoring various minor terms and constants. To bound the right hand side in (\ref{eq-da-nonconvex-motivation-lem-1}), we show the following inequality regarding the Option 2 using Lemma~\ref{lem-da-nonconvex-nitanda}. 
\begin{lem}
    \label{lem-da-nonconvex-1}
    Assume that $\TV(\qstark, \qk) \leq \epsilon_\TV/2$ for all $k$. Then, the Option 2 achieves 
    \begin{align}
        &\int \dLdq(\qk)\mathrm{d}(\qstarnext - \qstark)\\
        \leq & B_F\epsilon_\TV + \frac{\beta'}{k}\left(\KL(\qstark\|\pref)-\KL(\qstarnext\|\pref)-k\KL(\qstarnext\|\qstark)
        - (k+1)\KL(\qstark \| \qstarnext) \right)
    \end{align}
\end{lem}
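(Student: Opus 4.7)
The plan is to combine three ingredients: (i) the optimality of $\qstarnext$ for Option~2's subproblem, (ii) the optimality of $\qstark$ inherited from the previous iteration, and (iii) the one-step recurrence $\beta'(k+1)\gbark - \beta' k\,\gbarkprev = k(\dFdq(\qk) - \beta\gbark)$ that follows from the definition $\gbark = \sum_{j=1}^{k}w_j^{(k)}(\dFdq(\qj)-\beta\gbarj)$ with $w_j^{(k)} = j/(\beta'(k+1))$.

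First, I would apply Lemma~\ref{lem-da-nonconvex-nitanda}(ii) with $r=\gbark$ and $p=\pref$ (noting $\qstarnext \propto e^{-\gbark}\pref$) to obtain
\[
\int \gbark\,d(\qstarnext-\qstark) = \KL(\qstark\|\pref) - \KL(\qstarnext\|\pref) - \KL(\qstark\|\qstarnext),
\]
and the same lemma with $r=\gbarkprev$ (using $\qstark \propto e^{-\gbarkprev}\pref$) to obtain
\[
\int \gbarkprev\,d(\qstarnext-\qstark) = \KL(\qstark\|\pref) - \KL(\qstarnext\|\pref) + \KL(\qstarnext\|\qstark).
\]
Next, I would solve the recurrence for $\dFdq(\qk) = (\beta+\tfrac{\beta'(k+1)}{k})\gbark - \beta'\gbarkprev$, and also decompose $\dLdq(\qk) = \dFdq(\qk) + \beta\log(\qk/\pref) + \mathrm{const}$ by writing $\log(\qk/\pref) = -\gbarkprev + \log(\qk/\qstark) + \mathrm{const}$. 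Combining these, the integrand becomes $(\beta+\tfrac{\beta'(k+1)}{k})\gbark - (\beta+\beta')\gbarkprev + \beta\log(\qk/\qstark)$ modulo an additive constant that vanishes against $d(\qstarnext-\qstark)$.

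Substituting the two identities into the $\gbark$ and $\gbarkprev$ contributions, the coefficient of $\KL(\qstark\|\pref)-\KL(\qstarnext\|\pref)$ collapses to $\tfrac{\beta'}{k}$, while the cross terms give $-(\beta + \tfrac{\beta'(k+1)}{k})\KL(\qstark\|\qstarnext) - (\beta+\beta')\KL(\qstarnext\|\qstark)$; since $\beta\KL(\qstark\|\qstarnext)$ and $\beta\KL(\qstarnext\|\qstark)$ are non-negative, dropping them relaxes the coefficients to the stated $\tfrac{\beta'(k+1)}{k}$ and $\beta'$. The remaining residual $\beta\int\log(\qk/\qstark)\,d(\qstarnext-\qstark)$, which encodes the inexactness, should be controlled by rerouting it so that only the bounded functional $\dFdq$ is paired against $\qk-\qstark$ and then invoking the TV duality $|\!\int g\,d\mu|\leq \|g\|_\infty\,\|\mu\|_\TV$ with $\|\dFdq\|_\infty \leq B_F$ and $\|\qk-\qstark\|_\TV \leq \epsilon_\TV$; this produces the error term $B_F\epsilon_\TV$.

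The main obstacle is precisely this last step: $\log(\qk/\qstark)$ is not controlled pointwise by $\TV(\qk,\qstark)$, so the bookkeeping must be arranged so that the log-density discrepancy is absorbed through the optimality identities of the first step (where it appears only against $\gbarkprev$ or $\gbark$, which in turn trade with KL terms) while only the bounded quantity $\dFdq$ is left to be paired against the $\qk-\qstark$ gap. Ensuring that the two optimality identities are invoked with the right sign to absorb the log-ratio, and that what remains is an $L^\infty$-bounded integrand, is the technical core of the argument.
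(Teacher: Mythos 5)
Your skeleton—applying Lemma~\ref{lem-da-nonconvex-nitanda}(ii) once with $r=\gbark$ and once with $r=\gbarkprev$, then plugging in the one-step recurrence for $\gbark$—is mathematically the same content as the paper's proof; the paper just packages it more abstractly through the subproblem objectives $r_k(q)=\sum_{j<k}\int j\,\dLdq(\qj)\,\mathrm{d}(q-\qstarj)+\beta' k\,\KL(q\|\pref)$ and their exact minimizers, using $r_k(q)-r_k(\qstark)=\beta' k\,\KL(q\|\qstark)$, $r_{k+1}(q)-r_{k+1}(\qstarnext)=\beta'(k+1)\KL(q\|\qstarnext)$, and the one-step recurrence $r_{k+1}=r_k+\int k\,\dLdq(\qk)\,\mathrm{d}(\cdot-\qstark)+\beta'\KL(\cdot\|\pref)$.

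The genuine gap is exactly where you flag it, and your proposed fix does not close it. After your substitutions you are left with the residual $\beta\int\log(\qk/\qstark)\,\mathrm{d}(\qstarnext-\qstark)$. This term pairs a log-density ratio against $\qstarnext-\qstark$, not against $\qk-\qstark$, so there is no $\epsilon_\TV$ in sight; and $\log(\qk/\qstark)$ is not $L^\infty$-bounded, so you cannot invoke $\|\dFdq\|_\infty\le B_F$ on it. If you try to eliminate it via Lemma~\ref{lem-da-nonconvex-nitanda}(ii) with $r=f_k$ (whose proximal point is $\qk$), you get $\int\log(\qk/\qstark)\,\mathrm{d}(\qstarnext-\qstark)=\KL(\qstarnext\|\qstark)-\KL(\qstarnext\|\qk)+\KL(\qstark\|\qk)$, which introduces KL divergences to $\qk$ that do not appear in the target bound. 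The ``rerouting'' you gesture at is therefore not available.

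The resolution is notational, and it is the crux. In this lemma, $\dLdq(\qk)$ does \emph{not} mean the literal functional derivative $\dFdq(\qk)+\beta\log(\qk/\pref)$; it is shorthand for the subproblem integrand $\dFdq(\qk)-\beta\gbarkprev$, which combines the gradient of $F$ evaluated at the \emph{inexact} iterate $\qk$ with the \emph{exact} accumulated potential $\gbarkprev$ (rather than the implemented $f_k$). You can see this is the intended reading from the way Lemma~\ref{lem-da-nonconvex-1} is invoked in the proof of Lemma~\ref{lem:LdiffInduction}, where the quantity it controls is explicitly $\int(\dFdq(\qk)-\beta\gbarkprev)\,\mathrm{d}(\qstarnext-\qstark)$. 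Under that reading, your Step~1 identities plus the one-step recurrence $\dFdq(\qk)-\beta\gbarkprev=\tfrac{\beta'(k+1)}{k}\gbark-\beta'\gbarkprev$ (the non-circular form, with $\gbarjprev$ inside the sum defining $\gbark$) give
\begin{align}
\int\!\bigl(\dFdq(\qk)-\beta\gbarkprev\bigr)\mathrm{d}(\qstarnext-\qstark)
=\frac{\beta'}{k}\Bigl[\KL(\qstark\|\pref)-\KL(\qstarnext\|\pref)-(k{+}1)\KL(\qstark\|\qstarnext)-k\,\KL(\qstarnext\|\qstark)\Bigr],
\end{align}
an exact equality with no residual, and the $B_F\epsilon_\TV$ term in the statement is merely harmless slack. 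Your use of the self-referential recurrence from the main text (with $\gbarj$ rather than $\gbarjprev$) also produced spurious $\beta$-coefficients on the KL terms; those you correctly drop, but the non-circular recurrence gives the coefficients $\tfrac{\beta'(k+1)}{k}$ and $\beta'$ directly.
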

\begin{proof}
    In the Option 2, $\qstarnext$ minimizes
    \begin{equation}
        r_{k+1}(q) \coloneq \sum_{j=1}^k \int  j \dLdq(\qj)\mathrm{d}(q-\qstarj)
        + \beta' (k+1) \KL(q\|\pref).
    \end{equation}
    We interpret that Option 2 computes the proximal operator of the weighted sum of $\dLdq$ and this concept is justified by Lemma~\ref{lem-da-nonconvex-nitanda}.
    By the definition of $r_k$ and $r_{k+1}$, it holds that 
    \begin{equation}
        r_{k+1}(q) = r_k(q) + \int k \dLdq(\qk) \mathrm{d}(q-\qstark)
        +\beta' \KL(q\|\pref).\label{eq-da-noncovex-r_knext}
    \end{equation}
    From Lemma~\ref{lem-da-nonconvex-nitanda}, we also have 
    \begin{equation}\label{eq:rkupdateDiff}
        r_k(q) - r_k(\qstark) = \beta' k \KL(q\|\qstark), \text{  for all $q \in \gP$}
    \end{equation}
    because $r_k(q)$ is just the sum of the linear functional of $q$ and the KL-divergence ignoring the constant. 
    Letting $q = \qstarnext$ and we obtain
    \begin{align}\label{eq:rkinductionFirstSide}
        0 \leq& r_k(\qstarnext) - r_k(\qstark) - \beta' k \KL(\qstarnext\|\qstark)\\
        =& r_{k+1}(\qstarnext) -  \int k \dLdq(\qk) \mathrm{d}(\qstarnext-\qstark)
        -\beta' \KL(\qstarnext\|\pref)\\
        &-r_k(\qstark) - \beta' k \KL(\qstarnext\|\qstark).
    \end{align}
    In the last equality, we decomposed the sum using the equation (\ref{eq-da-noncovex-r_knext}).
    Thus,
    \begin{align}
        &\int k \dLdq(\qk) \mathrm{d}(\qstarnext-\qstark)\\
        \leq& r_{k+1}(\qstarnext) -r_k(\qstark) 
        -\beta' \KL(\qstarnext\|\pref)
        - \beta' k \KL(\qstarnext\|\qstark).
    \end{align}
    By using the same argument as \Eqref{eq:rkupdateDiff} for $k \leftarrow k+1$, we have 
    \begin{align}
        r_{k+1}(\qstarnext)
        + \beta' (k+1)\KL(\qstark\|\qstarnext) 
        =& r_{k+1}(\qstark)\\
        =&r_k(\qstark) + \beta' \KL(\qstark\|\pref).
    \end{align}
    Substituting this relation to \Eqref{eq:rkinductionFirstSide} and noticing $|\dFdq|\leq B_F$, we obtain the assertion. 
\end{proof}

We also show the result for Option 1 that is similar to Lemma~\ref{lem-da-nonconvex-1}:

\begin{lem}
    \label{lem-da-nonconvex-1-dash}
    Assume that $\TV(\qstark, \qk) \leq \epsilon_\TV/2$ for all $k$. Then, Algorithm 1 achieves 
    \begin{align}
        &\int \dFdq(\qk)\mathrm{d}(\qstarnext - \qstark)\\
        \leq & B_F\epsilon_\TV + \frac{\beta k + \beta'}{k}\left(\KL(\qstark\|\pref)-\KL(\qstarnext\|\pref) \right) \\
        & 
        -  \left( \frac{\beta (k-1) + 2 \beta' }{2}\right) \KL(\qstarnext \| \qstark) 
        -  \left( \frac{\beta (k+1) + 2 \beta'(1+1/k) }{2}\right) \KL(\qstark \| \qstarnext) 
    \end{align}
\end{lem}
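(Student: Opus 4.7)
My plan is to mimic the proof of Lemma~\ref{lem-da-nonconvex-1}, adapted to the Option~1 update, whose distinguishing feature is a KL-regularization coefficient that grows linearly in $k$ rather than a fixed $\beta'$. Concretely, I would introduce the surrogate functional
\begin{equation}
r_{k+1}(q) := \sum_{j=1}^{k} j\int\dFdq(\qj)\,\mathrm{d}(q-\qstarj) + a_{k+1}\KL(q\|\pref),
\end{equation}
where $a_{k+1}:=\frac{\beta k(k+1)}{2}+\beta'(k+1)$ and, correspondingly, $a_k=\frac{\beta k(k-1)}{2}+\beta' k$, so that $a_{k+1}-a_k=\beta k+\beta'$. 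By the explicit minimizer formula in Lemma~\ref{lem-da-nonconvex-nitanda}(ii), $\qstarnext=\argmin_q r_{k+1}(q)$ and $\qstark=\argmin_q r_k(q)$, each a Gibbs density with respect to $\pref$.

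\textbf{Key steps.} First I would use the ``strong-convexity-in-KL'' identity that follows from Lemma~\ref{lem-da-nonconvex-nitanda}(ii) after rescaling by the regularizer coefficient, to obtain
\begin{equation}
r_k(\qstarnext)-r_k(\qstark) = a_k\KL(\qstarnext\|\qstark), \qquad r_{k+1}(\qstark)-r_{k+1}(\qstarnext) = a_{k+1}\KL(\qstark\|\qstarnext).
\end{equation}
Next, I would combine these with the one-step telescoping identity
\begin{equation}
r_{k+1}(q)-r_k(q) = k\int\dFdq(\qk)\,\mathrm{d}(q-\qstark) + (\beta k+\beta')\KL(q\|\pref),
\end{equation}
evaluated at $q=\qstark$ (where the $\mathrm{d}(q-\qstark)$ integral vanishes) and at $q=\qstarnext$ (where it produces exactly the quantity we wish to bound). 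Subtracting the two resulting equations cancels $r_k(\qstark)$ and $r_{k+1}(\qstarnext)$ using the two identities above, and after dividing through by $k$ I obtain
\begin{equation}
\int\dFdq(\qk)\,\mathrm{d}(\qstarnext-\qstark) = \tfrac{\beta k+\beta'}{k}\bigl(\KL(\qstark\|\pref)-\KL(\qstarnext\|\pref)\bigr) - \tfrac{a_{k+1}}{k}\KL(\qstark\|\qstarnext) - \tfrac{a_k}{k}\KL(\qstarnext\|\qstark).
\end{equation}
A direct simplification yields $a_{k+1}/k = \frac{\beta(k+1)+2\beta'(1+1/k)}{2}$ and $a_k/k = \frac{\beta(k-1)+2\beta'}{2}$, matching the coefficients claimed in the statement.

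\textbf{Main obstacle.} The procedure above produces an exact equality; the $B_F\epsilon_\TV$ slack on the right-hand side is then introduced to absorb the discrepancy between the ideal minimizers $(\qstark,\qstarnext)$ and the implemented iterates $(\qk,\qknext)$. Under $\TV(\qstark,\qk)\le\epsilon_\TV/2$ and $\|\dFdq\|_\infty\le B_F$, every substitution of $\qstark$ or $\qstarnext$ by $\qk$ or $\qknext$ inside an integral against a function bounded by $B_F$ costs at most $B_F\cdot\epsilon_\TV/2$, and summing two such substitutions gives the $B_F\epsilon_\TV$ term. The delicate point will be bookkeeping the \emph{asymmetric} prefactors of $\KL(\qstark\|\qstarnext)$ versus $\KL(\qstarnext\|\qstark)$: the two differ because $a_{k+1}$ and $a_k$ enter the telescoping identity in opposite positions, and a mismatched grouping of the $\beta\tfrac{k(k\pm 1)}{2}$ contributions would silently corrupt the $k$-dependence and break the downstream telescoping sum that powers Theorem~\ref{thm:NonconvexConv}.
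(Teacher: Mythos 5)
Your proof matches the paper's argument exactly: introduce the surrogate $r_{k+1}$ with KL-coefficient $a_{k+1}=\frac{\beta k(k+1)}{2}+\beta'(k+1)$, apply the exact ``strong-convexity-in-KL'' identity from Lemma~\ref{lem-da-nonconvex-nitanda}(ii) at steps $k$ and $k+1$ (i.e., $r_k(q)-r_k(\qstark)=a_k\KL(q\|\qstark)$ and its analogue for $r_{k+1}$), combine with the one-step telescoping $r_{k+1}(q)-r_k(q)=k\int\dFdq(\qk)\,\mathrm{d}(q-\qstark)+(\beta k+\beta')\KL(q\|\pref)$, and divide by $k$; your coefficient algebra $a_k/k=\tfrac{\beta(k-1)+2\beta'}{2}$ and $a_{k+1}/k=\tfrac{\beta(k+1)+2\beta'(1+1/k)}{2}$ is exactly what the paper produces. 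One small clarification: since $r_{k+1}$ is already built from the implemented iterates $\qj$ (only its \emph{minimizer} is the idealized $\qstarnext$), the resulting identity is already expressed in terms of $\dFdq(\qk)$ with no substitution of $\qstark$ by $\qk$ required anywhere, so the $B_F\epsilon_\TV$ term is simply nonnegative slack rather than a correction needed to absorb a discrepancy — this is also (tacitly) how it enters the paper's own proof, which invokes $|\dFdq|\le B_F$ without actually using it.
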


\begin{proof}
    Please refer to Appendix~\ref{sec:da-nonconvex-proof-1-dash}. The proof is almost identical to the proof of Lamma~\ref{lem-da-nonconvex-1}.
\end{proof}

We mentioned that our rough intermediate goal was to show that $L(q)$ monotonically decreases in each $k$th iteration:
\begin{equation}
    L(\qstarnext) - L(\qstark) \lesssim 0.
\end{equation}
Rigorously, we will show that
\begin{equation}
    \tilde{L}_k(q) \coloneq L(q) + \frac{\beta'}{k}\KL(q\|\pref)
\end{equation}
decreases. This is an objective function with additional regularization imposed by the hyperparameter $\beta'$. 
From Lemma~\ref{lem-da-nonconvex-1}, we prove that $\tilde{L}_k(\qstark)$ decreases as in the following lemma. 
\begin{lem}\label{lem:LdiffInduction}
    Assume that $\TV(\qstark, \qk) \leq \epsilon_\TV/2$ for all $k$ and $2\beta + S_F \leq 2 \beta'$. Then, Option 2 satisfies
    \begin{align}
        \tilde{L}_{k+1}(\qstarnext) - \tilde{L}_k(\qstark)
         \leq 
         & (\LipTV +B_F)\epsilon_\TV    
         - \frac{\beta' (k+1)}{k} \KL(\qstark \|\qstarnext),
        \label{eq-da-nonconvex-monotone}
    \end{align}
    and Option 1 satisfies 
    \begin{align}
        \tilde{L}_{k+1}(\qstarnext) - \tilde{L}_k(\qstark)
         \leq 
         & (\LipTV +B_F)\epsilon_\TV    
         - \frac{\beta k(k+1)  +2 \beta' (k+1)}{2k} \KL(\qstark \|\qstarnext),
        \label{eq-da-nonconvex-monotone}
    \end{align}
\end{lem}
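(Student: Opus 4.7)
The plan is to reduce the bound on $\tilde{L}_{k+1}(\qstarnext) - \tilde{L}_k(\qstark)$ to the bound on $\int \dLdq(\qk)\,d(\qstarnext - \qstark)$ already supplied by Lemma~\ref{lem-da-nonconvex-1} for Option 2 (and the analogous Lemma~\ref{lem-da-nonconvex-1-dash} for Option 1). First I expand
$\tilde{L}_{k+1}(\qstarnext) - \tilde{L}_k(\qstark) = [L(\qstarnext) - L(\qstark)] + \frac{\beta'}{k+1}\KL(\qstarnext\|\pref) - \frac{\beta'}{k}\KL(\qstark\|\pref)$.
To control $L(\qstarnext) - L(\qstark)$, I apply property (ii)' (the KL-smoothness of $F$ obtained from Assumption~\ref{ass:NonconvexF}(ii) via Pinsker) to get $F(\qstarnext) - F(\qstark) \leq \int \dFdq(\qstark)\,d(\qstarnext - \qstark) + \tfrac{S_F}{2}\KL(\qstarnext\|\qstark)$, and combine it with Lemma~\ref{lem-da-nonconvex-nitanda}(i), which gives the identity $\KL(\qstarnext\|\pref) - \KL(\qstark\|\pref) = \int \log(\qstark/\pref)\,d(\qstarnext - \qstark) + \KL(\qstarnext\|\qstark)$. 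Adding these yields $L(\qstarnext) - L(\qstark) \leq \int \dLdq(\qstark)\,d(\qstarnext - \qstark) + (\beta + S_F/2)\KL(\qstarnext\|\qstark)$.

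Next I swap the base point $\qstark$ for $\qk$ inside $\dLdq$. The $\dFdq$ contribution costs at most $\LipTV \epsilon_\TV$ by the $\LipTV$-Lipschitz continuity of $\dFdq$ in TV-distance, since $\TV(\qstarnext,\qstark) \leq 1$. The $\beta\log(\qstark/\qk)$ contribution is handled by the identity $\int \log(\qstark/\qk)\,d(\qstarnext - \qstark) = -\KL(\qstarnext\|\qstark) + \KL(\qstarnext\|\qk) - \KL(\qstark\|\qk)$, which converts the problematic logarithm into KL-divergences that combine cleanly with the telescope produced by Lemma~\ref{lem-da-nonconvex-1}. Substituting the bound of that lemma, the $\KL(\cdot\|\pref)$ terms absorb the regularization differences $\frac{\beta'}{k+1}\KL(\qstarnext\|\pref) - \frac{\beta'}{k}\KL(\qstark\|\pref)$, while the $-\frac{\beta'(k+1)}{k}\KL(\qstark\|\qstarnext)$ term survives. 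The coefficient of $\KL(\qstarnext\|\qstark)$ becomes $(\beta + S_F/2) - \beta'$, which is non-positive precisely under the assumption $2\beta + S_F \leq 2\beta'$, so that term is dropped. This yields the claimed bound for Option 2; for Option 1, Lemma~\ref{lem-da-nonconvex-1-dash} plays the analogous role, and its $k$-dependent (rather than constant) weighting propagates through the same telescoping to produce the coefficient $\frac{\beta k(k+1) + 2\beta'(k+1)}{2k}$ in front of $\KL(\qstark\|\qstarnext)$.

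The main obstacle is the swap $\dLdq(\qstark) \to \dLdq(\qk)$: the difference $\beta\log(\qstark/\qk)$ is not a priori bounded in $L^\infty$ under the mere TV-closeness $\TV(\qstark,\qk) \leq \epsilon_\TV/2$, so the naive estimate $\|\log(\qstark/\qk)\|_\infty \cdot 2\TV(\qstarnext,\qstark)$ is unavailable. Re-expressing this integral as three KL-divergences via Lemma~\ref{lem-da-nonconvex-nitanda}(i) bypasses this difficulty, but introduces a spurious $+\beta\KL(\qstarnext\|\qk)$ that must either cancel against a matching term inside the proof of Lemma~\ref{lem-da-nonconvex-1} or be absorbed by the margin $\beta' > \beta$. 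Tracking this cancellation carefully, together with the $B_F\epsilon_\TV$ supplied by Lemma~\ref{lem-da-nonconvex-1} and the $\LipTV \epsilon_\TV$ from the $\dFdq$ swap, produces the final error term $(\LipTV + B_F)\epsilon_\TV$ appearing in the statement.
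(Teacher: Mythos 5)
Your proposal follows the same skeleton as the paper's proof --- apply the KL--smoothness (ii)$'$ of $F$, invoke Lemma~\ref{lem-da-nonconvex-nitanda}(i) for the $\KL$ difference, feed the resulting inner product into Lemma~\ref{lem-da-nonconvex-1} (or \ref{lem-da-nonconvex-1-dash} for Option~1), and finally use $2\beta + S_F \leq 2\beta'$ to kill the $\KL(\qstarnext\|\qstark)$ coefficient. The divergence, and the gap, is in the swap step, precisely where you flag the ``main obstacle.''

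The paper never swaps the logarithm. After the smoothness and $\KL$--identity steps, the linear term is
$\int \dFdq(\qstark)\,\mathrm{d}(\qstarnext-\qstark) - \beta\int \gbarkprev\,\mathrm{d}(\qstarnext-\qstark)$,
using $\log(\qstark/\pref) = -\gbarkprev + \text{const}$. Only $\dFdq(\qstark)\to\dFdq(\qk)$ is swapped, at cost $\LipTV\epsilon_\TV$, leaving $\int(\dFdq(\qk)-\beta\gbarkprev)\,\mathrm{d}(\qstarnext-\qstark)$. That mixed object --- $\dFdq$ at the implemented $\qk$ together with the \emph{exact} dual potential $\gbarkprev$ of $\qstark$ --- is exactly what Lemma~\ref{lem-da-nonconvex-1} bounds, because it is exactly the weighted increment appearing in the proximal objective $r_{k+1}$ whose minimizer is $\qstarnext$. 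The lemma's notation $\dLdq(\qk)$ is shorthand for that mixed quantity, not for the literal $\dFdq(\qk)+\beta\log(\qk/\pref)+\beta$.

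Reading $\dLdq(\qk)$ literally, as you do, forces the extra swap $\log(\qstark/\pref)\to\log(\qk/\pref)$. Your identity
$\int\log(\qstark/\qk)\,\mathrm{d}(\qstarnext-\qstark) = -\KL(\qstarnext\|\qstark) + \KL(\qstarnext\|\qk) - \KL(\qstark\|\qk)$
is correct, but it leaves a $+\beta\KL(\qstarnext\|\qk)$ term with nowhere to go: it is not dominated by the TV--closeness $\TV(\qstark,\qk)\leq\epsilon_\TV/2$ (KL is not controlled by TV; $\qk$ could have much thinner tails than $\qstarnext$), and there is nothing on the RHS of Lemma~\ref{lem-da-nonconvex-1} for it to cancel against (its $\KL$ terms involve only $\pref,\qstark,\qstarnext$, never $\qk$), and the $\beta'>\beta$ margin is spent on $\KL(\qstarnext\|\qstark)$, a different KL divergence. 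So the hoped--for cancellation in your last paragraph does not occur, and the proposal as written does not close. The fix is to recognize that the logarithm should not be swapped at all --- once you keep $-\beta\gbarkprev$ in place and swap only $\dFdq$, the argument goes through with exactly the $(\LipTV+B_F)\epsilon_\TV$ error budget the statement asks for.
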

By this lemma, it can be shown that the sum of the KL divergences $\KL(\qstark \|\qstarnext)$ converges to 0 at a rate of $\mathcal{O}(1/K)$ through a telescoping sum:
we obtain that
\begin{equation}
    \frac{1}{K}\sum_{k=1}^K (\text{weight})_k \KL(\qstark\|\qstarnext) = \mathcal{O}\left(\frac{1}{K}\right).
\end{equation}
From the above equation, it is immediately shown that
\begin{equation}
    \min_{k=1,...,K} (\text{weight})_k \KL(\qstark\|\qstarnext) = \mathcal{O}\left(\frac{1}{K}\right),
\end{equation}
where $(\text{weight})_k \simeq k$ in Option 1, while $(\text{weight})_k \simeq 1$ in Option 2. In Option 2, we will be able to show that $\min_k \KL(\qstark\|\qstarnext) \to 0$, which implies that $\qstark$ converges to some point (in fact, this is the stationary point).

Now let us prove Lemma~\ref{lem:LdiffInduction}:
\begin{proof}
    We only prove the inequality for Option 2.
    By the smoothness of $F$ (the weaker smoothness (ii)' in Eq.(\ref{eq:appendix-da-nonconvex-weak-smoothness})), the Lipschitz continuity of $\dFdq$ and the property of the KL-divergence (Lemma \ref{lem-da-nonconvex-nitanda}), we have 
    \begin{align}
        L(\qstarnext) - L(\qstark)
        \leq& \int \dFdq(\qstark)\mathrm{d}(\qstarnext - \qstark) + \frac{S_F}{2}\KL(\qstarnext\|\qstark)\\
        & + \beta \KL(\qstarnext \| \qstark) + \int (- \beta  \gbarkprev) \mathrm{d} (\qstarnext - \qstark) \\
        \leq & \int \left( \dFdq(\qk) - \beta \gbarkprev \right)\mathrm{d}(\qstarnext - \qstark) \\
        & + \frac{S_F}{2}\KL(\qstarnext\|\qstark) + \beta \KL(\qstarnext \| \qstark) + 
        \LipTV \epsilon_\TV. \label{eq-da-nonconvex-monotone0}
    \end{align}
    We have used the induced smoothness of $F$ (Eq.(\ref{eq:appendix-da-nonconvex-weak-smoothness})) in the first inequality, and used the Lipschitz continuity in the second inequality.
    By Lemma~\ref{lem-da-nonconvex-1}, we can bound $\int \left( \dFdq(\qk) - \beta \gbarkprev \right)\mathrm{d}(\qstarnext - \qstark) = \int \dLdq(\qstark) \mathrm{d}(\qstarnext - \qstark)$.
    Thus the RHS of \Eqref{eq-da-nonconvex-monotone0} can be bounded as 
    \begin{align}
        (\text{RHS})
        \leq& \frac{\beta'}{k}\left(\KL(\qstark\|\pref)-\KL(\qstarnext\|\pref)
        -k\KL(\qstarnext\|\qstark)
        - (k+1)\KL(\qstark \| \qstarnext)
        \right)\\
        &+ \frac{2\beta + S_F}{2}\KL(\qstarnext\|\qstark)+ (\LipTV +B_F) \epsilon_\TV \\
        \leq& \frac{\beta'}{k}\left(\KL(\qstark\|\pref)-\KL(\qstarnext\|\pref) - (k+1)\KL(\qstark \| \qstarnext) \right) \\
        &+ \frac{(2\beta + S_F - 2 \beta')}{2}\KL(\qstarnext\|\qstark)+ (\LipTV +B_F) \epsilon_\TV,
    \end{align}
    which gives the assertion in Option 2 by noting  $\frac{\beta'}{k}\KL(\qstarnext\|\pref) \geq \frac{\beta'}{k+1}\KL(\qstarnext\|\pref)$. As for Option 1, we repeat the same argument to show the desired result.
\end{proof}

Combining Lemma \ref{lem-da-nonconvex-1} and Lemma \ref{lem:LdiffInduction}, 
we will prove that
\begin{equation}
    \frac{1}{K}\sum_{k=1}^K (\text{weight})_k \KL(\qstark\|\qstarnext) = \mathcal{O}\left(\frac{1}{K}\right)
\end{equation}
where $(\text{weight})_k \simeq k$ in Option 1 and $(\text{weight})_k \simeq 1$ in Option 2. 
When we take Option 2, it holds that $\min_k \KL(\qstark\|\qstarnext) = \mathcal{O}(1/K)$.
It is important that $\KL(\qstark\|\qstarnext)$ is also the approximate ``moment generation function" $\psi_q(g) = \log (\E_q[\exp(-g+\E_q[g]])$ of $\frac{\delta \tilde{L}_k(\qstark)}{\delta q}$; when $k \to \infty$,
\begin{equation}
    \min_{k=1,..,K}\left(\text{``Variance" of }\frac{\delta \tilde{L}_k}{\delta q} \right)(\qstark,x) \simeq \min_{k=1,..,K}\KL(\qstark\|\qstarnext) = \mathcal{O}\left(\frac{1}{K}\right).
\end{equation}
This can be interpreted as $\frac{\delta \tilde{L}_k}{\delta q}(\qstark,x) \to 0\; \text{(up to constant w.r.t. $x$)}$.

This result is consistent with the findings for standard, non-distributional Dual Averaging~\citep{LIU2023nonconvexDA}, where $\min_{k=1,...,K}\|\text{(gradient of the objective)}_k\|^2 = \mathcal{O}(1/K)$ as the variance corresponds to the second moment..

We rigorously formulate the above discussion as the following proposition. 
\begin{prop}
Let $$\Psi_K := \frac{1}{K\beta'} (\tilde{L}_1(\hat{q}^{(1)})- L^*) 
+ \frac{(\LipTV +B_F)}{K\beta'} \sum_{k=1}^K \epsilon_\TV. $$
Then, Algorithm 1 satisfies  
    \begin{align}
        & \frac{1}{K} \sum_{k=1}^K \frac{\beta k + 2 \beta' }{2} \KL(\qstark \| \qstarnext)  \leq \Psi_K 
    \end{align}
and Algorithm 2 satisfies  
    \begin{align}
        & \frac{1}{K} \sum_{k=1}^K \KL(\qstark \| \qstarnext)  \leq     
        \Psi_K.
    \end{align}
This also yields that the following bound holds for Algorithm 2:  
    \begin{align}
        & \min_{1 \leq k \leq K} \psi_{\qstark}\left(\frac{k}{\beta' (k+1)}\frac{\delta \tilde{L}_k}{\delta q}(\qstark)  \right)  \leq  
        \E_{k \sim \mathrm{Univ}([K])} \left[ \psi_{\qstark}\left(\frac{k}{\beta' (k+1)}\frac{\delta \tilde{L}_k}{\delta q}(\qstark)  \right)   \right]  \\
        & \leq  \Psi_K,
    \end{align}
    where the expectation in the middle term is taken over a random index $k$ uniformly chosen from $[K] =\{1,\dots,K\}$. 
\end{prop}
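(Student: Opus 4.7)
My plan is to telescope the one-step descent inequalities of Lemma~\ref{lem:LdiffInduction} to obtain the two $\KL$-averaged bounds, and then to establish the corollary by identifying $\KL(\qstark \| \qstarnext)$ with the moment-generating functional $\psi_{\qstark}$ evaluated at a suitably rescaled functional derivative.

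For the first two inequalities, I would sum the one-step bound in Lemma~\ref{lem:LdiffInduction} over $k = 1, \dots, K$. The LHS telescopes to $\tilde{L}_{K+1}(\hat{q}^{(K+1)}) - \tilde{L}_1(\hat{q}^{(1)})$, which is $\geq L^* - \tilde{L}_1(\hat{q}^{(1)})$ since $\tilde{L}_{K+1} \geq L \geq L^*$. For Option~2 the per-step coefficient $\beta'(k+1)/k$ dominates $\beta'$, yielding $\beta' \sum_{k=1}^K \KL(\qstark \| \qstarnext) \leq \tilde{L}_1(\hat{q}^{(1)}) - L^* + (\LipTV + B_F) \sum_{k=1}^K \epsilon_\TV$; dividing by $K\beta'$ recovers the Option~2 bound. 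For Option~1 the coefficient is $\frac{(k+1)(\beta k + 2 \beta')}{2k} \geq \frac{\beta k + 2 \beta'}{2}$, and the same telescoping argument gives the Option~1 bound after dividing by $K$.

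For the corollary, the key computation will be to derive a closed-form expression for the log-ratio $\log(\qstark / \qstarnext)$. Using the Gibbs representation $\qstarnext \propto \exp\bigl(-\tfrac{1}{\beta'(k+1)}\sum_{j=1}^{k} j\, \dLdq(\qj)\bigr) \pref$ and $\qstark \propto \exp\bigl(-\tfrac{1}{\beta' k}\sum_{j=1}^{k-1} j\, \dLdq(\qj)\bigr) \pref$ that characterize the Option~2 minimizers (treating the inner-loop error aside), I would take the log-ratio and substitute $\sum_{j=1}^{k-1} j\, \dLdq(\qj) = -\beta' k \bigl[\log(\qstark/\pref) + \mathrm{const}\bigr]$ from the $\qstark$ formula. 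A direct calculation, combined with the identity $\tfrac{\delta \tilde{L}_k}{\delta q}(q) = \dLdq(q) + \tfrac{\beta'}{k}\log(q/\pref)$ up to a constant, then gives $\log(\qstark / \qstarnext) = g_k + c_k$ where $g_k := \tfrac{k}{\beta'(k+1)} \tfrac{\delta \tilde{L}_k}{\delta q}(\qstark)$. The normalization $\int \qstarnext \, dx = 1$ pins down $c_k = \log \E_{\qstark}[\exp(-g_k)]$, and hence $\KL(\qstark \| \qstarnext) = \E_{\qstark}[g_k] + \log \E_{\qstark}[\exp(-g_k)] = \psi_{\qstark}(g_k)$. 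The corollary then follows by combining this identity with the Option~2 bound and $\min_k \leq \text{average}$.

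The main obstacle will be the identity $\KL(\qstark \| \qstarnext) = \psi_{\qstark}(g_k)$: it requires careful bookkeeping of normalization constants and the elimination of the cumulative sum $\sum_{j=1}^{k-1} j\, \dLdq(\qj)$ via the Gibbs form of $\qstark$. Strictly, this identity is exact only in the idealized case $\qk = \qstark$; the inexact case with $\TV(\qstark, \qk) \leq \epsilon_\TV/2$ induces only an $O(B_F \epsilon_\TV)$ perturbation in $\dLdq(\qk) - \dLdq(\qstark)$, which is already absorbed into the $(\LipTV + B_F) \sum_k \epsilon_\TV$ term of $\Psi_K$ and therefore does not affect the final conclusion.
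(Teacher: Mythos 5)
Your proposal is correct and follows essentially the same route as the paper: telescope the one-step descent inequality from Lemma~\ref{lem:LdiffInduction}, lower-bound $\tilde{L}_{K+1}(\hat{q}^{(K+1)})$ by $L^*$, drop the harmless factors $\tfrac{k+1}{k}\geq 1$, and then use the Gibbs closed form of $\qstark$ and $\qstarnext$ to identify $\KL(\qstark\|\qstarnext)$ with $\psi_{\qstark}$ of the rescaled derivative via $\log(\qstark/\qstarnext)=\gbark-\gbarkprev+\mathrm{const}$. You are in fact slightly more careful than the paper in spelling out the normalization bookkeeping and in flagging that the identity is exact only when $\qk=\qstark$ (the paper's line ``$\gbark-\gbarkprev=\tfrac{k}{\beta'(k+1)}\dLdq(\qstark)-\dots$'' silently replaces $\dLdq(\qk)$ by $\dLdq(\qstark)$), but these are cosmetic refinements of the same argument.
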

\begin{proof}
Summing up \eqref{eq-da-nonconvex-monotone} for $k=1,\dots,K$, we have that 
\begin{align}
\tilde{L}_{K+1}(\hat{q}^{(K+1)}) - \tilde{L}_1(\hat{q}^{(1)}) \leq &   (\LipTV +B_F) \sum_{k=1}^K \epsilon_\TV 
- \beta'  \sum_{k=1}^{K}\frac{k+1}{k}\KL(\qstark \| \qstarnext). 
\end{align}
This yields that 
\begin{align}
&  
\frac{\beta'}{K} \sum_{k=1}^{K} 
\frac{k+1}{k}\KL(\qstark \| \qstarnext)   
+ 
\frac{1}{K}( \tilde{L}_{K+1}(\hat{q}^{(K+1)}) - L^*)\\
 \leq  &
\frac{1}{K} (\tilde{L}_1(\hat{q}^{(1)})- L^*) 
+ \frac{(\LipTV +B_F)}{K} \sum_{k=1}^K \epsilon_\TV. \label{eq-da-nonconvex-bound-kl-sum}
\end{align}
Please note that 
\begin{equation}
\frac{\beta'}{K} \sum_{k=1}^{K} 
\KL(\qstark \| \qstarnext)   
\leq
\frac{\beta'}{K} \sum_{k=1}^{K} 
\frac{k+1}{k}\KL(\qstark \| \qstarnext)  
+
\frac{1}{K}( \tilde{L}_{K+1}(\hat{q}^{(K+1)}) - L^*)
\end{equation}
by the optimality of $L^*$ and that the RHS in (\ref{eq-da-nonconvex-bound-kl-sum}) is $\mathcal{O}(1/K)$.
Here, we see that 
\begin{align}
&  \KL(\qstark \| \qstarnext)\\
= & 
\E_{\qstark}[ - \gbarkprev + \gbark ] -  \log(\E_{\pref}[\exp(-\gbarkprev)]) + \log(\E_{\pref}[\exp(-\gbark)])  \\
= & 
 \E_{\qstark}[ - \gbarkprev + \gbark ]  +  \log(\E_{\qstark}[\exp(-\gbark + \gbarkprev)]) \\  
=&  \log\left\{\E_{\qstark}\left[\exp\left(-\gbark + \gbarkprev - \E_{\qstark}\left[ - \gbark + \gbarkprev \right]\right)\right]\right\}, 
\end{align}
Now, the term $\gbark - \gbarkprev$ can be evaluated as 
$$\gbark - \gbarkprev = \frac{k}{\beta' (k+1)} \dLdq(\qstark) - \frac{1}{k+1}  \gbarkprev
 =  \frac{k}{\beta' (k+1)} \frac{\delta \tilde{L}_k}{\delta q}(\qstark),
$$
which yields the assertion because 
\begin{align}
    &\KL(\qstark \| \qstarnext)\\
    =& \log\left\{\E_{\qstark}\left[\exp\left( \frac{k}{\beta' (k+1)} \frac{\delta \tilde{L}_k}{\delta q}(\qstark) -\E_{\qstark}\left[\frac{k}{\beta' (k+1)} \frac{\delta \tilde{L}_k}{\delta q}(\qstark)\right]\right)\right]\right\},
\end{align}
which is the ``moment generating function" $\psi_{\qstark}$ of $\frac{k}{\beta' (k+1)} \frac{\delta \tilde{L}_k}{\delta q}(\qstark)$.
\end{proof}

\subsubsection{Proof of Lemma~\ref{lem-da-nonconvex-1-dash}}\label{sec:da-nonconvex-proof-1-dash}

\begin{proof}
    The proof is almost identical to Lemma \ref{lem-da-nonconvex-1}. 
    In Algorithm 1, $\qstarnext$ is defined as the minimizer of the following quantity: 
    \begin{equation}
        r_{k+1}(q) \coloneq \sum_{j=1}^k j \left( \int   \dFdq(\qj)\mathrm{d}(q-\qstarj)  + \beta \KL(q \| \pref) \right)
        + \beta' (k+1) \KL(q\|\pref).
    \end{equation}
    Hence, we have 
    \begin{equation}
        r_{k+1}(q) = r_k(q) + \int k \dFdq(\qk) \mathrm{d}(q-\qstark)
        + \left(\beta k  + \beta' \right) \KL(q\|\pref) \label{eq:lem-da-nonconvex-proof_opt1}
    \end{equation}
    by the definition of $r_k$. Moreover, Lemma~\ref{lem-da-nonconvex-nitanda} and the optimality of $\qstark$ gives that  
    \begin{equation}\label{eq:rkupdateDiff2}
        r_k(q) - r_k(\qstark) =  \left( \frac{\beta k(k-1)}{2} + \beta' k \right) \KL(q\|\qstark), \quad \text{for all $q\in\gP$.}
    \end{equation}
    Substituting $q \leftarrow \qstarnext$, we obtain
    \begin{align}
        0 \leq& r_k(\qstarnext) - r_k(\qstark) -  \left( \frac{\beta k(k-1)}{2} + \beta' k \right) \KL(\qstarnext\|\qstark) \\
        =& r_{k+1}(\qstarnext) -  \int k \dFdq(\qk) \mathrm{d}(\qstarnext-\qstark)
        -( \beta k + \beta') \KL(\qstarnext\|\pref) \quad (\text{used (\ref{eq:lem-da-nonconvex-proof_opt1})})\\
        &-r_k(\qstark) -  \left( \frac{\beta k(k-1)}{2} + \beta' k \right) \KL(\qstarnext\|\qstark).\label{eq:rkinductionFirstSide2}
    \end{align}
    This is equivalent to 
    \begin{align}
        &\int k \dFdq(\qk) \mathrm{d}(\qstarnext-\qstark)\\
        \leq& r_{k+1}(\qstarnext) -r_k(\qstark) 
        -(\beta k + \beta') \KL(\qstarnext\|\pref)
        - \left( \frac{\beta k(k-1)}{2} + \beta' k \right) \KL(\qstarnext\|\qstark).
    \end{align}
    By using the same argument as \Eqref{eq:rkupdateDiff2} for $k \leftarrow k+1$, we have 
    \begin{align}
        & r_{k+1}(\qstarnext)
        + \left( \frac{\beta k(k+1)}{2} + \beta' (k+1) \right) \KL(\qstark\|\qstarnext)  \\
        =& r_{k+1}(\qstark)\\
        =&r_k(\qstark) + (\beta k + \beta') \KL(\qstark\|\pref).
    \end{align}
    Substituting this relation to \Eqref{eq:rkinductionFirstSide2} and noticing $|\dFdq|\leq B_F$, we obtain the assertion. 
\end{proof}

\section{Derivations of functional derivatives\label{section:appendix-functional-derivative}}

\subsection{Direct Preference Optimization}

We can apply the proposed algorithm to the (populational) objective of Direct Preference Optimization (DPO)~\citep{rafailov2023DPO}. 
DPO is an effective approach for learning from human preference for not only language models but also diffusion models.

\paragraph{Original DPO objective}
Let $x_w, x_l$ be ``winning" and ``losing" outputs independently sampled from the reference model $\pref$. The event $\lbrace x_w \succ x_l \rbrace$ is determined by the human preference. The original DPO objective is formulated as
\begin{equation}
    L_{\mathrm{DPO,original}}(q) = -\E_{x_w,x_l}
    \left[
        \log \sigma \left(\gamma \log \frac{q(x_w)}{\pref(x_w)}- \gamma \log \frac{q(x_l)}{\pref(x_l)}\right)
    \right],
\end{equation}
where $\gamma > 0$ is a hyperparameter.  The expectation is taken by $x_w, x_l$, that are ``winning" and ``losing" samples from $\pref$.
~\cite{Wallace2024DiffusionDPO} derived a new objective that is the upper bound of $ L_{\mathrm{DPO,original}}(q)$, but it is a specialized derivation of the optimization method for DPO.

\paragraph{Reformulating of the DPO Objective}
The goal is to directly minimize $L_{\mathrm{DPO,original}}(q)$, however, in  the above expression, we cannot apply DPO to diffusion models directly because the expectaion with tuple $(x_w,x_l)$ is not formulated well. We start with another expression of the objective of DPO:
\begin{equation}
    L_{\mathrm{DPO}}(q) \coloneq - \E_{x_w\sim p_{\mathrm{ref}}}\E_{x_l\sim p_{\mathrm{ref}}}
    \left[ 
         \log \sigma \left(\gamma \log \frac{q(x_w)}{\pref(x_w)}- \gamma \log \frac{q(x_l)}{\pref(x_l)}\right)\mathbbm{1}_{x_w \succ x_l}(x_w,x_l)
    \right],
\end{equation}
$\mathbbm{1}_{x \succ y}(x,y)$ is 1 if $x \succ y$, 0 otherwise.
This $L_\mathrm{DPO}$ is in the regime of our algorithm and the functional derivative can be derived:
\begin{prop}
    The functional derivative of $L_\mathrm{DPO}(q)$ is calculated as
    \begin{align}
    &\frac{\delta L_\mathrm{DPO}}{\delta q}(q,x) \\
    =&
    -\gamma\E_{x_l\sim p_{\mathrm{ref}}}
    \left[
        \left(
            1- \sigma
            \left(
               - \gamma f(x) + \gamma f(x_l)
            \right)
        \right)
        \frac{\int e^{-f} dp_\mathrm{ref}}{e^{-f(x)}}
        \mathbbm{1}_{x \succ x_l}(x,x_l)
    \right]\\
    &+     \gamma\E_{x_w\sim p_{\mathrm{ref}}}
    \left[
        \left(
            1- \sigma
            \left(
                - \gamma f(x_w)
                + \gamma f(x)
            \right)
        \right)
        \frac{\int e^{-f} dp_\mathrm{ref}}{e^{-f(x)}}
        \mathbbm{1}_{x_w \succ x}(x_w,x)
    \right],
    \end{align}
    where $q = e^{-f}\pref / \int e^{-f}\mathrm{d}\pref$. This functional derivative is tractable in our settings.
\end{prop}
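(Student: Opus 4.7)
The plan is to compute the Gâteaux derivative of $L_{\mathrm{DPO}}$ directly, identify the integrand against a test perturbation, and then rewrite the result in terms of the potential $f$ via the substitution $q = e^{-f}\pref/\int e^{-f}\mathrm{d}\pref$. First I would rewrite the loss as the double integral
\begin{equation}
L_{\mathrm{DPO}}(q)= -\int\!\!\int \log\sigma\!\left(\gamma\log\tfrac{q(x_w)}{\pref(x_w)}-\gamma\log\tfrac{q(x_l)}{\pref(x_l)}\right)\mathbbm{1}_{x_w\succ x_l}\,\pref(x_w)\pref(x_l)\,\mathrm{d}x_w\mathrm{d}x_l,
\end{equation}
and observe that under $q=e^{-f}\pref/Z$ with $Z=\int e^{-f}\mathrm{d}\pref$, the log-density ratio becomes $\log(q/\pref)(x)=-f(x)-\log Z$, so the argument of $\sigma$ simplifies to $-\gamma f(x_w)+\gamma f(x_l)$ since the normalizing constants cancel.

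Next I would perturb $q\leadsto q+\epsilon\eta$ with $\int\eta=0$ and differentiate under the integral sign. Using $\tfrac{d}{d\epsilon}\log\sigma(z)=(1-\sigma(z))\,\tfrac{dz}{d\epsilon}$ together with $\tfrac{d}{d\epsilon}\log(q+\epsilon\eta)(x)\big|_{\epsilon=0}=\eta(x)/q(x)$, I would obtain
\begin{align}
\left.\frac{\mathrm{d}L_{\mathrm{DPO}}(q+\epsilon\eta)}{\mathrm{d}\epsilon}\right|_{\epsilon=0}
=&-\gamma\!\int\!\!\int (1-\sigma(-\gamma f(x_w)+\gamma f(x_l)))\tfrac{\eta(x_w)}{q(x_w)}\mathbbm{1}_{x_w\succ x_l}\pref(x_w)\pref(x_l)\mathrm{d}x_w\mathrm{d}x_l\\
&+\gamma\!\int\!\!\int (1-\sigma(-\gamma f(x_w)+\gamma f(x_l)))\tfrac{\eta(x_l)}{q(x_l)}\mathbbm{1}_{x_w\succ x_l}\pref(x_w)\pref(x_l)\mathrm{d}x_w\mathrm{d}x_l.
\end{align}

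Finally I would relabel the dummy variable $x_w\to x$ in the first integral and $x_l\to x$ in the second, factor out $\eta(x)\,\mathrm{d}x$, and apply the identity $\pref(x)/q(x)=Z e^{f(x)}=\bigl(\int e^{-f}\mathrm{d}\pref\bigr)/e^{-f(x)}$ to convert the remaining $\pref/q$ into the claimed form. Reading off the coefficient of $\eta(x)$ against the first-order-variation definition yields exactly the stated expression for $\delta L_{\mathrm{DPO}}/\delta q(q,x)$. The only mildly technical point, and the main thing to be careful about, is the interchange of differentiation and integration and the fact that the functional derivative is only defined up to an additive constant in $x$ because the admissible perturbations satisfy $\int\eta=0$; this is harmless since the additive constant $Z=\int e^{-f}\mathrm{d}\pref$ absorbed in the ratio $\pref/q$ is exactly such a constant and appears identically in both the $x_l$- and $x_w$-integrals, so the claimed expression is a legitimate representative of the derivative.
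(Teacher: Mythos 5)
Your proof is correct and follows essentially the same route as the paper: differentiate under the integral sign, apply the chain rule $\tfrac{d}{dz}\log\sigma(z)=1-\sigma(z)$ together with $\tfrac{d}{d\epsilon}\log(q+\epsilon\eta)|_{\epsilon=0}=\eta/q$, relabel dummy variables so each term is a linear functional in $\eta(x)$, and substitute $\pref/q=\bigl(\int e^{-f}\mathrm{d}\pref\bigr)/e^{-f(x)}$. One small caveat: in your closing remark, $Z=\int e^{-f}\mathrm{d}\pref$ enters as a multiplicative factor in $\pref/q=Z\,e^{f(x)}$, not as an additive constant in $x$, so it is not an instance of the ``functional derivative defined up to a constant'' freedom; this has no bearing on the correctness of the derivation itself.
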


\begin{proof}
In this proof, we use the following notations:
\begin{itemize}
    \item $p_\mathrm{ref}: $ the output distribution of a pre-trained model,
    \item $q \coloneq e^{-f} p_\mathrm{ref} / \int e^{-f} dp_\mathrm{ref}$: the output distribution of an aligned model,
    \item $\mathrm{LSL}(q_1, q_2) \coloneq \log \sigma (\gamma \log q_1/p_\mathrm{ref} - \gamma \log q_2/p_\mathrm{ref})$,
    \item $\partial_1 \mathrm{LSL}(q_1, q_2) = \gamma (1 - \sigma(\gamma \log q_1/p_\mathrm{ref} - \gamma \log q_2/p_\mathrm{ref})) \frac{1}{q_1} $,
    \item $\partial_2 \mathrm{LSL}(q_1, q_2) = -(1 - \sigma(\gamma \log q_1/p_\mathrm{ref} - \gamma \log q_2/p_\mathrm{ref})) \frac{1}{q_2} $,
    \item $\psi(r_1, r_2) \coloneq \gamma (\log r_1 - \log r_2)$,
    \item $\mathrm{Inv}(f,x) = \frac{\int e^{-f} dp_\mathrm{ref}}{e^{-f(x)}}$.
\end{itemize}
The objective of DPO is written as
\begin{equation}
    L_\mathrm{DPO}(q) \coloneq - \mathrm{E}_{x_w\sim p_{\mathrm{ref}}}\mathrm{E}_{x_l\sim p_{\mathrm{ref}}}
    \left[ 
        \mathrm{LSL}(q(x_w),qp(x_l)) \mathbbm{1}_{x_w \succ x_l}(x_w,x_l)
    \right].
\end{equation}
We obtain the first variation of the objective as follows:
\begin{align}
    &L_\mathrm{DPO}(q + \epsilon(\tilde{q}-q))\\
    =&
    L_\mathrm{DPO}(q) - \epsilon
    \mathrm{E}_{x_w\sim p_{\mathrm{ref}}}\mathrm{E}_{x_l\sim p_{\mathrm{ref}}}
    \left[ 
        \left(
            \partial_1 \mathrm{LSL}(q(x_w), q(x_l)) (\tilde{q}-q)(x_w)
            \right.\right.\\
            &\left.\left.\quad\quad\quad+\partial_2 \mathrm{LSL}(q(x_w), q(x_l)) (\tilde{q}-q)(x_l) 
        \right)
        \mathbbm{1}_{x_w \succ x_l}(x_w,x_l)
    \right]+\mathcal{O}(\epsilon^2)\\
    =&
    L_\mathrm{DPO}(p)\\
    &-
    \epsilon
    \left[
        \mathrm{E}_{x_l\sim p_{\mathrm{ref}}}
        \left[ 
            \int \gamma (1 - \sigma(\psi(\frac{q(x_w)}{p_\mathrm{ref}(x_w)}, \frac{q(x_l)}{p_\mathrm{ref}(x_l)}) ))(\tilde{q}-q)(x_w) \mathrm{Inv}(f,x_w)dx_w
            \mathbbm{1}_{x_w \succ x_l}
        \right]\right.\\
    &\left.+
    \mathrm{E}_{x_w\sim p_{\mathrm{ref}}}
        \left[ 
            \int \gamma (1 - \sigma(\psi(\frac{q(x_w)}{p_\mathrm{ref}(x_w)}, \frac{q(x_l)}{p_\mathrm{ref}(x_l)}) ))(\tilde{q}-q)(x_l) \mathrm{Inv}(f,x_l)dx_l
            \mathbbm{1}_{x_w \succ x_l}
        \right]
    \right]\\
    &+\mathcal{O}(\epsilon^2).
\end{align}
Then, the first derivative of $F$ is
\begin{align}
    \frac{\delta L_\mathrm{DPO}}{\delta p}(p,x) =&-
    \E_{x_l\sim p_{\mathrm{ref}}}
    \left[
        \gamma
        \left(
            1- \sigma
            \left(
                \psi(\frac{q(x)}{\pref(x)},\frac{q(x_l)}{\pref(x_l)})
            \right)
        \right)
        \mathbbm{1}_{x \succ x_l}(x,x_l)
    \right]\mathrm{Inv}(f,x)\\
    &+ \E_{x_w\sim p_{\mathrm{ref}}}
    \left[
        \gamma
        \left(
            1- \sigma
            \left(
                \psi(\frac{q(x_w)}{\pref(x_w)},\frac{q(x)}{\pref(x)})
            \right)
        \right)
        \mathbbm{1}_{x_w \succ x}(x_w,x)
    \right]\mathrm{Inv}(f,x).
\end{align}
\end{proof}

\subsection{Kahneman-Tversky optimization}

Assume that the whole data space $\mathbb{R}^d$ is split into a desirable domain $\mathcal{D}_\mathrm{D}$ and an undesirable domain $\mathcal{D}_\mathrm{U}$.
The objective of original KTO~\citep{ethayarajh2024KTO} is formulated as
\begin{align}
    L_\mathrm{KTO}(q)=
    &\E_{x\sim \pref}
    \left[ 
    \gamma_D \left(1 - \sigma \left(\kappa\log \frac{q}{\pref} - \KL(q\|\pref)\right)
    \right)\mathbbm{1}_{\lbrace x \in \mathcal{D}_\mathrm{D}\rbrace}
    \right. \\
    &\quad \left.+
    \gamma_U \left(1 - \sigma \left(\KL(q\|\pref)-\kappa\log \frac{q}{\pref} \right)\right)\mathbbm{1}_{\lbrace x \in \mathcal{D}_\mathrm{U}\rbrace}
    \right],
\end{align}
where $\gamma_D, \; \gamma_U, \; \kappa$ are hyper parameters, and $\sigma$ is a sigmoid function.

\begin{prop}
    The functional derivative of $L_\mathrm{KTO}$ is calculated as
    \begin{align}
        &\frac{\delta L_\mathrm{KTO}}{\delta q}(q,x)\\
        =& 
        - \kappa \gamma_D \sigma_\mathrm{deriv}(\phi(x))\frac{\int e^{-f}\mathrm{d}\pref}{e^{-f(x)}}\mathbbm{1}_{\lbrace x \in \mathcal{D}_\mathrm{D}\rbrace}\\
        &+ (-f(x) - \log \int e^{-f(x)}d\pref)
        \E_{y\sim \pref}
        \left[ 
        \gamma_D \sigma_{\mathrm{deriv}}\left(\phi(y)\right)\mathbbm{1}_{\lbrace y \in \mathcal{D}_\mathrm{D}\rbrace}
        \right]\\
        &+ \kappa \gamma_U \sigma_\mathrm{deriv}(-\phi(x))\frac{\int e^{-f}\mathrm{d}\pref}{e^{-f(x)}}\mathbbm{1}_{\lbrace x \in \mathcal{D}_\mathrm{U}\rbrace}\\
        &- (-f(x) - \log \int e^{-f(x)}d\pref)
        \E_{y\sim \pref}
        \left[ 
        \gamma_U \sigma_{\mathrm{deriv}}\left(-\phi(y)\right)\mathbbm{1}_{\lbrace y \in \mathcal{D}_\mathrm{U}\rbrace}
        \right]
    \end{align}
    where $\sigma_\mathrm{deriv}(\cdot) \coloneq \sigma(\cdot) (1- \sigma(\cdot))$, 
    $\phi(x) \coloneq \kappa\log \frac{q(x)}{\pref(x)} - \KL(q\|\pref)$, $q = \frac{e^{-f}\pref}{\int e^{-f}d\pref}$. 
\end{prop}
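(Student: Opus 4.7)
The strategy is a direct computation of the first variation of $L_\mathrm{KTO}$ along a perturbation $q_\epsilon := q + \epsilon(\tilde q - q)$ and then rewriting the resulting linear functional in the form $\int \tfrac{\delta L_\mathrm{KTO}}{\delta q}(q,x)(\tilde q - q)(x)\,dx$ using the parametrization $q = e^{-f}\pref / Z(f)$ with $Z(f) = \int e^{-f} d\pref$. I would split $L_\mathrm{KTO} = A + B$, where $A$ collects the desirable term (with $\mathbbm{1}_{\mathcal{D}_\mathrm{D}}$) and $B$ collects the undesirable term (with $\mathbbm{1}_{\mathcal{D}_\mathrm{U}}$), and compute each functional derivative separately. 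Because the two terms differ only by $\gamma_D \leftrightarrow \gamma_U$, $\mathcal{D}_\mathrm{D} \leftrightarrow \mathcal{D}_\mathrm{U}$, and $\phi \leftrightarrow -\phi$ (the latter producing an overall sign flip through the chain rule on $\sigma$), only one of them needs to be carried out in detail.

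The key intermediate quantity is the first variation of $\phi(x) = \kappa\log\tfrac{q(x)}{\pref(x)} - \KL(q\|\pref)$. A short calculation gives
\begin{equation}
\left.\tfrac{d}{d\epsilon}\right|_{\epsilon=0}\phi_\epsilon(x) \;=\; \kappa\,\frac{(\tilde q - q)(x)}{q(x)} \;-\; \int \log\tfrac{q(y)}{\pref(y)}(\tilde q - q)(y)\,dy,
\end{equation}
where the term $\int(\tilde q - q)\,dy$ arising from differentiating $\int q \log(q/\pref)$ vanishes since $\tilde q,q\in\mathcal{P}$. Applying the chain rule with $\sigma'(z) = \sigma_\mathrm{deriv}(z)$, one obtains
\begin{equation}
\left.\tfrac{d}{d\epsilon}\right|_{\epsilon=0}A(q_\epsilon) \;=\; -\gamma_D \E_{x\sim\pref}\!\left[\sigma_\mathrm{deriv}(\phi(x))\mathbbm{1}_{\{x\in\mathcal{D}_\mathrm{D}\}}\,\tfrac{d}{d\epsilon}\Big|_{\epsilon=0}\phi_\epsilon(x)\right].
\end{equation}
The first term (pointwise) contributes $-\kappa\gamma_D\sigma_\mathrm{deriv}(\phi(x))\tfrac{\pref(x)}{q(x)}\mathbbm{1}_{\{x\in\mathcal{D}_\mathrm{D}\}}$ to $\tfrac{\delta A}{\delta q}$; the second (Fubini-swapped) term contributes $\log\tfrac{q(x)}{\pref(x)}$ multiplied by the scalar $\gamma_D\E_{y\sim\pref}[\sigma_\mathrm{deriv}(\phi(y))\mathbbm{1}_{\{y\in\mathcal{D}_\mathrm{D}\}}]$.

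Finally, I would use the parametrization $q = e^{-f}\pref/Z(f)$ to rewrite $\pref(x)/q(x) = Z(f)/e^{-f(x)} = \tfrac{\int e^{-f}d\pref}{e^{-f(x)}}$ and $\log\tfrac{q(x)}{\pref(x)} = -f(x) - \log\int e^{-f}d\pref$, yielding exactly the first two lines of the claimed formula. Repeating the identical calculation for $B$ with $\phi \mapsto -\phi$ picks up an extra minus sign from the inner derivative $\tfrac{d}{d\epsilon}(1-\sigma(-\phi_\epsilon)) = \sigma_\mathrm{deriv}(-\phi)\,\phi'_\epsilon$, producing the last two lines with the opposite overall sign. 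Adding $\tfrac{\delta A}{\delta q} + \tfrac{\delta B}{\delta q}$ gives the asserted expression. There is no substantive obstacle: the only subtlety is bookkeeping the sign from $\sigma(-\cdot)$ and recognizing that the integral in $\tfrac{d\phi_\epsilon}{d\epsilon}$ becomes, after exchanging the order of integration, a function of $x$ alone (namely $\log\tfrac{q(x)}{\pref(x)}$) multiplied by an expectation independent of $x$.
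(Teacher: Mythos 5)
Your proof is correct and follows essentially the same route as the paper: parametrize $q = e^{-f}\pref/\int e^{-f}d\pref$, compute the first variation of $\phi$, apply the chain rule through $\sigma$, and perform a Fubini swap so that the nonlocal piece becomes $\log\tfrac{q(x)}{\pref(x)}$ times a scalar expectation. Your sign and coefficient bookkeeping is in fact a bit more careful than the paper's own displayed derivation, whose intermediate lines carry a spurious $\kappa$ on the KL-variation term and a transient sign slip that do not appear in the final statement of the proposition; your version lands on the stated formula without that detour.
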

The functional derivative of KTO can be calculated if you have $f(x)$ and the samples from  $\pref$. Note that $\log q(x)/\pref(x) = -f(x) - \log \int e^{-f(x)}d\pref$.

\begin{proof}
    The first variation of $L_\mathrm{KTO}$ is
    \begin{align}
        &L_\mathrm{KTO}(q+\epsilon(\tilde{q}-q))-L_\mathrm{KTO}(q)\\
        \simeq&\epsilon
        \E_{x\sim \pref}
        \left[-
            \kappa \gamma_D \sigma_\mathrm{deriv}(\phi(x))
            \left(\frac{\tilde{q}(x)-q(x)}{q(x)}
        + \int \log \frac{q(y)}{\pref(y)} \mathrm{d}(\tilde{q}-q)(y)\right)
        \mathbbm{1}_{\lbrace x \in \mathcal{D}_\mathrm{D}\rbrace}
        \right]\\
    &+ \epsilon       
    \E_{x\sim \pref}
        \left[
            \kappa \gamma_U \sigma_\mathrm{deriv}(-\phi(x))
            \left(\frac{\tilde{q}(x)-q(x)}{q(x)}
        - \int \log \frac{q(y)}{\pref(y)} \mathrm{d}(\tilde{q}-q)(y)\right)
        \mathbbm{1}_{\lbrace x \in \mathcal{D}_\mathrm{U}\rbrace}
        \right]\\
    =&\epsilon
      \int
        \left\lbrace-
            \kappa \gamma_D \sigma_\mathrm{deriv}(\phi(x))
            \mathrm{Inv}(f,x)\mathbbm{1}_{\lbrace x \in \mathcal{D}_\mathrm{D}\rbrace}\right.\\
        &\left.+ \log \frac{q(x)}{\pref(x)}
        \E\left[
        \kappa \gamma_D \sigma_\mathrm{deriv}(\phi(y))
        \mathbbm{1}_{\lbrace y \in \mathcal{D}_\mathrm{D}\rbrace}
        \right]
        \right\rbrace
        \mathrm{d}(\tilde{q}-q)(y)
        \\
    +&        
    \epsilon
      \int
        \left\lbrace
            \kappa \gamma_U \sigma_\mathrm{deriv}(-\phi(x))
            \mathrm{Inv}(f,x)\mathbbm{1}_{\lbrace x \in \mathcal{D}_\mathrm{D}\rbrace}\right.\\
            &\left.- \log \frac{q(x)}{\pref(x)}
            \E\left[
            \kappa \gamma_D \sigma_\mathrm{deriv}(-\phi(y))
            \mathbbm{1}_{\lbrace y \in \mathcal{D}_\mathrm{U}\rbrace}
            \right]
        \right\rbrace
        \mathrm{d}(\tilde{q}-q)(y),
    \end{align}
    where $\mathrm{Inv}(f,x) = \frac{\int e^{-f} dp_\mathrm{ref}}{e^{-f(x)}}$.
    Now the desired result immediately follows.
\end{proof}

\section{Error Analysis of Diffusion Model}\label{section:appendix-error-diffusion}

\subsection{Overview}
Here, we analyze the sampling error caused by the diffusion model. Let us organize the settings and notations used in this section.

{\bf Target distribution.} The target distribution is $q_*=q_0$, which is decomposed as $q_*(x) = \rho_*(x) p_*(x)$ with $p_*=p_0$ and $\rho_*$. 
Here $p_*$ and $\rho_*$ represent the distribution of the original model and the density ratio obtained by fine-tuning, respectively. 

{\bf Sampling with score-based diffusion model.} In the score-based diffusion model, we start with the forward process, which is written as a stochastic differential equation (SDE).
Choosing the Ornstein–Uhlenbeck (OU) process, $\{\bar{X}_t\}_{t\geq 0}$ follows the following SDE:
\begin{align}
    \bar{X}_0 \sim p_*,\ \mathrm{d}\bar{X}_t = - \bar{X}_t\mathrm{d}t + \sqrt{2}\mathrm{d}B_t,
    \label{eq:Appendix-Diffusion-Preliminary-1}
\end{align}
where $\{B_t\}_{t\geq 0}$ is the standard Brownian motion.
At each time $t$, the law of $X_t$ is written as
\begin{align}
\label{eq:Appendix-Diffusion-Preliminary-3}
    p_t(x) = \int p_*(y)\exp\bigg(-\frac{1}{2\sigma_t^2}\|m_t y - x\|^2\bigg)\mathrm{d}x,
\end{align}
where $m_t = e^{-t}$ and $\sigma_t^2 = 1 - e^{-2t}$. 
In the same way, define $\{\bar{Y}_t\}_{t\geq 0}$ by replacing $p_*$ by $q_*$ in (\ref{eq:Appendix-Diffusion-Preliminary-1}) and let $q_t$ be the law of $\bar{Y}_t$. 

Then, for some $T\geq 0$, we can define the reverse process for $\{\bar{X}_{t}^\leftarrow\}_{0\leq t \leq T}$. (Let $B_t$ below be distinct from the one in (\ref{eq:Appendix-Diffusion-Preliminary-1}).)
\begin{align}
\label{eq:Appendix-Diffusion-Preliminary-2}
    \bar{X}_{0}^\leftarrow\sim p_{T},\ \mathrm{d}\bar{X}_{t}^\leftarrow = \{\bar{X}_{t}^\leftarrow + 2\nabla \log p_{T-t}(\bar{X}_{t}^\leftarrow)\}\mathrm{d}t + \sqrt{2}\mathrm{d}B_t.
\end{align}
Then, the law of $\bar{X}_{t}^\leftarrow$ equals $p_{T-t}$, which is why we call (\ref{eq:Appendix-Diffusion-Preliminary-2}) as the reverse process.
In the same way, we define $\{\bar{Y}_{t}^\leftarrow\}_{0\leq t \leq T}$ as the reverse process of $\{\bar{Y}_{t}\}_{t\geq 0}$. 


{\bf Doob's h-transform.} 
By applying Doob's h-transform to $\nabla \log q_{T-t}$, it can be decomposed into the original score $\nabla\log p_{T-t}$ and a correction term.
\begin{align}
    \nabla \log q_{T-t}(\bar{Y}_{t}^\leftarrow) = \nabla\log p_{T-t}(\bar{Y}_{t}^\leftarrow) + \nabla_x \log (\mathbb{E}[\rho_*(\bar{X}_0)|\bar{X}_{T-t}=x])|_{x=\bar{Y}_{t}^\leftarrow}.
    \label{eq:Appendix-Diffusion-Preliminary-4}
\end{align}
See Lemma~\ref{lem:H-transform} for derivation.

{\bf Approximation of the score and correction term.} 
Because we do not have access to the exact value of $p_{t}$ and $\rho_*$ and therefore cannot implement (\ref{eq:Appendix-Diffusion-Preliminary-4}) exactly, we consider approximating them by, e.g., neural networks.
We approximate $\nabla \log p_{T-t}(x)$ by $s(x,t)\colon \R^{d+1}\to \R^d$.
Also, we approximate
$u_*(x,t)=\nabla_x \log (\mathbb{E}[\rho_*(\bar{X}_0)|\bar{X}_{T-t}=x])$ by $u(x,t)\colon \R^{d+1}\to \R^d$.

{\bf Discretization.} 
Also, we need to discretize the stochastic differential equation.
We finally obtain the approximation of (\ref{eq:Appendix-Diffusion-Preliminary-2}), denoted by $\{ Y^\leftarrow_t\}_{0\leq t \leq T}$, as
\begin{align}
    Y^\leftarrow_0 \sim \mathcal{N}(0,I_d),\ 
    \mathrm{d} Y^\leftarrow_t = \{ Y^\leftarrow_t + 2s( Y^\leftarrow_{kh},kh)+2u( Y^\leftarrow_{kh},kh)\}\mathrm{d}t + \sqrt{2}\mathrm{d}B_t,\ t\in [kh,(k+1)h].
    \label{eq:Appendix-Diffusion-Preliminary-5}
\end{align}

{\bf Obtaining the correction term (approximately).} 
\label{sec:AppendixCorrectionTermBound}
Given the score network $s(x,t)$ approximating $\nabla \log p_t(x)$ and the function $\bar{h}$ that approximates $h_*$, we can approximate the correction term $u(x,t)$.
Remind that, for fixed $x\in \R^d,\ t=kh, s=k(h+1)\in \R$, the correction term is calculated as
\begin{align}
    \nabla_x \log (\mathbb{E}[\rho_*(\bar{X}_0)|\bar{X}_{T-t}=x]|)
    = 
    \frac{
        \int
        \mathbb{E}[\rho_*(\bar{X}_0)|\bar{X}_{T-s}=x']
        \frac{\partial}{\partial x}\mathbb{P}[\bar{X}_{T-s}=x'|\bar{X}_{T-t}=x]\mathrm{d}x'
    }{
        \mathbb{E}[\rho_*(\bar{X}_0)|\bar{X}_{T-t}=x]
    }.
    \label{eq:Appendix-Diffusion-Preliminary-6}
\end{align}
One way to approximate (\ref{eq:Appendix-Diffusion-Preliminary-6}) starts from approximating $\mathbb{E}[\rho_*(\bar{X}_0)|\bar{X}_{T-t}=x]$. 
If we run the reverse diffusion process
\begin{align}
    \tilde{X}_{t}^\leftarrow=x,\ \mathrm{d}\tilde{X}_{\tau}^\leftarrow = \{\tilde{X}_{\tau}^\leftarrow + 2\nabla \log p_{T-\tau}(\tilde{X}_{\tau}^\leftarrow)\}\mathrm{d}\tau + \sqrt{2}\mathrm{d}B_\tau,
\end{align}
we obtain that the law of $\tilde{X}_{T}^\leftarrow$ is equal to that of $\bar{X}_0|\bar{X}_{T-t}=x$. 
Therefore, by running the approximated reverse process (with a slight abuse of notation)
\begin{align}
    \tilde{X}_{t} = x,\ 
    \mathrm{d}\tilde{X}_{\tau} = \{\tilde{X}_\tau + 2s(\tilde{X}_{lh},lh)\}\mathrm{d}t + \sqrt{2}\mathrm{d}B_\tau,\ \tau\in [lh,(l+1)h],
    \label{eq:Appendix-Diffusion-Preliminary-9}
\end{align}
multiple times, the sample of $\tilde{X}_T$, denoted by $\{\tilde{x}_{T,i}\}_{i=1}^n$, can approximate $\mathbb{E}[\rho_*(\bar{X}_0)|\bar{X}_{T-t}=x]$ as
\begin{align}
    \mathbb{E}[\rho_*(\bar{X}_0)|\bar{X}_{T-t}=x] \approx
    \frac1n\sum_{i=1}^n \rho'(\tilde{x}_{T,i}),
      \label{eq:Appendix-Diffusion-Preliminary-8}
\end{align}
where $\rho'$ is the approximation of $\rho_*$. 

On the other hand, we approximate $\frac{\partial}{\partial x}\mathbb{P}[\bar{X}_{T-s}=x'|\bar{X}_{T-t}=x]$ by approximating $\mathbb{P}[\bar{X}_{T-s}=x'|\bar{X}_{T-t}=x]$ with a Gaussian distribution.
Specifically, because $\bar{X}_{T-s}=x'|\bar{X}_{T-t}=x$ is obtained by the following reverse diffusion process
\begin{align}
    \tilde{X}_{t}^\leftarrow=x,\ \mathrm{d}\tilde{X}_{\tau}^\leftarrow = \{\tilde{X}_{\tau}^\leftarrow + 2\nabla \log p_{T-\tau}(\tilde{X}_{\tau}^\leftarrow)\}\mathrm{d}\tau + \sqrt{2}\mathrm{d}B_\tau,
\end{align}
we approximate $\nabla \log p_{T-\tau}(\tilde{X}_{\tau}^\leftarrow)$ by $s(x_{kh},kh)$ to obtain
\begin{align}
    \dot{X}_{t}^\leftarrow=x_{kh},\ \mathrm{d}\dot{X}_{\tau}^\leftarrow = \{\dot{X}_{\tau}^\leftarrow + 2s(x_{kh},kh)\}\mathrm{d}\tau + \sqrt{2}\mathrm{d}B_\tau.
\end{align}
The distribution of $\tilde{X}_{s}^\leftarrow$ is denoted by
\begin{align}
    \mathcal{N}\big(e^h x_{kh} + 2(e^h-1)s(x_{kh},kh), e^{2h} - 1\big).
     \label{eq:Appendix-Diffusion-Preliminary-7}
\end{align}
Using this, our approximation is
\begin{align}
   & \frac{\partial}{\partial x}\mathbb{P}[\bar{X}_{T-s}=x'|\bar{X}_{T-t}=x]
   \\ & \approx 
    \frac{\partial}{\partial x}\frac{1}{(2\pi (e^{2h} - 1))^\frac{d}{2}}
    \exp\bigg(-\frac{(x'-(e^h x + 2(e^h-1)s(x_{kh},kh)))^2}{2(e^{2h} - 1)}\bigg)
    \\ & = \frac{e^h(x'-(e^h x + 2(e^h-1)s(x_{kh},kh)))}{(e^{2h} - 1)(2\pi (e^{2h} - 1))^\frac{d}{2}}
    \exp\bigg(-\frac{(x'-(e^h x + 2(e^h-1)s(x_{kh},kh)))^2}{2(e^{2h} - 1)}\bigg).
\end{align}
This implies that, if we sample $\{x'_{j}\}_{j=1}^m$ from (\ref{eq:Appendix-Diffusion-Preliminary-7}),
\begin{align}
  &  \int
        \mathbb{E}[\rho_*(\bar{X}_0)|\bar{X}_{T-s}=x']
        \frac{\partial}{\partial x}\mathbb{P}[\bar{X}_{T-s}=x'|\bar{X}_{T-t}=x]\mathrm{d}x'
  \\ &  \approx \frac{e^h}{(e^{2h} - 1)}\frac1m\sum_{j=1}^m \mathbb{E}[\rho_*(\bar{X}_0)|\bar{X}_{T-s}=x'_j] (x'_{j}-(e^h x + 2(e^h-1)s(x_{t},t))),
\end{align}
and approximate each $\mathbb{E}[\rho_*(\bar{X}_0)|\bar{X}_{T-s}=x'_j]$ in the same way as (\ref{eq:Appendix-Diffusion-Preliminary-8}), we can approximate the correction term (\ref{eq:Appendix-Diffusion-Preliminary-6}).

Now, our goal is to bound the error of the whole pipeline under the following assumptions of $p_*$ and $\rho_*$ and the score approximation error.
\begin{assumption}[Assumption~\ref{assumption:TVBoundMainText}, restated]\label{assumption:TVBoundMainText-2}
\begin{enumerate}[topsep=0mm,itemsep=-1mm,leftmargin = 6mm]
\item 
    $\nabla \log p_t$ is $L_p$-smooth at every time $t$ and it has finite second moment $\mathbb{E}[\|\bar{X}_t\|^2_2] \leq \mathsf{m} < \infty$ for all $t\in \R_+$ and $x\in \R^d$. 
\item  $\nabla \log \rho_*$ is $L_\rho$-smooth and bounded as $C_\rho^{-1}\leq \rho_* \leq C_\rho$ for a constant $C_\rho$.
\item   The score estimation error is bounded by 
\revisedStart
$\E_{\bar{X}_{\cdot}^{\leftarrow}}[\|s(\bar{X}_{t}^{\leftarrow},t)-\nabla \log p_{T-t}(\bar{X}_{T-t}^{\leftarrow})\|^2]\leq \varepsilon$ 
\revisedEnd
at each time $t$. 
\item 
$\E_{p_t}[\|u_*(x,lh) - u(x,lh)\|^2] \leq \varepsilon_{\rho,l}^2$~~for any $1 \leq l \leq T/h$.
\end{enumerate}
\end{assumption}

\begin{thm}[Theorem \ref{thm:Diffusion-1}, restated]\label{thm:Diffusion-1-appendix}
Suppose that Assumption \ref{assumption:TVBoundMainText-2} is satisfied. Then, we have the following bound on the distribution $\hat{q}$ of $Y^\leftarrow_T$: 
\begin{align}
\TV(q_*,\hat{q})^2
\lesssim  T \varepsilon^2 + \sum_{l=1}^{T/h} h \varepsilon_{\rho,l}^2 + T (L_pC_\rho^2+L_\rho)^2(dh + \mathsf{m}^2 h^2 )+ \exp(-2 T)\KL(\qstar \| N(0,I)).
\end{align}
\end{thm}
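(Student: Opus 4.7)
The plan is to follow the standard Girsanov-theorem-based KL analysis of diffusion sampling error (in the style of \cite{chen2023sampling}), augmented with the extra Doob's h-transform correction term. First I would reduce the claim via Pinsker's inequality and the data-processing inequality: $\TV(q_*,\hat{q})^2 \le 2\KL(q_*\|\hat{q}) \le 2\KL(\mathbb{Q}\|\hat{\mathbb{Q}})$, where $\mathbb{Q}$ is the path measure of the exact reverse SDE of $\bar{Y}^\leftarrow$ initialized at $q_T$, and $\hat{\mathbb{Q}}$ is the law of the implemented process $Y^\leftarrow$ initialized at $\mathcal{N}(0,I_d)$ with drift $s+u$ frozen on the grid. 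By the chain rule for KL followed by Girsanov's theorem on the $[0,T]$ interval,
\begin{equation*}
\KL(\mathbb{Q}\|\hat{\mathbb{Q}}) \;\le\; \KL(q_T\|\mathcal{N}(0,I_d)) \;+\; \int_0^T \E_{\bar{Y}^\leftarrow}\!\bigl[\|\Delta_t\|^2\bigr]\,dt,
\end{equation*}
where $\Delta_t$ is the drift gap. Using Doob's h-transform (Lemma \ref{lem:H-transform}), the exact reverse drift decomposes as $\bar{Y}^\leftarrow_t+2\nabla\log p_{T-t}(\bar{Y}^\leftarrow_t)+2u_*(\bar{Y}^\leftarrow_t,t)$, so $\Delta_t$ further splits into a score-error piece, a correction-error piece, and a time-discretization piece for each of the two components.

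The technical heart of the proof is to establish that $\nabla\log q_t$ is $(L_pC_\rho^2+L_\rho)$-smooth, since only smoothness of $\nabla\log p_t$ is assumed directly. I would use the representation $q_t(y)=\E[\rho_*(\bar{X}_0)\mid \bar{X}_t=y]\,p_t(y)$, which yields
\begin{equation*}
\nabla\log q_t(y) \;=\; \nabla\log p_t(y) \;+\; \nabla\log \E\bigl[\rho_*(\bar{X}_0)\,\big|\,\bar{X}_t=y\bigr].
\end{equation*}
The first term contributes $L_p$. For the second, differentiating the conditional expectation via the Gaussian kernel identity $\nabla_y\log p_{t|0}(y\mid x_0)=\sigma_t^{-2}(m_t x_0-y)$ lets me express its Hessian as a posterior covariance of $\rho_*$ against the score field, plus a term involving $\nabla^2\log\rho_*$ (the latter pushed through the posterior). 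The uniform two-sided bound $C_\rho^{-1}\le\rho_*\le C_\rho$ ensures that the change of measure from the posterior to the $\rho_*$-reweighted posterior inflates variance-type quantities by at most $C_\rho^2$, producing the constant $L_pC_\rho^2+L_\rho$.

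With smoothness in hand, I would control the discretization portion of $\Delta_t$ using standard one-step SDE moment estimates, $\E\|\bar{Y}^\leftarrow_t-\bar{Y}^\leftarrow_{lh}\|^2\lesssim dh+\mathsf{m}h^2$, which follow from Assumption \ref{assumption:TVBoundMainText-2}(1) (finite second moment transfers to the reverse process). Multiplying by $(L_pC_\rho^2+L_\rho)^2$ and summing over the $T/h$ sub-intervals gives the $T(L_pC_\rho^2+L_\rho)^2(dh+\mathsf{m}h^2)$ term. The pure score error $\varepsilon^2$ and correction error $\varepsilon_{\rho,l}^2$ are posited under the $\bar{X}^\leftarrow$-law (i.e., under $p_{T-t}$), whereas the Girsanov integral is against $\bar{Y}^\leftarrow$ (i.e., $q_{T-t}$); I would transfer using the uniform bound $dq_{T-t}/dp_{T-t}=\E[\rho_*\mid\bar{X}_{T-t}]\le C_\rho$, absorbing the factor into the constant. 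Summing these contributes $T\varepsilon^2+\sum_l h\,\varepsilon_{\rho,l}^2$. Finally, the initialization term is bounded by hypercontractivity of the OU semigroup, $\KL(q_T\|\mathcal{N}(0,I_d))\le e^{-2T}\KL(q_*\|\mathcal{N}(0,I_d))$, completing the assembly.

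The main obstacle I anticipate is the smoothness step: producing a clean bound of the Hessian of $\log\E[\rho_*(\bar{X}_0)\mid\bar{X}_t=y]$ with the asserted constant $L_pC_\rho^2+L_\rho$. This requires carefully turning the Hessian into a posterior (co)variance of a bounded observable and a smoothness contribution from $\rho_*$, while tracking how the two-sided bound $C_\rho^{-1}\le\rho_*\le C_\rho$ controls the Radon–Nikodym derivative between the heat-kernel posterior and its $\rho_*$-tilted version. Secondary care is needed in the change of measure from $p_{T-t}$ to $q_{T-t}$ inside the Girsanov integral, but this is immediate once the uniform bound $\rho_*\le C_\rho$ is used.
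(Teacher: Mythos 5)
Your high-level plan matches the paper's: reduce via Pinsker to a KL bound in the spirit of \cite{chen2023sampling} (Girsanov on path measures, Doob's $h$-transform to split the exact reverse drift, hypercontractivity for the initialization term), and recognize that everything reduces to establishing $L_q$-smoothness of $\nabla\log q_t$ with $L_q \lesssim L_p C_\rho^2 + L_\rho$. The paper cites \cite{chen2023sampling} directly for the KL/Girsanov part and devotes its effort to proving this smoothness as Lemma~\ref{lem:Smoothness}, so the substantive comparison is at that lemma, and there your route is genuinely different.

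The paper estimates $\nabla^2\log q_t$ directly with a case split on $\sigma_t^2$: for $\sigma_t^2 \le 1/(2(L_p+L_\rho))$ it integrates by parts in $x$ to write $\nabla^2\log q_t$ as a posterior expectation of $\nabla^2\log q_*$ plus a posterior covariance of $\nabla\log q_*$, and controls the covariance via a Poincar\'e inequality for the posterior (which is strongly log-concave in this regime); for larger $\sigma_t^2$ it uses $\nabla^2\log q_t = -\sigma_t^{-2}I + \frac{m_t^2}{\sigma_t^4}\Var_{q_{0|t}}(x)$ together with the tilting bound $q_{0|t}/p_{0|t} \le C_\rho^2$. You instead propose the additive Doob decomposition $\nabla\log q_t = \nabla\log p_t + \nabla\log\rho_t$, with $\rho_t(y) = \E[\rho_*(\bar{X}_0)\mid\bar{X}_t=y]$, and plan to bound $\nabla^2\log\rho_t$ using the $C_\rho^2$-tilting of the posterior alone. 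That single-representation plan does not close the small-$\sigma_t$ regime: the Gaussian-kernel differentiation you describe gives $\nabla^2\log\rho_t(y) = \frac{m_t^2}{\sigma_t^4}\big(\Var_{q_{0|t}}(x) - \Var_{p_{0|t}}(x)\big)$, and the naive tilting bound $\Var_{q_{0|t}} \le C_\rho^2\Var_{p_{0|t}}$ yields at best $(C_\rho^2-1)(\sigma_t^{-2}+L_p)$, which diverges as $t\to 0$ even though $\nabla^2\log\rho_t \to \nabla^2\log\rho_*$, bounded by $L_\rho$. To fix this you either need a much finer comparison showing $\Var_{q_{0|t}}-\Var_{p_{0|t}} = O(\sigma_t^4)$, or you should introduce the paper's case split and invoke the posterior Poincar\'e inequality in the small-$\sigma_t^2$ regime. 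Without that second idea the asserted uniform-in-$t$ Lipschitz constant does not follow from the tilting bound you describe.
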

\begin{proof}
    Suppose that $\|s(\cdot,t)+u(\cdot,t)-\nabla \log q_{T-t}\|_{L^\infty}\leq \varepsilon'$ and $\nabla \log q_{T-t}$ is $L_q$-smooth at every time $t$. 
    According to \cite{chen2023sampling} and Pinsker's inequality, the distribution $\hat{q}$ of $Y_T$ satisfies
    \begin{align}
        \mathrm{KL}(\hat{q}\|q_*)^2 \lesssim
       T \varepsilon^2 + \sum_{l=1}^{T/h} h \varepsilon_{\rho,l}^2 + T \Lipdp^2(dh + \mathsf{m}^2 h^2 )+ \exp(-2 T)\KL(\qstar \| N(0,I)).
    \end{align}
    According to \Cref{lem:Smoothness}, $L_q$ is bounded by
    \begin{align}
        L_q \lesssim L_pC_\rho^2+L_\rho,
    \end{align}
    which yields the assertion.
\end{proof}

In the bound, we assumed that the term $\varepsilon_{\rho,l}^2$ can be bounded, however this approximation error can be derived as in the following theorem with additional technical conditions. 
\begin{assumption}[Assumption~\ref{ass:BoundingHMainText} restated]
    (i) $\nabla_x s(\cdot,\cdot)$ is $H_s$-Lipschitz continuous in a sense that $\|\nabla_x s(x,t) - \nabla_y s(y,t)\|_{\mathrm{op}} \leq H_s \|x- y\|$
for any $x,y \in \sR^d$ and $0 \leq t \leq T$ and $\E[\| s(\tilde{X}_{kh}^\leftarrow,kh)\|^2] \leq Q^2$ for any $k$, 
    (ii) There exists $R > 0$ such that $\sup_{t,x}\{\|\nabla_x^2 \log p_t(x)\|_{\mathrm{op}},\|\nabla_x^2 \log s(x,t)\|_{\mathrm{op}}\} \leq R$.
\end{assumption}
\begin{thm}[Theorem~\ref{thm:Diffusion-2} restated]
Suppose that Assumptions \ref{assumption:TVBoundMainText} and \ref{ass:BoundingHMainText} hold.  
Assume that $\|\rho_* - \rho\|_\infty \leq \varepsilon'$, $\|\rho\|_\infty \leq C_\rho$,
and 
$\sup_x\|\nabla \rho_*(x)\|\leq R_\rho$, $\|\nabla \rho_*(x) - \nabla \rho_*(y)\| \leq L_\rho\|x-y\|~(\forall x,y)$.
Then, for any choice of $0 \leq h \leq \delta \leq 1/(1 + 2R)$, we have that 
\begin{align}
\varepsilon_{\rho,l}^2 
\lesssim & 
C_\rho^3 \left\{ \Xi_{\delta,\varepsilon}
+  R_\varphi^2 \left(\varepsilon^2+ \Lipdp^2 d(\delta + \mathsf{m} \delta^2) \right) + 
[ L_\varphi^2 (\mathsf{m} + Q^2 + d h)  
+ R_\varphi^2  (1 + 2R)^2] h^2\right\} \\
& +   \min\{T-lh,1/(2+2R)\}^{-1} \varepsilon'^2,
\end{align}
where 
$
\Xi_{\delta,\varepsilon} := C_\rho^2 (1+2R)^2 \delta  
+  C_\rho^2 \frac{1 + \delta R_\varphi^2}{\delta} [\varepsilon^2+ \Lipdp^2 d(h + \mathsf{m} h^2)], 
$ and 
$R_\varphi$ and $L_\varphi$ are constants introduced in Lemma \ref{lemm:phiYboundLip}. 
\end{thm}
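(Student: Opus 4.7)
The plan is to express both correction terms as logarithmic derivatives of smoothed ratios, and then control numerator and denominator separately. Introduce
\begin{equation}
\varphi_*(x,t) = \E\!\left[\rho_*(\bar{X}_T^\leftarrow) \,\middle|\, \bar{X}_t^\leftarrow = x\right], \qquad \varphi(x,t) = \E\!\left[\rho(X_T^\leftarrow) \,\middle|\, X_{lh}^\leftarrow = x\right],
\end{equation}
so that $u_*(x,t)=\nabla\log\varphi_*(x,t)$ and $u(x,lh)=\nabla\log\varphi(x,lh)$. The bound $C_\rho^{-1}\le\rho_*\le C_\rho$ is inherited by $\varphi_*$, and (up to a factor depending on $\varepsilon'$) by $\varphi$. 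From the identity
\begin{equation}
u_* - u = \frac{\nabla\varphi_* - \nabla\varphi}{\varphi_*} - \frac{\nabla\varphi}{\varphi_*\varphi}(\varphi_* - \varphi),
\end{equation}
it will suffice to bound $\|\varphi_*-\varphi\|_{L^2(p_{lh})}^2$ and $\|\nabla\varphi_*-\nabla\varphi\|_{L^2(p_{lh})}^2$ and then absorb the factor $C_\rho^3$ coming from the denominators.

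To handle $\varphi_*-\varphi$ I would insert the auxiliary quantity $\tilde\varphi(x,t) = \E[\rho_*(X_T^\leftarrow)\mid X_{lh}^\leftarrow=x]$ which uses the approximated dynamics but the true density ratio, and split $\varphi_* - \varphi = (\varphi_* - \tilde\varphi) + (\tilde\varphi - \varphi)$. The second piece is pointwise at most $\|\rho-\rho_*\|_\infty\le\varepsilon'$; after differentiation this will produce the $\min\{T-lh,1/(2+2R)\}^{-1}\varepsilon'^2$ contribution, where the prefactor is the inverse length of the BEL window available at time $lh$ (see below). The first piece is a pure discretization-plus-score-error between the exact reverse SDE and its $h$-step approximation on $[lh,T]$: a Girsanov argument mirroring the proof of Theorem~\ref{thm:Diffusion-1}, combined with the Lipschitzness of $\rho_*$ (Assumption~\ref{assumption:TVBoundMainText}(ii) and the added $R_\rho,L_\rho$ conditions), bounds it by a constant times $\varepsilon^2+L_p^2 d(h+\mathsf{m} h^2)$, producing the $R_\varphi^2$-weighted term in the final expression.

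For $\nabla\varphi_*-\nabla\varphi$, naive differentiation would require control of $\nabla(\rho-\rho_*)$, which we do not have. The key device is the \emph{Bismut--Elworthy--Li} integration-by-parts formula applied over an auxiliary window of length $\delta$ with $h\le\delta\le 1/(1+2R)$:
\begin{equation}
\nabla_x \E[G(X_T^\leftarrow)\mid X_{lh}^\leftarrow=x] = \E\!\left[G(X_T^\leftarrow)\,\frac{1}{\sqrt{2}\,\delta}\int_{lh}^{lh+\delta}(J_\tau^x)^\top \mathrm{d}B_\tau \,\middle|\, X_{lh}^\leftarrow=x\right],
\end{equation}
where $J_\tau^x=\nabla_x X_\tau^\leftarrow$ is the Jacobian flow of the approximated reverse SDE. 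Assumption~\ref{ass:BoundingHMainText} bounds $\|J_\tau^x\|_{\mathrm{op}}\lesssim 1+(1+2R)\delta$ and yields the constants $R_\varphi,L_\varphi$ via Lemma~\ref{lemm:phiYboundLip}. Applying the same representation to $\varphi_*$ with the exact flow and subtracting, the difference splits into three contributions: (a) a BEL-martingale discretization term of order $C_\rho^2(1+2R)^2\delta$; (b) a $\delta^{-1}$-magnified path-error term $C_\rho^2(1+\delta R_\varphi^2)/\delta\cdot[\varepsilon^2+L_p^2 d(h+\mathsf{m}h^2)]$ arising from the $1/\sqrt{\delta}$-scaling of the Itô weight multiplying the $L^2$ error between the two flows on $[lh,lh+\delta]$; and (c) $h^2$ one-step corrections bounded via the moment condition $\E\|s(X_{kh}^\leftarrow,kh)\|^2\le Q^2$ and $\mathsf{m}$. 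Summing these reproduces exactly the stated bound.

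The main obstacle is the simultaneous control of the Jacobian flow $J_\tau^x$ and the BEL weight when the underlying drift is piecewise constant in time (step $h$) and only approximately equal to the true score; we must retain enough integrability of the resulting stochastic integral to keep an $L^2(p_{lh})$ estimate rather than a weaker one, and to avoid losing a factor exponential in $T$. This is what forces the restriction $\delta\le 1/(1+2R)$, the smoothness condition on $\nabla_x s$ in Assumption~\ref{ass:BoundingHMainText}, and the Lemma~\ref{lemm:phiYboundLip} constants. Once the two BEL representations are made to match at appropriate resolution, the remainder is bookkeeping, and the choice $\delta\leftarrow\sqrt{h}$ announced after the theorem balances terms (a) and (b) of the previous paragraph.
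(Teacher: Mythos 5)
Your proposal follows essentially the same route as the paper. You use the same quotient-rule decomposition $u_*-u = (\nabla\varphi_*-\nabla\varphi)/\varphi_* - \nabla\varphi\,(\varphi_*-\varphi)/(\varphi_*\varphi)$, the same insertion of the auxiliary $\tilde\varphi = \E[\rho_*(X_T^\leftarrow)\mid X_{lh}^\leftarrow = x]$ that separates the density-ratio error (handled via a short BEL window, yielding the $\min\{T-lh,1/(2+2R)\}^{-1}\varepsilon'^2$ term as in the paper's Lemma on $\nabla\rho_{*,t}-\nabla\rho_t$) from the dynamics error, the same Bismut--Elworthy--Li integration-by-parts over a $\delta$-window balanced against the $h$-discretization, and the same three-term accounting (a) $(1+2R)^2\delta$ from the Jacobian flow, (b) a $\delta^{-1}$-amplified path/score/TV error, (c) $h^2$ one-step drift corrections weighted by $\mathsf{m}$, $Q^2$ and $L_\varphi, R_\varphi$. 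This matches the paper's proof (via Theorems~\ref{thm:delhDiffXYExp}, \ref{thm:ustaruDiffFinal}, and Lemmas~\ref{lemm:phiYboundLip}, \ref{lem:hhdashDiff}) in structure and in each of the constants that appear.
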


\subsection{Bounding the smoothness}
The proof of Theorem \ref{thm:Diffusion-1} (i.e., Theorem \ref{thm:Diffusion-1-appendix}) utilizes the smoothness of the density $q_t$ corresponding to the aligned model. The following lemma gives its bound. 

\begin{lem}\label{lem:Smoothness}
    Suppose that $\nabla \log p_t$ is $L_p$-Lipschitz for all $t$ and $\nabla \log \rho_*$ is $L_\rho$-Lipschitz, and $C_\rho^{-1}\leq \rho_*\leq C_\rho$. 
    Then, $\nabla\log  q_t$ is $L_q$-Lipschitz for all $t$, where $L_q$ is bounded by
    \begin{align}
        L_q \leq \min \bigg\{\frac{4(L_p+L_\rho)^2}{2(L_p+L_\rho)-1}, (4+C_\rho^2) L_p + 4L_\rho\bigg\}\lesssim L_pC_\rho^2+L_\rho.
    \end{align}
\end{lem}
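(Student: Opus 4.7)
The plan is to establish each stated bound separately and take the minimum. The central tool is the posterior-covariance representation
\begin{equation}
\nabla^2 \log q_t(x) \;=\; -\tfrac{1}{\sigma_t^2} I \,+\, \tfrac{m_t^2}{\sigma_t^4}\,\mathrm{Cov}(\bar Y_0 \mid \bar Y_t = x),
\end{equation}
obtained by differentiating the score identity $\nabla \log q_t(x) = (m_t\,\mathbb{E}[\bar Y_0 \mid \bar Y_t = x] - x)/\sigma_t^2$, together with the analogous identity for $p_t$ involving the posterior $\mu_x$ of $\bar X_0 \mid \bar X_t = x$. Combined with the factorization $q_t(x) = p_t(x)\,h_t(x)$ for $h_t(x) := \mathbb{E}[\rho_*(\bar X_0) \mid \bar X_t = x]$, this yields
\begin{equation}
\nabla^2 \log h_t(x) \;=\; \tfrac{m_t^2}{\sigma_t^4}\bigl(\mathrm{Cov}(\bar Y_0 \mid \bar Y_t = x) - \mathrm{Cov}(\bar X_0 \mid \bar X_t = x)\bigr),
\end{equation}
and the two posteriors are related by the pointwise density ratio $\rho_*/h_t \in [C_\rho^{-2}, C_\rho^2]$.

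For the first bound $\tfrac{4(L_p+L_\rho)^2}{2(L_p+L_\rho)-1}$, I would exploit the additivity $\nabla \log q_* = \nabla \log p_* + \nabla \log \rho_*$, which is Lipschitz with constant $L := L_p + L_\rho$. The posterior of $\bar Y_0 \mid \bar Y_t = x$ has a log-density whose Hessian in $y$ equals $\nabla^2 \log q_*(y) - (m_t^2/\sigma_t^2) I \preceq (L - m_t^2/\sigma_t^2) I$, so in the regime $m_t^2/\sigma_t^2 > L$ this posterior is strongly log-concave and the Brascamp--Lieb inequality gives $\mathrm{Cov}(\bar Y_0 \mid \bar Y_t = x) \preceq (m_t^2/\sigma_t^2 - L)^{-1} I$. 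Substituting into the covariance representation produces the upper bound $\nabla^2 \log q_t \preceq L/(m_t^2 - L\sigma_t^2)\,I$; combining with the trivial lower bound $\nabla^2 \log q_t \succeq -(1/\sigma_t^2) I$ and optimizing the threshold $t_*$ that separates the small- and large-$t$ regimes then delivers the uniform bound $\tfrac{4L^2}{2L-1}$.

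For the second bound $(4+C_\rho^2)L_p + 4L_\rho$, the hypothesis gives $\|\nabla^2 \log p_t\| \leq L_p$ directly, so it suffices to control $\|\nabla^2 \log h_t\|$ via the covariance-difference identity above. The $L_p$-smoothness of $\log p_t$ already yields $\mathrm{Cov}(\bar X_0 \mid \bar X_t = x) \preceq (\sigma_t^2/m_t^2)(1 + L_p\sigma_t^2)\,I$, and I would transfer this bound to $\mathrm{Cov}(\bar Y_0 \mid \bar Y_t = x)$ by combining the variational characterization $\mathrm{Cov}_{\nu_x}(Y) \preceq \mathbb{E}_{\nu_x}[(Y-a)(Y-a)^{\top}]$ with the density-ratio bound $\rho_*/h_t \in [C_\rho^{-2}, C_\rho^2]$, while using the $L_\rho$-smoothness of $\log \rho_*$ to absorb the singular $1/\sigma_t^2$ contribution into an $L_\rho$-dependent additive correction. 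The resulting bound on $\|\nabla^2 \log h_t\|$ then sums with $L_p$ to give the stated constant, and the minimum of the two bounds simplifies to $L_q \lesssim L_p C_\rho^2 + L_\rho$.

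The principal obstacle will be the covariance transfer in the second bound: the naive density-ratio estimate $\mathrm{Cov}_{\nu_x} \preceq C_\rho^4\, \mathrm{Cov}_{\mu_x}$ produces factors scaling like $1/\sigma_t^2$ that blow up as $t \to 0$, and removing this singularity requires pairing the pointwise $C_\rho$-bound with the $L_\rho$-smoothness of $\log \rho_*$ so as to restore a semi-log-concavity structure for $\nu_x$. By contrast, the two-regime patching in the first bound, while arithmetically involved in matching the exact constant $\tfrac{4L^2}{2L-1}$, is conceptually routine.
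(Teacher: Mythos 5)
Your plan is essentially sound and arrives at the same two regime-dependent bounds the paper establishes, but the route through the first bound genuinely differs. For the small-$\sigma_t$ regime the paper does \emph{not} work directly with the position covariance $\mathrm{Cov}(\bar Y_0\mid \bar Y_t=x)$: it differentiates twice under the integral in the $y$-variable (integration by parts against the Gaussian kernel) to obtain
$\nabla_x^2\log q_t = m_t^{-2}\,\E_{A(\cdot\mid x)}\bigl[\nabla_y^2\log q_*\bigr] + m_t^{-2}\,\mathrm{Cov}_{A(\cdot\mid x)}\bigl(\nabla_y\log q_*\bigr)$,
and then controls the covariance of the \emph{score} $\nabla\log q_*$ via the Poincar\'e inequality for the strongly log-concave tilted posterior $A$. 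You instead keep the Tweedie-style identity $\nabla_x^2\log q_t = -\sigma_t^{-2}I + (m_t^2/\sigma_t^4)\mathrm{Cov}(\bar Y_0\mid\bar Y_t=x)$ and bound the covariance of the \emph{position} via Brascamp--Lieb. Both are legitimate; they use the same log-concavity condition $m_t^2/\sigma_t^2 > L_p+L_\rho$, produce bounds of the same order $\asymp (L_p+L_\rho)$, and the exact constants differ only by harmless arithmetic. The second bound is essentially the paper's argument (Tweedie identity plus density-ratio transfer, cf.\ its Lemma~\ref{lem:Bounded-Discrepancy}).

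One clarification on the obstacle you flag: the $C_\rho^4$ worry is avoidable. The paper's Lemma~\ref{lem:Bounded-Discrepancy} proves the one-sided bound $q_{0|t}(y\mid x)/p_{0|t}(y\mid x)\le C_\rho^2$, and the variational characterization of covariance (centering the $q$-second moment at the $p$-posterior mean) immediately gives $\mathrm{Cov}_{q_{0|t}}\preceq C_\rho^2\,\mathrm{Cov}_{p_{0|t}}$, not $C_\rho^4$. The apparent $1/\sigma_t^2$ singularity is real if you push this transfer toward $t\to 0$ (the cancellation $-\sigma_t^{-2}+\sigma_t^{-2}$ that makes $\nabla^2\log p_t$ bounded is destroyed by the $C_\rho^2$ factor), but the resolution is simply the two-regime split: the density-ratio transfer is used only where $\sigma_t^{-2}\lesssim L_p+L_\rho$ is already bounded, and in the small-$\sigma_t$ regime the first argument, which absorbs $L_\rho$-smoothness directly through $\nabla\log q_*=\nabla\log p_*+\nabla\log\rho_*$, takes over. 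In the paper's second bound $(4+C_\rho^2)L_p+4L_\rho$ the $L_\rho$ enters only via the threshold $\sigma_t^{-2}\le 2(L_p+L_\rho)$, not as an additive correction inside the covariance transfer as you suggest. Finally, note that because each bound is only established on its own $\sigma_t$-regime, the overall uniform constant over all $t$ is the larger of the two; the ``$\min$'' in the statement (which the paper also writes, and which you reproduce) should be read with that caveat.
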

\begin{proof}
    We divide the proof into two parts, with $\sigma_t^2 = \frac{1}{2(L_p + L_\rho)}$ as the boundary.
    
    First consider the case when $\sigma_t^2 \leq \frac{1}{2(L_p + L_\rho)}$. 
    Remind that
    \begin{align}
        q_t (x) = \int q_*(y)\frac{1}{(2\pi\sigma_t^2)^{\frac{d}{2}}}\exp\bigg(-\frac{1}{2\sigma_t^2}\|m_t y - x\|^2\bigg)\mathrm{d}y,
    \end{align}
    with $\sigma_t^2 = 1-e^{-2t}$ and $m_t = e^{-t}$. 
    From this, $\nabla_x q_t(x)$ and $\nabla_x^2 q_t(x)$ are computed as
    \begin{align}
        \nabla_x q_t(x) &= \nabla_x \int q_*(y)\frac{1}{(2\pi\sigma_t^2)^{\frac{d}{2}}}\exp\big(-\frac{1}{2\sigma_t^2}\|m_t y - x\|^2\big)\mathrm{d}y 
        \\ &= m_t^{-1}\int(\nabla_y q_*(y))\frac{1}{(2\pi\sigma_t^2)^{\frac{d}{2}}}\exp\big(-\frac{1}{2\sigma_t^2}\|m_t y - x\|^2\big)\mathrm{d}y
    \end{align}
    and
    \begin{align}
        \nabla_x^2 q_t(x) 
       & = m_t^{-2}\int(\nabla_y^2 q_*(y))\frac{1}{(2\pi\sigma_t^2)^{\frac{d}{2}}}\exp\big(-\frac{1}{2\sigma_t^2}\|m_t y - x\|^2\big)\mathrm{d}y
       .
    \end{align}
    Thus, we can compute $\nabla^2_x \log q_t(x)$ as
    \begin{align}
      &  \nabla^2_x \log q_t (x) 
      \\ &= \frac{\nabla^2_x q_t (x)}{q_t (x)} - \frac{\nabla_x q_t (x)(\nabla_x q_t (x))^\top }{(q_t (x))^2}
    \\ &= \frac{ m_t^{-2} \int (\frac{\nabla_y^2 q_*(y)}{q_*(y)}-\frac{\nabla_y q_*(y)\nabla_y q_*(y)^\top}{q_*(y)^2})q_*(y)\exp\big(-\frac{1}{2\sigma_t^2}\|m_t y - x\|^2\big)\mathrm{d}y}{\int q_*(y)\exp\big(-\frac{1}{2\sigma_t^2}\|m_t y - x\|^2\big)\mathrm{d}y}
    \label{eq:Appendix-Diffusion-Smoothness-1}
    \\ &\quad + \frac{ m_t^{-2} \int \frac{\nabla_y q_*(y)\nabla_y q_*(y)^\top}{q_*(y)^2} q_*(y)\exp\big(-\frac{1}{2\sigma_t^2}\|m_t y - x\|^2\big)\mathrm{d}y}{\int q_*(y)\exp\big(-\frac{1}{2\sigma_t^2}\|m_t y - x\|^2\big)\mathrm{d}y}
    \label{eq:Appendix-Diffusion-Smoothness-2}
       \\ &\quad -\frac{m_t^{-2} \int \frac{\nabla_y q_*(y)}{q_*(y)}q_*(y)\exp\big(-\frac{1}{2\sigma_t^2}\|m_t y - x\|^2\big)\mathrm{d}y\int \frac{\nabla_y q_*(y)}{q_*(y)}q_*(y)\exp\big(-\frac{1}{2\sigma_t^2}\|m_t y - x\|^2\big)\mathrm{d}y}{(\int q_*(y)\exp\big(-\frac{1}{2\sigma_t^2}\|m_t y - x\|^2\big)\mathrm{d}y)^2}.
       \label{eq:Appendix-Diffusion-Smoothness-3}
    \end{align}
    Eq.~(\ref{eq:Appendix-Diffusion-Smoothness-1}) is an expectation of $\nabla_y^2 \log q_*(y) = \frac{\nabla_y^2 q_*(y)}{q_*(y)}-\frac{\nabla_y q_*(y)\nabla_y q_*(y)^\top}{q_*(y)^2}$ with respect to a distribution 
    \begin{align}
        A(y|x) \propto q_*(y)\exp\big(-\frac{1}{2\sigma_t^2}\|m_t y - x\|^2\big).
    \end{align}
    Therefore, (\ref{eq:Appendix-Diffusion-Smoothness-1}) is bounded by $m_t^{-2}(L_p+L_\rho)$ from the assumption.
    On the other hand, the other two terms are regarded as the covariance of $\nabla \log q_*(y)$ with respect to that distribution.
    Because $\sigma_t^2 \leq \frac{1}{2(L_p + L_\rho)}$, $A(y|x)$
    is $(L_p + L_\rho)$-strongly concave, and therefore satisfies the Poincar\'e inequality with a constant $\frac{1}{L_p + L_\rho}$. Therefore, for any $a\in \R^d$, we have
    \begin{align}
  &  a^\top (\text{(\ref{eq:Appendix-Diffusion-Smoothness-2})}+\text{(\ref{eq:Appendix-Diffusion-Smoothness-3})})a
  \\ &  = m_t^{-2}a^\top(\mathbb{E}_{A(y|x)}[(\nabla \log q_*(y))(\nabla \log q_*(y))^\top] - \mathbb{E}_{A(y|x)}[\nabla \log q_*(y)]\mathbb{E}_{A(y|x)}[\nabla \log q_*(y)]^\top)a
      \\ &    \leq \frac{m_t^{-2}}{L_p + L_\rho} \cdot \mathbb{E}[\|a\nabla^2 \log q_*(y)\|^2]\leq \frac{m_t^{-2}}{L_p + L_\rho}(L_p+L_\rho)^2 = m_t^{-2}L_p+L_\rho.
    \end{align}
    This implies that (\ref{eq:Appendix-Diffusion-Smoothness-2})$+$(\ref{eq:Appendix-Diffusion-Smoothness-3}) is $m_t^{-2}(L_p+L_\rho)$-smooth and $\nabla_x^2 \log q_t(x)$ is $2m_t^{-2}(L_p+L_\rho)$-smooth.
    Because $m_t = \sqrt{1-\sigma_t^2}$, we have $m_t \geq \sqrt{1-\frac{1}{2(L_p+L_\rho)}}.$ By applying this, we have $2m_t^{-2}(L_p+L_\rho)\leq \frac{4(L_p+L_\rho)^2}{2(L_p+L_\rho)-1}$.

    Next, let us consider the case when $\sigma_t^2 \geq \frac{2}{(L_p + L_\rho)}$. 
    Note that  $\nabla_x q_t(x)$ and $\nabla_x^2 q_t(x)$ are also written as
    \begin{align}
        \nabla_x q_t(x) &= 
        -\int q_*(y)\frac{1}{(2\pi\sigma_t^2)^{\frac{d}{2}}}\exp\big(-\frac{1}{2\sigma_t^2}\|m_t y - x\|^2\big)\frac{x-m_ty}{\sigma_t^2}\mathrm{d}y
    \end{align}
    and
    \begin{align}
       & \nabla_x^2 q_t(x) 
     \\  & =-\int q_*(y)\frac{1}{(2\pi\sigma_t^2)^{\frac{d}{2}}}\exp\big(-\frac{1}{2\sigma_t^2}\|m_t y - x\|^2\big)\frac{I}{\sigma_t^2}\mathrm{d}y   
      \\ & \quad  + \int q_*(y)\frac{1}{(2\pi\sigma_t^2)^{\frac{d}{2}}}\exp\big(-\frac{1}{2\sigma_t^2}\|m_t y - x\|^2\big)\frac{(x-m_ty)(x-m_ty)^\top}{\sigma_t^4}\mathrm{d}y.
    \end{align}
    Thus, we can compute $\nabla_x^2 \log q_t(x)$ as
    \begin{align}
      &  \nabla^2_x \log q_t (x) 
     \\  &= \frac{\nabla^2_x q_t (x)}{q_t (x)} - \frac{\nabla_x q_t (x)(\nabla_x q_t (x))^\top }{(q_t (x))^2}
      \\ &=  -\frac{\int q_*(y)\exp\big(-\frac{1}{2\sigma_t^2}\|m_t y - x\|^2\big)\frac{I}{\sigma_t^2}\mathrm{d}y }{\int q_*(y)\exp\big(-\frac{1}{2\sigma_t^2}\|m_t y - x\|^2\big)\mathrm{d}y}
       \label{eq:Appendix-Diffusion-Smoothness-4}
      \\ & \quad  + \frac{\int q_*(y)\exp\big(-\frac{1}{2\sigma_t^2}\|m_t y - x\|^2\big)\frac{(x-m_ty)(x-m_ty)^\top}{\sigma_t^4}\mathrm{d}y}{\int q_*(y)\exp\big(-\frac{1}{2\sigma_t^2}\|m_t y - x\|^2\big)\mathrm{d}y}
       \label{eq:Appendix-Diffusion-Smoothness-5}
      \\ & \quad - \frac{\big(\int q_*(y)\exp\big(-\frac{1}{2\sigma_t^2}\|m_t y - x\|^2\big)\frac{(x-m_ty)}{\sigma_t^2}\mathrm{d}y\big)\big(\int q_*(y)\exp\big(-\frac{1}{2\sigma_t^2}\|m_t y - x\|^2\big)\frac{(x-m_ty)}{\sigma_t^2}\mathrm{d}y\big)^\top}{\big(\int q_*(y)\exp\big(-\frac{1}{2\sigma_t^2}\|m_t y - x\|^2\big)\mathrm{d}y\big)^2}.
       \label{eq:Appendix-Diffusion-Smoothness-6}
    \end{align}
    Eq.~(\ref{eq:Appendix-Diffusion-Smoothness-4}) is bounded by $\sigma_t^{-2}\leq 2(L_p+L_\rho)$. 
    On the other hand, (\ref{eq:Appendix-Diffusion-Smoothness-5})+(\ref{eq:Appendix-Diffusion-Smoothness-6}) are transformed into
    \begin{align}
       & \text{(\ref{eq:Appendix-Diffusion-Smoothness-5})+(\ref{eq:Appendix-Diffusion-Smoothness-6})}
       \\ & = \frac{m_t^2}{\sigma_t^4}\frac{\int q_*(y)\exp\big(-\frac{1}{2\sigma_t^2}\|m_t y - x\|^2\big)yy^\top\mathrm{d}y}{\int q_*(y)\exp\big(-\frac{1}{2\sigma_t^2}\|m_t y - x\|^2\big)\mathrm{d}y}
      \\ & \quad - \frac{m_t^2}{\sigma_t^4}\frac{\big(\int q_*(y)\exp\big(-\frac{1}{2\sigma_t^2}\|m_t y - x\|^2\big)y\mathrm{d}y\big)\big(\int q_*(y)\exp\big(-\frac{1}{2\sigma_t^2}\|m_t y - x\|^2\big)y\mathrm{d}y\big)^\top}{\big(\int q_*(y)\exp\big(-\frac{1}{2\sigma_t^2}\|m_t y - x\|^2\big)\mathrm{d}y\big)^2}
     \\ & =\frac{m_t^2}{\sigma_t^4}\mathrm{Var}_{q_{0|t}(y|x)}(y),
    \end{align}
    where $\mathrm{Var}_{q_{0|t}(y|x)}(y)$ means the variance of $X_0$ conditioned on $X_t=x$, with respect to $q_t$.

    Thus, bounding $\mathrm{Var}_{q_{0|t}(y|x)}(x)$ yields the conclusion. 
    Lemma~\ref{lem:Bounded-Discrepancy} implies that
    \begin{align}
        \mathrm{Var}_{q_{0|t}(y|x)}(y) \leq C_\rho^2 \mathrm{Var}_{p_{0|t}(y|x)}(y).
        \label{eq:Appendix-Diffusion-Smoothness-7}
    \end{align}
    Similarly to $\nabla_x^2 \log q_t(x)$, $\nabla_x^2 \log p_t(x)$ satisfies
    \begin{align}
        \nabla_x^2 \log p_t(x) = -\sigma_t^{-2}I_d + \frac{m_t^2}{\sigma_t^4}\mathrm{Var}_{p_{0|t}(y|x)}(y).
        \label{eq:Appendix-Diffusion-Smoothness-8}
    \end{align}
    By combining (\ref{eq:Appendix-Diffusion-Smoothness-7}) and (\ref{eq:Appendix-Diffusion-Smoothness-8}), we have
    \begin{align}
        \frac{m_t^2}{\sigma_t^4} \mathrm{Var}_{q_{0|t}(y|x)}(y) \leq C_\rho^2
        \sigma_t^{-2}I_d + C_\rho^2\nabla_x^2 \log p_t(x).
    \end{align}
    Therefore, from the assumption that $\nabla_x \log p_t(x)$ is $L_p$-Lipschitz and $\sigma_t^{-2}\geq 2(L_p+L_\rho)$, we obtain that $ \|\frac{m_t^2}{\sigma_t^4} \mathrm{Var}_{q_{0|t}(y|x)}(y)\|\leq (2+C_\rho^2) L_p + 2L_\rho$. 

    By putting it all together, $\nabla \log q_t$ is $((4+C_\rho^2) L_p + 4L_\rho)$-Lipschitz.
\end{proof}

\begin{lem}\label{lem:Bounded-Discrepancy}
    When $C_\rho^{-1}\leq h_*(x)\leq C_\rho$, we have
    \begin{align}
        \frac{q_{0|t}(x|x')}{p_{0|t}(x|x')}\leq C_\rho^2 
    \end{align}
    for all $x,x'$ and $t$.
\end{lem}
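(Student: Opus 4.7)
My plan is to express both conditionals via Bayes' rule in terms of the (shared) forward transition $p_{t|0}(x'|x)$ of the OU process, use the fact that $q_0 = \rho_* p_0$ and that the forward dynamics \eqref{eq:Appendix-Diffusion-Preliminary-1} depend only on the initial marginal, and then bound the resulting ratio using the two-sided bound on $\rho_*$.

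Concretely, the first step is to observe that the noising SDE is identical whether started from $p_*$ or $q_*$, so the conditional $p_{t|0}(x'|x) = \mathcal{N}(x';m_t x,\sigma_t^2 I)$ is the same object appearing in the decompositions
\begin{equation}
p_{0|t}(x|x') = \frac{p_0(x)\,p_{t|0}(x'|x)}{p_t(x')}, \qquad q_{0|t}(x|x') = \frac{q_0(x)\,p_{t|0}(x'|x)}{q_t(x')}.
\end{equation}
Taking the ratio, the common factor $p_{t|0}(x'|x)$ cancels, yielding
\begin{equation}
\frac{q_{0|t}(x|x')}{p_{0|t}(x|x')} = \frac{q_0(x)}{p_0(x)} \cdot \frac{p_t(x')}{q_t(x')} = \rho_*(x) \cdot \frac{p_t(x')}{q_t(x')}.
\end{equation}

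The second step is to bound $p_t(x')/q_t(x')$. Using $q_0 = \rho_* p_0$ in the forward marginal formula \eqref{eq:Appendix-Diffusion-Preliminary-3} (with $p_*$ replaced by $q_*$) and the assumption $C_\rho^{-1} \leq \rho_*(y) \leq C_\rho$ pointwise in the integrand, we get
\begin{equation}
C_\rho^{-1} p_t(x') \;\leq\; q_t(x') = \int \rho_*(y)\,p_0(y)\,p_{t|0}(x'|y)\,\mathrm{d}y \;\leq\; C_\rho\, p_t(x'),
\end{equation}
so $p_t(x')/q_t(x') \leq C_\rho$. Combining with $\rho_*(x) \leq C_\rho$ gives the claimed bound $q_{0|t}(x|x')/p_{0|t}(x|x') \leq C_\rho^2$.

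There is no real obstacle here: the only thing to be careful about is ensuring the forward transition $p_{t|0}$ genuinely matches between the two initializations (it does, because the OU drift is linear and does not see the initial law), which is what makes the cancellation in the first step valid. The argument is essentially a one-line consequence of Bayes' rule plus the pointwise bound on $\rho_*$, and it immediately feeds back into \eqref{eq:Appendix-Diffusion-Smoothness-7} in the proof of Lemma~\ref{lem:Smoothness}.
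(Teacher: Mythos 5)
Your proof is correct and follows essentially the same route as the paper: both express $q_{0|t}$ via Bayes' rule using the shared forward kernel $p_{t|0} = q_{t|0}$, and both absorb one factor of $C_\rho$ from $\rho_*(x)\leq C_\rho$ and another from bounding the normalizing integral (equivalently, $p_t(x')/q_t(x')\leq C_\rho$) using $\rho_*\geq C_\rho^{-1}$. Your explicit factorization of the ratio into $\rho_*(x)\cdot p_t(x')/q_t(x')$ is a slightly cleaner bookkeeping of the same computation the paper does by bounding numerator and denominator separately.
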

\begin{proof}
    We can write $q_{0|t}(x|x')$ as
    \begin{align}
        q_{0|t}(x|x') = \frac{q_{0,t}(x,x')}{\int q_{0}(x'')q_{t|0}(x|x'')\mathrm{d}x''}
        =  \frac{p_*(x)\rho_*(x)q_{t|0}(x'|x)}{\int p_*(x'')\rho_*(x'')q_{t|0}(x|x'')\mathrm{d}x''}.
    \end{align}
    Because $C_\rho^{-1}\leq \rho_*(x)\leq C_\rho$ and $q_{t|0}(x'|x)=p_{t|0}(x'|x)$, we have
    \begin{align}
        q_{0|t}(x|x') \leq  C_\rho^2\frac{p_*(x)p_{t|0}(x'|x)}{\int p_*(x'')p_{t|0}(x|x'')\mathrm{d}x''}
        = C_\rho^2p_{0|t}(x|x'),
    \end{align}
    which concludes the proof.
\end{proof}

\subsection{Estimation Error of the Correction Term}
\label{sec:proofOfCorrection}

Since we have shown the time discretization error in the previous section, what we remain to show is just an upper bound of $\varepsilon_{\rho,l}^2$. For that purpose, we put an additional assumption which is almost same as Assumption \ref{ass:BoundingHMainText} except the condition (iii). 
A bound of $\varepsilon_{\TV}^2$ in the third condition (iii) will be given as 
$\varepsilon_\TV^2 = \gO(\varepsilon^2 + h)$ by \cite{chen2023sampling}. 
\begin{assumption}\label{ass:BoundingH}
\begin{enumerate}[topsep=0mm,itemsep=-1mm,leftmargin = 6mm]
    \item[(i)] $\nabla_x s(\cdot,\cdot)$ is $H_s$-Lipschitz continuous in a sense that $\|\nabla_x s(x,t) - \nabla_y s(y,t)\|_{\mathrm{op}} \leq H_s \|x- y\|$
for any $x,y \in \sR^d$ and $0 \leq t \leq T$ and $\E[\| s(\bar{X}_{kh}^\leftarrow,kh)\|^2] \leq Q^2$ for any $k$.
    \item[(ii)] There exists $R > 0$ such that $\sup_{t,x}\{\|\nabla_x^2 \log p_t(x)\|_{\mathrm{op}},\|\nabla_x^2 \log s(x,t)\|_{\mathrm{op}}\} \leq R$.
    \item[(iii)] $\E_{\bar{X}_t}[\TV(\bar{X}_{T}^\leftarrow,{X}_{T}^\leftarrow|\bar{X}_{T-t}^\leftarrow = {X}_{T-t}^\leftarrow = \bar{X}_t)^2] \leq \varepsilon_{\TV}^2$ for any $t \in [0,T]$.
\end{enumerate}    
\end{assumption}

\begin{thm}\label{thm:delhDiffXYExp}
Suppose that $0 \leq h \leq \delta \leq 1/(1 + 2R)$
and Assumptions \ref{ass:BoundingH} and \ref{assumption:TVBoundMainText-2} hold. 
Let $L_\varphi$ and $R_\varphi$ be as given in Lemma \ref{lemm:phiYboundLip}.
Then, it holds that  
\begin{align}
& \E_{\bar{X}_{\cdot}^\leftarrow}[ \|\nabla_x \E[\rho_*(\bar{X}_{T}^\leftarrow) |  \bar{X}_{t}] 
-  \nabla_x \E[\rho_*({X}_{T}^\leftarrow) | {X}_{kh}^\leftarrow]
\|^2]  \\
\lesssim & 
R_\varphi^2 \left(\varepsilon^2+ \Lipdp^2 d(\delta + \mathsf{m} \delta^2) \right)
+ \Xi_{\delta,\varepsilon}
+
[ L_\varphi^2 (\mathsf{m} + 4Q^2 + d h)  
+ R_\varphi^2  (1 + 2R)^2] h^2 \\
= & \gO\left(\varepsilon^2 + \delta + \frac{\varepsilon_\TV^2}{\delta}\right), 
\end{align}
where 
\begin{align}
\Xi_{\delta,\varepsilon} & := \frac{4 c_\eta^2  C_\rho^2 (1+2R)^2}{3}\delta  
+ 2 \exp(2)
\left\{ C_\rho^2 \frac{\varepsilon_{\TV}^2}{\delta} + C R_\varphi^2  [\varepsilon^2 + \Lipdp^2 d(\delta + \mathsf{m} \delta^2)]\right\},
\end{align} 
and $c_\eta > 0$ is a universal constant. 
\end{thm}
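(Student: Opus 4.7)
The plan is to apply the Bismut--Elworthy--Li (BEL) integration-by-parts formula over a short warm-up window $[kh, kh+\delta]$ to each of the two gradients. BEL re-expresses
\[
\nabla_x \E[\rho_*(Y_T^\leftarrow) \mid Y_{kh}^\leftarrow = x] \;=\; \E\bigl[\varphi_Y(Y_{kh+\delta}^\leftarrow)\, W_Y\bigr],
\]
where $\varphi_Y(y) := \E[\rho_*(Y_T^\leftarrow)\mid Y_{kh+\delta}^\leftarrow = y]$ and $W_Y$ is a Malliavin weight supported on $[kh,kh+\delta]$. This trades differentiation of the unknown function $\rho_*$ for an $L^2$ estimate of a stochastic integral whose variance scales like $1/\delta$, while the uniform bound $R_\varphi$ and Lipschitz constant $L_\varphi$ of $\varphi_Y$ provided by Lemma~\ref{lemm:phiYboundLip} keep the resulting bounds tractable.

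First I would split the error by a triangle inequality into a \emph{time-shift} error
\[
\nabla_x \E[\rho_*(\bar{X}_T^\leftarrow)\mid \bar{X}_t] - \nabla_x \E[\rho_*(\bar{X}_T^\leftarrow)\mid \bar{X}_{kh}^\leftarrow] \qquad (t\in[kh,(k+1)h))
\]
and a \emph{process-approximation} error
\[
\nabla_x \E[\rho_*(\bar{X}_T^\leftarrow)\mid \bar{X}_{kh}^\leftarrow=x] - \nabla_x \E[\rho_*(X_T^\leftarrow)\mid X_{kh}^\leftarrow=x].
\]
For the time-shift piece, an It\^o bound on the true reverse SDE using the Hessian bound $R$ and the second-moment bound $\mathsf{m}$ gives $\E\|\bar{X}_t-\bar{X}_{kh}^\leftarrow\|^2 \lesssim h^2(\mathsf{m}+Q^2+dh)$; combining this with the Lipschitz control $L_\varphi$ of the BEL representation (for the piece where $\varphi$ is differentiated) and the uniform bound $R_\varphi$ together with a $(1+2R)$ factor from the drift Jacobian (for the piece where the Malliavin weight is differentiated) produces the outer $[L_\varphi^2(\mathsf{m}+4Q^2+dh)+R_\varphi^2(1+2R)^2]h^2$ term.

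For the process-approximation piece I would couple $\bar{X}^\leftarrow$ and $X^\leftarrow$ under the same Brownian motion on $[kh,kh+\delta]$. The strong error at the warm-up endpoint satisfies $\E\|\bar{X}_{kh+\delta}^\leftarrow-X_{kh+\delta}^\leftarrow\|^2 \lesssim \varepsilon^2+\Lipdp^2 d(\delta+\mathsf{m}\delta^2)$ by the standard score-based diffusion analysis in \cite{chen2023sampling}; multiplied by $R_\varphi^2$ from the BEL representation this produces the external $R_\varphi^2(\varepsilon^2+\Lipdp^2 d(\delta+\mathsf{m}\delta^2))$ term. The terms inside $\Xi_{\delta,\varepsilon}$ arise from the two BEL representations themselves: the $1/\delta$-scaling of $\E[\|W_Y\|^2]$, paired against the TV-close endpoint laws of the two processes (of distance $\le \varepsilon_{\TV}$ by Assumption~\ref{ass:BoundingH}(iii)), yields the $C_\rho^2 \varepsilon_{\TV}^2/\delta$ contribution, while the companion term $C R_\varphi^2[\varepsilon^2+\Lipdp^2 d(\delta+\mathsf{m}\delta^2)]$ comes from the residual strong error crossed with $W_Y$. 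The free $C_\rho^2(1+2R)^2\delta$ summand pays for a direct bound on $\|\nabla_x \E[\rho_*(Y_T^\leftarrow)\mid Y_{kh}^\leftarrow=\cdot]\|$ of horizon $\delta$, using $\|\rho_*\|_\infty \le C_\rho$ and the Hessian bound.

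The main obstacle will be controlling the Malliavin weight for the \emph{discretized, score-estimated} process whose drift $s(\cdot,\cdot)$ is only $H_s$-Lipschitz in its Jacobian: the first-variation (derivative) process appearing inside $W_X$ depends on $\nabla_x s$ along the trajectory, and bounding both its second moment and its discrepancy from the corresponding weight for $\bar{X}^\leftarrow$ requires carefully combining the Hessian-type bounds of Assumption~\ref{ass:BoundingH} with a Gr\"onwall estimate on the Jacobian flow; this is where the universal constant $c_\eta$ in $\Xi_{\delta,\varepsilon}$ presumably originates. Once these pieces are assembled, summing the contributions and using $0 \le h \le \delta \le 1/(1+2R)$ yields the stated bound.
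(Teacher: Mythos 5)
Your plan matches the paper's core strategy closely: a Bismut--Elworthy--Li representation over a short warm-up window with a Malliavin weight whose $L^2$ norm scales like $1/\delta$, a Gr\"onwall-type bound on the first-variation (Jacobian) process, the $\varepsilon_{\TV}$ hypothesis for comparing the conditional laws at the far endpoint, and the strong-error estimate of \cite{chen2023sampling} for the trajectory coupling, with $R_\varphi$ and $L_\varphi$ from Lemma~\ref{lemm:phiYboundLip} feeding in throughout. The one structural difference is the ordering of the triangle inequality: you shift time first on the \emph{true} process ($\bar X_t^\leftarrow \to \bar X_{kh}^\leftarrow$) and then swap processes at the fixed time $kh$, whereas the paper (eq.~\eqref{eq:NablaRhotDecomp}) swaps processes at the general time $t$ (which forces the $t=kh$/general-$t$ case split giving $\Xi_{\delta,\varepsilon}$ and the extra $R_\varphi^2(\varepsilon^2+\cdots)$ term) and then shifts time on the \emph{discretized} process. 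Both orderings are valid, and yours arguably avoids the two-case argument; but because your time-shift piece lives on the true reverse SDE rather than the frozen-drift Euler step, the $Q^2$ appearing in the $h^2$ term of the stated bound (which in the paper comes from $\E\|s(X_{kh}^\leftarrow,kh)\|^2\le Q^2$ in Assumption~\ref{ass:BoundingH}(i)) would be replaced by a moment bound on $\nabla\log p_{T-\tau}$ derived from $L_p$ and $\mathsf m$; since the theorem is stated with $\lesssim$ this is fine, but the constants would not match literally. Two smaller attribution slips worth noting: the $C_\rho^2(1+2R)^2\delta$ piece does not come from a direct bound on the short-horizon gradient norm but from the deviation estimate $\|\eta_\tau-v\|\lesssim(1+2R)\tau$ of the Malliavin weight from its initialization (the paper bounds $\|\eta_{\bar X,\tau}-\eta_{X,\tau}\|^2\le 2(\|\eta_{\bar X,\tau}-v\|^2+\|\eta_{X,\tau}-v\|^2)$ rather than running a coupled Gr\"onwall on the discrepancy); and $c_\eta$ is simply the universal constant absorbing $(\exp(2(1+2R)\tau)-1)^2\le c_\eta(1+2R)^2\tau^2$ for $\tau\le\delta\le 1/(1+2R)$, not a product of the $H_s$-Gr\"onwall estimate (which instead feeds into $L_\varphi$ in Lemma~\ref{lemm:phiYboundLip}).
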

By \cite{chen2023sampling}, $\varepsilon_\TV^2 = \gO(\varepsilon^2 + h)$, and thus by substituting $\delta \leftarrow \sqrt{h}$, we finally obtain an error estimate as 
$$
\left( 1 + \frac{1}{\sqrt{h}}\right) \varepsilon^2 + \sqrt{h}. 
$$
\begin{proof}
Let $t \in [kh,(k+1)h)$ and $t^* = kh + \delta$ where $\delta$ is larger than or equal to $h$: $\delta \geq h$. 
We only consider a situation where $T- t \geq \delta$. 
The situation where $\delta < T- t$ can be treated in the same manner by noticing 
a trivial relation $\nabla_x \E[\rho_*(\bar{X}_{T}^\leftarrow) | \bar{X}_{T}^\leftarrow = x]
= \nabla \rho_*(x)$.
Then, for a given initial state $x \in \sR^d$, we define the stochastic processes as 
\begin{align}
    \bar{X}_{t}^\leftarrow=x,\ \mathrm{d}\bar{X}_{\tau}^\leftarrow = \{\bar{X}_{\tau}^\leftarrow + 2\nabla \log p_{T-\tau}(\bar{X}_{\tau}^\leftarrow)\}\mathrm{d}\tau + \sqrt{2}\mathrm{d}B_\tau,
    ~~~(t \leq \tau \leq T), 
\end{align}
and its numerical approximation as  
\begin{align}
    {X}_{t}^\leftarrow=x,\ \mathrm{d}{X}_{\tau}^\leftarrow = \{{X}_{\tau}^\leftarrow + 2s({X}^\leftarrow_{k_\tau h},k_\tau h)\}\mathrm{d}\tau + \sqrt{2}\mathrm{d}B_\tau.
    ~~~(t \leq \tau \leq T), 
\end{align}
where $k_\tau$ is the integer such that $\tau \in [k_\tau h,(k_\tau +1)h)$. 

Note that 
\begin{align}
& \nabla_x \E[\rho_*(\bar{X}_{T}^\leftarrow) |  \bar{X}_{t}]
-  \nabla_x \E[\rho_*({X}_{T}^\leftarrow) | {X}_{kh}^\leftarrow] \\
=&  
\underbrace{(\nabla_x \E[\rho_*(\bar{X}_{T}^\leftarrow) |  \bar{X}_{t}]
-  \nabla_x \E[\rho_*({X}_{T}^\leftarrow) | {X}_{t}^\leftarrow])}_{(a)} 
+ \underbrace{(\nabla_x \E[\rho_*({X}_{T}^\leftarrow) | {X}_{t}^\leftarrow]
-  \nabla_x \E[\rho_*({X}_{T}^\leftarrow) | {X}_{kh}^\leftarrow])}_{(b)}. 
\label{eq:NablaRhotDecomp}
\end{align}
We first evaluate the term (a): 
$$
\nabla_x \E[\rho_*(\bar{X}_{T}^\leftarrow) | \bar{X}_{t}^\leftarrow = x] - \nabla_x \E[\rho_*({X}_{T}^\leftarrow) | {X}_{t}^\leftarrow = x]. 
$$
As we have seen above, the derivative can be expressed by the following recursive formula of the conditional expectation:  
$$
\nabla_x \E[\E[\rho_*(\bar{X}_{T}^\leftarrow) | \bar{X}_{t^*} ] | \bar{X}_{t}^\leftarrow = x].
$$
For a notation simplicity, we let $\varphi_X(x) := \E[\rho_*(\bar{X}_{T}^\leftarrow) | \bar{X}_{t^*}^\leftarrow = x]$ and $\varphi_Y(x) := \E[\rho_*({X}_{T}^\leftarrow) | {X}_{t^*}^\leftarrow = x]$. Then, the Bismut-Elworthy-Li formula \citep{MR755001,ELWORTHY1994252} yields that, for any $v \in \sR^d$,
\begin{align}
v^\top \nabla_x \E[\rho_*(\bar{X}_{0}^\leftarrow) | \bar{X}_{t}^\leftarrow = x] = 
\E\left[\frac{1}{\delta} \int_{0}^\delta \langle \eta_{\bar{X},\tau} , \mathrm{d} B_\tau \rangle  \varphi_X( \bar{X}_{t^*}^\leftarrow) ~|~\bar{X}_{t}^\leftarrow = x\right], 
\end{align}
where $\eta_{\bar{X},\tau}$ is  the solution of 
\begin{align}
& \rd \eta_{\bar{X},\tau}  = (I + 2 \nabla^2 \log p_{T- t -\tau}(\bar{X}_{t+\tau}^\leftarrow)) \eta_{\bar{X},\tau} \rd \tau,  \\
& \eta_{\bar{X},0} = v. 
\end{align} 
Similarly, we define $\eta_{{X},\tau}$ for the process ${X}_{\tau}^\leftarrow$ as
\begin{align}
& \rd \eta_{{X},\tau}  = (\eta_{{X},\tau} + 2 \nabla_x^\top s({X}_{k_\tau h}^\leftarrow,k_{\tau} h)\eta_{{X},k_\tau h - t})  \rd \tau,  \\
& \eta_{{X},0} = v. 
\end{align} 
Then, 
\begin{align}
& v^\top \nabla_x \E[\rho_*(\bar{X}_{T}^\leftarrow) | \bar{X}_{t}^\leftarrow = x] -  v^\top \nabla_x \E[\rho_*({X}_{T}^\leftarrow) | {X}_{t}^\leftarrow = x]  \\
 = &  \E\left[ \frac{1}{\delta} \int_{0}^\delta \langle \eta_{\bar{X},\tau} -\eta_{{X},\tau}, \mathrm{d} B_\tau \rangle  \varphi_Y( {X}_{t^*}^\leftarrow)   ~|~\bar{X}_{t}^\leftarrow = {X}_{t}^\leftarrow = x\right]  \\
 & + \E\left[ \frac{1}{\delta} \int_{0}^\delta \langle \eta_{\bar{X},\tau} , \mathrm{d} B_\tau \rangle  (\varphi_X( \bar{X}_{t^*}^\leftarrow) - \varphi_Y( {X}_{t^*}^\leftarrow))  ~|~ \bar{X}_{t}^\leftarrow = {X}_{t}^\leftarrow = x\right]. 
 \label{eq:BismutBoundFirst}
\end{align}
By the Ito isometry, the first term of the right hand side can be bounded as 
\begin{align}
& \left( \E\left[ \frac{1}{\delta} \int_{0}^\delta \langle \eta_{\bar{X},\tau} -\eta_{{X},\tau} , \mathrm{d} B_\tau \rangle  \varphi_Y( {X}_{t^*})   ~|~\bar{X}_{t}^\leftarrow = {X}_{t}^\leftarrow =x\right]\right)^2 \\
\leq & 
 \E\left[ \left(\frac{1}{\delta} \int_{0}^\delta \langle \eta_{\bar{X},\tau} -\eta_{{X},\tau} , \mathrm{d} B_\tau \rangle  \varphi_Y( {X}_{t^*}) \right)^2  ~|~\bar{X}_{t}^\leftarrow = {X}_{t}^\leftarrow =x\right] \\
\leq & 
C_\rho^2 \E\left[ \left(\frac{1}{\delta} \int_{0}^\delta \langle \eta_{\bar{X},\tau} -\eta_{{X},\tau} , \mathrm{d} B_\tau \rangle  \right)^2  ~|~\bar{X}_{t}^\leftarrow ={X}_{t}^\leftarrow = x\right] \\
= &
C_\rho^2  \E\left[ \frac{1}{h^2} \int_{\tau}^h \| \eta_{\bar{X},\tau} -\eta_{{X},\tau}\|^2 \rd \tau  ~|~\bar{X}_{t}^\leftarrow ={X}_{t}^\leftarrow = x\right] \\
\leq &
2 C_\rho^2  \E\left[ \frac{1}{\delta^2} \int_{0}^\delta (\| \eta_{\bar{X},\tau} - v\|^2 + \| \eta_{{X},\tau} - v\|^2) \rd \tau  ~|~\bar{X}_{t}^\leftarrow ={X}_{t}^\leftarrow = x\right].
\end{align}
Hence, we just need to bound $\|\eta_{\bar{X},\tau} - v\|^2$ in the right hand side. 
We note that it obeys the following differential equation:   
\begin{align}
&\frac{\rd \|\eta_{\bar{X},\tau} - v\|^2}{\rd \tau}  \\
= & 2 (\eta_{\bar{X},\tau} - v)^\top \frac{\rd \eta_{\bar{X},\tau}}{\rd \tau} \\
= & 2 (\eta_{\bar{X},\tau} - v)^\top (I + 2 \nabla^2 \log p_{T- t -\tau}(\bar{X}_{T - t -\tau}^\leftarrow)) \eta_{\bar{X},\tau} \\
= & 2 (\eta_{\bar{X},\tau} - v)^\top (I + 2 \nabla^2 \log p_{T- t -\tau}(\bar{X}_{T - t -\tau}^\leftarrow)) [(\eta_{\bar{X},\tau} - v) + v] \\
\leq & 2 (1 + 2 R) \|\eta_{\bar{X},\tau} - v\|^2 +   2 (1 + 2R) \|v\| \|\eta_{\bar{X},\tau} - v\|, 
\end{align}
which also yields that  
\begin{align}
&2 \|\eta_{\bar{X},\tau} - v\| \frac{\rd \|\eta_{\bar{X},\tau} - v\|}{\rd \tau}  \leq  2(1+ 2 R) \|\eta_{\bar{X},\tau} - v\|^2 +    2(1+ 2 R) \|v\| \|\eta_{\bar{X},\tau} - v\| \\
\Rightarrow~ 
&\frac{\rd \|\eta_{\bar{X},\tau} - v\|}{\rd \tau}  \leq   2(1+ 2 R) (\|\eta_{\bar{X},\tau} - v\| +  \|v\|)  \\
\Rightarrow ~& \|\eta_{\bar{X},\tau} - v\| \leq [\exp( 2 (1+ 2 R) \tau) - 1]\|v\| \\
\Rightarrow ~& \|\eta_{\bar{X},\tau} - v\|^2 \leq [\exp( 2 (1+ 2 R) \tau) - 1]^2\|v\|^2.
\end{align}
Therefore, if $\delta$ is sufficiently small (such as $\delta \leq 1/(1 + 2R)$), then we arrive at 
$$
\|\eta_{\bar{X},\tau} - v\|^2 \leq c_\eta (1+ 2 R)^2 \tau^2\|v\|^2,
$$
with a universal constant $c_\eta$, for any $0 \leq \tau \leq \delta$. 
In the same vein, we also have 
\begin{align}
& \|\eta_{{X},\tau} - v\|^2 \leq c_\eta (1+ 2 R)^2 \tau^2\|v\|^2,~ \\
& \|\eta_{\bar{X},\tau}\|^2 \leq \exp(4(1+2R)\tau) \|v\|^2,~
\|\eta_{{X},\tau}\|^2 \leq \exp(4(1+2R)\tau) \|v\|^2,
\end{align}
for $0 \leq \tau \leq \delta$. 
These bounds yield that  
\begin{align}
& 2 C_\rho^2  \E\left[ \frac{1}{\delta^2} \int_{0}^\delta (\| \eta_{\bar{X},\tau} - v\|^2 + \| \eta_{{X},\tau} - v\|^2) \rd \tau  ~|~\bar{X}_{t}^\leftarrow = {X}_{t}^\leftarrow =x\right] \\
\leq & 4 c_\eta^2  C_\rho^2 (1+2R)^2  \frac{\int_0^\delta \tau^2 \rd \tau}{\delta^2} = 
\frac{4 c_\eta^2  C_\rho^2 (1+2R)^2}{3}\delta. 
\end{align}

Next, we bound the second term of the right hand side in \Eqref{eq:BismutBoundFirst}: 
\begin{align}
& \E\left[ \frac{1}{\delta} \int_{0}^\delta \langle \eta_{\bar{X},\tau} , \mathrm{d} B_\tau \rangle  (\varphi_X( \bar{X}_{t^*}^\leftarrow) - \varphi_Y( {X}_{t^*}^\leftarrow))  ~|~ \bar{X}_{t}^\leftarrow = {X}_{t}^\leftarrow = x\right]^2 \\
\leq &
\E\left[ \frac{1}{\delta^2} \int_{0}^\delta \|\eta_{\bar{X},\tau}\|^2 \rd \tau  ~|~ \bar{X}_{t}^\leftarrow = {X}_{t}^\leftarrow = x\right] 
\E\left[ (\varphi_X( \bar{X}_{t^*}^\leftarrow) - \varphi_Y( {X}_{t^*}^\leftarrow))^2  ~|~ \bar{X}_{t}^\leftarrow = {X}_{t}^\leftarrow = x\right] \\
\leq &
\frac{\exp(2(1 + 2R)\delta)\|v\|^2}{\delta}
\E\left[ (\varphi_X( \bar{X}_{t^*}^\leftarrow) - \varphi_Y( {X}_{t^*}^\leftarrow))^2  ~|~ \bar{X}_{t}^\leftarrow = {X}_{t}^\leftarrow = x\right]. 
\label{eq:varPhiXYdiff}
\end{align}
Otherwise, we also have the following inequality:  
\begin{align}
& \E\left[ \frac{1}{\delta} \int_{0}^\delta \langle \eta_{\bar{X},\tau} , \mathrm{d} B_\tau \rangle  (\varphi_X( \bar{X}_{t^*}^\leftarrow) - \varphi_Y( {X}_{t^*}^\leftarrow))  ~|~ \bar{X}_{t}^\leftarrow = {X}_{t}^\leftarrow = x\right]^2 \\
\leq & 2 \E\left[ \frac{1}{\delta} \int_{0}^\delta \langle \eta_{\bar{X},\tau} , \mathrm{d} B_\tau \rangle  (\varphi_Y( \bar{X}_{t^*}^\leftarrow) -\varphi_Y( {X}_{t^*}^\leftarrow)  )  ~|~ \bar{X}_{t}^\leftarrow = {X}_{t}^\leftarrow = x\right]^2 \\ & 
+ 2 \E\left[  \eta_{\bar{X},\delta}^\top \nabla (\varphi_X( \bar{X}_{t^*}^\leftarrow)) - \varphi_Y( \bar{X}_{t^*}^\leftarrow))  ~|~ \bar{X}_{t}^\leftarrow = {X}_{t}^\leftarrow = x\right]^2 \\
\leq &
\frac{\exp(2(1 + 2R)\delta)\|v\|^2}{\delta}
\E\left[ (\varphi_Y( \bar{X}_{t^*}^\leftarrow) - \varphi_Y( {X}_{t^*}^\leftarrow))^2  ~|~ \bar{X}_{t}^\leftarrow = {X}_{t}^\leftarrow = x\right] \\
& + 2 \E\left[ \| \eta_{\bar{X},\delta} \| \|\nabla (\varphi_X( \bar{X}_{t^*}^\leftarrow)) - \varphi_Y( \bar{X}_{t^*}^\leftarrow))\|  ~|~ \bar{X}_{t}^\leftarrow = {X}_{t}^\leftarrow = x\right]^2.
\label{eq:varPhiXYdiffSecond}
\end{align}

For bounding these quantities, we need to bound the discrepancy $\|\bar{X}_{\tau}^\leftarrow - {X}_{\tau}^\leftarrow\|^2$.  
Note that this quantity follows the following ODE: 
\begin{align}
& \frac{\rd \|\bar{X}_{\tau}^\leftarrow - {X}_{\tau}^\leftarrow\|^2 }{\rd \tau} \\
= & 2(\bar{X}_{\tau}^\leftarrow - {X}_{\tau}^\leftarrow)^\top [(\bar{X}_{\tau}^\leftarrow - 2 \nabla_x \log p_{T-\tau-t}(\bar{X}_{\tau}^\leftarrow)) - 
({X}_{\tau}^\leftarrow - 2 s(x,kh))]  \\
= & 2\|\bar{X}_{\tau}^\leftarrow - {X}_{\tau}^\leftarrow\|^2 
- 4(\bar{X}_{\tau}^\leftarrow - {X}_{\tau}^\leftarrow)^\top 
(\nabla_x \log p_{T-\tau-t}(\bar{X}_{\tau}^\leftarrow) - s(x,kh))  \\
\leq 
& 4\|\bar{X}_{\tau}^\leftarrow - {X}_{\tau}^\leftarrow\|^2  + 2 \|\nabla_x \log(p_{T-\tau-t}(\bar{X}_{\tau}^\leftarrow))-s(x,kh)\|^2.
\end{align}
Therefore, it satisfies that 
\begin{align}
\|\bar{X}_{\tau}^\leftarrow - {X}_{\tau}^\leftarrow\|^2
\leq 
4 \int_0^\tau  \|\bar{X}_{s}^\leftarrow - {X}_{s}^\leftarrow\|^2 \rd s 
+ 
2 \int_0^\tau \|\nabla_x \log(p_{T-\tau-t}(\bar{X}_{s}^\leftarrow))-s(x,kh)\|^2 \rd s.
\end{align}
Taking its expectation, we see that 
\begin{align}
\E[\|\bar{X}_{\tau}^\leftarrow - {X}_{\tau}^\leftarrow\|^2]
\leq 
4 \int_0^\tau  \E[\|\bar{X}_{s}^\leftarrow - {X}_{s}^\leftarrow\|^2] \rd s 
+ 
2 \underbrace{\int_0^\tau (\varepsilon^2 + \Lipdp^2 d s + \Lipdp^2 \mathsf{m} s^2) \rd s}_{=\gO(\varepsilon^2 \tau + \Lipdp^2 d (\tau^2 + \mathsf{m} \tau^3)) =: \xi(\tau)},
\end{align}
where we used Theorem 10 (and its proof) of \cite{chen2023improved} for obtaining $\xi(\tau)$. 
Then, Gronwall inequality yields 
\begin{align}
\E[\|\bar{X}_{\tau}^\leftarrow - {X}_{\tau}^\leftarrow\|^2]
\leq \xi(\tau) + \int 4 \xi(s) e^{4(\tau -s)} \rd s \lesssim \varepsilon^2 \tau + \Lipdp^2 d(\tau^2 + \mathsf{m} \tau^3),
\end{align}
(see \cite{Mischeler:Note:2019} for example). 
Then, the Lipschitz continuity of $\varphi_Y$ (Lemma \ref{lemm:phiYboundLip}) yields that 
\begin{align}
& \E\left[ (\varphi_Y( \bar{X}_{t^*}^\leftarrow) - \varphi_Y( {X}_{t^*}^\leftarrow))^2  ~|~ \bar{X}_{t}^\leftarrow = {X}_{t}^\leftarrow = x\right] 
\lesssim  L_\varphi^2 [\varepsilon^2 \tau + \Lipdp^2 d(\tau^2 + \mathsf{m} \tau^3)]. 
\end{align}

\paragraph{Bound for $t = kh$:} 
First, we show a bound for $t = kh$. 
The right hand side of \Eqref{eq:varPhiXYdiff} with $\delta = h$ can be bounded by 
\begin{align}
& \frac{\exp(2(1 + 2R)\delta)\|v\|^2}{\delta}
\E_{\bar{X}_{t}^\leftarrow}\left[ \E\left[ (\varphi_X( \bar{X}_{t^*}^\leftarrow) - \varphi_Y( {X}_{t^*}^\leftarrow))^2  ~|~ \bar{X}_{t}^\leftarrow = {X}_{t}^\leftarrow \right] \right]\\
\leq &
 \frac{\exp(2(1 + 2R)\delta)\|v\|^2}{\delta}
2 \E_{\bar{X}_{t}^\leftarrow}\left[\E\left[ (\varphi_X( \bar{X}_{t^*}^\leftarrow) - 
\varphi_Y( \bar{X}_{t^*}^\leftarrow) )^2
+ (\varphi_Y( \bar{X}_{t^*}^\leftarrow) 
- \varphi_Y( {X}_{t^*}^\leftarrow))^2  ~|~ \bar{X}_{t}^\leftarrow = {X}_{t}^\leftarrow \right] \right]\\
\leq &
 \frac{\exp(2(1 + 2R)\delta)\|v\|^2}{\delta}
2\left\{ C_\rho^2 \varepsilon_{\TV}^2 + R_\varphi^2
\E_{\bar{X}_{t}^\leftarrow}\left[\E\left[ 
(\bar{X}_{t^*}^\leftarrow  
- {X}_{t^*}^\leftarrow)^2  ~|~ \bar{X}_{t}^\leftarrow = {X}_{t}^\leftarrow \right] \right] \right\},
\end{align}
where we used 
\begin{align}
 \E\left[ (\varphi_X( \bar{X}_{t^*}^\leftarrow) - 
\varphi_Y( \bar{X}_{t^*}^\leftarrow) )^2 \right]
\leq & C_\rho^2\E_{\bar{X}_{T-t^*}}\left[\TV(\bar{X}_{T}^\leftarrow,{X}_{T}^\leftarrow | {X}_{t^*}^\leftarrow = \bar{X}_{t^*}^\leftarrow = \bar{X}_{T-t^*})^2 \right]  \\
\leq & C_\rho^2 \varepsilon_{\TV}.
\end{align}
Here, by using Theorem 10 of \cite{chen2023sampling} again, 
the right hand side can be bounded as  
\begin{align}
& 2 \frac{\exp(2(1 + 2R)\delta)\|v\|^2}{\delta}
\left\{ C_\rho^2\varepsilon_{\TV}^2 + C R_\varphi^2  [\varepsilon^2 \delta + \Lipdp^2 d(\delta^2 + \mathsf{m} \delta^3)]\right\} \\
= & 
2 \exp(2(1 + 2R)\delta)\|v\|^2
\left\{ C_\rho^2\frac{\varepsilon_{\TV}^2}{\delta} + C R_\varphi^2 [\varepsilon^2 + \Lipdp^2 d(\delta + \mathsf{m} \delta^2)]\right\}.
\end{align}
Then, with the contraint $h \leq 1/(1+2R)$, it can be further simplified as 
$$
2 \exp(2)\|v\|^2
\left\{ C_\rho^2\frac{\varepsilon_{\TV}^2}{\delta} + C R_\varphi^2 [\varepsilon^2 + \Lipdp^2 d(\delta + \mathsf{m} \delta^2)]\right\}.
$$
Therefore, by taking maximum with respect to $v \in \sR^d$ with a constraint $\|v\| =1$, 
\begin{align}
& \E_{ \bar{X}_{t}^\leftarrow }[ \|\nabla_x \E[\rho_*(\bar{X}_{T}^\leftarrow) | \bar{X}_{t}^\leftarrow ] - \nabla_x \E[\rho_*({X}_{T}^\leftarrow) | {X}_{t}^\leftarrow =  \bar{X}_{t}^\leftarrow ]\|^2 ] \\
\leq & \frac{4 c_\eta^2  C_\rho^2 (1+2R)^2}{3}\delta  
+ 2 \exp(2)
\left\{ C_\rho^2\frac{\varepsilon_{\TV}^2}{\delta} + C R_\varphi^2  [\varepsilon^2 + \Lipdp^2 d(\delta + \mathsf{m} \delta^2)]\right\} =: \Xi_{\delta,\varepsilon}.
\label{eq:DeltavphiXwithh}
\end{align}
We see that $\Xi_{\delta,\varepsilon} = \gO(\delta + \varepsilon^2 + \varepsilon_\TV^2/\delta)$. 

\paragraph{Bound for general $t \in (kh,(k+1)h)$:} 
In this setting, we utilize the inequality \eqref{eq:varPhiXYdiffSecond}. 
Using the constraint $\delta \leq 1/(1 + 2R)$ and $\|v\|=1$, the right hand side of \eqref{eq:varPhiXYdiffSecond} can be bounded by 
\begin{align}
&   \frac{\exp(2)}{\delta}
\E\left[ (\varphi_Y( \bar{X}_{t^*}^\leftarrow) - \varphi_Y( {X}_{t^*}^\leftarrow))^2  ~|~ \bar{X}_{t}^\leftarrow = {X}_{t}^\leftarrow = x\right] \\
& + 2 \E\left[ \exp(2) \|\nabla (\varphi_X( \bar{X}_{t^*}^\leftarrow)) - \varphi_Y( \bar{X}_{t^*}^\leftarrow))\|  ~|~ \bar{X}_{t}^\leftarrow = {X}_{t}^\leftarrow = x\right]^2 \\ 
\leq & 
\frac{\exp(2)}{\delta} 
R_\varphi^2 \E\left[ (\bar{X}_{t^*}^\leftarrow -  {X}_{t^*}^\leftarrow)^2  ~|~ \bar{X}_{t}^\leftarrow = {X}_{t}^\leftarrow = x\right]
+ 2 \exp(2) \Xi_{\delta,\varepsilon}~~~(\because \Eqref{eq:DeltavphiXwithh}). 
\end{align}
By taking the expectation with respect to $x = \bar{X}_{t}^\leftarrow$, we arrive at 
\begin{align} 
& \E_{\bar{X}_{t}^\leftarrow}[ \|\nabla_x \E[\rho_*(\bar{X}_{T}^\leftarrow) |  \bar{X}_{t}^\leftarrow] 
-  \nabla_x \E[\rho_*({X}_{T}^\leftarrow) | {X}_{t}^\leftarrow] 
\|^2] \\
\leq & C \frac{\exp(2)}{\delta} 
R_\varphi^2 \left(\varepsilon^2 \delta  + \Lipdp^2 d(\delta^2 + \mathsf{m} \delta^3) \right)
+ 2 \exp(2) \Xi_{\delta,\varepsilon} \\
\leq & 
C \exp(2)
R_\varphi^2 \left(\varepsilon^2+ \Lipdp^2 d(\delta + \mathsf{m} \delta^2) \right)
+ 2 \exp(2) \Xi_{\delta,\varepsilon}. 
    \label{eq:phyYboundwithphiX}
\end{align}
This gives an upper bound of the term (a) in \Eqref{eq:NablaRhotDecomp}. 
Then, we just need to bound the remaining term (b) in \Eqref{eq:NablaRhotDecomp}: 
\begin{align}
    \E_{\bar{X}_{t}^\leftarrow,\bar{X}_{kh}^\leftarrow}[ \| \nabla_x \E[\rho_*({X}_{T}^\leftarrow) | {X}_{t}^\leftarrow] 
    -  \nabla_x \E[\rho_*({X}_{T}^\leftarrow) | {X}_{kh}^\leftarrow ] 
    \|^2].
\end{align}
For that purpose, we define $\varphi_{Y,t}(x) = \E[\rho_*({X}_{T}^\leftarrow)| {X}_{t}^\leftarrow = x]$.
Then, using the Bismut-Elworthy-Li formula again, 
\begin{align}
&    v^\top (\nabla \varphi_{Y,t}(x) - \nabla \varphi_{Y,kh}(x)) \\ 
= &  
v^\top \nabla \varphi_{Y,t}(x) - 
\E[ \eta_{{X},h(k+1)-t}^\top \nabla \varphi_{Y,t}({X}_{t}^\leftarrow) |
{X}_{kh}^\leftarrow = x] \\
= &   
\E[ v^\top( \nabla \varphi_{Y,t}(x) - \nabla \varphi_{Y,t}({X}_{t}^\leftarrow))
+ 
(\eta_{{X},(k+1)h - t}^\top - v ) \nabla \varphi_{Y,t}({X}_{t}^\leftarrow) |
{X}_{kh}^\leftarrow = x] \\
\leq &   
\E[ L_\varphi \|x -{X}_{t}^\leftarrow \|
+ 
R_\varphi \|\eta_{{X},(k+1)h-t}^\top - v \|  \mid
{X}_{(k+1)h - t}^\leftarrow = x]  \\
\leq & 
\E[ L_\varphi \| ((k+1)h - t)(x - 2 s(x,kh)) + \sqrt{(h - \delta})B_{kh} \|
+ 
R_\varphi \|\eta_{{X},(k+1)h - t}^\top - v \|  \mid
{X}_{kh}^\leftarrow = x],
\end{align}
which yields that 
\begin{align}
&  \E_{\bar{X}_{\cdot}^\leftarrow}[\|\nabla \varphi_{Y,t}(\bar{X}_{t}^\leftarrow) - \nabla \varphi_{Y,kh}(\bar{X}_{(k+1)h - t}^\leftarrow) \|^2 ] \\
\leq 
& 2 L_\varphi^2((k+1)h - t)^2 \E_{\bar{X}_{\cdot}^\leftarrow}[ \|\bar{X}_{kh}^\leftarrow\|^2
+ 4\| s(\bar{X}_{kh}^\leftarrow,kh)\|^2 + d ((k+1)h - t)]  \\
&
+ R_\varphi^2 c_\eta (1 + 2R)^2 ((k+1)h - t)^2 \\
\leq 
& 
2 L_\varphi^2 h^2 (\mathsf{m} + 4Q^2 + d h)  
+ R_\varphi^2 c_\eta (1 + 2R)^2 h^2 \\
= & 
[2 L_\varphi^2 (\mathsf{m} + 4Q^2 + d h)  
+ R_\varphi^2 c_\eta (1 + 2R)^2] h^2. 
\label{eq:phyYdifftbound}
\end{align}
Combining \eqref{eq:phyYboundwithphiX} and \eqref{eq:phyYdifftbound} gives the assertion. 
\end{proof}

\begin{lem}\label{lemm:phiYboundLip}
Suppose that
$\sup_x\|\nabla \rho_*(x)\|\leq R_\rho$, $\|\nabla \rho_*(x) - \nabla \rho_*(y)\| \leq L_\rho\|x-y\|~(\forall x,y)$, and 
$\nabla_x s(\cdot,\cdot)$ is $H_s$-Lipschitz continuous with respect to $x$. 
Let $\varphi_{Y,t}(x) = \E[\rho_*({X}_{T}^\leftarrow)| {X}_{t}^\leftarrow = x]$. 
Then, $\nabla_x \varphi_{Y,t}(x)$ is bounded by $R_\varphi$ and $L_\varphi$-Lipschitz continuous for any $0 \leq t \leq T$, where  
\begin{align}
R_\varphi &=  \max\{C_\rho 2\sqrt{(1+2R) e},e^{1/2} R_\rho\}, 
\\
L_\varphi &= \max\left\{\left( \frac{2 C_\eta^2 H_s^2 C_\rho^2}{1+2R} + 
2 (1 + 2R) \exp(6)R_\varphi^2 \right)^{1/2},
\left(2 C_\eta^2 H_s^2 R^2 + e^2 L_\rho^2 \right)^{1/2}
\right\}, 
\end{align}
for a universal constant $C_\eta > 0$. 
\end{lem}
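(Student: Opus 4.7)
\textbf{Proof plan for Lemma~\ref{lemm:phiYboundLip}.} The argument splits into two regimes depending on whether the remaining time $T-t$ exceeds a threshold $\Delta_0 := 1/(2+2R)$ or not, and the uniform bound on $\nabla_x\varphi_{Y,t}$ and its Lipschitz constant are derived separately in each regime.

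For the gradient bound, in the long-horizon regime $T-t \ge \Delta_0$ I would apply the Bismut-Elworthy-Li formula on the window $[t, t+\Delta_0]$, writing
\begin{equation}
v^\top \nabla_x \varphi_{Y,t}(x) = \E\Bigl[\tfrac{1}{\Delta_0}\int_0^{\Delta_0} \langle \eta_{X,\tau}, \rd B_\tau\rangle\, \varphi_{Y,t+\Delta_0}(X_{t+\Delta_0}^\leftarrow) \,\Big|\, X_t^\leftarrow = x\Bigr],
\end{equation}
then combining the pointwise bound $|\varphi_{Y,t+\Delta_0}|\le C_\rho$ (from $\rho_*\le C_\rho$) with the tangent estimate $\|\eta_{X,\tau}\|^2 \le e^{2(1+2R)\tau}\|v\|^2$ already used in the proof of Theorem~\ref{thm:delhDiffXYExp}, via Cauchy-Schwarz and the Ito isometry; this yields $\|\nabla_x \varphi_{Y,t}\|^2 \lesssim C_\rho^2 (1+2R) e$. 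In the short-horizon regime $T-t<\Delta_0$, I would instead use the pathwise chain rule $v^\top\nabla_x\varphi_{Y,t}(x) = \E[(\nabla\rho_*)(X_T^\leftarrow)^\top \eta_{X,T-t}\mid X_t^\leftarrow=x]$ and bound the right-hand side by $R_\rho \cdot e^{(1+2R)(T-t)}\|v\|\le e^{1/2}R_\rho\|v\|$. Taking the maximum of the two constants produces $R_\varphi$.

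For the Lipschitz estimate of $\nabla_x\varphi_{Y,t}$, I would subtract the representations at $x$ and at $y$, using the same case split. The analysis requires two auxiliary sensitivity bounds: (a) $\|X_\tau^\leftarrow(x)-X_\tau^\leftarrow(y)\|\le e^{(1+2R)\tau}\|x-y\|$, obtained by Gronwall applied to the (deterministic, since Brownian increments cancel) ODE for the pathwise difference together with $\|\nabla_x s\|_{\mathrm{op}}\le R$; and (b) a Lipschitz-in-$x$ bound on the tangent process itself, $\|\eta_{X,\tau}(x)-\eta_{X,\tau}(y)\|\le C_\eta H_s\,\tau\, e^{O((1+2R)\tau)}\|v\|\|x-y\|$, obtained by differentiating the tangent ODE once more in $x$ and invoking the Hessian bound $H_s$ on $s$. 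In the long-horizon regime, differencing the Bismut formula and applying the Ito isometry gives two contributions, one proportional to $(H_s^2 C_\rho^2)/(1+2R)$ (from varying $\eta$) and one proportional to $(1+2R)R_\varphi^2$ (from varying the trajectory inside the Lipschitz constant $R_\varphi$ of $\varphi_{Y,t+\Delta_0}$, which is obtained by applying the uniform bound recursively at time $t+\Delta_0$). In the short-horizon regime, differencing the chain-rule formula yields contributions proportional to $H_s^2 R_\rho^2$ (from the tangent sensitivity) and $e^2 L_\rho^2$ (from the $L_\rho$-Lipschitz continuity of $\nabla\rho_*$ composed with the trajectory map of Lipschitz constant $\le e^{1/2}$). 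Taking the maximum over the two regimes produces the stated $L_\varphi$.

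\textbf{Main obstacle.} The principal technical difficulty is the second-order sensitivity bound (b): controlling how the tangent process $\eta_{X,\tau}$ varies with its initial condition. Because the drift of the tangent ODE involves $\nabla_x s$ evaluated along the trajectory, a clean estimate requires differentiating that drift once more in $x$, which pulls in $H_s = \|\nabla_x^2 s\|_{\mathrm{op}}$ and couples the resulting inequality back to the trajectory sensitivity (a); the two must then be fed through a joint Gronwall argument, with careful bookkeeping of how the constants $H_s$, $R$ and $C_\eta$ combine on the interval of length $\Delta_0$. A secondary subtlety is that $s(\cdot, k_\tau h)$ is frozen at the left endpoint of the current step while $\eta_{X,\tau}$ evolves continuously, so the tangent equation mixes $\eta_{X,\tau}$ and $\eta_{X,k_\tau h - t}$; this is harmless after taking $\Delta_0$ small enough so that $e^{(1+2R)\Delta_0}$ is bounded by a fixed constant, but must be handled explicitly rather than assumed away.
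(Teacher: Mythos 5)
Your plan matches the paper's proof essentially step for step: the same split at a threshold time of order $1/(1+2R)$, the Bismut--Elworthy--Li representation with the bound $|\varphi_{Y,S}|\le C_\rho$ and the tangent-process growth estimate for the long-horizon regime, the pathwise chain rule $v^\top\nabla\varphi_{Y,t}=\E[\eta_{T-t}^\top\nabla\rho_*(X_T^\leftarrow)]$ for the short-horizon regime, and the two Gronwall-type sensitivity bounds (trajectory in $x$, tangent process in $x$ via the Hessian bound $H_s$) combined through Ito isometry and Cauchy--Schwarz. The only discrepancies are at the level of unimportant constants (you take the split at $1/(2+2R)$ while the paper uses $1/(4(1+2R))$, and your tangent-growth exponent $e^{2(1+2R)\tau}$ is tighter than the paper's $e^{4(1+2R)\tau}$); you also correctly identify that the short-regime term involving $H_s$ should carry $R_\rho$ (the bound on $\nabla\rho_*$), which suggests the $R$ appearing in the paper's stated $L_\varphi$ is a typo.
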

\begin{proof}
We show it only when $t = kh$ for a positive integer $k$ just for simplicity. The proof for a general $t$ can be obtained in the same manner. 

(i) First, we assume that $T - t \geq 1/4(1 + 2R)$. 
In the following, we let $v \in \sR^d$ be an arbitrary vector with $\|v\| =1$. 
We again utilize the Bismut-Elworthy-Li formula: 
\begin{align} 
& v^\top \nabla \varphi_{Y,t}(x) \\
& = v^\top \nabla_x \E[\rho_*({X}_{T}^\leftarrow) \mid {X}_{t}^\leftarrow = x]   \\
& =  \E\left[ \frac{1}{S}\int_{0}^S \langle \eta_\tau, \rd B_\tau \rangle \varphi_{Y,S}({X}_{S}^\leftarrow) \mid {X}_{t}^\leftarrow = x\right].
\end{align}
Hence, 
\begin{align} 
& (v^\top \nabla \varphi_{Y,t}(x))^2 \\
\leq & 
 C_\rho^2 \E\left[ \frac{1}{S^2} \left( \int_{0}^S \langle \eta_\tau, \rd B_\tau \rangle \right)^2
 \mid {X}_{t}^\leftarrow = x\right]  \\
\leq & 
 C_\rho^2 \E\left[ \frac{1}{S^2} \int_{0}^S \|\eta_\tau\|^2 \rd \tau 
 \mid {X}_{t}^\leftarrow = x\right].
\end{align}
Here, we know that $\|\eta_\tau\|^2 \leq \exp(4(1+2R)\tau) \|v\|^2$, and thus 
\begin{align} 
& (v^\top \nabla \varphi_{Y,t}(x))^2 \leq C_\rho^2 \frac{1}{S} \exp(4(1+2R)S) \|v\|^2.
\end{align}
Hence, by taking $S = \frac{1}{4(1+2R)}$, we have that 
$$
(v^\top \nabla \varphi_{Y,t}(x))^2 \leq C_\rho^2 4(1+2R) e \|v\|^2. 
$$
This shows that $\|\nabla \varphi_{Y,t}(x)\|$ is bounded by $R_\varphi =  C_\rho 2\sqrt{(1+2R) e}$. 

Next, we show its Lipschitz continuity. 
For that purpose, we define two stochastic processes 
\begin{align}
{X}_{t}^\leftarrow=x,\ \mathrm{d}{X}_{\tau}^\leftarrow = \{{X}_{\tau}^\leftarrow + 2s({X}_{kh}^\leftarrow,kh)\}\mathrm{d}\tau + \sqrt{2}\mathrm{d}B_\tau~~(\tau \in [kh,k(h+1)]), \\
\tilde{Z}_{t}^\leftarrow=y,\ \mathrm{d}\tilde{Z}_{\tau}^\leftarrow = \{\tilde{Z}_{\tau}^\leftarrow + 2s(\tilde{Z}_{kh}^\leftarrow,kh)\}\mathrm{d}\tau + \sqrt{2}\mathrm{d}B_\tau~~(\tau \in [kh,k(h+1)]),
\end{align}
where $x,y \in \sR^d$ with $\|x - y\| \leq \varepsilon$.
Accordingly, we also define 
\begin{align}
& \eta_{{X},0} = v,~~\frac{\rd \eta_{{X},\tau}}{\rd \tau}  = (I + 2 \nabla_x^\top s({X}_{kh}^\leftarrow,kh)) \eta_{{X},\tau},  \\
&  \eta_{Z,0} = v,~~\frac{\rd \eta_{Z,\tau}}{\rd \tau}  = (I + 2 \nabla_x^\top s(\tilde{Z}_{kh}^\leftarrow,kh)) \eta_{Z,\tau}.
\end{align} 
Thus, 
\begin{align}
{X}_{(k+1)h}^\leftarrow - \tilde{Z}_{(k+1)h}^\leftarrow
= 
{X}_{kh}^\leftarrow - \tilde{Z}_{k h}^\leftarrow
+ h [{X}_{kh}^\leftarrow - \tilde{Z}_{k h}^\leftarrow + 2(s({X}_{kh}^\leftarrow,kh) - s(\tilde{Z}_{kh}^\leftarrow,kh))],
\end{align}
which yields 
\begin{align}
\|{X}_{(k+1)h}^\leftarrow - \tilde{Z}_{(k+1)h}^\leftarrow\| 
\leq & 
(1 + h(1 + R)) \|{X}_{kh}^\leftarrow - \tilde{Z}_{k h}^\leftarrow\| \\
\leq & 
(1 + h(1 + R))^{k+1} \|x - y\|.
\end{align}
Now, we assume $k \leq S/h$ so that we have 
$\|{X}_{(k+1)h}^\leftarrow - \tilde{Z}_{(k+1)h}^\leftarrow\| 
\leq \exp(S(1+R)) \|x - y\|$ for $k=1,\dots,S/h$. 
Hence, 
\begin{align}
& \frac{\rd (\eta_{{X},\tau} - \eta_{Z,\tau})}{\rd \tau}  \\
=& ( \eta_{{X},\tau} -  \eta_{Z,\tau}) + 
2 \nabla_x^\top s({X}_{kh}^\leftarrow,kh)) \eta_{{X},\tau} - 2 \nabla_x^\top s(\tilde{Z}_{kh}^\leftarrow,kh)) \eta_{Z,\tau} \\
=& ( \eta_{{X},\tau} -  \eta_{Z,\tau}) + 
2 (\nabla_x^\top s({X}_{kh}^\leftarrow,kh) - \nabla_x^\top s(\tilde{Z}_{kh}^\leftarrow,kh)) 
\eta_{{X},\tau} - 
2 \nabla_x^\top s(\tilde{Z}_{kh}^\leftarrow,kh)(\eta_{Z,\tau} -  \eta_{{X},\tau}),
\end{align} 
which also yields that 
\begin{align}
& \frac{\rd \|\eta_{{X},\tau} - \eta_{Z,\tau}\|^2}{\rd \tau}  \\
=& 2 \| \eta_{{X},\tau} -  \eta_{Z,\tau}\|^2 + 
4 (\eta_{{X},\tau} -  \eta_{Z,\tau})^\top (\nabla_x^\top s({X}_{kh}^\leftarrow,kh) - \nabla_x^\top s(\tilde{Z}_{kh}^\leftarrow,kh)) 
\eta_{{X},\tau} \\
& - 
4 (\eta_{{X},\tau} -  \eta_{Z,\tau})^\top \nabla_x^\top s(\tilde{Z}_{kh}^\leftarrow,kh)(\eta_{Z,\tau} -  \eta_{{X},\tau}) \\
\leq & 2 \| \eta_{{X},\tau} -  \eta_{Z,\tau}\|^2 + 
4 \|\eta_{{X},\tau} -  \eta_{Z,\tau}\| H_s \exp(S(1+R)) \varepsilon  
\exp(2(1+2R)S) \\
& +  
4 R \|\eta_{{X},\tau} -  \eta_{Z,\tau}\|^2. 
\end{align} 
Therefore, 
\begin{align}
& \frac{\rd \|\eta_{{X},\tau} - \eta_{Z,\tau}\|}{\rd \tau}  \\
\leq & (1 + 2R) \left[ \| \eta_{{X},\tau} -  \eta_{Z,\tau}\| + 
2 H_s \exp(S(1+R))   
\exp(2(1+2R)S) \varepsilon / (1 + 2R) \right],
\end{align}
and thus by noticing $\|\eta_{{X},0} - \eta_{Z,0}\| = 0$, we have 
\begin{align}
\|\eta_{{X},\tau} - \eta_{Z,\tau}\| \leq 
[\exp(S (1 + 2R)) -1] \frac{2 H_s\exp(S(1+R))   
\exp(2(1+2R)S)  }{1 + 2R} \varepsilon,
\end{align}
for any $\tau \leq S$. Then, by setting $S = 1/(1+2R)$, the right hand side can be rewritten as 
\begin{align}
\|\eta_{{X},\tau} - \eta_{Z,\tau}\| \leq 
C_\eta  \frac{H_s}{1 + 2R} \varepsilon,
\end{align}
for an absolute constant $C_\eta$. 
Therefore, we arrive at 
\begin{align} 
& (v^\top (\nabla \varphi_{Y,t}(x) - \nabla \varphi_{Y,t}(y)))^2 \\
=  & \E\left[ \frac{1}{S}\int_{0}^S \langle \eta_{{X},\tau}, \rd B_\tau \rangle \varphi_{Y,S}({X}_{S}^\leftarrow) 
-
\frac{1}{S}\int_{0}^S \langle \eta_{Z,\tau}, \rd B_\tau \rangle \varphi_{Y,S}(\tilde{Z}_{S}^\leftarrow)
\right]^2 \\
\leq & 
  2 \E\left[ \frac{1}{S^2} \int_{0}^S (\eta_{{X},\tau} - \eta_{Z,\tau})^2 \rd \tau \right] 
  \E\left[\varphi_{Y,S}({X}_{S}^\leftarrow)^2\right]  \\
&   + 
   2  \E\left[ \frac{1}{S^2} \int_{0}^S \eta_{Z,\tau}^2 \rd \tau \right] 
  \E\left[(\varphi_{Y,S}({X}_{S}^\leftarrow) - \varphi_{Y,S}(\tilde{Z}_{S}^\leftarrow))^2\right] \\
\leq & 
\frac{2}{S} C_\eta^2  \frac{H_s^2}{(1 + 2R)^2} \varepsilon^2 \cdot C_\rho^2 
 + \frac{2}{S}  
 \exp(4(1+2R)S) \|v\|^2 
  R_\varphi^2 \exp(2 S(1+R)) \varepsilon^2.
\end{align}
Then, for the choice of $S = 1/(1+2R)$, the right hand side can be bounded by 
$$
\left( \frac{2 C_\eta^2 H_s^2 C_\rho^2}{1+2R} + 
2 (1 + 2R) \exp(6) R_\varphi^2 \right) 
\varepsilon^2. 
$$
This implies that $\nabla \varphi_{Y,t}(\cdot)$ is Lipschitz continuous with a constant $L_\varphi = \left( \frac{2 C_\eta^2 H_s^2 C_\rho^2}{1+2R} + 
2 (1 + 2R) \exp(6)R_\varphi^2 \right)^{1/2}$. 

(ii) Next, we assume that $T - t \leq S = 1/4(1 + 2R)$. 
In this situation, we may use the following relation: 
\begin{align} 
& v^\top \nabla \varphi_{Y,t}(x)  = \E[\eta_\tau^{T-t} \nabla \rho_*({X}_{T}^\leftarrow) \mid {X}_{t}^\leftarrow = x]. 
\end{align}
And, tracing an analogous argument by replacing $\varphi_{Y,S}$ with $h$, we obtain the assertion with 
$$
R_\varphi = e^{1/2} R_\rho,~L_\varphi = \left(2 C_\eta^2 H_s^2 R^2 + e^2 L_\rho^2 \right)^{1/2}.
$$
\end{proof}

\begin{lem}\label{lem:hhdashDiff}
If $\|\rho_* - \rho\|_\infty \leq \varepsilon'$, then 
 $$\|\nabla_x \E[\rho_*({X}_{T}^\leftarrow) | {X}_{t}^\leftarrow = x] - 
 \nabla_x \E[\rho({X}_{T}^\leftarrow)  | {X}_{t}^\leftarrow = x] \| \leq 
 \frac{e}{\sqrt{\min\{T-t,1/(2+2R)\}}} \varepsilon'.$$   
\end{lem}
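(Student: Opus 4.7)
\textbf{Proof proposal for Lemma \ref{lem:hhdashDiff}.}

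The plan is to reduce the bound to a direct application of the Bismut-Elworthy-Li formula, which is exactly the tool the authors invoked in Theorem \ref{thm:delhDiffXYExp} and Lemma \ref{lemm:phiYboundLip}. First, by linearity of conditional expectation, I would rewrite the target quantity as
\begin{equation}
\nabla_x \E[\rho_*(X_T^\leftarrow)\mid X_t^\leftarrow = x] - \nabla_x \E[\rho(X_T^\leftarrow)\mid X_t^\leftarrow = x] = \nabla_x \E[(\rho_* - \rho)(X_T^\leftarrow)\mid X_t^\leftarrow = x].
\end{equation}
Setting $S = \min\{T-t, 1/(2+2R)\}$ and defining the intermediate conditional expectation $\tilde{\varphi}(y) := \E[(\rho_* - \rho)(X_T^\leftarrow)\mid X_{t+S}^\leftarrow = y]$, the tower property gives $\E[(\rho_* - \rho)(X_T^\leftarrow)\mid X_t^\leftarrow = x] = \E[\tilde{\varphi}(X_{t+S}^\leftarrow)\mid X_t^\leftarrow = x]$, with the crucial uniform bound $\|\tilde{\varphi}\|_\infty \leq \|\rho_* - \rho\|_\infty \leq \varepsilon'$.

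Next, I would apply the Bismut-Elworthy-Li integration-by-parts formula on the interval $[t, t+S]$ (exactly as in the proofs of Theorem \ref{thm:delhDiffXYExp} and Lemma \ref{lemm:phiYboundLip}): for any unit vector $v \in \mathbb{R}^d$,
\begin{equation}
v^\top \nabla_x \E[\tilde{\varphi}(X_{t+S}^\leftarrow)\mid X_t^\leftarrow = x] = \E\left[\frac{1}{S}\int_0^S \langle \eta_{X,\tau}, dB_\tau\rangle\, \tilde{\varphi}(X_{t+S}^\leftarrow)\,\middle|\, X_t^\leftarrow = x\right],
\end{equation}
where $\eta_{X,\tau}$ solves the variational equation with $\eta_{X,0} = v$. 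Squaring, applying Cauchy-Schwarz together with the It\^o isometry, and using the uniform bound on $\tilde{\varphi}$, this gives
\begin{equation}
\bigl(v^\top \nabla_x \E[(\rho_* - \rho)(X_T^\leftarrow)\mid X_t^\leftarrow = x]\bigr)^2 \leq \frac{\varepsilon'^2}{S^2}\, \E\left[\int_0^S \|\eta_{X,\tau}\|^2\, d\tau\,\middle|\, X_t^\leftarrow = x\right].
\end{equation}

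Finally, I would use the Gr\"onwall-type bound $\|\eta_{X,\tau}\|^2 \leq \exp(2(1+2R)\tau)\|v\|^2$ (derived exactly as in the proof of Lemma \ref{lemm:phiYboundLip} from Assumption \ref{ass:BoundingH}(ii)) to bound the integrand. This yields
\begin{equation}
\bigl(v^\top \nabla_x \E[(\rho_* - \rho)(X_T^\leftarrow)\mid X_t^\leftarrow = x]\bigr)^2 \leq \frac{\exp(2(1+2R)S)}{S}\|v\|^2 \varepsilon'^2.
\end{equation}
The choice $S = \min\{T-t, 1/(2+2R)\}$ ensures $2(1+2R)S \leq (1+2R)/(1+R) \leq 2$, so $\exp(2(1+2R)S) \leq e^2$. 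Taking the supremum over $\|v\| = 1$ then gives the stated bound $e/\sqrt{\min\{T-t, 1/(2+2R)\}}\cdot \varepsilon'$.

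The argument is essentially routine once one recognizes it as a specialization of the Bismut-Elworthy-Li machinery already developed in this section, so I do not expect a substantial obstacle. The only delicate point is the calibration of the constant in the truncation threshold $1/(2+2R)$: one must verify that with this precise choice the factor $2(1+2R)S$ is bounded by $2$ (so the exponential contributes $e^2$, giving the final $e$ after taking the square root), which requires using $(1+2R)/(1+R) \leq 2$ for $R \geq 0$.
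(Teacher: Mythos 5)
Your proof is correct and follows exactly the route the paper indicates (the paper states "It can be proved by the Bismut-Elworthy-Li formula again. We omit the details."); you have supplied those details. The decomposition via linearity to $\nabla_x\E[(\rho_*-\rho)(X_T^\leftarrow)\mid X_t^\leftarrow=x]$, the truncation to horizon $S=\min\{T-t,1/(2+2R)\}$ using the tower property, the Bismut-Elworthy-Li representation, the It\^o isometry, and the Gr\"onwall bound on $\eta$ are all as intended.

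One small discrepancy worth noting: you attribute the bound $\|\eta_{X,\tau}\|^2\leq\exp(2(1+2R)\tau)\|v\|^2$ to "exactly as in the proof of Lemma~\ref{lemm:phiYboundLip}," but that proof (and the proof of Theorem~\ref{thm:delhDiffXYExp}) actually states the looser bound $\|\eta_{X,\tau}\|^2\leq\exp(4(1+2R)\tau)\|v\|^2$, obtained by first bounding $\|\eta-v\|$ and then writing $\|\eta\|\le\|\eta-v\|+\|v\|$. Your tighter bound is the one obtained by applying Gr\"onwall directly to $\frac{d}{d\tau}\|\eta\|^2=2\eta^\top(I+2\nabla^2\log p)\eta\le 2(1+2R)\|\eta\|^2$, and it is in fact what is needed here: with the looser $\exp(4(1+2R)\tau)$ and $S=1/(2+2R)$ one would get $\exp(4(1+2R)S)\le e^4$ and hence a constant $e^2$ rather than the stated $e$. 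So your choice is the right one to recover the lemma's constant, but you should present it as a direct Gr\"onwall estimate rather than a verbatim citation of the earlier proof.
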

\begin{proof}
It can be proved by the Bismut-Elworthy-Li formula again. We omit the details. 
\end{proof}

Combining all inequalities, we arrive at (the formal version of) Theorem~\ref{thm:Diffusion-2}. 
\begin{thm}[Formal statement of Theorem~\ref{thm:Diffusion-2}] \label{thm:ustaruDiffFinal}
Assume that Assumptions \ref{ass:BoundingH} and \ref{assumption:TVBoundMainText-2} hold 
and the conditions in Lemma \ref{lemm:phiYboundLip} are satisfied. 
and $\|\rho_* - \rho\|_\infty \leq \varepsilon'$ and $\|\rho\|_\infty \leq C_\rho$. 
Let $L_\varphi$ and $R_\varphi$ be as given in Lemma \ref{lemm:phiYboundLip}.
Then, for $0 \leq h \leq \delta \leq 1/(1 + 2R)$, we have that 
\begin{align}
& \E_{\bar{Y}_{\cdot}^{\leftarrow}}[ \|u_*(\bar{Y}_{t}^\leftarrow,t) -  u(\bar{Y}_{k_t h}^\leftarrow,t) \|^2]  \\
\lesssim & 
C_\rho^3 \left\{ R_\varphi^2 \left(\varepsilon^2+ \Lipdp^2 d(\delta + \mathsf{m} \delta^2) \right)
+ \Xi_{\delta,\varepsilon}
+
[ (R_\varphi^2 + L_\varphi^2) (\mathsf{m} + 4Q^2 + d h)  
+ R_\varphi^2  (1 + 2R)^2] h^2\right\} \\
& +  \frac{e^2}{\min\{T-t,1/(2+2R)\}} \varepsilon'^2 +  C_\rho (1 + 2R) \sqrt{\frac{\log(T/(h\delta))}{n h}}, 
\end{align}
where 
\begin{align}
\Xi_{\delta,\varepsilon} & := \frac{4 c_\eta^2  C_\rho^2 (1+2R)^2}{3}\delta  
+ 2 \exp(2)
\left\{ C_\rho^2 \left(R_\varphi^2 + \frac{1}{\delta}\right)\varepsilon_{\TV}^2 + C R_\varphi^2  [\varepsilon^2 + \Lipdp^2 d(\delta + \mathsf{m} \delta^2)]\right\},
\end{align} 
and $c_\eta > 0$ is a universal constant. 
\end{thm}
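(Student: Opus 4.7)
The plan is to split the error $\|u_*(\bar Y_t^\leftarrow,t) - u(\bar Y_{k_t h}^\leftarrow,t)\|$ into three contributions and then bound each under the law of $\bar Y^\leftarrow$ by borrowing the already-proven pieces. Introducing the two intermediate objects
$$u_{*}^{\mathrm{disc}}(x,kh) := \nabla_x \log \mathbb{E}[\rho_*(X_T^\leftarrow)\mid X_{kh}^\leftarrow = x], \qquad \tilde u(x,kh) := \nabla_x \log \mathbb{E}[\rho(X_T^\leftarrow)\mid X_{kh}^\leftarrow = x],$$
a triangle inequality and $(a+b+c)^2 \le 3(a^2+b^2+c^2)$ reduces the target to bounding the three mean-squared errors (i) $\|u_*(\bar Y_t^\leftarrow,t) - u_*^{\mathrm{disc}}(\bar Y_{k_t h}^\leftarrow,k_t h)\|^2$, (ii) $\|u_*^{\mathrm{disc}} - \tilde u\|^2(\bar Y_{k_t h}^\leftarrow,k_t h)$, and (iii) $\|\tilde u - u\|^2(\bar Y_{k_t h}^\leftarrow,k_t h)$.

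Next I would change measure from $\bar X^\leftarrow$ to $\bar Y^\leftarrow$. The assumption $C_\rho^{-1}\le \rho_* \le C_\rho$ together with $q_{T-t}(x) = \int \rho_*(y) p_*(y)\, p_{t|0}(x\mid y)\,\mathrm{d}y$ (as in the proof of Lemma~\ref{lem:Bounded-Discrepancy}) yields $q_{T-t}(x) \le C_\rho\, p_{T-t}(x)$, so $\mathbb{E}_{\bar Y_t^\leftarrow}[g] \le C_\rho\,\mathbb{E}_{\bar X_t^\leftarrow}[g]$ for $g\ge 0$. Applying this to term (i) and invoking Theorem~\ref{thm:delhDiffXYExp} (whose bound already carries a $C_\rho^2$ inside via $R_\varphi$, $L_\varphi$, and $\Xi_{\delta,\varepsilon}$) produces precisely the $C_\rho^3\{\Xi_{\delta,\varepsilon} + R_\varphi^2(\cdots) + [\cdots] h^2\}$ piece of the stated bound.

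For term (ii), I would apply Lemma~\ref{lem:hhdashDiff} pointwise to $\bar Y_{k_t h}^\leftarrow$ and then integrate, which contributes $e^2\,\varepsilon'^2/\min\{T-lh,\,1/(2+2R)\}$; the time-horizon dependence is inherited from the optimal $S$ in the Bismut--Elworthy--Li identity used inside that lemma, and no additional factor of $C_\rho$ appears beyond what is already absorbed. For term (iii), I would model $u$ as a finite-sample estimator of $\tilde u$: numerator and denominator of the log-gradient are Monte-Carlo averages of $\rho(\cdot)$ and its Bismut--Elworthy--Li representation over $n$ independent copies of the discretized reverse process. Boundedness of $\rho$ by $C_\rho$ controls the denominator, while the Bismut--Elworthy--Li stochastic integral has variance of order $1/h$ (its integrand lives on $[0,h]$, cf.\ the $1/\delta$ factor in the proof of Theorem~\ref{thm:delhDiffXYExp}); Hoeffding/Bernstein combined with a union bound over the $T/h$ discretization times and the effective scale $\delta$ then gives the $C_\rho(1+2R)\sqrt{\log(T/(h\delta))/(nh)}$ term.

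The main obstacle I anticipate is the bookkeeping in the change of measure: the quantities in Theorem~\ref{thm:delhDiffXYExp} are themselves conditional expectations along the $\bar X^\leftarrow$ dynamics (through $\varphi_X$, $\varphi_Y$), and one has to check that only a single multiplicative $C_\rho$ is picked up when averaging against $\bar Y^\leftarrow$ rather than compounding into higher powers. The clean way is to condition at the outermost layer on the time-$t$ marginal and apply the density-ratio bound only there, keeping the inner conditional expectations under the original $\bar X^\leftarrow$ law intact. A secondary technical point is the Monte-Carlo step: one must guarantee sub-Gaussian tails of the Bismut--Elworthy--Li integrand uniformly in $(x,k)$, which is where Assumption~\ref{ass:BoundingH}(ii) on the Hessians and the moment bound $Q^2$ on $s$ come in to control the concentration constants and hence the $(1+2R)$ prefactor.
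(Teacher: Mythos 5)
Your decomposition takes a genuinely different route from the paper, and the difference matters. The paper does \emph{not} chain through intermediate correction-term objects. Instead, writing $\rho_{*,t}(x) = \E[\rho_*(\bar X_T^\leftarrow)\mid \bar X_t^\leftarrow=x]$ and $\rho_t(x) = \E[\rho(X_T^\leftarrow)\mid X_t^\leftarrow=x]$, it applies the quotient rule to $u_* - u = \nabla\log\rho_{*,t} - \nabla\log\rho_t$, producing exactly two pieces:
\begin{equation}
 u_*(x,t) - u(x,t) \;=\; \frac{\nabla\rho_{*,t}(x) - \nabla\rho_t(x)}{\rho_{*,t}(x)} \;+\; \nabla\rho_t(x)\,\frac{\rho_t(x) - \rho_{*,t}(x)}{\rho_{*,t}(x)\rho_t(x)},
\end{equation}
a ``gradient-difference'' term and a ``value-difference'' term. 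This is what makes Theorem~\ref{thm:delhDiffXYExp} and Lemma~\ref{lem:hhdashDiff} directly applicable: both of those results bound discrepancies in $\nabla\rho_{\cdot,t}$ and $\rho_{\cdot,t}$ (the unnormalized conditional expectations and their gradients), not in $\nabla\log\rho_{\cdot,t}$. Your plan introduces intermediates $u_*^{\mathrm{disc}}$ and $\tilde u$ and works at the level of $\nabla\log$ throughout, so when you ``invoke Theorem~\ref{thm:delhDiffXYExp}'' for term~(i) you are asking it to control a difference of normalized log-gradients, which it does not bound as stated. To salvage this you would have to re-apply the same quotient-rule expansion separately inside each of your three pairwise differences, at which point your route has strictly more bookkeeping than the paper's single split. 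The time-discretization gap $\|u(\cdot,t) - u(\cdot,k_th)\|^2$ is also handled as a separate standalone piece in the paper (again via the quotient structure plus \eqref{eq:phyYdifftbound}), rather than being folded into your term (i).

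Your change-of-measure plan agrees with the paper's: apply the $p_t/q_t \le C_\rho$ density-ratio bound once, at the outermost expectation, keeping the inner BEL conditional expectations under the $\bar X^\leftarrow$ law; this is correct and avoids the compounding of $C_\rho$ factors you worried about. Finally, a point in your favor: you explicitly treat term~(iii), the Monte-Carlo estimation error, which is where the $C_\rho(1+2R)\sqrt{\log(T/(h\delta))/(nh)}$ term in the stated bound must come from. The paper's displayed proof never derives this term — it works throughout with the idealized $u(x,t) = \nabla\log\rho_t(x)$ — so your identification of the $1/h$ variance scaling (from the BEL integral on $[0,\delta]$) and the union bound over $T/h$ times is a necessary addition, not an optional one. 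Your caveat about uniform sub-Gaussian tails of the BEL integrand is a real issue and is not resolved by the paper either; it should be stated as an assumption rather than inferred from Assumption~\ref{ass:BoundingH}(ii) alone.
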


\begin{proof}
Define 
$$
\rho_{*,t}(x) = \E[\rho_*(\bar{X}_{T}^\leftarrow) |  \bar{X}_{t}^\leftarrow = x],~\rho_{t}(x) = \E[\rho(X_{T}^\leftarrow) |  {X}_{t}^\leftarrow = x].
$$
First, note that 
\begin{align}
\| u^*(x,t) - u(x,t) \|^2= & 
\left\| \frac{ \nabla \rho_{*,t}(x) - \nabla \rho_{t}(x)
}{ \rho_{*,t}(x) } + \frac{\nabla \rho_{t}(x) (\rho_{*,t}(x) -\rho_{t}(x))
}{ \rho_{*,t}(x) \rho_t(x) } \right\|^2 \\
\leq & 
2 \left\| \frac{ \nabla \rho_{*,t}(x) - \nabla \rho_{t}(x)
}{ \rho_{*,t}(x) }\right\|^2  +
2 \left\| \frac{\nabla \rho_{t}(x) (\rho_{*,t}(x) -\rho_{t}(x))
}{ \rho_{*,t}(x) \rho_t(x) } \right\|^2 \\ 
\leq & 2 C_\rho^2 \left\| \nabla \rho_{*,t}(x) - \nabla \rho_{t}(x)\right\|^2  +
 2 \frac{\left\| \nabla \rho_{t}(x) \right\|^2 |\rho_{*,t}(x) -\rho_{t}(x)|^2 }{ (\rho_{*,t}(x) \rho_t(x))^2 }  \\
\leq & 2 C_\rho^2 \left\| \nabla \rho_{*,t}(x) - \nabla \rho_{t}(x)\right\|^2  +
 2 R_\varphi^2 C_\rho^2 |\rho_{*,t}(x) -\rho_{t}(x)|^2. 
\end{align}
Therefore, the expectation of the right hand side with respect to $\bar{X}_t^\leftarrow$ can be bounded by 
\begin{align}
& \E_{\bar{X}_t^\leftarrow} \left[ \| u^*(\bar{X}_t^\leftarrow,t) - u(\bar{X}_t^\leftarrow,t) \|^2\right]  \\ 
\leq &
2C_\rho^2 \E_{\bar{X}_t^\leftarrow} \left[  \left\| \nabla \rho_{*,t}(x) - \nabla \rho_{t}(x) \right\|^2 \right]
+ 2 R_\varphi C_\rho^2  \E_{\bar{X}_t^\leftarrow}\left[ |\rho_{*,t}(x) -\rho_{t}(x)|^2 \right] \\
\leq &
2C_\rho^2 \E_{\bar{X}_t^\leftarrow} \left[  \left\| \nabla \rho_{*,t}(x) - \nabla \rho_{t}(x) \right\|^2 \right]
+ 2 R_\varphi C_\rho^2  
\E_{\bar{X}_t^\leftarrow} \left[  \TV(\bar{X}_T^\leftarrow,X_T^\leftarrow | \bar{X}_t^\leftarrow = X_t^\leftarrow =x) |_{x  = \bar{X}_t^\leftarrow}^2  \right].
\end{align}
The first term of the right hand side can be bounded by Theorem \ref{thm:delhDiffXYExp} and Lemma \ref{lem:hhdashDiff}.  
The second term can be bounded by $\varepsilon_\TV^2$ by Assumption \ref{ass:BoundingH}. 

In the same vein, we can bound the difference 
\begin{align}
&  \| u(x,t) - u(x,k_th) \|^2 \\
\leq &
2C_\rho^2 \E_{\bar{X}_t^\leftarrow} \left[  \left\| \nabla \rho_{t}(x) - \nabla \rho_{k_t h}(x) \right\|^2 \right]
+ 2 R_\varphi C_\rho^2  
\E_{\bar{X}_t^\leftarrow} \left[ |\rho_{t}(x) - \rho_{k_t h}(x)|  \right] \\
\leq &
[2 L_\varphi^2 (\mathsf{m} + 4Q^2 + d h)  
+ R_\varphi^2 c_\eta (1 + 2R)^2] h^2 \\
& +  2 R_\varphi^2 h^2 (\mathsf{m} + 4Q^2 + d h)   \\
\leq & 
[2 (L_\varphi^2+R_\varphi^2) (\mathsf{m} + 4Q^2 + d h)  
+ R_\varphi^2 c_\eta (1 + 2R)^2] h^2, 
\end{align}
where we used \eqref{eq:phyYdifftbound} and the same argument as \eqref{eq:phyYdifftbound} in the last inequality with $R_\varphi$ Lipschitz continuity of $\rho_{t}$.

Finally, we convert the expectation w.r.t. $\bar{X_t}^\leftarrow$ to that w.r.t. $\bar{X_t}^\leftarrow$. However, the density ratio between $p_t$ and $q_t$ is bounded by $C_\rho$,
which yields the assertion. 
\end{proof}

\subsection{Doob's H-transform}
\begin{lem}\label{lem:H-transform}
    For all $t\in [0,T]$, the following relationship holds.
    \begin{align}
    \nabla_x \log q_{t}(x) = \nabla_x \log p_{t}(x) + \nabla_x \log (\mathbb{E}[\rho_*(\bar{X}_0)|\bar{X}_{t}=x]).
    \label{eq:Appendix-Diffusion-Doob-1}
    \end{align}
\end{lem}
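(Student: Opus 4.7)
The plan is to show that the conditional expectation $\mathbb{E}[\rho_*(\bar{X}_0) \mid \bar{X}_t = x]$ is nothing other than the ratio $q_t(x)/p_t(x)$; the identity then follows by taking the gradient of the logarithm of both sides.

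First I would write both marginal densities via the same OU transition kernel. Since $q_* = \rho_* p_*$ and the forward noising dynamics $\mathrm{d}\bar{X}_s = -\bar{X}_s\,\mathrm{d}s + \sqrt{2}\,\mathrm{d}B_s$ does not depend on the initial distribution, the transition kernel $k_t(x\mid y) := \mathcal{N}(x; m_t y, \sigma_t^2 I_d)$ (with $m_t=e^{-t}$, $\sigma_t^2=1-e^{-2t}$) is shared between the two processes. Hence
\begin{equation}
p_t(x) = \int k_t(x\mid y)\,p_*(y)\,\mathrm{d}y, \qquad q_t(x) = \int k_t(x\mid y)\,\rho_*(y)\,p_*(y)\,\mathrm{d}y.
\end{equation}

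Next I would apply Bayes' rule to identify the conditional law of $\bar{X}_0$ given $\bar{X}_t = x$ (under the $p_*$-initialized process) as
\begin{equation}
p_{0\mid t}(y\mid x) = \frac{k_t(x\mid y)\,p_*(y)}{p_t(x)}.
\end{equation}
Multiplying by $\rho_*(y)$ and integrating gives
\begin{equation}
\mathbb{E}[\rho_*(\bar{X}_0)\mid \bar{X}_t = x] = \frac{\int \rho_*(y)\,k_t(x\mid y)\,p_*(y)\,\mathrm{d}y}{p_t(x)} = \frac{q_t(x)}{p_t(x)}.
\end{equation}

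Finally, taking $\log$ yields $\log \mathbb{E}[\rho_*(\bar{X}_0)\mid \bar{X}_t = x] = \log q_t(x) - \log p_t(x)$, and applying $\nabla_x$ produces the stated identity. There is no real obstacle: the argument is a one-line Bayes computation once both densities are expressed through the same kernel; the only mild care is to note that $p_t(x) > 0$ for $t > 0$ (so the ratio and its logarithm are well defined) and that the boundedness $C_\rho^{-1} \leq \rho_* \leq C_\rho$ from Assumption~\ref{assumption:TVBoundMainText} justifies interchanging differentiation and integration. The Doob interpretation is that $\mathbb{E}[\rho_*(\bar{X}_0)\mid \bar{X}_t = \cdot]$ is exactly the harmonic $h$-function transforming the $p$-process into the $q$-process.
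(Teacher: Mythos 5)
Your proposal is correct and follows essentially the same route as the paper's proof: both apply Bayes' rule to the conditional $p_{0\mid t}$, use that the forward OU transition kernel is shared (so $p_{t\mid 0} = q_{t\mid 0}$), and exploit $q_* = \rho_* p_*$ to recognize the resulting integral as $q_t(x)$. The only cosmetic difference is that you phrase the identity as $\mathbb{E}[\rho_*(\bar{X}_0)\mid\bar{X}_t=x]=q_t(x)/p_t(x)$ before taking logs, whereas the paper expands $\log p_t(x) + \log\mathbb{E}[\cdot]$ and manipulates until it becomes $\log q_t(x)$; these are the same computation.
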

\begin{proof}
    Let us denote the joint distribution of $\bar{X}_0$ and $\bar{X}_t$ as $p_{0,t}(\bar{X}_0,\bar{X}_t)$, the conditional distributions of $\bar{X}_0$ given $\bar{X}_t$ as $p_{0|t}(\bar{X}_0|\bar{X}_t)$, and the conditional distributions of $\bar{X}_t$ given $\bar{X}_0$ as $p_{t|0}(\bar{X}_t|\bar{X}_0)$. Define $q_{t|0}$ in the same way. 
    It is straightforward to see that 
    \begin{align}
        \log p_{t}(x) + \log (\mathbb{E}[\rho_*(\bar{X}_0)|\bar{X}_{t}=x])
        &= \log (p_{t}(x)\mathbb{E}[\rho_*(\bar{X}_0)|\bar{X}_{t}=x])
        \\ & = \log p_{t}(x)\int_{x'} \rho_*(x')p_{0|t}(x'|x)\mathrm{d}x'
        \\ & = \log \int_{x'} \rho_*(x')p_{0,t}(x',x)\mathrm{d}x'
        \\ & = \log \int_{x'} \rho_*(x')p_0(x')p_{t|0}(x|x')\mathrm{d}x'
        \\ & = \log \int_{x'} q_0(x')p_{t|0}(x|x')\mathrm{d}x'.
        \label{eq:Appendix-Diffusion-Doob-2}
    \end{align}
    Note that $p_{t|0}(x|x')$ and $q_{t|0}(x|x')$ are the same in (\ref{eq:Appendix-Diffusion-Doob-2}). Therefore,
    \begin{align}
        \log p_{t}(x) + \log (\mathbb{E}[\rho_*(\bar{X}_0)|\bar{X}_{t}=x]|)
       & = \log \int_{x'} q_0(x')q_{t|0}(x|x')\mathrm{d}x'
        \\ & = \log q_t(x), 
    \end{align}
    which concludes the proof.
\end{proof}

\section{Details of Numerical Experiments}
\label{section:appendix-experiments}

In this Appendix, we compare our method with existing approaches in a very simple numerical experiment and confirm that our method is also capable of performing image generation alignment.
A more practical and clearer comparison using realistic benchmarks is left for future work.

\subsection{Alignment for Gaussian Mixture Models}

We explain how to align the pre-trained diffusion model for Gaussian Mixture Models and the technical details of our algorithm. Almost the same algorithm was used for the other experiments.

\textbf{A Pre-Trained Score-Based Diffusion Model}
We pre-trained a score-based diffusion model to sample from the 2 dimensional mixture of Gaussian Mixture Models. The target density was $\frac{1}{2}\left(\mathcal{N}(\mu_1,\Sigma) + \mathcal{N}(\mu_2,\Sigma)\right)$, $\mu_1 = [-2.5, 0], \; \mu_2 = [2.5, 0], \; \Sigma = [[1, 0],[0,5]]$. The score model was implemented as simple 4 layer neural networks. The learning rate of pre-train was 0.0005, the batch size was 100, the number of epochs was 1000. The pre-train MSE loss is shown in the left side in Figure~\ref{fig:mog_pretrain}. The minimum losses until the current epoch were plotted. The histogram of 20000 samples from the pre-trained score-based diffusion model is the right figure in Figure~\ref{fig:mog_pretrain}. For sampling, $T$ was set to be 10 and the number of sampling steps was 100.
\begin{figure}[htbp]
\centering
\begin{minipage}[htbp]{0.49\columnwidth}
    \centering
    \includegraphics[height = 5cm]{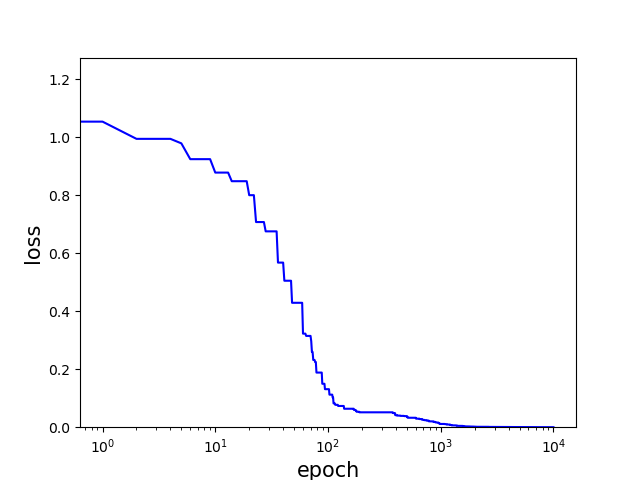}
    \label{fig:mog_pretraiin_hist}
\end{minipage}
\begin{minipage}[htbp]{0.49\columnwidth}
    \centering
    \includegraphics[height=4.65cm]{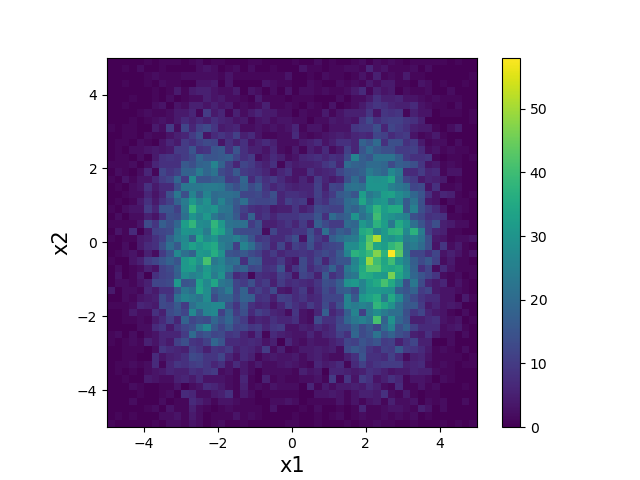}
    \label{fig:b}
\end{minipage}
\caption{
\textbf{Left.} Pre-train MSE loss of denoising score matching. The minimum losses until the current epoch were plotted.
\textbf{Right.} The histogram of 20000 samples from the pre-trained DDPM.}
\label{fig:mog_pretrain}
\end{figure}

\textbf{The objective}
The target was the mean of the Gaussian in the right side, $\mu_w := \mu_2 = [2.5,0]$.
The preference of point $x_w$ and $x_l$ were determined by the Euclidean distance $d(\cdot, \mu_w) $ from $\mu_w := [2.5,0]$. $x_w \succ x_l$ if and only if $d(x_w,\mu_w) < d(x_l,\mu_w)$.
The DPO objective was used in this setting. 2000 points from $\pref$ were sampled to calculate the expectation of the functional derivative. 
Note that the reference objectives\footnote{the DPO loss of Ref. in Table~\ref{table:comparison}, our algorithm at $k=0$ before the random initialization of the potential, and the initial (true and approx.) objectives in Diffusion-DPO in Figure~\ref{fig:da-summary} and \ref{fig:da-unsmoothed}} can be (and were) analytically calculated because $q/\pref = 1$.
The regularization terms $\beta$ and $\gamma$ were 0.04 and 0.1. The metric loss was calculated with the samples s.t. $d(x,\mu_w)<10$ in random 1000 samples.

As a counter method, Diffusion-DPO~\citep{Wallace2024DiffusionDPO} was implemented with the learning rate = 0.0005 and 0.0001 for w/ reg and w/o reg respectively, and batch size = 5000.
We remark that, in practice, it is hardly realistic to compute the true DPO objective during Diffusion-DPO, but in this case, we forcefully carried out the computation by estimating the density ratio $q(x)/\pref(x)$ with $100 \times 2$ samples for each $x$: The densities $q$ and $\pref$ were estimated by repeatedly computing the denoising path and empirically obtaining their marginal densities, which implies some degree of numerical instability in Figure~\ref{fig:da-summary} and additional computational cost\footnote{``Opt.time" and ``GPU memory "also include the time and space required to compute the ``True Objective"}  in Table~\ref{table:comparison}.

Please note that the reason why the values of ``Upper bound"\footnote{We took $\omega(\lambda_t)$ in (2) in \cite{Wallace2024DiffusionDPO} to be 0.5 (constant, as recommended in the paper). 
We will leave the investigation of the phenomena that arise when \(\omega\) is turned into a time-dependent function for future work.
}
 and ``True Objective" in the existing method shown in Figure~\ref{fig:da-summary} are reversed is that their ``Upper bound" approximates the true DPO loss by replacing the current reverse process with the ROU process~\citep{Wallace2024DiffusionDPO}.
This implies that the upperbound may be too loose to be used as a surrogate of the true loss.

The phenomenon that the ``True Objective w/o reg." of the existing method diverges upward in Figure~\ref{fig:da-summary} is likely due to the absence of additional entropy regularization, leading to catastrophic forgetting~\cite{uehara2024RLHF,tang2024finetuningdiffusionmodelsstochastic,li2024derivativefreeguidancecontinuousdiscrete,zhao2024scoresactionsframeworkfinetuning}.
On the other hand,  when applying our method, we incorporate entropy regularization.

Additionally, some degree of numerical instability is unavoidable due to the experimental methodology because we empirically estimated the density ratio to compute the true DPO loss in using Diffusion-DPO.

\textbf{Dual Averaging in Option 1}
We implemented Option 1 for the experiment. We used 8 NVIDIA V100 GPUs with 32GB memory.
The plotted losses (smoothed with EMA) in Figure~\ref{fig:da-summary} demonstrated that DA algorithm decreased the losses.
Please refer to Figure~\ref{fig:da-unsmoothed} for losses without smoothing.
\begin{figure}[htbp]
    \centering
    \includegraphics[width=0.5\linewidth]{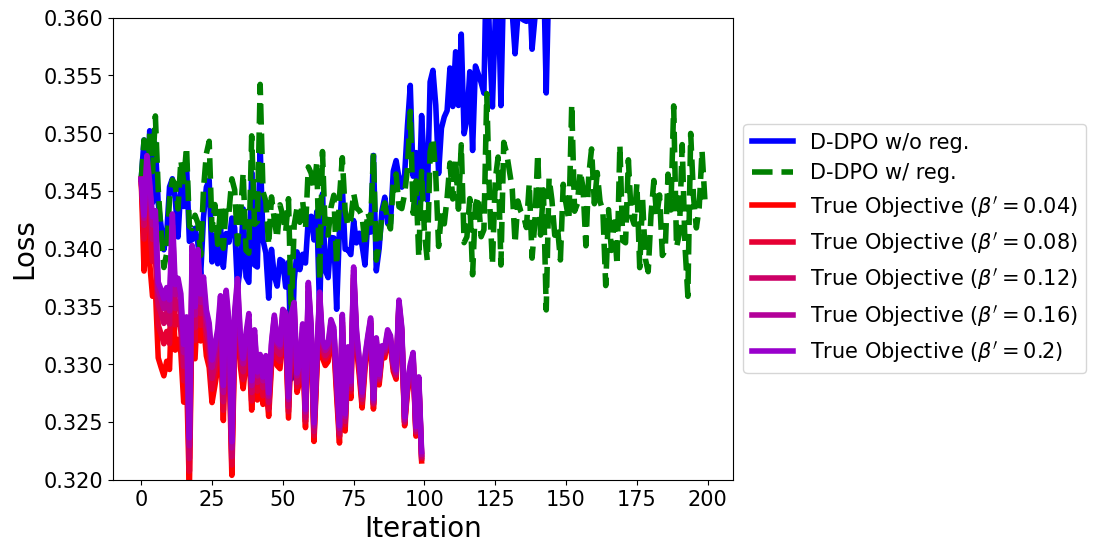}
    \caption{(For Reference) Losses without smoothing corresponding to Figure~\ref{fig:da-summary}. ``D-DPO": Diffusion-DPO. ``Ours": Proposed DA.}
    \label{fig:da-unsmoothed}
\end{figure}
We estimated the log-density ratio $\gbarkprev = -\log \qstark / \pref + \mathrm{const}.$ with neural networks $f_k \simeq \gbarkprev$. In each loop, the potential was trained by 1000 points in $x_1, x_2 \in [-5,5]$, the learning rate was 0.0005, the number of epochs was 1000.
We kept $\gbarkprev$ as neural networks and generate the dataset for $\gbark$ from the equation ($\beta=\beta'$ for simplicity)
\begin{align}
            \gbark &= \frac{2}{\beta(k+1)(k+2)}
        \left[
            \frac{\beta k (k+1)}{2}\gbarkprev + k\dFdq(\qk)
        \right]\\
        &= \frac{k}{k+2}\gbarkprev + \frac{2k}{\beta k(k+2)}\dFdq(\qk).\label{eq-appendix-experiment-recurrence}
\end{align}
So, we only need one model to store for DA loops (we don't need $\bar{g}^{(k-2)},..., \bar{g}^{(1)}$). 
The psudocode for this phase is described as Algorithm~\ref{alg:DA-train-f}.
    
\begin{algorithm}[htbp]
\label{alg:DA-train-f}
\caption{Dual Averaging (Option 1)}
\begin{algorithmic}
    \REQUIRE{
            $F$: an objective,
            $s$: pre-trained score,
            $\beta$: Regularization scale,
            $K$: number of loops.
        }\\
    \ENSURE{
            $f_{K}$: a trained potential.
    }
    \STATE Define $q^{(0)}=\pref$ as the reference density obtained by the pre-trained score $s$ 
    \STATE Initialize NN $f_1$ randomly
    \STATE Collect samples $(x_i)_i$ (e.g. collect data at the final denoising step from $\pref$ by score function $s$)\\
    \FOR{$k = 1,...,K-1$}
        \STATE $\qk \propto \exp(-f_k)\pref$ (no actual computation)
        \STATE Construct dataset $\lbrace (x_i, \frac{2k}{\beta k(k+2)}\dFdq(\qk,x_i)\rbrace_i$ with $(x_i)_i$ and $f_k$ using equation (\ref{eq-main-DPO-derivative})\\
        \STATE Train $f_{k+1}$ to approximate $\frac{k}{k+2}f_{k} + \frac{2k}{\beta k(k+2)}\dFdq(\qk)$ by minimizing MSE.
    \ENDFOR
    \STATE \textbf{End}
\end{algorithmic}
\end{algorithm}

The heat map of the trained potential $f_k$ is shown in Figure~\ref{fig:MoG-heatmap}.

\begin{figure}[htbp]
    \centering
    \includegraphics[width=0.7\linewidth]{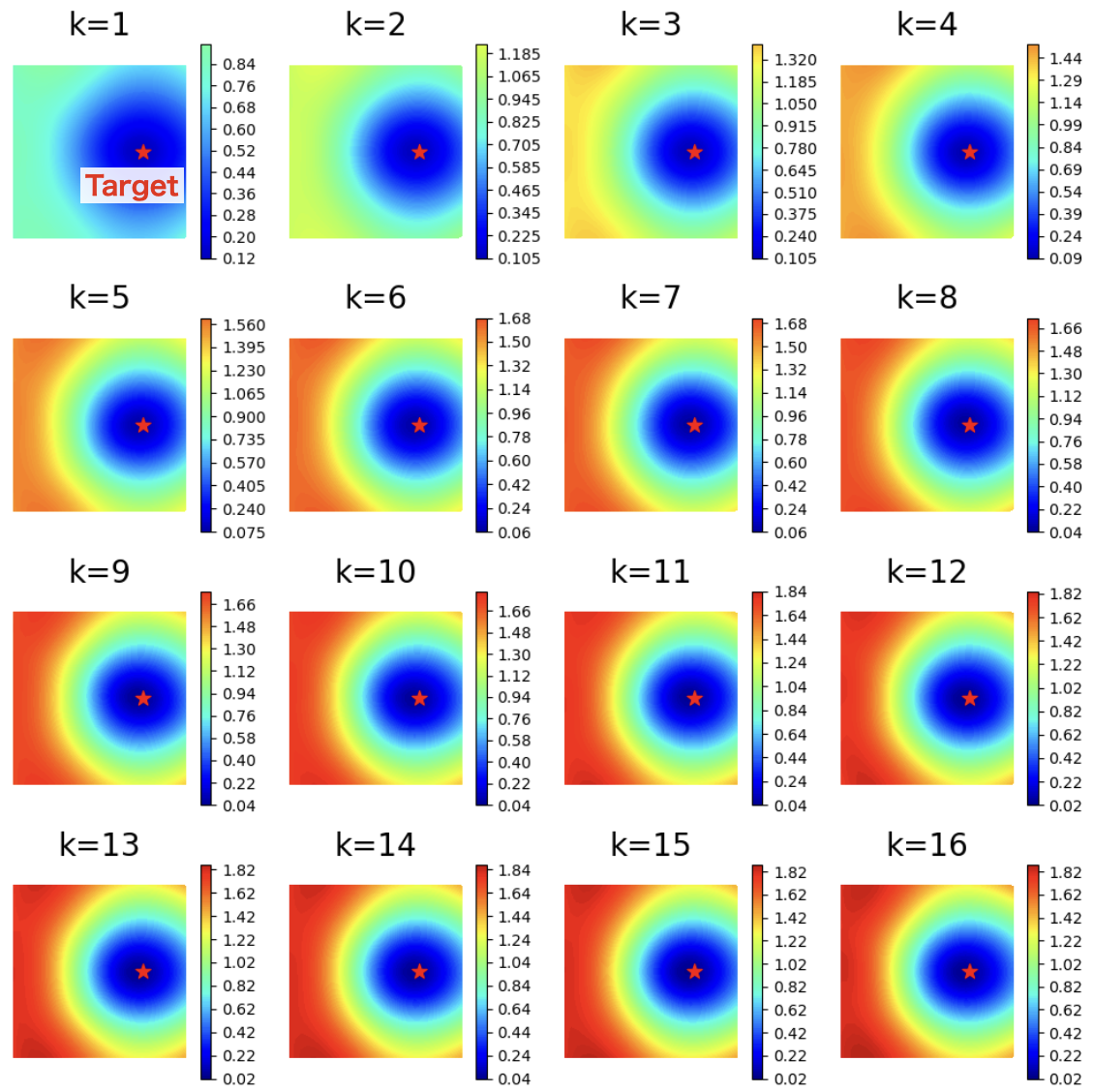}
    \caption{The heatmap of the potential $f_k$ in the $k$th loop in Dual Averaging for Gaussian Mixture Model. The target point was $[2.5,0]$. Note that $f_k$ is the negated log-density ratio: the aligned density is $\exp(-f_k)\pref$.}
    \label{fig:MoG-heatmap}
\end{figure}

\textbf{Doob's h-transform}
We sampled the aligned images with 50 diffusion steps. The guidance term of Doob's h-transform was calculated in every diffusion steps. The conditional expectation $\E[h_T(\Xref_T) \mid x]$ was calculated by Monte Carlo with 30000 samples. The psudocode for this phase is described as Algorithm~\ref{alg:doob-sampling}.
\revisedStart
In phase 2 (the sampling phase), our simplest solution (i.e., estimating the correction term at each time step using Monte Carlo) has a time complexity of $\mathcal{O}(L^2)$, where $L$ represents the number of time steps in the denoising process. When we set $N$ as the number of particles needed to estimate one correction term, to compute the correction term for each sample simultaneously, $ \mathcal{O}(N)$ memory space is required. The sampling error for each correction term would be $\mathcal{O}(1/\sqrt{N})$. This leads to severer additional computational effort in phase 2 than the time and space consumed as in Table~\ref{table:comparison} for Phase I\footnote{In Table~\ref{table:comparison}, we showed the computational cost for Phase I, not for Phase II because we only require Phase I to compute the true loss. Note that we used Phase II to compute the metric loss, however, the true loss is much more important.}.
However, this Doob's h-transform technique itself has been used in image generation~\citep{uehara2024RLHF,uehara2024reward}, Bayesian samping~\citep{heng2024schrodingerbridge}, and filtering~\citep{chopin2023doob}. 
As a more practical alternative of our phase 2, the idea of approximating the correction term using neural ODE solvers for faster test-time implementation has also been proposed in ~\citep{uehara2024reward,uehara2024RLHF}.
\revisedEnd

\begin{algorithm}[htbp]
\caption{Doob's h-transform (A simplest implementation)}
\begin{algorithmic}\label{alg:doob-sampling}
    \REQUIRE{
            $F$: an objective,
            $f_K$: a trained potential, 
            $s$: pre-trained score,
        }\\
    \ENSURE{
            $x_T$: an output approximately from $\exp{(-f_K)}\pref$.
    }
    \STATE Initialize $x_0$ as white noise
    \STATE Set number of steps $L$ and the time $T$.
    \STATE Set the step size $\stepsize = T/L$
    \FOR{$l = 0,...,L-1$}
        \revisedStart
        \STATE Initialize $N$ samples $(x_{l,i})_{i\leq N}$ as $x_l$.
        \FOR{$l' = l,...,L-1$}
            \STATE (denoising step of $(x_{l,i})_{i\leq N}$)
        \ENDFOR
        \STATE Store $N$ samples of $\Xref_T$ as $(x_{l,i})_{i\leq N}$.
        \STATE Approximate $u(x_{l},lh) = \nabla \log \E[\exp{(-f_K(\Xref_T)} \mid \Xref_{\tprev}=x_{l}]$ by Monte Carlo with $(x_{l,i})_{i\leq N}$. 
        \revisedEnd
        \STATE Sample white noise $\xi_\mathrm{noise}$
        \STATE $x_{l+1} \coloneq x_{l} + \delta (x_{l} + 2 s(x_{l},T-\tprev) + 2 u(x_{l},lh)) + \sqrt{2h} \xi_\mathrm{noise}$
    \ENDFOR
    \STATE \textbf{End}
\end{algorithmic}
\end{algorithm}
\subsection{Image Generation Alignment}\label{sec:Appendix-Experimennt}

We aligned the image generation of the basic pre-trained model in Diffusion Models Course (source: \cite{huggingfacecourse}). The pipeline path we utilized was ``\texttt{johnowhitaker/ddpm-butterflies-32px}". The summarized results are show in Figure~\ref{fig:BT-summary}.

\textbf{The pre-trained Model}
The model samples the images of butterflies of $32 \times 32$ pixels. Number of the sampling step was 1000.

\textbf{The objective}
The target color was $[0.9,0.9,0.9]$ in RGB. The reward is visualized in Figure~\ref{fig:BT-compare}.
6400 samples from $\pref$ to calculate the expectation of $F$ and $\dFdq$. $\beta$ and $\gamma$ were $0.05$ and $1$. The output of the functional derivative was clipped in $\pm20$ to stabilize the training step. In calculation of the DPO objective, the sample that has a higher reward is the ``winning" sample.

\textbf{Dual Averaging}
 In each loop, $f_k$ was trained by 1024 images from pooled 6400 images, the learning rate was 0.0001, the batch size was 64, and the number of epochs was 5. $f_k$s were implemented by Unet2Dmodel in Diffusers library~\cite{huggingface2022Diffusers}. How the potential $f_k$ learned the reward, the distance from the target color, was shown in the right side of Figure~\ref{fig:BT-compare}. The DA algorithm succeeded to extract the target images from the true reward.

 \begin{figure}[htbp]
    \centering
    \includegraphics[width=0.8\linewidth]{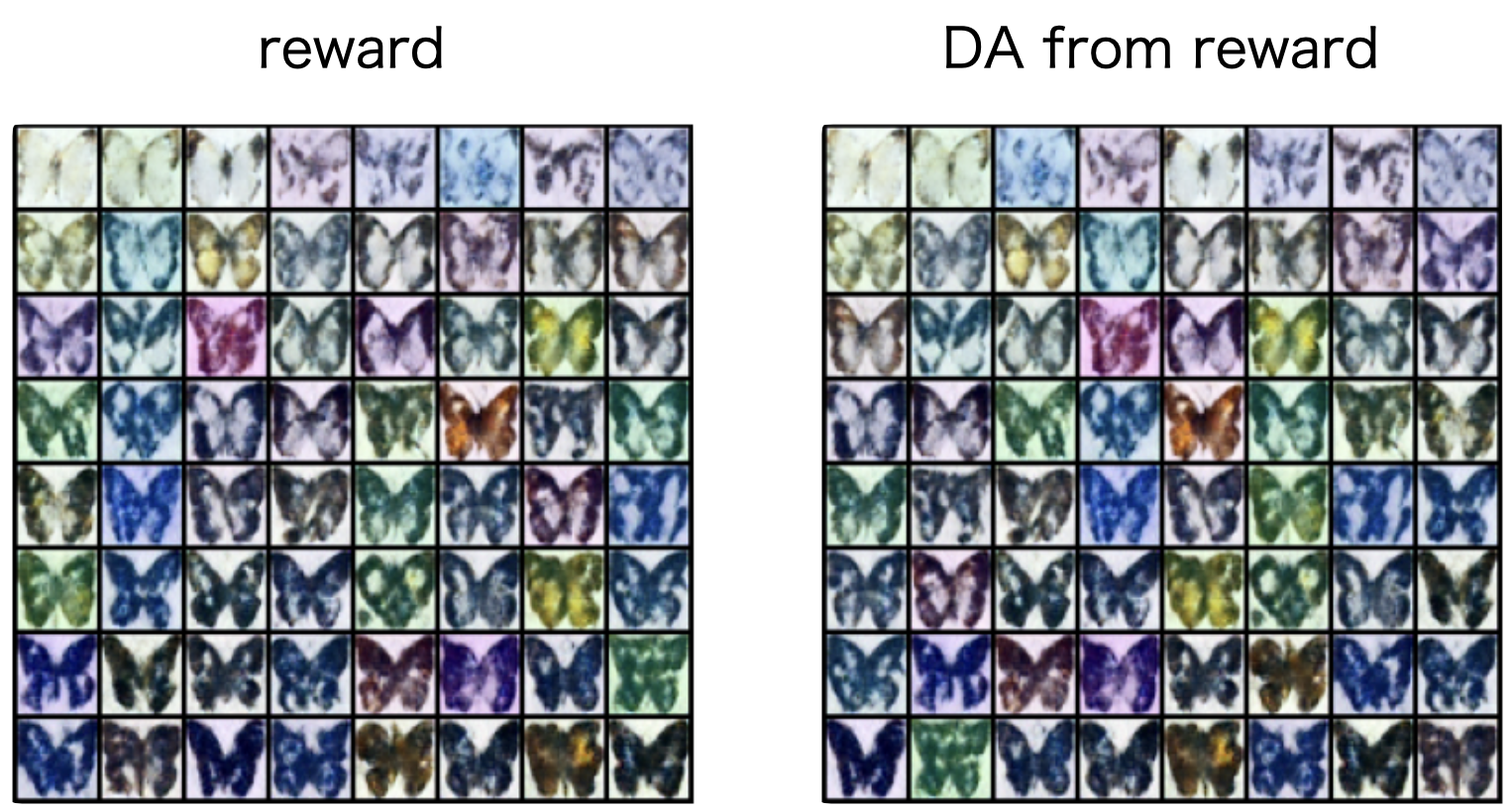}
    \caption{\textbf{Left.} Output images of $\pref$ sorted by the distance from the target color ($=-$reward). \textbf{Right.} Images sorted by the learned potential in $k=2$.}
    \label{fig:BT-compare}
\end{figure}

\textbf{Doob's h-transform}
We sampled the aligned images with 1000 diffusion steps. The guidance term of Doob's h-transform was calculated in every 10 diffusion steps for faster sampling.
We calculated the drift term of Doob's h-transform once in 10 diffusion steps for faster sampling. In addition, We defined a decay rate $r_d = 0.95$ and a strongness $s=5$ for $\nabla \log \rho_{t}$ and finally we added $r_d^{m'} \nabla \log \E [\exp(- s f_K(X^\leftarrow_T)\mid x_{10m\delta }]$ as a drift term in $l = 10m + m'$th diffusion step to balance the computational cost and stability. The conditional expectation $\E[\rho_T(\Xref_T) \mid x]$ was calculated by Monte Carlo with 128 samples.

\begin{figure}[h]
    \centering
    \includegraphics[width=0.8\linewidth]{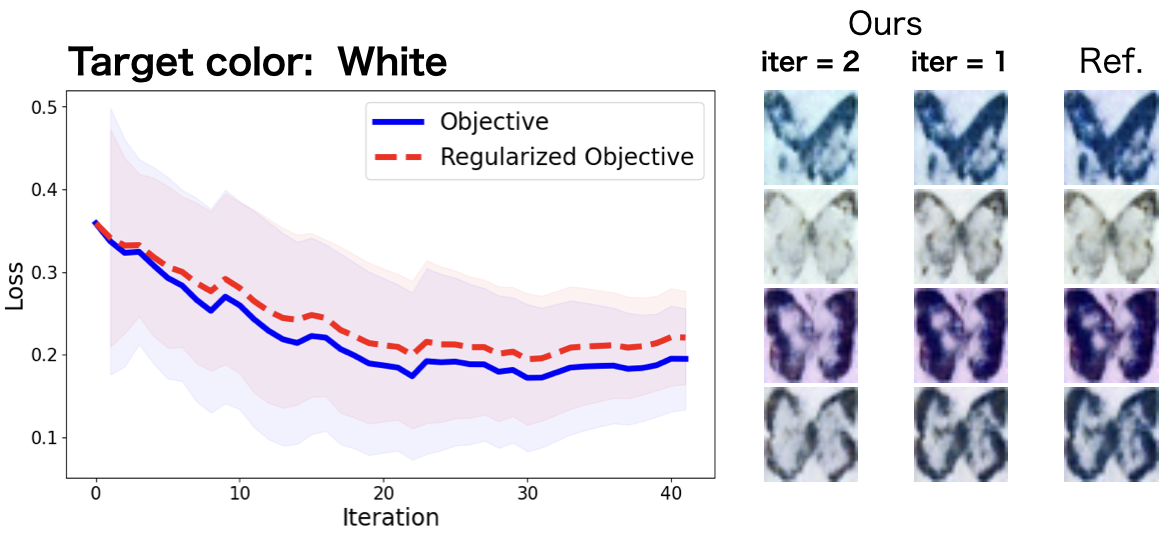}
    \caption{\textbf{Left. }The smoothed loss during DA for image generation alignment from $k=1$. ``Objective'': DPO objective.
        ``Regularized Objective'': ``Objective'' + $\beta \KL(q\|\pref)$, $\beta=0.05$
    \textbf{Right.} Examples of aligned image generation. ``iter=2'': ours with $k=2$ DA iterations, ``iter=2'': ours with $k=1$ DA iteration. ``Reference'': samples from $\pref$.}
    \label{fig:BT-summary}
\end{figure}

\subsection{Tilt correction for Generation of Medical Image Data}

Our goal of this experiment was generating images with no rotation with an unconditional pre-trained model that generates rotated images. The objective was based on DPO. The rotation angle of each image was predicted with CNN.
The summarized results are in Figure~\ref{fig:CT-summary}.

\textbf{Dataset}
 10000 images of Head CT ($64 \times 64$ pixel) in Medical MNIST~\cite{MedicalMNISTClassification} were leveraged. They include some rotated images, the angles are up to $90\tcdegree$. To aggravate the existing situation, we augmented the data by rotating images (up to $45\tcdegree$) from this dataset to be 40000. A predictor of angles were trained with the augmented dataset to define the reward in place of human preference due to preparation difficulties. The number of epochs was 10, 95\% and 5\% of the dataset were used for training and validation, the training MSE loss was 3.79, and the validation MSE loss was 3.25.  Note that there were rotated images in the original dataset, so the labels of the rotation angles made in the augmentation were noisy.

\textbf{Pre-trained Autoencoder}
We pre-trained autoencoder from scratch. It encodes gray-scaled $64\times64$ pixels into latent $32\times 32$ pixels. It only has convolution layers so that geometrical features were preserved for simplicity. The training data was $95\%$ of augmented 40000 images and the validation data was $5\%$ of them. This autoencoder was fed white-noised data to make the model robust with noise. The pretraining MSE loss is shown in the left side of Figure~\ref{fig:CT-pretrain}.
\paragraph{Latent Diffusion Model}
We also pre-trained a (latent) diffusion model based on Unet2DModel in Diffusers~\cite{huggingface2022Diffusers}. Number of sampling steps was 1000. The beta scheduler was set to be \texttt{squaredcos\_cap\_v2}.
In pre-training, we leveraged the augmented dataset up to 20000 images, the number of epochs was 15, the batch size was 8, the learning rate was 0.0001. The pretraining MSE loss is shown in right side of Figure~\ref{fig:CT-pretrain} and the generated images are displayed in Figure~\ref{fig:CT-latentdiffusion}.
\begin{figure}[htbp]
\centering
\begin{minipage}[htbp]{0.49\columnwidth}
    \centering
    \includegraphics[height = 4cm]{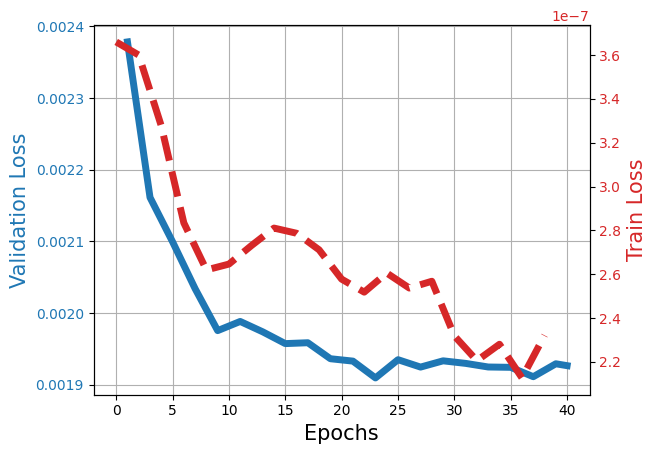}
    \label{fig:CT-autoencoder-loss}
\end{minipage}
\begin{minipage}[htbp]{0.49\columnwidth}
    \centering
    \includegraphics[height=3.95cm]{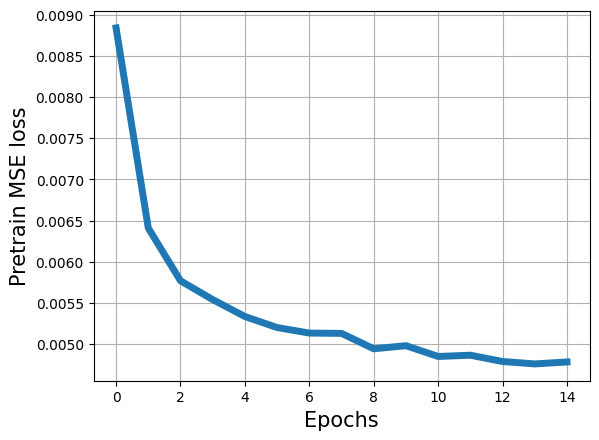}
    \label{fig:CT-diffusion}
\end{minipage}
\caption{
\textbf{Left.} The MSE loss in pretraining the Autoencoder. The solid blue line and the dashed red represent the validation loss and the train loss.
\textbf{Right.} The MSE loss in pretraining the diffusion model.}
\label{fig:CT-pretrain}
\end{figure}

\begin{figure}[htbp]
    \centering
    \includegraphics[width=0.7\linewidth]{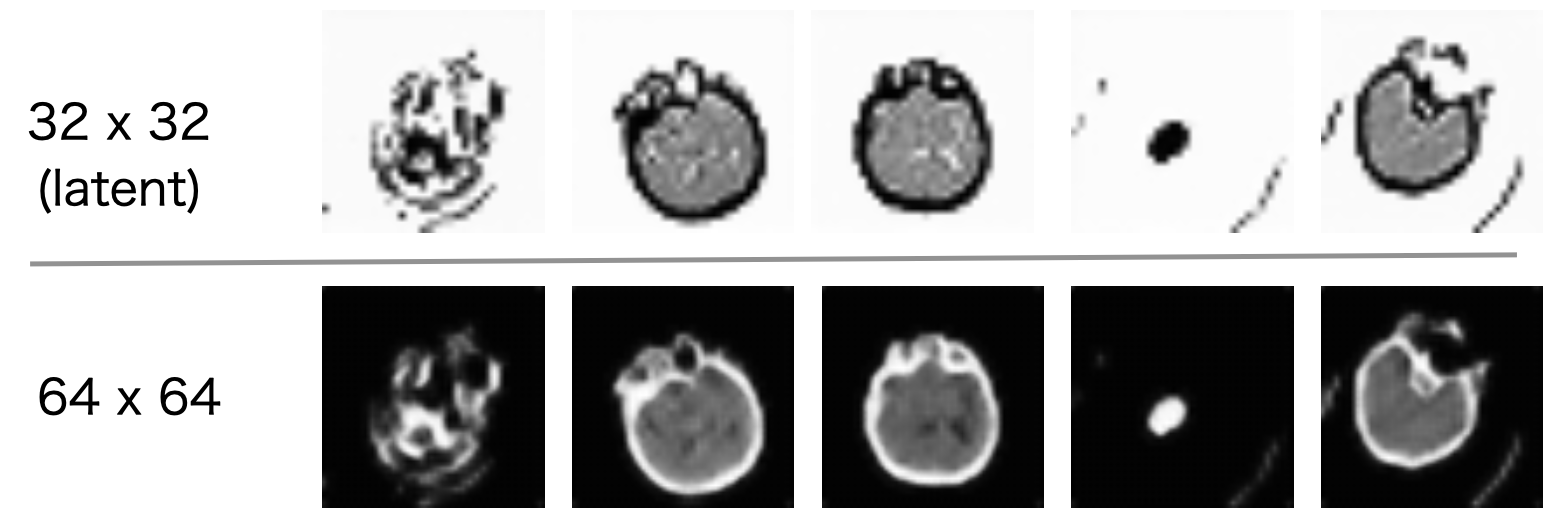}
    \caption{\textbf{Top.} Outputs of pre-trained latent diffusion. \textbf{Bottom.} Decoded images of the outputs. }
    \label{fig:CT-latentdiffusion}
\end{figure}

\textbf{The objective}
The reward was defined to be $-|\text{predicted angle}|$ of the pre-trained predictor (in place of humans), described in the left side of Figure~\ref{fig:CT-compare}.
6400 samples from $\pref$ to calculate the expectation of $F$ and $\dFdq$. $\beta$ and $\gamma$ were $0.01$ and $0.1$. The output of the functional derivative was clipped in $\pm 5$ to stabilize the training step. In calculation of the DPO objective, the sample that has a higher reward (in the more vertical direction) is the ``winning" sample.

\textbf{Dual Averaging}
In each loop, $f_k$ was trained by 6400 images from $\pref$. All the generated images were reused during DA. The learning rate was 0.0001, the batch size was 64, and the number of epochs was 5. We compared the learned potential with the reference reward in Figure~\ref{fig:CT-compare}. We see that DA iterations worked well to replicate the reference reward.

\begin{figure}[htbp]
   \centering
    \includegraphics[width=0.8\linewidth]{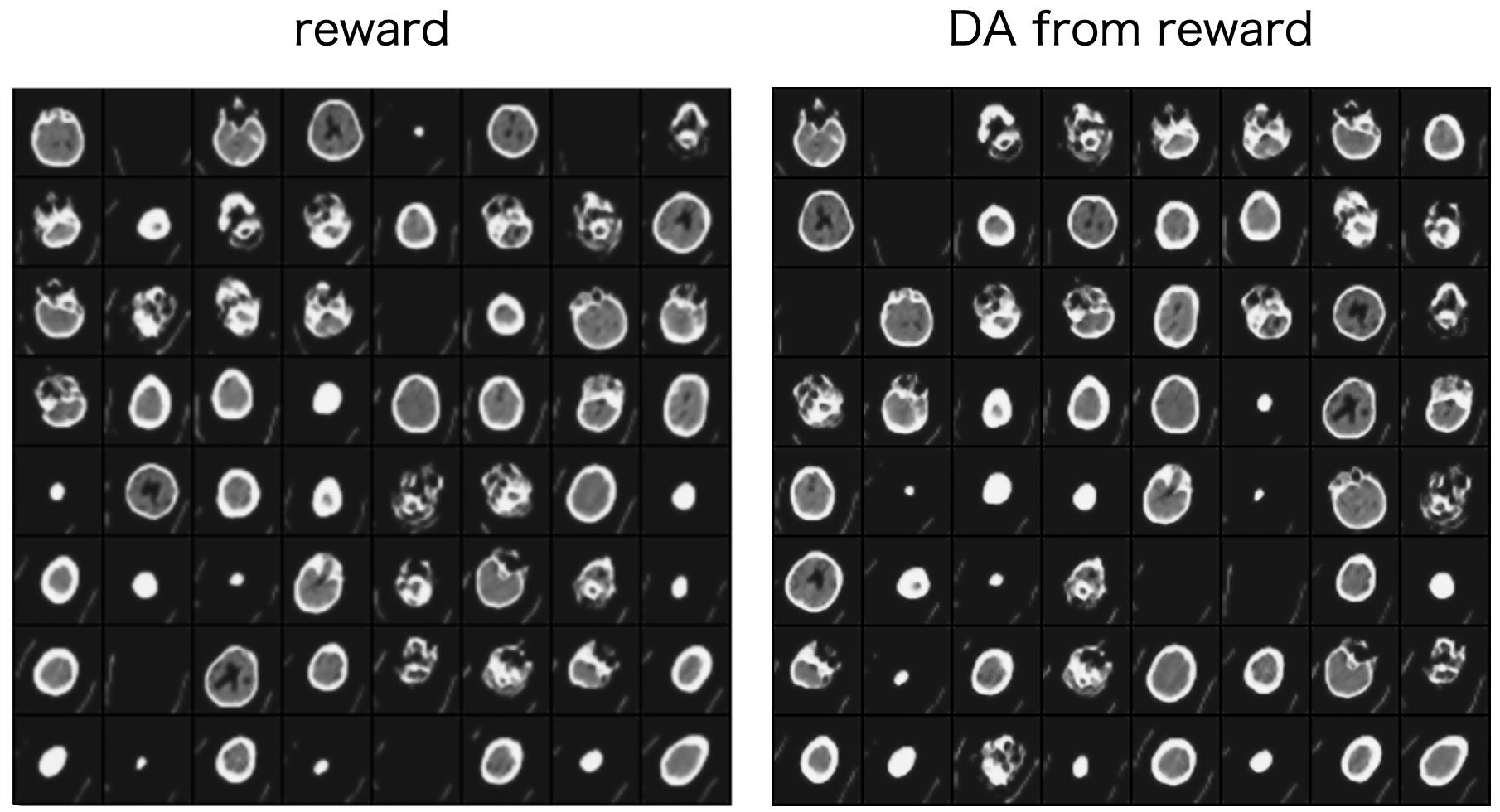}
    \caption{\textbf{Left.} Output CT images of the pre-trained model, sorted by the absolute values of estimated angles ($=-$reward). \textbf{Right.} Output CT Images sorted by the trained potential in the $3$rd loop in DA.}
    \label{fig:CT-compare}
\end{figure}

\textbf{Doob's h-transform}
We sampled the aligned images with 1000 diffusion steps. The guidance term of Doob's h-transform was calculated once in 10 diffusion steps. Technical settings were the same with . The conditional expectation was calculated by Monte Carlo with 128 samples. 

\begin{figure}[h]
    \centering
    \includegraphics[width=0.8\linewidth]{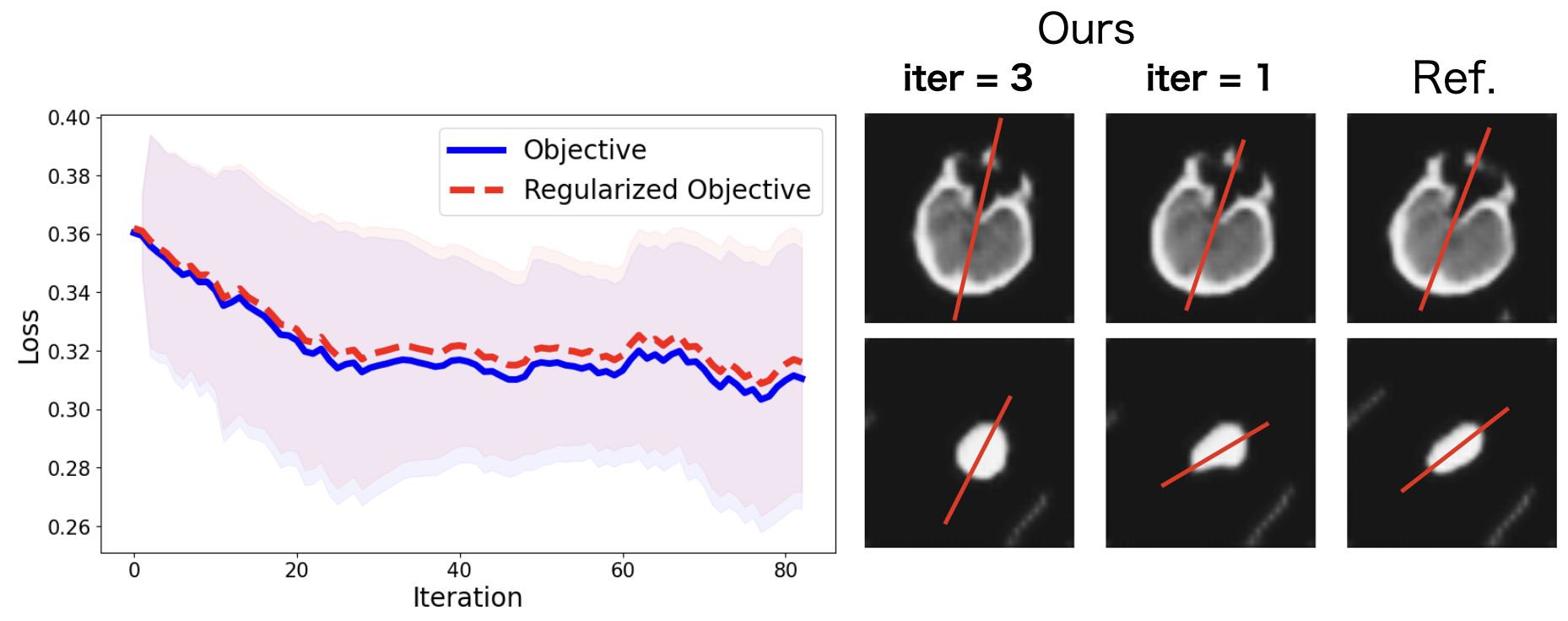}
    \caption{\textbf{Left. }The smoothed loss during DA for tilt correction from $k=1$. ``Objective'': DPO objective. The target point was $[2.5, 0]$.
        ``Regularized Objective'': ``Objective'' + $\beta \KL(q\|\pref)$, $\beta=0.01$
    \textbf{Right.} Tilt-corrected Head CT image generation. ``iter=3'': ours with $k=3$ DA iterations, ``iter=1'': ours with $k=1$ DA iteration. ``Reference'': samples from $\pref$.}
    \label{fig:CT-summary}
\end{figure}

\end{document}